\newtheorem{definition}{Definition}
\newtheorem{theorem}{Theorem}
\newtheorem{corollary}{Corollary}
\newtheorem{lemma}{Lemma}
\newtheorem{proposition}{Proposition}
\newtheorem{example}{Example}
\newcommand*{\indep}{%
	\mathbin{%
		\mathpalette{\@indep}{}%
	}%
}
\newcommand*{\nindep}{%
	\mathbin{
		\mathpalette{\@indep}{\not}
	}%
}
\newcommand*{\@indep}[2]{%
	\sbox0{$#1\perp\m@th$}
	\sbox2{$#1=$}
	\sbox4{$#1\vcenter{}$}
	\rlap{\copy0}
	\dimen@=\dimexpr\ht2-\ht4-.2pt\relax
	\kern\dimen@
	{#2}%
	\kern\dimen@
	\copy0 
}
\title{A Local Method for Identifying Causal Relations\\ under Markov Equivalence}
\author{Zhuangyan Fang\textsuperscript{1} \;~
	Yue Liu\textsuperscript{2} \;~
	Zhi Geng\textsuperscript{3} \\ 
	Shengyu Zhu\textsuperscript{4} \;~
	Yangbo He\textsuperscript{1}\thanks{Correspondence to: heyb@pku.edu.cn .} \\
	~\\
	\textsuperscript{1}Peking University \quad \textsuperscript{2}Renmin University of China \\ \textsuperscript{3}Beijing Technology and Business University \\ \textsuperscript{4}Huawei Noah's Ark Lab}
\begin{document}

\maketitle

\begin{abstract}

Causality is important for designing  interpretable and robust methods in artificial intelligence research. We propose a local approach to identify  whether a  variable  is a  cause of a given target under the framework of causal graphical models of directed acyclic graphs (DAGs).   In general, the causal relation between two variables  may not be identifiable  from observational data  as  many causal DAGs encoding different causal relations  are Markov equivalent.  In this paper, we first introduce a sufficient and necessary graphical condition to check the existence of a causal path from a   variable to a target  in every Markov equivalent DAG. Next, we provide local criteria for identifying whether a variable is a cause/non-cause of a target based only on the local structure instead of   the entire graph.  Finally,  we propose  a local learning algorithm for  this causal query via  learning the local structure of the variable and some additional  statistical independence tests related to the target. Simulation studies show that our local algorithm is efficient and effective, compared with other state-of-art methods.
\end{abstract}

\section{Introduction}\label{sec:intro}


Causality is important for designing  interpretable and robust methods in artificial intelligence research~\citep{miller2019explanation}, and has been used in many fields of artificial intelligence, such as causal transfer learning \citep{zhang2020domain,Bengio2020A} and causality-based algorithmic fairness~\citep{counter, ijcai2019-199}. One of the main problems in many of these studies is to infer whether a treatment variable is a cause of a target variable, or to further identify  the causes/non-causes of  a specified target variable or the effects/non-effects of a given treatment. For example, in a telecommunication network, a single fault (or alarm) in the network can trigger a flood of alarms, and conversely, a recovery of a single fault may clear many alarms. Therefore, knowing the causal relations among the alarms (or faults) is helpful to localize the key failure points for fault recovery in practice.


Directed acyclic graphs (DAGs) can be used to represent causal relationships among variables~\citep{pearl2009causality}. Following Pearl's definition of \emph{inferred causation}~\citep[Definition~2.3.1]{pearl2009causality}, we call $X$ a \emph{cause} of $Y$ and $Y$ an effect of $X$ if $X$ has a directed path to $Y$ in the true DAG. From observational data, however, instead of an exact causal DAG,\footnote{We note that, the recent progresses in identifying the causal relation between two variables indeed provide an opportunity to learn an exact DAG. However, such methods need to pose additional distributional conditions~\citep{Shimizu2006lingam, Zhang2009identifiability, Shimizu2011directlingam, Peters2013identifiability,Peters2014causal}} we  generally  learn a Markov equivalence class of DAGs represented by a completed partially directed acyclic graph (CPDAG). The undirected edges in a CPDAG imply that some causal relations among variables can not be read from the graph directly. Therefore, given a Markov equivalence class of DAGs, a variable $X$ is a \emph{definite cause} of a target $Y$ if $X$ is always a cause of $Y$ in every equivalent DAG, and a variable $X$ is a \emph{definite non-cause} of $Y$ if $X$ is never a cause of $Y$ in any DAG in the class. If $X$ is neither a definite cause nor a definite non-cause of $Y$, $X$ is called a \emph{possible cause} of $Y$. 

Some approaches can be used to identify the type of causal relation between a  treatment and a target. An intuitive approach is first  to learn a Markov equivalence class from observational data, and then  enumerate all DAGs in the class to check whether the treatment is definitely or definitely not a cause of the target in all of these equivalent DAGs. However, the  intuitive approach is inefficient when the number of DAGs in the learned Markov equivalence class is large~\citep{he2015counting}.

Another way is to check the paths from the treatment to the target in a CPDAG. It has been shown that a treatment is a definite non-cause of a target if and only if there is no partially directed path from the treatment to the target \citep[see, e.g.][]{zhang2006, perkovic2017interpreting}. Given   a CPDAG, \citet{Roumpelaki} also introduced  a sufficient   condition for identifying definite causes. However, the necessity of this condition remains a conjecture ~\citep{zhang2006, Mooij2020constraint} and the corresponding  approach could be inefficient since  it needs to learn an entire CPDAG first. 


The third approach is to estimate all possible causal effects of the treatment on the target~\citep{maathuis2009estimating,  perkovic2017interpreting, nandy2017estimating,  fang2020bgida, liu2020cida, liu2020local, Witte2020efficient, Guo2020minimal}. This approach, which we call the causal-effect-based method, determines whether a treatment is a cause of a target by judging whether all possible causal effects are zeros/non-zeros based on a certain criterion or method, such as hypothesis testing.  However, the causal-effect-based method requires additional assumptions,\footnote{We remark that,
$X$ has a zero-valued causal effect on $Y$ does not necessarily mean that there is no directed path from $X$ to $Y$ (See \ref{app:ce} for an example). Nevertheless, with the causal faithfulness assumption as well as some model assumptions such as linear-Gaussianity, the former implies the latter.} and it could be time-consuming as the number of possible effects grows exponentially in the worst case.

In this paper, we study the problem of locally identifying causal relations under Markov equivalence with the assumption that there is no hidden variable or selection bias. That is, given a pair of treatment and target variables, we intend to decide whether the treatment is a definite cause, a possible cause or a definite non-cause of the target only based on a local induced subgraph  and a few independence tests  related to the treatment without learning an entire CPDAG. This local approach is usually more efficient than the global ones that need an entire CPDAG, especially when the underlying causal graph is large.

To this end, we first discuss the existence of a causal path from one variable to another given a CPDAG, and prove the necessity of the condition  in \citet{Roumpelaki} for CPDAGs  in Section \ref{sec:anatomy}. This yields a sufficient and necessary graphical condition to check the existence of a causal path.
Next,  in Section \ref{sec:characterization}  we propose  local identification criteria for definite causes,  possible causes and  definite non-causes separately. These criteria  depend  only on the induced subgraph of the true CPDAG over the adjacent variables of the treatment as well as some queries about d-separations, thus  directly lead to a local learning algorithm given in Section \ref{sec:learn}. For the completeness of the paper, a global algorithm and several causal-effect-based methods  for learning types of causal relations are also provided in Section \ref{sec:learn}. In Section \ref{sec:simulation}, we compare experimentally the proposed local learning method with the global and the causal-effect-based methods, and show the efficiency and efficacy of the proposed method. Finally, we discuss some applications and possible extensions of our work in Section \ref{sec: conclusion}, and give some graph  terminology,  additional algorithms, proofs and additional experimental results in Appendix A, B, C and D, respectively.

\section{Preliminaries and Related Work}\label{sec:background}

In this paper, we use $pa(\textbf{S}, \cal G)$, $ch(\textbf{S}, \cal G)$, $sib(\textbf{S}, \cal G)$, $adj(\textbf{S}, \cal G)$, $an(\textbf{S}, \cal G)$ and $de(\textbf{S}, \cal G)$ to denote the union of the parents, children, siblings (or undirected neighbors), adjacent vertices, ancestors, and descendants of each variable in set $\textbf{S}$ in $\mathcal{G}$, respectively, where ${\cal G}=(\mathbf{V}, \mathbf{E})$ can be a directed, an undirected, or a partially directed graph. The basic graph terminology can be found in \ref{app:graph}. As a convention, we regard a vertex as an ancestor and a descendant of itself.  If $\textbf{S}=\{X\}$ is a singleton set, we will replace $\textbf{S}$ by $X$ for ease of presentation. Let $\cal G$ be a causal \emph{directed acyclic graph} (causal DAG) and $X$ be a vertex in $\cal G$, the vertices in  $an(X, \mathcal{G})\setminus X$ are \emph{causes} of $X$, and the vertices in $pa(X, \mathcal{G})$ are \emph{direct causes} of $X$. If $X$ is a cause of $Y$, then the directed paths from $X$ to $Y$ are called \emph{causal paths}.

\subsection{Causal DAG Models}\label{sec:sec:causalDAG}

The notion of \emph{d-separation} induces a set of conditional independence relations encoded in a DAG \citep{pearl1988probabilistic}. Let $\mathcal{G}$ be a DAG and $\pi = (X = X_0, X_1, ... , X_n = Y)$ be a path from $X$ to $Y$ in $\mathcal{G}$. An intermediate vertex $X_i$ is a \emph{collider} on $\pi$ if $X_{i-1} \rightarrow X_i$ and $X_i \leftarrow X_{i+1}$, otherwise, $X_i$ is a \emph{non-collider} on $\pi$. For three distinct vertices $X_i, X_j$ and $X_k$, if $X_i\rightarrow X_j\leftarrow X_k$ and $X_i$ is not adjacent to $X_k$ in $\cal G$, then the triple $(X_i, X_j, X_k)$ is called a \emph{v-structure} collided on $X_j$ in $\cal G$. Given $\textbf{Z} \subseteq \textbf{V}$, we say $\pi$ is \emph{d-connected (or active)} given $\textbf{Z}$ if $\textbf{Z}$ does not contain any endpoint or non-collider on the path and every collider on the path has a descendant in $\textbf{Z}$. If $\pi$ is not d-connected given $\textbf{Z}$, then  $\pi$ is \emph{blocked} by $\textbf{Z}$.  For pairwise disjoint sets $\textbf{X}, \textbf{Y}, \textbf{Z} \subseteq \textbf{V}$, $\textbf{X}$ and $\textbf{Y}$ are d-separated by $\textbf{Z}$ (denoted by $\textbf{X} \indep \textbf{Y} \mid \textbf{Z}$) if and only if every path between some $X\in \textbf{X}$ and $Y\in \textbf{Y}$ is blocked by $\textbf{Z}$.

Let $\mathcal{J}_{\mathcal{G}}$ be the set of d-separation relations read off from a DAG $\cal G$. Two DAGs $\mathcal{G}_1$ and $\mathcal{G}_2$ are Markov \emph{equivalent} if $\mathcal{J}_{\mathcal{G}_1}=\mathcal{J}_{\mathcal{G}_2}$. \citet{pearl1989conditional} have shown that two DAGs are equivalent if and only if they have the same skeleton and the same v-structures.  A  \emph{Markov equivalence class} or simply \emph{equivalence class}, denoted by $[\mathcal{G}]$, contains all DAGs equivalent to $\mathcal{G}$. A Markov equivalence class $[\mathcal{G}]$ can be uniquely represented by a partially directed graph called \emph{completed partially directed acyclic graph} (CPDAG) $\mathcal{G}^*$. Two vertices  are adjacent in $\mathcal{G}^*$ if and only if they are adjacent in $\mathcal{G}$  and a directed edge occurs in $\mathcal{G}^*$ if and only if it appears in every DAG in $[\mathcal{G}]$~\citep{pearl1989conditional}. For the ease of presentation, we will also use $[\mathcal{G}^*]$ to represent the Markov equivalence class represented by $\mathcal{G}^*$. Given a CPDAG $\mathcal{G}^*$, we use $\mathcal{G}^*_u$ and $\mathcal{G}^*_d$, which consist of all undirected edges and all directed edges in ${\cal G}^*$,  to denote the \emph{undirected subgraph} and the \emph{directed subgraph} of ${\cal G}^*$, respectively. 
\citet{andersson1997characterization} proved that (1) the undirected subgraph $\mathcal{G}^*_u$ of $\mathcal{G}^*$ is the union of disjoint connected chordal graphs (the definition of chordal graph is provided in \ref{app:graph}), and (2) every partially directed cycle in $\mathcal{G}^*$ is an undirected cycle, that is, none of the partially directed cycles in $\mathcal{G}^*$ contains a directed edge.   Each isolated connected undirected subgraph  of $\mathcal{G}^*_u$ is called a  \emph{chain component} of ${\cal G}^*$~\citep{andersson1997characterization, lauritzen2002chain}.

For a given distribution $P$, we use $\textbf{X} \indep_{P} \textbf{Y} \mid \textbf{Z}$  to denote that $\textbf{X}$ is  independent of $\textbf{Y}$ given $\textbf{Z}$ with respect to $P$, where $\textbf{X}, \textbf{Y}, \textbf{Z} \subseteq \textbf{V}$ are pairwise disjoint.
If both $\textbf{X}=\{X\}$ and $\textbf{Y}=\{Y\}$ are singleton sets, we allow  that  $X$ or $Y \in \textbf{Z}$,  and  assume that $X\indep Y\mid \textbf{Z}$ trivially holds  in this case.
Let $\mathcal{J}_{P}$ be the set of all (conditional) independencies that hold with respect to $P$. The main results of this paper are based on the following assumptions: the \emph{causal Markov assumption}, which states that $\textbf{X} \indep \textbf{Y} \mid \textbf{Z} $ in $\mathcal{J}_{\mathcal{G}} $  implies $\textbf{X} \indep_P \textbf{Y} \mid \textbf{Z} $ in $\mathcal{J}_{P}$; the \emph{causal faithfulness assumption}, which states that $\textbf{X} \indep_P \textbf{Y} \mid \textbf{Z} $ in $ \mathcal{J}_{P}$  implies $\textbf{X} \indep \textbf{Y} \mid \textbf{Z}  $ in $ \mathcal{J}_{\mathcal{G}} $; and the assumption that there is no hidden variable or selection bias. A distribution $P$ is called \emph{Markovian} and \emph{faithful} to a DAG $\mathcal{G}$ if $P$ and ${\cal G}$ satisfy the causal Markov assumption and the causal faithfulness assumption. A causal DAG model consists of a DAG $\cal G$ and a joint distribution $P$ over a common vertex set $\mathbf{V}$ such that $P$ satisfies the causal Markov assumption with respect to $\cal G$. $\cal G$ is called the \emph{causal structure} of the model and $P$ is called the \emph{observational distribution} (or simply \emph{distribution})~\citep{hauser2012characterization}.

\subsection{Global and Local Causal Structure Learning}\label{sec:sec:structure_learning}

Causal structure learning methods try to recover the causal structure from data.
Global causal structure learning focuses on learning an entire causal structure over all variables while local causal structure learning aims to recover only a part of the underlying causal structure.

Existing approaches for learning global causal structures roughly fall into two classes: constraint-based and score-based methods. Constraint-based methods, such as the PC algorithm~\citep{spirtes1991algorithm} and the stable PC algorithm~\citep{colombo2014order}, use conditional independence tests to find causal skeleton and then determine the edge directions according to a series of orientation rules~\citep{meek1995causal}. Under the causal Markov and causal faithfulness assumptions,  constraint-based methods can identify causal graphs up to a Markov equivalence class. On the other hand, score-based methods, such as exact search algorithms like dynamic programming \citep{Koivisto2004exact,Singh2005finding} and A* \citep{Yuan2011learning,Xiang2013lasso}, greedy search algorithms like GES \citep{chickeringo2002optimal}, and gradient-based methods like NOTEARS~\citep{zheng2018dags}, evaluate candidate graphs with a predefined score function and search for the optimal DAGs or CPDAGs.

Local learning algorithms usually learn the Markov blanket~\citep[see, e.g.][]{tsamardinos2003algorithms, Tsamardinos2003towards, fu2010} or the parent and child set of a given target~\citep[see, e.g.][]{wang2014discovering, gao2015local, liu2019}. Recently, \citet[Algorithm~3]{liu2020local} extended the MB-by-MB algorithm \citep{wang2014discovering} to learn the chain component containing a given target and the directed edges surrounding  the chain component. This variant of MB-by-MB can thus learn the induced subgraph of the true CPDAG over the target and its neighbors, that is, the parents, siblings and children of the target in the CPDAG.

\subsection{Related Work}
As discussed in Section \ref{sec:intro}, when a learned CPDAG is provided, one can either enumerate all equivalent DAGs, or check the paths in the CPDAG~\citep{zhang2006, Roumpelaki, perkovic2017interpreting}, or use the causal-effect-based method to identify types of causal relations~\citep{maathuis2009estimating, perkovic2017interpreting}.

Many   sufficient conditions are also available  to  identify  some of causal relations without estimating a global causal structure   \citep{cooper97, spirtes2000causation, mani06y, pearl2009causality, Claassen11alogical, colombo2014order, Magliacane2016ancestral}.  
For example, if $X\nindep Y\mid \mathbf{W}\cup Z$ while $X\indep Y\mid \mathbf{W}$, then $Z$ is a definite non-cause of every variable in $X\cup Y \cup \mathbf{W}$~\citep{Claassen11alogical}.   Since these rules are sound but not complete, they may fail to identify the causal relation of a given pair of treatment and target.


  Recently, a related work from \cite{entner13a} proposed sound and complete rules for inferring whether a given variable $X$ has  a causal effect  on another variable $Y$.
Compared with our work, their criteria allow the existence of unmeasured confounders, but also  require two additional assumptions: $Y$ is not a cause of $X$, and neither $X$ nor $Y$ is a cause of other observed variables.

\section{An Anatomy of Causal Relations}\label{sec:anatomy}

In this section, we provide a sufficient and necessary condition to identify definite causal relations, and show that definite causal relations can be divided into two subtypes: explicit  and implicit causal relations.

\subsection{Graphical Criteria for Identifying Types of Causal Relations}\label{sec:sec:graphical}


As mentioned in Section \ref{sec:intro},
given a CPDAG, a variable $X$ is a definite non-cause of another variable $Y$ if and only if there is no partially directed path from $X$ to $Y$~\citep{zhang2006, perkovic2017interpreting}.
\citet[Theorem~3.1]{Roumpelaki} proved that a treatment is a definite cause of a target if there is a directed path from the treatment to the target or the treatment has two chordless partially directed paths to the target on which two vertices adjacent to the treatment are distinct and non-adjacent. In the section, we will show that this condition is also necessary, and before that,  a concept of   \emph{critical set} is introduced as follows.

\begin{definition}[Critical Set]\label{def:critical-set}
	{\rm {\citep[Definition~2]{fang2020bgida}}}
	Let $\mathcal{G}^*$ be a CPDAG, and  $X$ and $Y$ be two distinct vertices in $\mathcal{G}^*$. The critical set of $X$ with respect to $Y$ in $\mathcal{G}^*$ consists of all adjacent vertices of $X$ lying on at least one chordless partially directed path from $X$ to $Y$.
\end{definition}

The definition of chordless partially directed path can be found in \ref{app:graph}. With Definition \ref{def:critical-set}, we have the following lemma.

\begin{lemma}\label{lem:child_critical}
	Let $\mathcal{G}^*$ be a CPDAG. For any two distinct vertices $X$ and $Y$ in $\mathcal{G}^*$, $X$ is a definite cause of $Y$ in the underlying DAG if and only if the critical set of $X$ with respect to $Y$ in $\mathcal{G}^*$  contains a child of $X$ in every DAG ${\cal G}\in[\mathcal{G}^*]$.
\end{lemma}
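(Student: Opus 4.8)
The plan is to prove a stronger, \emph{per-DAG} statement and then lift it by a universal quantifier. Fix any DAG $\mathcal{G}\in[\mathcal{G}^*]$ and write $a(\mathcal{G})$ for the assertion that $X$ is a cause of $Y$ in $\mathcal{G}$ (i.e.\ there is a directed path from $X$ to $Y$), and $b(\mathcal{G})$ for the assertion that the critical set of $X$ with respect to $Y$ contains a child of $X$ in $\mathcal{G}$. Since $X$ being a definite cause of $Y$ is exactly $a(\mathcal{G})$ holding for every $\mathcal{G}\in[\mathcal{G}^*]$, and the right-hand side of the lemma is exactly $b(\mathcal{G})$ holding for every $\mathcal{G}\in[\mathcal{G}^*]$, it suffices to show $a(\mathcal{G})\iff b(\mathcal{G})$ for each individual $\mathcal{G}$. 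A preliminary observation makes the critical set easy to handle: if a vertex is adjacent to $X$ and lies on a chordless partially directed path from $X$ to $Y$, it must be the \emph{second} vertex of that path, since otherwise it would form a chord with $X$. Thus the critical set is precisely the set of second vertices of chordless partially directed paths from $X$ to $Y$ in $\mathcal{G}^*$.

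For the direction $a(\mathcal{G})\Rightarrow b(\mathcal{G})$, I would take a \emph{shortest} directed path $X=V_0\to V_1\to\cdots\to V_m=Y$ in $\mathcal{G}$. This path is chordless in $\mathcal{G}$: a chord oriented forward would shorten it, while a chord oriented backward would close a directed cycle, contradicting acyclicity. Since $\mathcal{G}$ and $\mathcal{G}^*$ share the same skeleton, the path is also chordless in $\mathcal{G}^*$. Each of its edges is directed toward $Y$ in $\mathcal{G}$, so in $\mathcal{G}^*$ it is either oriented the same way or left undirected; hence, read in $\mathcal{G}^*$, the path is a chordless partially directed path from $X$ to $Y$. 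Its second vertex $V_1$ is a child of $X$ in $\mathcal{G}$ and, by the preliminary observation, belongs to the critical set, which gives $b(\mathcal{G})$.

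For the converse $b(\mathcal{G})\Rightarrow a(\mathcal{G})$, suppose the critical set contains a vertex $V$ that is a child of $X$ in $\mathcal{G}$, and fix a chordless partially directed path $X=V_0,V_1=V,\dots,V_m=Y$ in $\mathcal{G}^*$ witnessing $V$'s membership. I would show by induction on $i$ that $\mathcal{G}$ orients every edge of this path forward, i.e.\ $V_i\to V_{i+1}$ in $\mathcal{G}$. The base case $V_0\to V_1$ is the hypothesis $X\to V$. For the step, assume $V_{i-1}\to V_i$ in $\mathcal{G}$; the edge $V_iV_{i+1}$ is either already directed toward $V_{i+1}$ in $\mathcal{G}^*$ (and hence in $\mathcal{G}$), or undirected in $\mathcal{G}^*$. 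In the latter case, were $\mathcal{G}$ to orient it as $V_{i+1}\to V_i$, then because $V_{i-1}$ and $V_{i+1}$ are non-adjacent (chordlessness), $\mathcal{G}$ would contain a v-structure $V_{i-1}\to V_i\leftarrow V_{i+1}$ that is absent in $\mathcal{G}^*$, contradicting that Markov equivalent DAGs share the same v-structures. Hence $V_i\to V_{i+1}$ in $\mathcal{G}$, and the induction yields a directed path $X\to V_1\to\cdots\to Y$, i.e.\ $a(\mathcal{G})$.

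The main obstacle is the inductive step of the converse: the whole argument rests on showing that an undirected edge of the chordless witness path can never be oriented \emph{against} the path in any consistent DAG. This is exactly where the two structural ingredients combine — chordlessness forces the two flanking vertices to be non-adjacent, and Markov equivalence forces the v-structures to be preserved — so any backward orientation would manufacture a forbidden v-structure. The remaining pieces (shortest directed paths are chordless, the skeleton coincidence, and the identification of critical-set members with second vertices) are routine. I would also note that the per-DAG equivalence $a(\mathcal{G})\iff b(\mathcal{G})$ established here is strictly stronger than the stated lemma and is likely to be reused in the subsequent local criteria.
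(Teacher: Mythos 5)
Your proposal is correct, and its top-level structure (prove a per-DAG equivalence, then lift it through the universal quantifier over $[\mathcal{G}^*]$) mirrors the paper's. The difference is in where the per-DAG content comes from: the paper simply cites Lemma~2 of \citet{fang2020bgida}, which states the contrapositive form of your equivalence ($X$ is \emph{not} a cause of $Y$ in $\mathcal{G}$ iff the critical set $\mathbf{C}$ satisfies $\mathbf{C}\subseteq pa(X,\mathcal{G})$), takes the contrapositive using the fact that every member of $\mathbf{C}$ is adjacent to $X$ and hence is either a parent or a child of $X$ in any equivalent DAG, and quantifies. You instead reprove that cited lemma from first principles, and your two ingredients are sound: (i) a shortest directed path in $\mathcal{G}$ is chordless (a forward chord shortens it, a backward chord closes a directed cycle), survives as a chordless \emph{partially} directed path in $\mathcal{G}^*$ because compelled edges cannot oppose $\mathcal{G}$'s orientations, and its second vertex is a critical-set member; and (ii) the orientation-propagation induction along a chordless witness path, where a backward orientation of an undirected edge would create a v-structure $V_{i-1}\to V_i\leftarrow V_{i+1}$ (non-adjacency coming from chordlessness) that cannot exist, since an undirected edge of $\mathcal{G}^*$ is oriented both ways by DAGs in the class and Markov equivalent DAGs share v-structures. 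Both facts you invoke are stated in the paper's preliminaries (Verma--Pearl characterization; directed edges of a CPDAG are exactly the arrows common to all equivalent DAGs), so your argument is self-contained where the paper's is not. What each approach buys: the paper's proof is two lines but defers all graph-theoretic content to an external reference; yours is longer but makes the lemma independent of \citet{fang2020bgida}, and your propagation step (ii) is the DAG-level analogue of the path-orientation lemmas the paper later cites from \citet{maathuis2015generalized} and \citet{perkovic2017interpreting}, so it is indeed reusable machinery for the subsequent local criteria, as you anticipate.
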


Lemma \ref{lem:child_critical} follows from Lemma~2 in \citet{fang2020bgida}. It gives a sufficient and necessary condition to decide whether $X$ is a definite cause of $Y$. However, checking the condition given in Lemma \ref{lem:child_critical} also requires to enumerate all equivalent DAGs. To mitigate this problem, we discuss a graphical characteristic of  critical set in the corresponding CPDAG. 


\begin{lemma}\label{lem:critical_has_a_child}
	Let $\mathcal{G}^*$ be a CPDAG and $X, Y$ be two distinct vertices in $\mathcal{G}^*$. Denote by $\mathbf{C}$ the critical set of $X$ with respect to $Y$ in $\mathcal{G}^*$, then $\mathbf{C}\cap ch(X, {\cal G})=\emptyset$ for some ${\cal G} \in [\mathcal{G}^*]$ if and only if $\mathbf{C} = \emptyset$, or $\mathbf{C}$ induces a complete subgraph of $\mathcal{G}^*$ but $\mathbf{C}\cap ch(X, \mathcal{G}^*)= \emptyset$.
\end{lemma}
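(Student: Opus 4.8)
The plan is to prove both directions by controlling how the undirected edges incident to $X$ can be oriented, using two structural facts that are already available: every DAG in $[\mathcal{G}^*]$ has the same skeleton and the same v-structures as $\mathcal{G}^*$ (so the directed edges and v-structures of $\mathcal{G}^*$ are preserved in each $\mathcal{G}\in[\mathcal{G}^*]$), and, by \citet{andersson1997characterization}, no partially directed cycle of $\mathcal{G}^*$ contains a directed edge. A preliminary observation I would record first is that every $V\in\mathbf{C}$ is a child or a sibling of $X$ in $\mathcal{G}^*$, never a parent: $V$ is adjacent to $X$ and lies on a chordless partially directed path from $X$, and chordlessness forces $V$ to be the first vertex after $X$ on that path, whose leading edge is $X\to V$ or $X-V$.

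For necessity, assume some $\mathcal{G}\in[\mathcal{G}^*]$ has $\mathbf{C}\cap ch(X,\mathcal{G})=\emptyset$ and that $\mathbf{C}\neq\emptyset$. Since directed edges of $\mathcal{G}^*$ persist, $ch(X,\mathcal{G}^*)\subseteq ch(X,\mathcal{G})$, so $\mathbf{C}\cap ch(X,\mathcal{G}^*)=\emptyset$ at once. Each $V\in\mathbf{C}$ is adjacent to $X$ but not its child in $\mathcal{G}$, hence $V\to X$ in $\mathcal{G}$. If two members $V_1,V_2\in\mathbf{C}$ were non-adjacent, then $V_1\to X\leftarrow V_2$ would be a v-structure of $\mathcal{G}$, hence a v-structure of $\mathcal{G}^*$, forcing $V_1\to X$ to be directed in $\mathcal{G}^*$; this contradicts the preliminary observation that a critical-set vertex is never a parent of $X$ in $\mathcal{G}^*$. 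Thus $\mathbf{C}$ is pairwise adjacent, i.e.\ induces a complete subgraph, giving the right-hand side.

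For sufficiency the interesting case is $\mathbf{C}$ complete with $\mathbf{C}\cap ch(X,\mathcal{G}^*)=\emptyset$ (the case $\mathbf{C}=\emptyset$ is trivial, taking any member of $[\mathcal{G}^*]$). By the preliminary observation $\mathbf{C}\subseteq sib(X,\mathcal{G}^*)$, so $\mathbf{C}\cup\{X\}$ is a clique of undirected edges. I would form $\mathcal{H}$ from $\mathcal{G}^*$ by orienting $V-X$ into $V\to X$ for every $V\in\mathbf{C}$, and then show $\mathcal{H}$ admits a consistent DAG extension $\mathcal{G}\in[\mathcal{G}^*]$ via the extension results behind \citet{meek1995causal}; in that $\mathcal{G}$ every $V\in\mathbf{C}$ is a parent of $X$, so $\mathbf{C}\cap ch(X,\mathcal{G})=\emptyset$. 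Two things need checking. No new v-structure: a new collider at $X$ would pair some $V\to X$ with an arrowhead $Z\to X$, $Z\not\sim V$; if $Z\in\mathbf{C}$ the clique assumption gives $Z\sim V$, and if $Z\in pa(X,\mathcal{G}^*)$ then Meek's rule R1 applied to $Z\to X-V$ would already have oriented $X-V$, so again $Z\sim V$. No directed cycle: a cycle through a new edge $V\to X$ would need a directed path $X\rightsquigarrow V$ in $\mathcal{G}^*_d$, which together with $V-X$ is a partially directed cycle containing a directed edge, contradicting \citet{andersson1997characterization}.

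I expect the sufficiency construction to be the main obstacle, and within it the v-structure check is the delicate point: it relies precisely on $\mathcal{G}^*$ being closed under Meek's rules, so that a parent of $X$ cannot be non-adjacent to a sibling of $X$, which is exactly what rules out a bad collider when the clique $\mathbf{C}$ is turned into parents of $X$. A more self-contained alternative for sufficiency avoids the extension theorem: since the chain component $\tau$ containing $X$ is chordal and $\mathbf{C}\cup\{X\}$ is a clique, take a perfect elimination ordering whose terminal block is a maximal clique containing $\mathbf{C}\cup\{X\}$ with $X$ placed first within that block, orient every edge toward its earlier-eliminated endpoint, and combine the resulting v-structure-free acyclic orientation of $\tau$ with the directed part of $\mathcal{G}^*$.
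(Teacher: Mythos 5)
Your proof is correct, but it takes a more self-contained route than the paper's. After the same preliminary observation that $\mathbf{C}\subseteq sib(X,\mathcal{G}^*)\cup ch(X,\mathcal{G}^*)$, the paper's proof is essentially a two-line reduction to \citet[Lemma~3]{maathuis2009estimating} (restated as Lemma~\ref{lem:app:valid-local} in the appendix): a non-empty subset of $sib(X,\mathcal{G}^*)$ is contained in $pa(X,\mathcal{G})$ for some $\mathcal{G}\in[\mathcal{G}^*]$ if and only if it induces a complete subgraph, from which both directions fall out immediately. You instead re-derive the substance of that lemma. Your necessity argument --- two non-adjacent members of $\mathbf{C}$ that are both parents of $X$ in $\mathcal{G}$ would form a v-structure, which must persist in every equivalent DAG and hence appear as directed edges in $\mathcal{G}^*$, contradicting $\mathbf{C}\cap pa(X,\mathcal{G}^*)=\emptyset$ --- is a clean, elementary replacement for the citation. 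On sufficiency, two caveats. First, your main-line construction (orient $\mathbf{C}\to X$, then appeal to ``the extension results behind Meek'') leans on exactly the nontrivial fact being re-proven: that an initial orientation with no new v-structure and no directed cycle extends to a DAG \emph{within} $[\mathcal{G}^*]$ is precisely the content of Lemma~\ref{lem:app:valid-local}, and verifying it in general requires closure under Meek's rules, not just the two local checks; so that branch is closer to citing the paper's key lemma than to an independent argument. Your fallback via a perfect elimination ordering of the chordal chain component, with a maximal clique containing $\mathbf{C}\cup\{X\}$ as terminal block and $X$ eliminated first within it, is the genuinely self-contained path and is sound, since such an orientation is acyclic and v-structure-free and, combined with $\mathcal{G}^*_d$, yields a member of $[\mathcal{G}^*]$. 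Second, when you assert that $\mathbf{C}\cup\{X\}$ is a clique of \emph{undirected} edges, you should note why an edge between two members of $\mathbf{C}$ cannot be directed: $V_1\to V_2$ with $V_1,V_2\in sib(X,\mathcal{G}^*)$ would create the partially directed cycle $X-V_1\to V_2-X$ containing a directed edge, contradicting \citet{andersson1997characterization} --- the same fact you already invoke for acyclicity. With these points tightened, your proof stands; what it buys over the paper's is independence from the IDA lemma and an explicit DAG construction, at the cost of length.
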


Based on Lemmas \ref{lem:child_critical} and \ref{lem:critical_has_a_child}, we have the desired sufficient and necessary graphical criterion.

\begin{theorem}\label{thm:graphical_definite_cause}
	Suppose that $\mathcal{G}^*$ is a CPDAG, $X, Y$ are two distinct vertices in $\mathcal{G}^*$, and $\mathbf{C}$ is the critical set of $X$ with respect to $Y$ in $\mathcal{G}^*$. Then, $X$ is a definite cause of $Y$ if and only if $\mathbf{C}\cap ch(X, \mathcal{G}^*)\neq \emptyset$, or $\mathbf{C}$ is non-empty and induces an incomplete subgraph of $\mathcal{G}^*$.
\end{theorem}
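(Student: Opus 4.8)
The plan is to obtain the theorem as a direct logical consequence of Lemmas \ref{lem:child_critical} and \ref{lem:critical_has_a_child}, so that all the graph-theoretic content has already been discharged and what remains is careful bookkeeping of quantifiers together with a Boolean simplification. First I would apply Lemma \ref{lem:child_critical}: $X$ is a definite cause of $Y$ if and only if $\mathbf{C}\cap ch(X,{\cal G})\neq\emptyset$ for \emph{every} ${\cal G}\in[\mathcal{G}^*]$. Negating this, $X$ fails to be a definite cause precisely when there exists \emph{some} ${\cal G}\in[\mathcal{G}^*]$ with $\mathbf{C}\cap ch(X,{\cal G})=\emptyset$.

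Next I would invoke Lemma \ref{lem:critical_has_a_child} to rewrite this existential condition purely in terms of the fixed CPDAG $\mathcal{G}^*$. That lemma says such a ${\cal G}$ exists if and only if $\mathbf{C}=\emptyset$, or $\mathbf{C}$ induces a complete subgraph of $\mathcal{G}^*$ while $\mathbf{C}\cap ch(X,\mathcal{G}^*)=\emptyset$. Combining the two, $X$ is \emph{not} a definite cause of $Y$ if and only if
\[
\mathbf{C}=\emptyset \quad\text{or}\quad \bigl(\mathbf{C}\text{ induces a complete subgraph of }\mathcal{G}^*\text{ and } \mathbf{C}\cap ch(X,\mathcal{G}^*)=\emptyset\bigr).
\]

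Finally, I would negate this to characterize the definite-cause case and simplify. Taking the complement yields that $X$ is a definite cause of $Y$ iff $\mathbf{C}\neq\emptyset$ and ($\mathbf{C}$ induces an incomplete subgraph of $\mathcal{G}^*$ or $\mathbf{C}\cap ch(X,\mathcal{G}^*)\neq\emptyset$). Distributing the conjunction over the disjunction produces ($\mathbf{C}$ non-empty and incomplete) or ($\mathbf{C}\neq\emptyset$ and $\mathbf{C}\cap ch(X,\mathcal{G}^*)\neq\emptyset$); since $\mathbf{C}\cap ch(X,\mathcal{G}^*)\neq\emptyset$ already forces $\mathbf{C}\neq\emptyset$, the second disjunct collapses to $\mathbf{C}\cap ch(X,\mathcal{G}^*)\neq\emptyset$, giving exactly the claimed criterion.

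There is no obstacle of genuine mathematical depth here—the whole weight of the argument rests in the two preceding lemmas, and the theorem is their logical assembly. The one place I would slow down is the interplay between the universal quantifier of Lemma \ref{lem:child_critical} (``a child in every DAG'') and the existential quantifier characterized in Lemma \ref{lem:critical_has_a_child} (``no child in some DAG''): a misplaced negation in this double complement would erroneously interchange ``complete'' and ``incomplete,'' or drop the non-emptiness hypothesis. I would therefore double-check which conditions are asserted of an arbitrary ${\cal G}\in[\mathcal{G}^*]$ and which refer to the fixed CPDAG $\mathcal{G}^*$, and verify the final simplification by testing the three boundary cases $\mathbf{C}=\emptyset$, $\mathbf{C}$ complete, and $\mathbf{C}$ incomplete directly against the statement.
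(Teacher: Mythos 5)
Your proposal is correct and is exactly the paper's approach: the paper's proof of this theorem is a one-line remark that it "follows from Lemmas \ref{lem:child_critical} and \ref{lem:critical_has_a_child} directly," and your argument is precisely that derivation with the quantifier negation and Boolean simplification written out. The care you take with the universal/existential interplay and the collapse of the second disjunct is sound and fills in the bookkeeping the paper leaves implicit.
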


The sufficiency of the condition in Theorem \ref{thm:graphical_definite_cause} has been extended to other types of causal graphs by \citet{Roumpelaki} and \citet{Mooij2020constraint}.\footnote{We note that, although ~\citet[Theorem~3.1]{Roumpelaki} also claimed that they have proved the necessity, their proof is flawed. As mentioned by \citet{Mooij2020constraint}, the last part of the proof appears to be incomplete. How to prove the necessity for more general types of causal graphs remains an open problem~\citep{zhang2006}.} With the help of Theorem \ref{thm:graphical_definite_cause}, we can identify the type of causal relation based on a learned CPDAG by enumerating paths and finding critical sets. Below, we give an example to illustrate this idea.

\begin{example}\label{exa:difinitecause}  Consider   the respiratory disease network shown in Figure \ref{fig:motivation_1}. The meanings of the node labels are given in the caption. Let smoking be the treatment and dyspnoea be the target. From Figure \ref{fig1_2} we can see that the partially directed paths from smoking to dyspnoea are ${\rm Smok}-{\rm Lung}\rightarrow{\rm Either}\rightarrow{\rm Dysp}$ and ${\rm Smok}-{\rm Bronc}\rightarrow{\rm Dysp}$. Therefore, the critical set of smoking with respect to dyspnoea is $\{{\rm Lung}, {\rm Bronc}\}$. As ${\rm Lung}$ and ${\rm Bronc}$ are not adjacent, by Theorem \ref{thm:graphical_definite_cause} smoking is a definite cause of dyspnoea. Similarly, the critical set of lung cancer with respect to dyspnoea is $\{{\rm Smok}, {\rm Either}\}$. Since ${\rm Either}$ is a child of ${\rm Lung}$, lung cancer is also a definite cause of dyspnoea.

\begin{figure}[!t]
	\centering
	\subfigure[$\mathcal{G}_{t}$]{
		\begin{minipage}[t]{0.31\linewidth}
			\centering
			\includegraphics[width=1.3in]{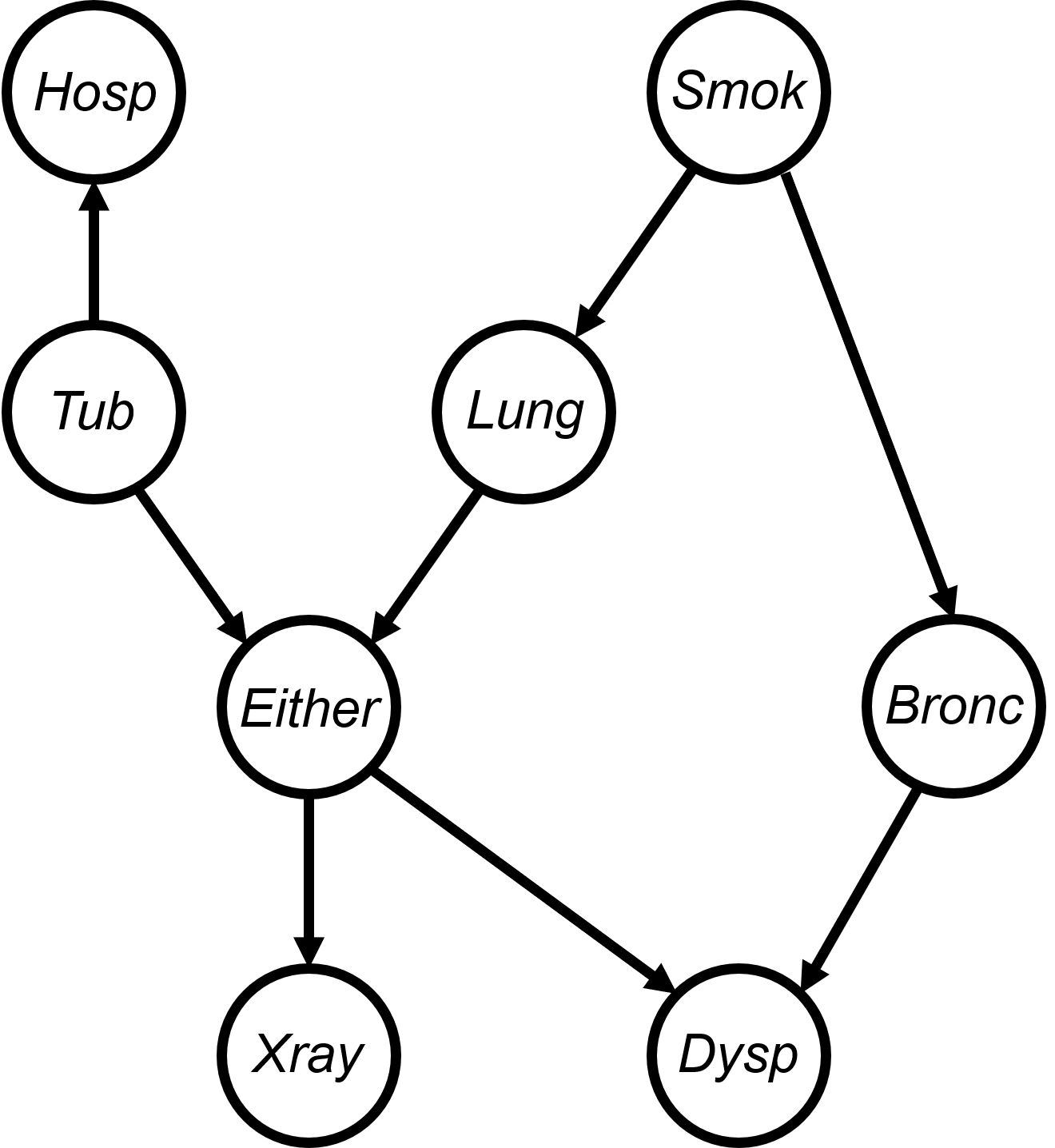}
			\vspace{0.35cm}
			\label{fig1_1}
		\end{minipage}%
	}%
	\subfigure[$\mathcal{G}^*$]{
		\begin{minipage}[t]{0.31\linewidth}
			\centering
			\includegraphics[width=1.3in]{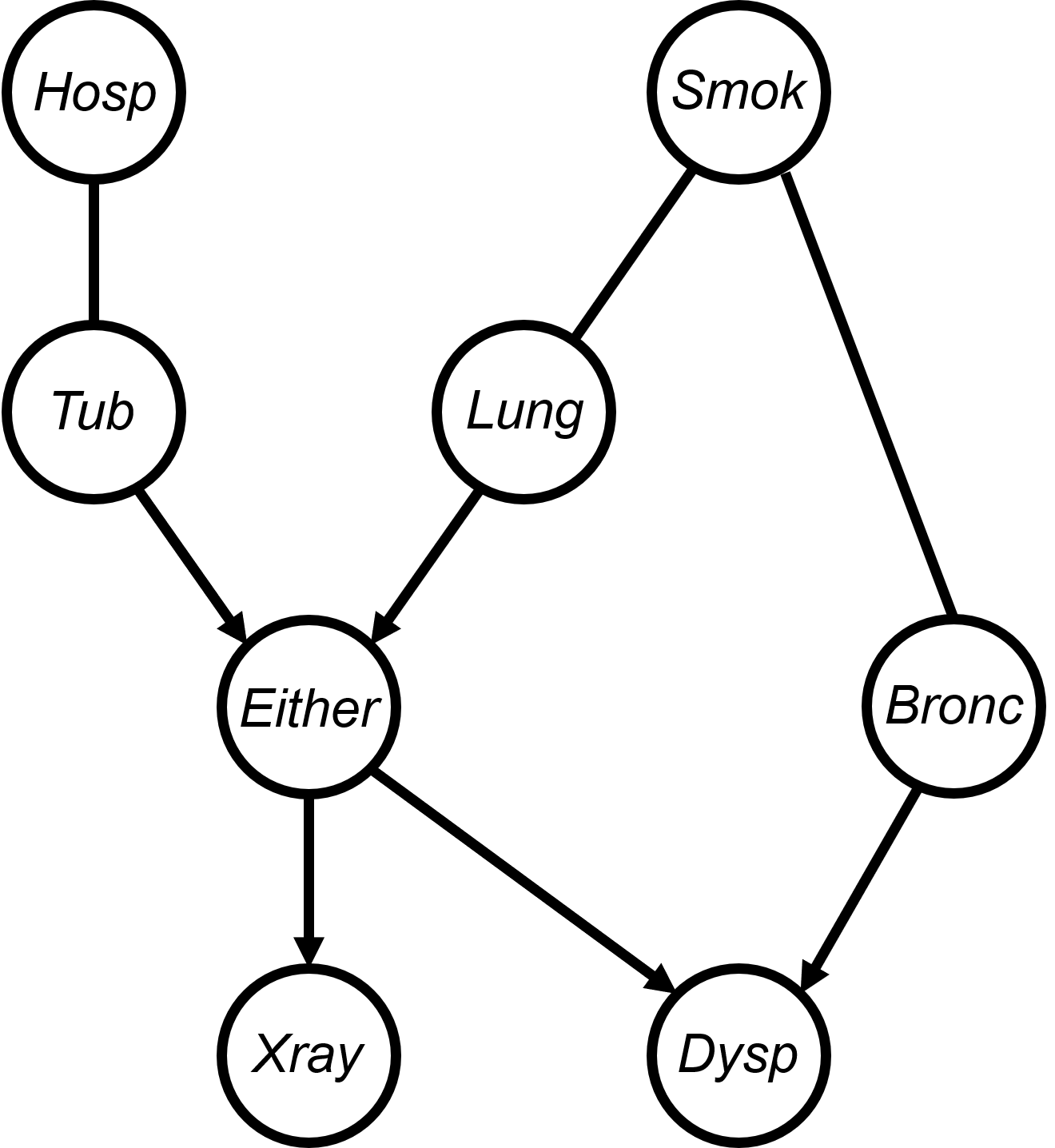}
			\vspace{0.35cm}
			\label{fig1_2}
		\end{minipage}%
	}%
	\subfigure[Markov eqivelence class]{
		\begin{minipage}[t]{0.35\linewidth}
			\centering
			\includegraphics[width=1.25in]{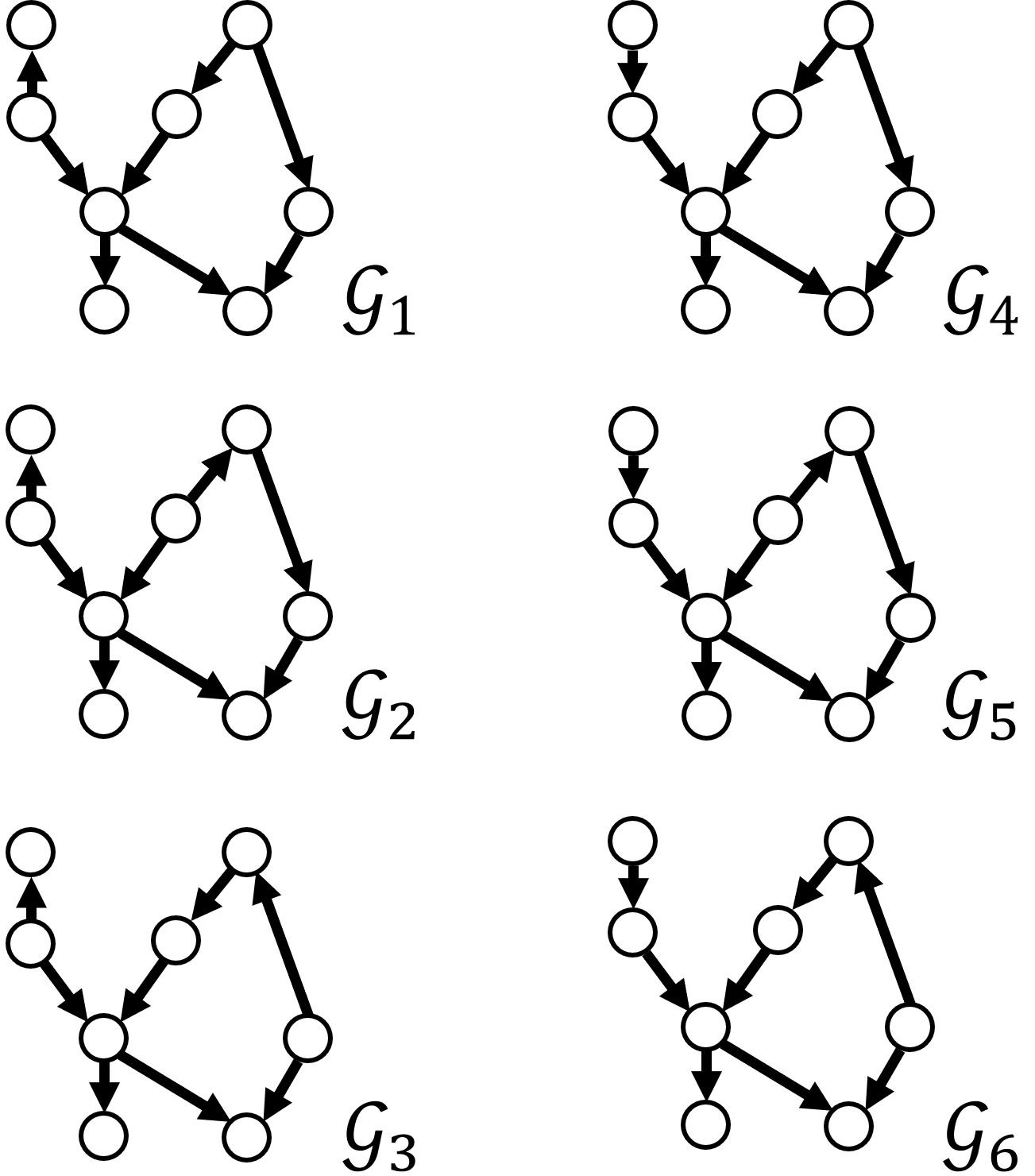}
			\vspace{0.35cm}
			\label{fig1_3}
		\end{minipage}
	}%
	\centering
	\caption{This example is adapted from the ASIA network. The original network structure and related parameters can be found in \citet{Lauritzen1988Local}. Figure \ref{fig1_1} shows the true underlying causal DAG, and Figure \ref{fig1_2} shows the corresponding CPDAG. Figure \ref{fig1_3} enumerates all equivalent DAGs in the Markov equivalence class. The meanings of the node labels are: recently have been to the \textbf{Hosp}ital, test positive for \textbf{Tub}erculosis, \textbf{Smok}ing, test positive for \textbf{Lung} cancer, \textbf{Bronc}hitis, \textbf{Either} have lung cancer or have tuberculosis,  test positive for \textbf{X-ray}, and test positive for \textbf{Dysp}noea.}
	\label{fig:motivation_1}
\end{figure}

\end{example}

\subsection{Explicit and Implicit Causal Relations}\label{sec:sec:concepts}
We now study the properties of definite causal relations, and show that definite causal relations can be divided into two subtypes based on the existence of causal paths in a CPDAG. The results in this section are of key importance to build local characterizations in Section \ref{sec:characterization}, and are also useful for developing an efficient global learning algorithm.


\begin{proposition}\label{prop:definite-cause}
	For two distinct  vertices $X$ and $Y$, if $X$ is a definite cause of $Y$, then $X$ and $Y$ are not in the same chain component.
\end{proposition}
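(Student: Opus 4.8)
The plan is to prove the contrapositive: assuming that $X$ and $Y$ lie in a common chain component $\mathcal{T}$, I will exhibit a single DAG $\mathcal{G}\in[\mathcal{G}^*]$ in which $X$ has no directed path to $Y$; by definition this shows that $X$ is not a definite cause of $Y$. Recall from \citet{andersson1997characterization}, as stated above, that $\mathcal{T}$ is a connected chordal graph all of whose edges are undirected in $\mathcal{G}^*$, so in every member of $[\mathcal{G}^*]$ these edges are oriented but remain inside $\mathcal{T}$.

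First I would reduce the problem to orienting $\mathcal{T}$ alone. Every edge joining two distinct chain components is directed in $\mathcal{G}^*$ (otherwise its endpoints would lie in the same connected undirected component), and such edges keep their orientation in every DAG of the class. Moreover, the property that every partially directed cycle of $\mathcal{G}^*$ is an undirected cycle forces the quotient relation ``there is a directed edge from one chain component to another'' to be acyclic. Consequently a directed path that leaves $\mathcal{T}$ can never return to $\mathcal{T}$, so any directed path $X\to\cdots\to Y$ must stay entirely inside $\mathcal{T}$. It therefore suffices to orient the internal edges of $\mathcal{T}$ so that no directed path runs from $X$ to $Y$.

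For the construction I would exploit the chordality of $\mathcal{T}$. A chordal graph admits a perfect elimination ordering ending at any prescribed vertex (for instance, run maximum cardinality search initialized at $Y$ and reverse the visiting order); orienting each internal edge from its earlier to its later endpoint along the reverse of this ordering yields an acyclic orientation of $\mathcal{T}$ that introduces no new v-structure and in which $Y$ is a source, i.e.\ has no incoming internal edge. Since nothing points into $Y$ inside $\mathcal{T}$, there is no directed path ending at $Y$ within $\mathcal{T}$, hence none from $X$ to $Y$. I then define $\mathcal{G}$ to consist of the directed edges of $\mathcal{G}^*$, this orientation of $\mathcal{T}$, and any analogous v-structure-free acyclic orientation of the remaining chain components.

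The main obstacle is the final consistency check that $\mathcal{G}\in[\mathcal{G}^*]$, that is, that $\mathcal{G}$ is acyclic and shares exactly the skeleton and v-structures of $\mathcal{G}^*$. Acyclicity and equality of skeletons are immediate, and the v-structures of $\mathcal{G}^*$ are clearly preserved since all its directed edges are retained. The delicate point is ruling out \emph{new} v-structures: those internal to a chain component are excluded by the construction, while a new v-structure combining an edge I oriented with an external directed edge $a\to b$ entering $\mathcal{T}$ is excluded because Meek's orientation rule R1 \citep{meek1995causal} guarantees that such an $a$ is adjacent to every undirected neighbor of $b$ in $\mathcal{G}^*$. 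This is precisely the standard fact that the chain components of a CPDAG may be oriented independently by arbitrary v-structure-free acyclic orientations; once it is in hand, $\mathcal{G}$ witnesses that $X$ is not a definite cause of $Y$, completing the contrapositive. As an alternative route one could argue directly from Theorem \ref{thm:graphical_definite_cause}, by checking that inside a chain component the critical set of $X$ with respect to $Y$ is either empty or induces a complete subgraph and contains no child of $X$.
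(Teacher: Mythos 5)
Your proof is correct, and it shares the paper's overall strategy---prove the contrapositive by exhibiting a witness DAG in $[\mathcal{G}^*]$ in which $X$ is not a cause of $Y$---but the construction and the supporting lemmas are genuinely different. The paper's proof is more economical: it invokes Lemma~3.1 of \citet{maathuis2009estimating} (restated as Lemma~\ref{lem:app:valid-local}) to obtain a DAG $\mathcal{G}\in[\mathcal{G}^*]$ with $pa(Y,\mathcal{G})=pa(Y,\mathcal{G}^*)$ and $ch(Y,\mathcal{G})=ch(Y,\mathcal{G}^*)\cup sib(Y,\mathcal{G}^*)$, takes the shortest (hence chordless) path from $Y$ to $X$, which is undirected in $\mathcal{G}^*$ since the two vertices share a chain component, and applies Lemma~B.1 of \citet{perkovic2017interpreting} to conclude that this path is fully directed $Y\to\cdots\to X$ in $\mathcal{G}$; acyclicity then rules out any directed path from $X$ to $Y$. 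You instead build the witness orientation explicitly---a perfect-elimination-ordering orientation making $Y$ a source of its chain component, together with arbitrary v-structure-free acyclic orientations of the other components---and verify membership in $[\mathcal{G}^*]$ by hand (no new v-structures via Meek's rule R1, acyclicity via the acyclic quotient over chain components), after which the conclusion is immediate because nothing points into $Y$ inside its component and no directed path can leave a chain component and return. In effect you re-derive the standard orientation-independence fact that the paper simply cites (it appears, with references, in the preamble of \ref{app:proofs}); your route is more self-contained, while the paper's is shorter because both nontrivial steps are delegated to existing lemmas, and it yields the stronger conclusion that $Y$ can be made an ancestor of $X$. One caution: your closing alternative via Theorem~\ref{thm:graphical_definite_cause} is not as routine as you suggest, since it requires proving that within a chain component the critical set of $X$ with respect to $Y$ always induces a complete subgraph---a nontrivial chordal-graph fact (in the paper this completeness is a consequence of, not an ingredient for, the proposition)---so your main construction is the right one to keep.
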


Given a target variable $Y$, Proposition \ref{prop:definite-cause} shows that $Y$ and its  definite causes do not appear in the same chain component. Thus, if a treatment $X$ is a definite cause of a target $Y$, then in ${\cal G}^*$ there must be a partially directed path from $X$ to $Y$ which contains a directed edge. On the other hand, for two distinct vertices lying in the same chain component, we have,

\begin{proposition}\label{prop:chain-comp}
	Two distinct  vertices $X$ and $Y$ are possible causes of each other if and only if they are in the same chain component.
\end{proposition}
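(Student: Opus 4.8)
The plan is to prove the two directions separately, relying on Proposition~\ref{prop:definite-cause} for the forward direction and on the structure of the directed part of a CPDAG for the converse. Throughout I use the definition that $X$ is a \emph{possible cause} of $Y$ precisely when $X$ is neither a definite cause nor a definite non-cause of $Y$.

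For the ``if'' direction, suppose $X$ and $Y$ lie in the same chain component. Since a chain component is a connected undirected subgraph of $\mathcal{G}^*_u$, there is an undirected path between $X$ and $Y$; as an undirected path is in particular a partially directed path, the characterization of definite non-causes (namely, $X$ is a definite non-cause of $Y$ iff there is no partially directed path from $X$ to $Y$) shows that $X$ is not a definite non-cause of $Y$. On the other hand, Proposition~\ref{prop:definite-cause} states that a definite cause and its target cannot share a chain component, so its contrapositive gives that $X$ is not a definite cause of $Y$ either. Hence $X$ is a possible cause of $Y$, and because ``being in the same chain component'' is symmetric, the same argument with the roles of $X$ and $Y$ exchanged shows that $Y$ is a possible cause of $X$.

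For the ``only if'' direction, the key device is the quotient graph $\mathcal{D}$ obtained by contracting each chain component of $\mathcal{G}^*$ to a single node and drawing a directed edge $C_1\to C_2$ whenever $\mathcal{G}^*$ contains a directed edge from a vertex of $C_1$ to a vertex of $C_2$. I would first argue that $\mathcal{D}$ is acyclic: a simple directed cycle $C_1\to C_2\to\cdots\to C_m\to C_1$ in $\mathcal{D}$ could be lifted to a closed walk in $\mathcal{G}^*$ that crosses between consecutive (hence distinct) components along directed edges and travels within each component along a simple undirected path, yielding a genuine partially directed cycle that traverses at least one directed edge. This contradicts the result of \citet{andersson1997characterization} that every partially directed cycle in a CPDAG is undirected. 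I expect this lifting step, together with the verification that the lifted object has distinct vertices and is therefore a bona fide partially directed cycle, to be the main obstacle.

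Finally, since $X$ is a possible cause of $Y$ it is not a definite non-cause of $Y$, so there is a partially directed path from $X$ to $Y$; each undirected edge of this path stays inside a single chain component while each directed edge crosses from one component to another along an edge of $\mathcal{D}$, so the path projects to a directed walk in $\mathcal{D}$ from the component $C_X$ to the component $C_Y$, witnessing $C_X\preceq C_Y$ in the reachability order of the DAG $\mathcal{D}$. Symmetrically, a partially directed path from $Y$ to $X$ yields $C_Y\preceq C_X$. Antisymmetry of the reachability order of a DAG then forces $C_X=C_Y$, that is, $X$ and $Y$ share a chain component, which completes the proof.
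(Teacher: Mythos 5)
Your proof is correct, and it is a mix of the paper's approach and a genuinely different one. The ``if'' direction coincides exactly with the paper's: an undirected path is in particular partially directed, so $X$ is not a definite non-cause of $Y$, and Proposition~\ref{prop:definite-cause} rules out definite causation; symmetry finishes it. Where you diverge is the converse. The paper argues directly: given the two partially directed paths (from $X$ to $Y$ and from $Y$ to $X$), it claims neither can contain a directed edge, since otherwise a partially directed cycle containing a directed edge would occur, contradicting the result of \citet{andersson1997characterization}; hence $X$ and $Y$ are joined by an undirected path and lie in one chain component. You instead contract the chain components into a quotient graph $\mathcal{D}$, prove $\mathcal{D}$ is acyclic by lifting a hypothetical simple cycle of $\mathcal{D}$ to a partially directed cycle of $\mathcal{G}^*$ with a directed edge, and conclude by antisymmetry of reachability in a DAG. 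Both arguments ultimately rest on the same fact from \citet{andersson1997characterization}, but they buy different things: the paper's version is much shorter, yet its one-line claim glosses over the fact that concatenating the two paths may repeat vertices, producing a closed walk rather than a cycle, so extracting a genuine partially directed cycle requires an extra (routine) step; your quotient construction sidesteps precisely this walk-versus-cycle issue, since reachability in $\mathcal{D}$ only needs walks, and your lifting uses a simple cycle of $\mathcal{D}$ whose components are vertex-disjoint, so the lifted object really is a cycle. One point you should make explicit for the projection step to be well defined: no directed edge of $\mathcal{G}^*$ can join two vertices of the same chain component. This follows by the same style of argument (the directed edge together with an undirected path back inside the component would form a partially directed cycle containing a directed edge), but as stated your map from edges of a partially directed path to edges of $\mathcal{D}$ silently relies on it.
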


Recall that in Figure \ref{fig1_2}, both ${\rm Smok}$ and ${\rm Lung}$ are definite causes of ${\rm Dysp}$. However, in the CPDAG there exists a directed path from ${\rm Lung}$ to ${\rm Dysp}$ while no directed path exists from ${\rm Smok}$ to ${\rm Dysp}$. That is, the cause ${\rm Lung}$ of  ${\rm Dysp}$ is explicit and the cause ${\rm Smok}$ of ${\rm Dysp}$ is implicit in the CPDAG.    This difference motivates the following two concepts.

\begin{definition}[Explicit Cause]\label{def:explicit}
	A variable $X$ is an explicit cause of $Y$ if 
	there is a common causal path from $X$ to $Y$ in every DAG in the Markov equivalence class represented by a CPDAG $\mathcal{G}^*$.
\end{definition}

As there is a common causal path from an explicit cause $X$ to the target $Y$ in every DAG in the Markov equivalence class represented by $\mathcal{G}^*$, there is a directed path from $X$ to $Y$ in $\mathcal{G}^*$, and thus $X$ is a definite cause of $Y$.


\begin{definition}[Implicit Cause]\label{def:implicit}
	A  variable $X$ is an implicit cause of $Y$ if $X$ is a definite cause of $Y$ and there is no common causal path from $X$ to $Y$ in all DAGs in the Markov equivalence class represented by a CPDAG $\mathcal{G}^*$.
\end{definition}

We notice that $X$ is a
definite cause of $Y$ if only if it satisfies one of the two conditions   given in Theorem \ref{thm:graphical_definite_cause}. The first condition, $\mathbf{C}\cap ch(X, \mathcal{G}^*)\neq \emptyset$, is the sufficient and necessary condition for identifying explicit causes, while the second condition corresponds to implicit causes. In Section \ref{sec:characterization}, we will exploit this difference between explicit and implicit causes to develop local characterizations for both of them. Below, we give an illustrative example.


\begin{figure}[!t]
	\centering
	\includegraphics[width=0.35\linewidth]{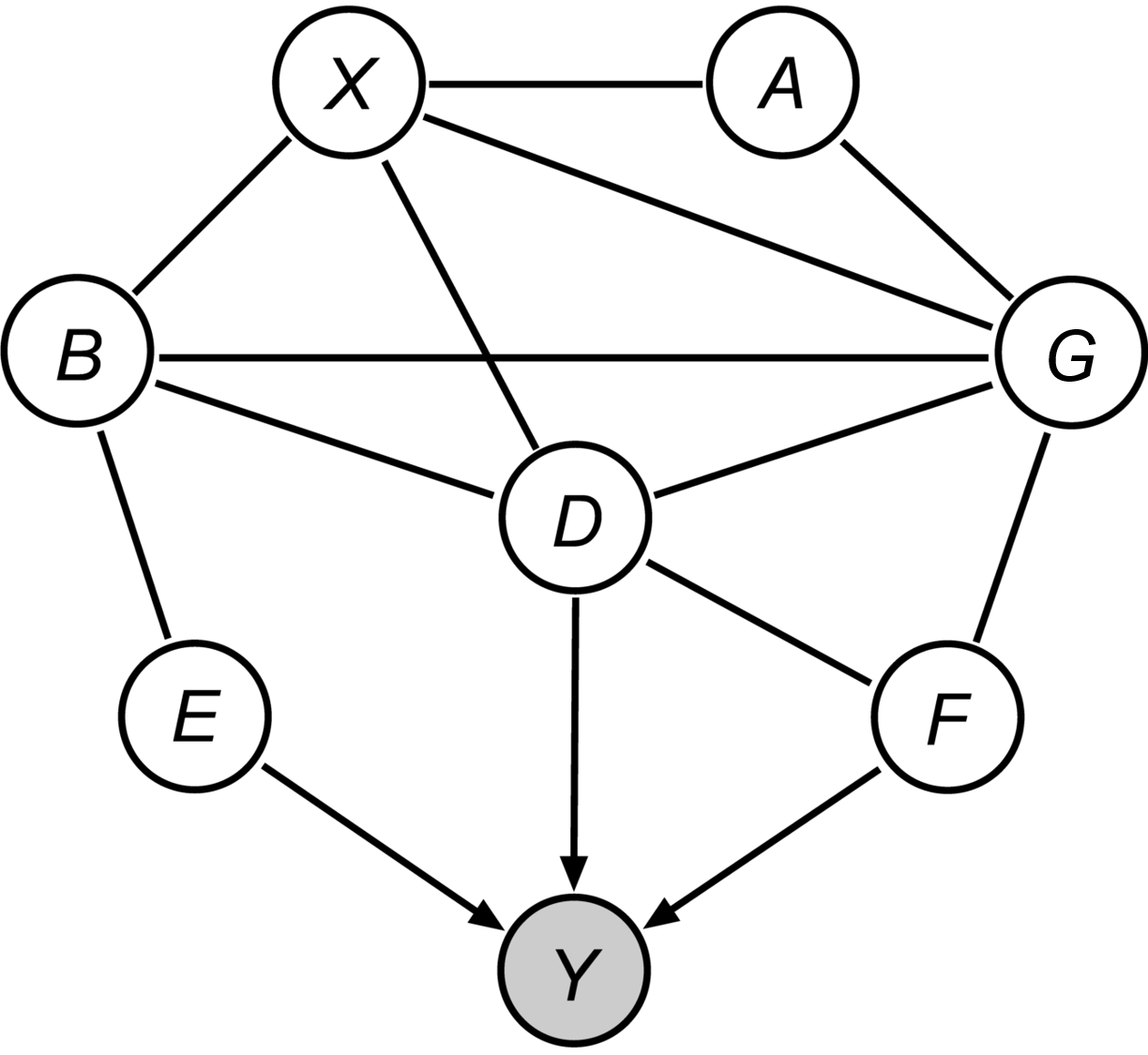}
	\caption{ An example for identifying the types of causal relations}
	\label{fig4}
\end{figure}

\begin{example}\label{e1}
	Consider the causes of the target variable $Y$ based on the CPDAG $\mathcal{G}^*$ in Figure \ref{fig4}. It is clear that all the variables other than $Y$ are definite or possible causes of $Y$. Obviously, $\{E, D, F\}$ are explicit causes of $Y$. For $B$, since $B-E\to Y$, $B-D\to Y$ and $B-G-F\to Y$ are chordless partially directed paths, the critical set of $B$ with respect to $Y$ is $\{E, D, G\}$. As the induced subgraph of $\mathcal{G}^*$ over $\{E, D, G\}$ is not complete, $B$ is a definite cause of $Y$, and $B$ is also implicit. Similarly, $G$ is another implicit cause of $Y$. For $X$ and $A$, the critical set of $X$ and $A$ with respect to $Y$ are $\{B, D, G\}$ and $\{X, G\}$, respectively. Since the corresponding induced subgraphs are complete, by Theorem \ref{thm:graphical_definite_cause}, $X$ and $A$ are not implicit causes of $Y$. Thus, they are possible causes of $Y$.
\end{example}

Despite the difference, explicit and implicit causes also have some interesting connections. The following Proposition \ref{prop:nec-implicit} proves that the existence of an implicit cause implies the existence of at least two explicit causes.

\begin{proposition}\label{prop:nec-implicit}
	  Let $\mathcal{G}^*$ be a CPDAG and $X$ and $Y$ be two  vertices of it in different chain components. If $X$ is the only explicit cause of $Y$ in the chain component to which $X$ belongs, then every vertex in this chain component, except $X$, is a possible cause of $Y$.
\end{proposition}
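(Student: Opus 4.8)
The plan is to write $\mathbf{K}$ for the chain component of $\mathcal{G}^*$ containing $X$, fix an arbitrary $W\in\mathbf{K}\setminus\{X\}$, and show that $W$ is neither a definite non-cause nor a definite cause of $Y$, hence a possible cause. For the non-cause part I would use that $W$ is a definite non-cause exactly when no partially directed path runs from $W$ to $Y$. Since $\mathbf{K}$ is a connected undirected (chordal) graph there is an undirected path from $W$ to $X$ inside $\mathbf{K}$, and since $X$ is an explicit cause of $Y$ there is a directed path from $X$ to $Y$ in $\mathcal{G}^*$; concatenating them gives a partially directed path from $W$ to $Y$. This path is simple because its directed portion cannot re-enter $\mathbf{K}$: a return to some $w'\in\mathbf{K}$ would close a partially directed cycle carrying a directed edge, contradicting the property that every partially directed cycle in a CPDAG is undirected. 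So $W$ is not a definite non-cause.

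The core of the argument, and the step I expect to be the main obstacle, is a structural description of the chordless partially directed paths from $W$ to $Y$. I would prove that every such path has the form $W-\cdots-X\rightarrow\cdots\rightarrow Y$, with an undirected chordless prefix inside $\mathbf{K}$ and a directed suffix. The key observation is that along a chordless partially directed path, once a directed edge $u\rightarrow v$ appears the following edge must also be directed: were it $v-w$, chordlessness makes $u$ and $w$ non-adjacent, and the orientation rules of \citet{meek1995causal} would already have oriented $v\rightarrow w$ in the CPDAG. Hence the path is directed from its first directed edge onward. Since $Y\notin\mathbf{K}$, any chordless partially directed path from $W$ to $Y$ must leave $\mathbf{K}$, and it can do so only through a directed edge out of some $u\in\mathbf{K}$ (internal edges of $\mathbf{K}$ are undirected); the remaining segment $u\rightarrow\cdots\rightarrow Y$ is then directed, so $u$ is an explicit cause of $Y$ in $\mathbf{K}$. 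As $X$ is the unique explicit cause of $Y$ in $\mathbf{K}$, I would conclude $u=X$, giving the stated form.

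Two consequences then close the proof. First, the opening edge $W-n$ of every such path is undirected, so the critical set $\mathbf{C}$ of $W$ with respect to $Y$ consists of siblings of $W$; in particular $\mathbf{C}\cap ch(W,\mathcal{G}^*)=\emptyset$. Second, truncating the directed suffix shows each $n\in\mathbf{C}$ lies on a chordless partially directed path from $W$ to $X$, so $\mathbf{C}$ is contained in the critical set of $W$ with respect to $X$. Because $W$ and $X$ share a chain component, Proposition \ref{prop:chain-comp} gives that $W$ and $X$ are possible causes of each other, so $W$ is not a definite cause of $X$; Theorem \ref{thm:graphical_definite_cause} applied to $(W,X)$ then forces the critical set of $W$ with respect to $X$ to be empty or to induce a complete subgraph, and its subset $\mathbf{C}$ is therefore empty or complete as well. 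Feeding $\mathbf{C}\cap ch(W,\mathcal{G}^*)=\emptyset$ together with the emptiness or completeness of $\mathbf{C}$ into Theorem \ref{thm:graphical_definite_cause} for the pair $(W,Y)$ shows $W$ is not a definite cause of $Y$, so $W$ is a possible cause.

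It is precisely the structural description in the second paragraph that lets the otherwise global critical-set computation collapse onto the single chordal component $\mathbf{K}$, so that Proposition \ref{prop:chain-comp} and Theorem \ref{thm:graphical_definite_cause} can finish the job; getting its orientation-rule and partially-directed-cycle bookkeeping exactly right is where I would spend the most care.
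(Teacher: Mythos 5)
Your proof is correct, but its core is genuinely different from the paper's. The paper's proof constructs a single witnessing DAG: it takes a v-structure-free acyclic orientation of the (chordal, connected) chain component with $X$ as the unique source, so that every other vertex of the component is a descendant of $X$ in that DAG; combined with the (asserted) fact that every partially directed path from the component to $Y$ passes through $X$, no vertex except $X$ can be an ancestor of $Y$ in that DAG, hence none is a definite cause. Your argument never constructs an equivalent DAG at all: you stay at the CPDAG level, prove the path-structure claim explicitly (every chordless partially directed path from $W$ to $Y$ is an undirected chordless prefix inside the component ending at $X$ followed by a directed suffix, via the ``once directed, always directed'' property of chordless partially directed paths, which you re-derive from Meek's rule 1 rather than citing \citet{maathuis2015generalized} or Lemma~\ref{lem:another-char}), deduce that the critical set $\mathbf{C}$ of $W$ with respect to $Y$ is child-free and contained in the critical set of $W$ with respect to $X$, and then close with Proposition~\ref{prop:chain-comp} and two applications of Theorem~\ref{thm:graphical_definite_cause}. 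What the paper's route buys is brevity and a concrete structural picture (the source-rooted orientation); what yours buys is self-containment and rigor on exactly the points the paper leaves implicit — the path-through-$X$ claim is proved rather than asserted, and the ``not a definite non-cause'' half (your first paragraph) is handled explicitly, whereas the paper's proof omits it entirely. Your critical-set containment $\mathbf{C}_{WY}\subseteq \mathbf{C}_{WX}$ is also, in effect, a special case of the factorization the paper only proves later in Proposition~\ref{prop:equivalent-critical-set}, so your argument anticipates that machinery.
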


\section{Local Characterizations of Types of Causal Relations}\label{sec:characterization}

In this section, we introduce the theoretical results on locally characterizing different types of causal relations. Our local characterizations depend on the induced subgraph of the true CPDAG over the treatment's neighbors as well as some queries about d-separation relations. The first result is about definite non-causal relations, as given in Theorem~\ref{thm:non-cause-no-bk}.



\begin{theorem}\label{thm:non-cause-no-bk}
	Let $\mathcal{G}^*$ be a CPDAG.  For any two distinct  vertices $X$ and $Y$ in $\mathcal{G}^*$, $X$ is a definite non-cause of $Y$ if and only if $X \indep Y \mid pa(X, \mathcal{G}^*)$ holds.
\end{theorem}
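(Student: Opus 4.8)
The plan is to reduce the statement to the local Markov property of a single, carefully chosen DAG in the class. Since d-separation is invariant across a Markov equivalence class, the relation $X \indep Y \mid pa(X,\mathcal{G}^*)$ can be tested in any $\mathcal{G}\in[\mathcal{G}^*]$, so I am free to pick the most convenient representative. Unwinding the definition, $X$ is a definite non-cause of $Y$ exactly when $Y\notin de(X,\mathcal{G})$ for \emph{every} $\mathcal{G}\in[\mathcal{G}^*]$, and $X$ fails to be a definite non-cause exactly when \emph{some} $\mathcal{G}_1\in[\mathcal{G}^*]$ contains a directed path $X\to\cdots\to Y$.

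The crucial preliminary step is to produce a DAG $\mathcal{G}_0\in[\mathcal{G}^*]$ in which $X$ is a source inside its chain component, so that $pa(X,\mathcal{G}_0)=pa(X,\mathcal{G}^*)$ (no sibling of $X$ is oriented into $X$). To build $\mathcal{G}_0$ I would use that every chain component of $\mathcal{G}^*$ is chordal \citep{andersson1997characterization}, hence admits a perfect elimination ordering that can be made to end at $X$: one repeatedly deletes simplicial vertices other than $X$, which is always possible since a non-complete chordal graph has at least two (non-adjacent) simplicial vertices. Orienting each edge of the component from the later-eliminated to the earlier-eliminated endpoint makes the parent set of every vertex equal to its clique of later neighbors, so no new v-structure arises within the component, and $X$---eliminated last---becomes a source. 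A short check (using that $\mathcal{G}^*$ is a valid CPDAG, so any directed edge $p\to m$ into the component with $m-m'$ undirected forces $p$ and $m'$ adjacent) shows that no new v-structure is created at the interface with incoming directed edges either; hence $\mathcal{G}_0$ indeed lies in $[\mathcal{G}^*]$.

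With $\mathcal{G}_0$ in hand the forward direction is immediate: if $X$ is a definite non-cause then $Y\notin de(X,\mathcal{G}_0)$, so $Y$ is a non-descendant of $X$ in $\mathcal{G}_0$, and the local Markov property (equivalently, that $pa(X,\mathcal{G}_0)$ d-separates $X$ from its non-descendants) yields $X \indep Y \mid pa(X,\mathcal{G}_0)=pa(X,\mathcal{G}^*)$. For the converse I argue by contraposition: if $X$ is not a definite non-cause, take $\mathcal{G}_1\in[\mathcal{G}^*]$ with a directed path $X\to\cdots\to Y$. Every internal vertex is a strict descendant of $X$ in $\mathcal{G}_1$ and so cannot lie in $pa(X,\mathcal{G}^*)$ (parents are non-descendants), the path carries no colliders, and neither endpoint is in $pa(X,\mathcal{G}^*)$; therefore the path is active given $pa(X,\mathcal{G}^*)$, giving $X \nindep Y \mid pa(X,\mathcal{G}^*)$.

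I expect the main obstacle to be the preliminary construction of $\mathcal{G}_0$: one must guarantee not only that $X$ can be made a source inside its chordal chain component, but also that the resulting orientation stays in $[\mathcal{G}^*]$, i.e. introduces no new v-structure anywhere---both inside the component (handled by the perfect elimination ordering) and at its boundary with incoming directed edges (handled by CPDAG validity). Once the identity $pa(X,\mathcal{G}_0)=pa(X,\mathcal{G}^*)$ is secured, both directions follow from standard DAG facts, namely the local Markov property and the activeness of directed paths given a set of non-descendants.
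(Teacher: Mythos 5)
Your proposal is correct and follows essentially the same route as the paper's proof: both directions hinge on realizing $pa(X,\mathcal{G}^*)$ as the exact parent set of $X$ in some equivalent DAG (then applying the local Markov property), and the converse proceeds by contraposition, exhibiting a directed path from $X$ to $Y$ that is active given $pa(X,\mathcal{G}^*)$ because its vertices are descendants of $X$ and non-colliders. The only difference is that you construct the required DAG $\mathcal{G}_0$ from scratch via a perfect elimination ordering of the chordal chain component ending at $X$, whereas the paper obtains it directly by invoking \citet[Lemma~3.1]{maathuis2009estimating} (its Lemma~\ref{lem:app:valid-local}) with the empty clique, so your extra construction is a self-contained re-derivation of that cited fact rather than a new idea.
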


Theorem \ref{thm:non-cause-no-bk} introduces a local characterization for definite non-causal relations, which is based on the local structure around the treatment $X$ and a single d-separation claim. The d-separation claim $X \indep Y \mid pa(X, \mathcal{G}^*)$ is similar to the following well-known result called \emph{local Markov property} of a causal DAG model: any variable is d-separated from its non-descendants given its parents.  The difference is that in our local characterization, only the parents of  $X$ in the CPDAG are included in the separation set, and we rule out  the siblings of $X$ even if they may be the parents of $X$ in the true causal DAG.  Since in a causal DAG, the non-descendants of a variable are those which are definitely not caused by the variable, Theorem \ref{thm:non-cause-no-bk} can be regarded as an extension of the local Markov property to CPDAGs.


Following Theorem \ref{thm:non-cause-no-bk}, we can distinguish definite and possible causes from definite non-causes with a local causal structure query and a d-separation query. Next, we characterize explicit and implicit causal relations locally in Theorem \ref{thm:explicit-cause-no-bk} and Theorem \ref{thm:implicit-cause-no-bk}, respectively, which together characterize definite causal relations.

\begin{theorem}\label{thm:explicit-cause-no-bk}
	Let $\mathcal{G}^*$ be a CPDAG.  For any two distinct  vertices $X$ and $Y$ in $\mathcal{G}^*$, $X$ is an explicit cause of $Y$ if and only if $X \nindep Y \mid pa(X, \mathcal{G}^*)\cup sib(X, \mathcal{G}^*)$ holds.
\end{theorem}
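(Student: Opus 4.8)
The plan is to prove both directions by relating the separating set $\mathbf{Z} := pa(X,\mathcal{G}^*)\cup sib(X,\mathcal{G}^*)$ to the existence of a directed path from $X$ to $Y$ in $\mathcal{G}^*$. Recall from the discussion after Theorem~\ref{thm:graphical_definite_cause} that $X$ is an explicit cause of $Y$ exactly when $\mathbf{C}\cap ch(X,\mathcal{G}^*)\neq\emptyset$, equivalently (by Definition~\ref{def:explicit}) when $\mathcal{G}^*$ contains a directed path $X\rightarrow\cdots\rightarrow Y$. A key auxiliary fact I would isolate first is: if $\mathcal{G}^*$ contains a partially directed path from $X$ to $Y$ whose first edge $X\rightarrow C$ is directed, then $\mathcal{G}^*$ contains a fully directed path from $X$ to $Y$. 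I would prove this by locating the first undirected edge $W_{i}-W_{i+1}$ on the path (so $W_{i-1}\rightarrow W_{i}$ is directed) and arguing that $W_{i-1}$ and $W_{i+1}$ must be adjacent, since otherwise Meek's rule~1~\citep{meek1995causal} would orient $W_i\rightarrow W_{i+1}$; this adjacency must in fact be $W_{i-1}\rightarrow W_{i+1}$, because the alternatives create either a partially directed cycle containing a directed edge (forbidden by~\citet{andersson1997characterization}) or a contradiction with the chain-component structure. Shortcutting through $W_{i-1}\rightarrow W_{i+1}$ pushes the first undirected edge strictly forward, and iterating yields a directed path.

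For the forward direction, suppose $X$ is an explicit cause, and take the directed path $\pi\colon X\rightarrow C\rightarrow\cdots\rightarrow Y$ in $\mathcal{G}^*$, which is present in every ${\cal G}\in[\mathcal{G}^*]$. I would check that $\pi$ is d-connecting given $\mathbf{Z}$: it has no collider, so it suffices that none of its vertices lies in $\mathbf{Z}$. No intermediate vertex $W$ can be a parent of $X$ in $\mathcal{G}^*$ (this would close a directed cycle $X\rightarrow\cdots\rightarrow W\rightarrow X$) nor a sibling of $X$ (the edge $X-W$ together with the directed segment $X\rightarrow\cdots\rightarrow W$ would be a partially directed cycle through a directed edge, again contradicting~\citet{andersson1997characterization}); the same exclusions apply to $Y$, and $C\in ch(X,\mathcal{G}^*)$ is not in $\mathbf{Z}$ by definition. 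Hence $\pi$ is active and $X\nindep Y\mid\mathbf{Z}$.

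For the converse I would argue the contrapositive: assuming $\mathcal{G}^*$ has no directed path from $X$ to $Y$, I show $X\indep Y\mid\mathbf{Z}$. I would fix a DAG ${\cal G}\in[\mathcal{G}^*]$ whose chain component $\tau$ containing $X$ is oriented, via a perfect elimination ordering of the chordal graph $\tau$, so that $X$ is a source of $\tau$ and no new v-structure is created; then $pa(X,{\cal G})=pa(X,\mathcal{G}^*)$ and every sibling of $X$ becomes a child of $X$ in ${\cal G}$. Taking a shortest active path $\pi=(X=V_0,V_1,\dots,V_k=Y)$ given $\mathbf{Z}$ for contradiction, I first note its initial edge must be $X\rightarrow V_1$ (otherwise $V_1\in pa(X,{\cal G})\subseteq\mathbf{Z}$ is a non-collider that blocks $\pi$). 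The core of the argument is then to rule out activated colliders: a collider lying inside $\tau$ is impossible on a shortest path, because the v-structure-free orientation of $\tau$ supplies a chord that shortens $\pi$; and any activated collider $V_j$ forces a descendant $D\in\mathbf{Z}$ with $D\in de(X,{\cal G})$, leading either to a directed cycle (if $D$ is a parent of $X$) or, if $D$ is a sibling of $X$, to a forbidden partially directed cycle, while the vertex feeding such a collider turns out to be itself a non-collider in $\mathbf{Z}$, again blocking $\pi$. Consequently $\pi$ must be a directed path with $V_1\in ch(X,\mathcal{G}^*)$, and the auxiliary fact then produces a directed path from $X$ to $Y$ in $\mathcal{G}^*$, contradicting the hypothesis.

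I expect the main obstacle to be exactly this last step. Because $\mathbf{Z}$ contains the siblings of $X$, conditioning can in principle open colliders, and a given vertex cannot in general be made a sink of its chain component (for instance the middle vertex of an induced path $s_1-X-s_2$), so the clean local-Markov-property shortcut is unavailable. The delicate part is to show, using the source orientation of $\tau$, the absence of v-structures within chain components, Meek's rules, and acyclicity together, that no opened collider can survive on a shortest active path reaching $Y$, thereby forcing the active path to be directed. Carefully organizing this case analysis, ideally by inducting on the length of a shortest active path, is where the real work lies.
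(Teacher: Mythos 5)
Your forward direction and your auxiliary fact are both correct: the former coincides with the paper's argument, and the latter is precisely the paper's Lemma~\ref{lem:another-char}, for which your Meek-rule shortcutting argument is a sound, more self-contained alternative to the paper's appeal to \citet{maathuis2015generalized} and \citet{perkovic2017interpreting}. Your converse also begins exactly as the paper does, fixing ${\cal G}\in[\mathcal{G}^*]$ with $pa(X,{\cal G})=pa(X,\mathcal{G}^*)$ and $ch(X,{\cal G})=ch(X,\mathcal{G}^*)\cup sib(X,\mathcal{G}^*)$. But the heart of the converse --- showing no collider survives on an active path --- is where your sketch has a genuine gap, as you yourself concede, and two of its claims are false as stated. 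First, ``any activated collider $V_j$ forces a descendant $D\in\mathbf{Z}$ with $D\in de(X,{\cal G})$'' holds only for the collider \emph{nearest} to $X$: only then is the subpath from $X$ to $V_j$ directed (it starts with $X\to V_1$ and contains no collider), making $V_j$, and hence $D$, a descendant of $X$; an arbitrary collider on the path need not be one. Second, when $D\in sib(X,\mathcal{G}^*)$ the resulting cycle is \emph{not} automatically forbidden: if the directed path in ${\cal G}$ from $X$ through $V_j$ to $D$ lies entirely inside the chain component of $X$, every edge of that cycle is undirected in $\mathcal{G}^*$, and purely undirected cycles are perfectly legal in a CPDAG. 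In that regime the ``forbidden partially directed cycle'' contradiction evaporates, and the burden falls back on your shortcut argument, whose own key step --- that the shortened path remains active --- you also leave unverified.

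These ingredients can be assembled into a proof, but the case split must be made on the right quantity: at the collider $V_j$ nearest to $X$, ask whether $V_{j-1}$ and $V_{j+1}$ are adjacent. If they are, shortcut through the chord and check, using $V_j$ as a common descendant, that the shorter path is still active, contradicting minimality. If they are not, then $V_{j-1}\to V_j\leftarrow V_{j+1}$ is a v-structure of ${\cal G}$ and hence of $\mathcal{G}^*$, so $V_{j-1}\to V_j$ is a \emph{directed} edge of $\mathcal{G}^*$ lying on your cycle, and only now is the cycle genuinely forbidden (while $D\in pa(X,\mathcal{G}^*)$ gives a directed cycle in ${\cal G}$ outright). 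It is worth noting that the paper avoids the shortest-path induction entirely: it shows \emph{every} path from $X$ to $Y$ in the same DAG ${\cal G}$ is blocked, splitting on whether $v_1$ is a parent of $X$, a child of $X$ in $\mathcal{G}^*$, or a sibling of $X$. In the child case the directed edge $X\to v_1$ of $\mathcal{G}^*$ supplies the directed edge needed to forbid the cycle through the nearest collider; in the sibling case, if $v_1$ is a collider, the no-new-v-structure property forces $v_2$ to be adjacent to $X$, rules out $v_2\in ch(X,\mathcal{G}^*)$, and thus places $v_2\in\mathbf{Z}$ as a blocking non-collider. As submitted, your converse is a correct plan whose decisive case analysis is missing, so the proof is incomplete.
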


The local characterization in Theorem \ref{thm:explicit-cause-no-bk} includes a single  d-separation claim,   $X \nindep Y \mid pa(X, \mathcal{G}^*)\cup sib(X, \mathcal{G}^*)$, which means the set $pa(X, \mathcal{G}^*)\cup sib(X, \mathcal{G}^*)$ cannot block all paths from $X$ to $Y$. In the proof of this theorem, we show that this claim is equivalent to that there exists at least one path from $X$ to $Y$  in $\mathcal{G}^*$ on which the node adjacent to $X$ is a child of $X$. Based on \citet[Lemma~7.2]{maathuis2015generalized} and \citet[Lemma~B.1]{perkovic2017interpreting}, this  implies that there is a directed path from $X$ to $Y$  in $\mathcal{G}^*$.

We remark that the sufficiency of Theorem~\ref{thm:explicit-cause-no-bk} is related to the LWF local Markov property~\citep{Frydenberg1990}. Given a chain graph $\cal C$ over a vertex set $\mathbf{V}$ and a distribution $P$ over the same vertex set, $P$ is called LWF local Markovian (or local G-Markovian) to $\cal C$ if $X\indep_P \mathbf{V}\setminus (de(X, {\cal C})\cup pa(X, {\cal C})\cup sib(X, {\cal C})) \mid pa(X, {\cal C})\cup sib(X, {\cal C})$ for any $X\in \mathbf{V}$~\citep{Frydenberg1990}. Since \cite{andersson1997characterization} proved that a CPDAG ${\cal G}^*$ is a chain graph, if a distribution $P$ is LWF local Markovian to ${\cal G}^*$, then $X\nindep_P Y \mid pa(X, {\cal G}^*)\cup sib(X, {\cal G}^*)$ for $Y\notin pa(X, {\cal G}^*)\cup sib(X, {\cal G}^*)$ implies that $Y\in de(X, {\cal G}^*)$. That is, $X$ is an explicit cause of $Y$. In \ref{proof:explict}, we show the sketch of proving the sufficiency of Theorem~\ref{thm:explicit-cause-no-bk} based on the theories of chain graph models.

\begin{theorem}\label{thm:implicit-cause-no-bk}
 Suppose that $\mathcal{G}^*$ is a CPDAG and $\mathcal{M}$ is the set of maximal cliques of the induced subgraph of $\mathcal{G}^*$ over $sib(X, \mathcal{G}^*)$. Then, $X$ is an implicit cause of $Y$ if and only if $X \indep Y \mid pa(X, \mathcal{G}^*)\cup sib(X, \mathcal{G}^*)$ and $X \nindep Y \mid pa(X, \mathcal{G}^*)\cup \textbf{M}$ for any $\textbf{M}\in \mathcal{M}$.
\end{theorem}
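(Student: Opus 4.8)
The plan is to reduce the statement to a single clean lemma describing how conditioning on a subset of $X$'s siblings interacts with the critical set, and then to finish with a purely combinatorial observation about chordal graphs. Write $\mathbf{C}$ for the critical set of $X$ with respect to $Y$, and recall that an implicit cause is by definition a definite cause that is not explicit. By Theorem~\ref{thm:explicit-cause-no-bk}, the first condition $X \indep Y \mid pa(X,\mathcal{G}^*)\cup sib(X,\mathcal{G}^*)$ is exactly the statement that $X$ is not an explicit cause. Since every chordless partially directed path leaves $X$ through a child or a sibling, we always have $\mathbf{C}\subseteq ch(X,\mathcal{G}^*)\cup sib(X,\mathcal{G}^*)$; and because $X$ is explicit precisely when $\mathbf{C}\cap ch(X,\mathcal{G}^*)\neq\emptyset$ (the first alternative of Theorem~\ref{thm:graphical_definite_cause}), the first condition forces $\mathbf{C}\subseteq sib(X,\mathcal{G}^*)$. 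Consequently, by Theorem~\ref{thm:graphical_definite_cause}, once the first condition holds the assertion ``$X$ is an implicit cause'' is equivalent to ``$\mathbf{C}$ is non-empty and induces an incomplete subgraph of $\mathcal{G}^*$.'' It therefore remains to show that, under the first condition, this is equivalent to the second condition.

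The technical heart is the following lemma: assuming $\mathbf{C}\subseteq sib(X,\mathcal{G}^*)$, for every $S\subseteq sib(X,\mathcal{G}^*)$ one has $X\indep Y\mid pa(X,\mathcal{G}^*)\cup S$ if and only if $\mathbf{C}\subseteq S$. The ``only if'' direction is the easy one. If $c\in\mathbf{C}\setminus S$, choose a chordless partially directed path $\pi$ from $X$ to $Y$ whose first vertex is $c$. As $\pi$ is partially directed, every edge is undirected or oriented toward $Y$, so $\pi$ has no collider; and as $\pi$ is chordless, no neighbour of $X$ other than $c$ can be an internal vertex of $\pi$, so $\pi$ meets neither $pa(X,\mathcal{G}^*)$ nor $S$ internally. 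Hence $\pi$ is d-connecting given $pa(X,\mathcal{G}^*)\cup S$, which gives $X\nindep Y\mid pa(X,\mathcal{G}^*)\cup S$.

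The ``if'' direction is where the real work lies, and I expect it to be the main obstacle. Assuming $\mathbf{C}\subseteq S$, I must show no path is d-connecting given $Z=pa(X,\mathcal{G}^*)\cup S$. I would argue by contradiction using a shortest d-connecting path $\pi$ and show it must be a chordless partially directed path leaving $X$ through a sibling outside $S$: paths leaving through a parent are blocked since parents are non-colliders in $Z$; the first condition rules out a d-connecting path leaving through a child; and the CPDAG structure (invariance of v-structures, the fact that every partially directed cycle is undirected by \citet{andersson1997characterization}, and chordality of chain components) is needed to show that a shortest such path cannot exploit activated colliders among the siblings and is genuinely partially directed. Its first vertex would then lie in $\mathbf{C}\setminus S$, contradicting $\mathbf{C}\subseteq S$. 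Here I anticipate relying on the path-conversion lemmas of \citet{maathuis2015generalized} and \citet{perkovic2017interpreting} to pass from a general d-connecting path in the equivalence class to a partially directed path in $\mathcal{G}^*$.

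Finally I would assemble the pieces combinatorially. Because $sib(X,\mathcal{G}^*)$ induces a chordal graph, any clique extends to a maximal clique, so $\mathbf{C}$ induces a complete subgraph if and only if $\mathbf{C}\subseteq\mathbf{M}$ for some $\mathbf{M}\in\mathcal{M}$. Thus ``$\mathbf{C}$ is non-empty and incomplete'' is equivalent to ``$\mathbf{C}\not\subseteq\mathbf{M}$ for every $\mathbf{M}\in\mathcal{M}$,'' where non-emptiness is automatic since $\emptyset\subseteq\mathbf{M}$ always holds. Applying the lemma with $S=\mathbf{M}$ for each maximal clique, this is in turn equivalent to $X\nindep Y\mid pa(X,\mathcal{G}^*)\cup\mathbf{M}$ for every $\mathbf{M}\in\mathcal{M}$, which is precisely the second condition. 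Combined with the reduction of the first paragraph, this establishes the theorem.
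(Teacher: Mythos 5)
Your reduction is correct, and it is in fact the same skeleton the paper itself uses: Theorem~\ref{thm:explicit-cause-no-bk} turns the first condition into ``not explicit,'' hence $\mathbf{C}\cap ch(X,\mathcal{G}^*)=\emptyset$ and $\mathbf{C}\subseteq sib(X,\mathcal{G}^*)$; Theorem~\ref{thm:graphical_definite_cause} then makes implicitness equivalent to ``$\mathbf{C}$ is non-empty and induces an incomplete subgraph''; and incompleteness is equivalent to $\mathbf{C}$ not being contained in any maximal clique. Your key lemma, specialized to $S=\mathbf{M}$, is exactly the equivalence the paper proves, namely $X \indep Y \mid pa(X,\mathcal{G}^*)\cup\mathbf{M}$ if and only if $\mathbf{C}\subseteq\mathbf{M}$. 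The problem is that the lemma is asserted rather than proved: for the direction you yourself call ``where the real work lies'' you offer only intentions (``I would argue\dots'', ``I anticipate relying on\dots''), so the technical heart of the theorem is absent from the proposal.

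The sketch you do give for that direction also has two concrete defects. First, the claim that ``the first condition rules out a d-connecting path leaving through a child'' is unjustified: the first condition is an independence given $pa(X,\mathcal{G}^*)\cup sib(X,\mathcal{G}^*)$, whereas here the conditioning set is the smaller $pa(X,\mathcal{G}^*)\cup S$, and d-connection is not monotone under changing the conditioning set. What actually works (and is what the paper does) is a split on colliders: a collider-free path leaving through a child is directed in the chosen DAG, so its image in $\mathcal{G}^*$ is partially directed with first edge out of $X$, whence Lemma~\ref{lem:another-char} yields a directed path and contradicts non-explicitness; a path with a collider is blocked because the collider nearest to $X$ is a descendant of $X$ and cannot be an ancestor of the conditioning set without creating a directed cycle or a partially directed cycle containing a directed edge. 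Activated colliders need not lie ``among the siblings,'' so your appeal to the sibling structure does not cover this case. Second, you state the lemma for arbitrary $S\subseteq sib(X,\mathcal{G}^*)$, but the device that makes the hard direction go through---using Lemma~\ref{lem:app:valid-local} to pick ${\cal G}\in[{\cal G}^*]$ with $pa(X,{\cal G})=pa(X,{\cal G}^*)\cup S$, so that a critical-set vertex located on a putative directed path via \citet[Lemma~3.6]{perkovic2017interpreting} and Proposition~\ref{prop:equivalent-critical-set} closes a directed cycle---exists only when $S$ is a clique; for non-clique $S$ no such DAG exists and your sketch supplies no substitute argument. Since you only ever apply the lemma with $S$ a maximal clique, you should restrict it to cliques and then actually carry out this construction and case analysis. (A minor, repairable slip in your easy direction: a chordless partially directed path in $\mathcal{G}^*$ is not itself a d-connecting object; you must realize it as a directed, hence collider-free, path in some equivalent DAG, again via Lemma~\ref{lem:app:valid-local}, or argue from invariance of v-structures that it acquires no collider in any DAG of the class.)
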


The definition of maximal clique is given in \ref{app:graph}. In  Theorem \ref{thm:implicit-cause-no-bk}, the first condition $X \indep Y \mid pa(X, \mathcal{G}^*)\cup sib(X, \mathcal{G}^*)$ makes sure that $X$ is not an explicit cause of $Y$ and the second condition, which is $X \nindep Y \mid pa(X, \mathcal{G}^*)\cup \textbf{M}$ for any $\textbf{M}\in \mathcal{M}$, guarantees that $X$ is not a possible cause of $Y$.  These two  conditions   in Theorem \ref{thm:implicit-cause-no-bk} are  local in the sense that both $sib(X, \mathcal{G}^*)$ and $pa(X, \mathcal{G}^*)$ are subsets of $X$'s neighbors in $\mathcal{G}^*$, and a maximal clique $\textbf{M}$ is also a subset of $sib(X, \mathcal{G}^*)$. Once we obtain the induced subgraph of  $\mathcal{G}^*$ over $adj(X, \mathcal{G}^*)$, we can know $sib(X, \mathcal{G}^*)$ and $\mathcal{M}$, and thus the conditional independence queries can be answered accordingly if we have the oracles.



As mentioned in Section \ref{sec:sec:concepts}, definite causes include both explicit and implicit causes. Therefore, Theorems \ref{thm:explicit-cause-no-bk} and \ref{thm:implicit-cause-no-bk} give a sound and complete local characterization of definite causal relations as follows.

\begin{corollary}\label{cor:definite}
	Suppose that $\mathcal{G}^*$ is a CPDAG and $\mathcal{M}$ is the set of maximal cliques of the induced subgraph of $\mathcal{G}^*$ over $sib(X, \mathcal{G}^*)$. Then, $X$ is a definite cause of $Y$ if and only if $X \nindep Y \mid pa(X, \mathcal{G}^*)\cup sib(X, \mathcal{G}^*)$ or $X \nindep Y \mid pa(X, \mathcal{G}^*)\cup \textbf{M}$ for any $\textbf{M}\in \mathcal{M}$.
\end{corollary}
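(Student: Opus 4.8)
The plan is to derive Corollary \ref{cor:definite} directly from the characterizations of explicit and implicit causes already established, using the structural fact (stated in Section \ref{sec:sec:concepts}) that definite causes decompose into exactly the explicit and implicit cases. First I would record the logical skeleton: by the discussion following Definition \ref{def:implicit}, every definite cause of $Y$ is either an explicit cause or an implicit cause, and these two subtypes are mutually exclusive by definition (an implicit cause is a definite cause with \emph{no} common causal path, whereas an explicit cause has one). Hence the biconditional ``$X$ is a definite cause of $Y$'' is equivalent to ``$X$ is an explicit cause of $Y$ \emph{or} $X$ is an implicit cause of $Y$.'' The task then reduces to substituting the local conditions from Theorem \ref{thm:explicit-cause-no-bk} and Theorem \ref{thm:implicit-cause-no-bk} for these two disjuncts and simplifying.

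Next I would perform the substitution carefully. By Theorem \ref{thm:explicit-cause-no-bk}, $X$ is an explicit cause of $Y$ iff $X \nindep Y \mid pa(X,\mathcal{G}^*)\cup sib(X,\mathcal{G}^*)$; call this condition $(E)$. By Theorem \ref{thm:implicit-cause-no-bk}, $X$ is an implicit cause of $Y$ iff $X \indep Y \mid pa(X,\mathcal{G}^*)\cup sib(X,\mathcal{G}^*)$ together with $X \nindep Y \mid pa(X,\mathcal{G}^*)\cup \mathbf{M}$ for every $\mathbf{M}\in\mathcal{M}$; call this conjunction $(I)$. So $X$ is a definite cause iff $(E)\lor (I)$ holds. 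The target statement claims this is equivalent to $(E)$ \emph{or} ``$X \nindep Y \mid pa(X,\mathcal{G}^*)\cup \mathbf{M}$ for every $\mathbf{M}\in\mathcal{M}$''; call this second disjunct $(I')$, which is just $(I)$ with the $X\indep Y\mid pa\cup sib$ clause dropped. Thus the only nontrivial content is to show $(E)\lor(I)$ and $(E)\lor(I')$ are the same condition.

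The main obstacle, and the one place requiring genuine argument, is justifying that the clause $X \indep Y \mid pa(X,\mathcal{G}^*)\cup sib(X,\mathcal{G}^*)$ may be deleted from $(I)$ without changing the disjunction. The key observation is that this clause is precisely the negation of $(E)$: $X \indep Y \mid pa(X,\mathcal{G}^*)\cup sib(X,\mathcal{G}^*)$ is the logical complement of $X \nindep Y \mid pa(X,\mathcal{G}^*)\cup sib(X,\mathcal{G}^*)$. Writing $(I)=\lnot(E)\land(I')$, the disjunction becomes $(E)\lor\bigl(\lnot(E)\land(I')\bigr)$, which by the absorption-style identity $P\lor(\lnot P\land Q)\equiv P\lor Q$ equals $(E)\lor(I')$. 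I would state this boolean simplification explicitly rather than leave it implicit, since it is what legitimizes the cleaner form in the corollary; one should also remark that $(E)$ and $(I)$ cannot hold simultaneously (their first clauses are contradictory), confirming that the disjunction faithfully partitions definite causes into the explicit and implicit subtypes.

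To keep the argument self-contained I would close by noting soundness and completeness separately if a referee wants them spelled out: completeness (every definite cause satisfies the stated disjunction) follows because a definite cause is explicit or implicit, each implying its respective local condition; soundness (the disjunction implies definite cause) follows because $(E)$ forces an explicit cause and $(I')$ combined with the observation above forces the implicit case, and in either case Theorem \ref{thm:explicit-cause-no-bk} or Theorem \ref{thm:implicit-cause-no-bk} identifies $X$ as a definite cause. No new graph-theoretic work is needed beyond the two theorems and Definitions \ref{def:explicit}--\ref{def:implicit}; the corollary is essentially a propositional-logic repackaging, so I expect the proof to be short.
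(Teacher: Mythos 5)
Your proposal is correct and follows essentially the same route as the paper: the paper also obtains Corollary~\ref{cor:definite} directly from Theorems~\ref{thm:explicit-cause-no-bk} and~\ref{thm:implicit-cause-no-bk} together with the fact that definite causes are exactly the union of explicit and implicit causes. The only difference is that you spell out the boolean absorption step $(E)\lor\bigl(\lnot(E)\land(I')\bigr)\equiv (E)\lor(I')$, which the paper leaves implicit.
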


Together with Theorem \ref{thm:non-cause-no-bk}, Corollary \ref{cor:definite} can be used to identify definite causal relations and definite non-causal relations. This result is local in the sense that it only depends on the local structure around the treatment $X$ and a limited number of d-separation queries. When data is available in practice,  d-separation queries can be answered by performing statistical independence tests. Thus, local characterizations are particularly meaningful for identifying types of causal relations from observational data. 

\section{Algorithms}\label{sec:learn}

In this section, we discuss how to learn the types of causal relations from observational data. A local algorithm, which exploits the local characterizations in Section \ref{sec:characterization} directly, is provided in Section~\ref{sec:sec:local}. For the completeness of the paper, we also provide an efficient global learning method in Section~\ref{sec:sec:global}, and causal-effect-based methods in Section~\ref{sec:sec:ce}.



\subsection{A Local Learning Algorithm}\label{sec:sec:local}

\begin{algorithm}[!t]
	\caption{A local algorithm for identifying the type of causal relation (local ITC) }
	\label{algo:local}
	\begin{algorithmic}[1]
		\REQUIRE
		A treatment $X$, a target $Y$, $pa(X, {\cal G}^*)$, the induced subgraph of ${\cal G}^*$ over $sib(X, {\cal G}^*)$, and independence oracles.
		\ENSURE
		The type of causal relation between X and Y.
		\IF {$X \indep Y \mid pa(X, {\cal G}^*)$}
		\RETURN {$X$ is a \textbf{definite non-cause} of $Y$,}
		\ENDIF
		
		\IF {$X \nindep Y \mid pa(X, {\cal G}^*)\cup sib(X, {\cal G}^*)$}
		\RETURN {$X$ is an \textbf{explicit cause} of $Y$,}
		\ENDIF
		
		\STATE {$\mathcal{M}=$ the set of maximal cliques of $sib(X, {\cal G}^*)$,}
		\IF {exists $\textbf{M}\in \mathcal{M}$ such that $X \indep Y \mid pa(X, {\cal G}^*)\cup \textbf{M}$,}
		\RETURN {$X$ is a \textbf{possible cause} of $Y$,}
		\ENDIF
		\RETURN {$X$ is an \textbf{implicit cause} of $Y$.}
		
	\end{algorithmic}
\end{algorithm}


The main procedure of our local algorithm is summarized in Algorithm \ref{algo:local}.
The input of Algorithm \ref{algo:local} consists of $pa(X, {\cal G}^*)$, the induced subgraph of ${\cal G}^*$ over $sib(X, {\cal G}^*)$, and some independence oracles. The first two arguments, $pa(X, {\cal G}^*)$ and the induced subgraph over $sib(X, {\cal G}^*)$, can be learned locally by using the variant of the MB-by-MB algorithm proposed by \citet[Algorithm~3]{liu2020local}, which is designed for learning the chain component containing a given target variable and the directed edges connected to the variables in the chain component.
The third argument (the independence oracles), as discussed in Section \ref{sec:characterization}, can be replaced by statistical independence tests in practice. Overall, the procedure given in Algorithm \ref{algo:local} is a direct application of the local characterizations in Theorems \ref{thm:non-cause-no-bk}, \ref{thm:explicit-cause-no-bk} and \ref{thm:implicit-cause-no-bk}, and thus we have,





\begin{theorem}\label{thm:algo-local}
	Given a CPDAG ${\cal G}^*$ over $\mathbf{V}$ and the independence oracles faithful to a DAG in $[{\cal G}^*]$, the local ITC (Algorithm \ref{algo:local}) is sound and complete for identifying explicit causes, implicit causes, possible causes and definite non-causes of any variable $Y$ in ${\cal G}^*$.
\end{theorem}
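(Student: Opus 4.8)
The plan is to show that Algorithm \ref{algo:local} is nothing more than a decision-tree encoding of the three characterization theorems, so that soundness and completeness reduce to checking that each \texttt{RETURN} fires under exactly the conditions that characterize the type it announces. First I would record two structural facts. The four possible outputs---definite non-cause, explicit cause, implicit cause, possible cause---partition the set of vertices $X$ distinct from $Y$: by definition every such $X$ is either a definite non-cause, a definite cause, or a possible cause, and Definitions \ref{def:explicit} and \ref{def:implicit} split the definite causes into the two disjoint classes of explicit and implicit causes. Moreover, since the oracle is faithful to some $\mathcal{G}\in[\mathcal{G}^*]$ and all DAGs in a Markov equivalence class encode the same d-separations, every conditional-independence query the algorithm issues is answered identically to the matching d-separation claim appearing in Theorems \ref{thm:non-cause-no-bk}, \ref{thm:explicit-cause-no-bk} and \ref{thm:implicit-cause-no-bk}.

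Next I would traverse the branches in order, at each step carrying forward the negations of the tests that failed earlier. The first test returns definite non-cause exactly when $X\indep Y\mid pa(X,\mathcal{G}^*)$, which by Theorem \ref{thm:non-cause-no-bk} is both necessary and sufficient for that type; hence reaching the second test is equivalent to $X$ not being a definite non-cause. The second test returns explicit cause exactly when $X\nindep Y\mid pa(X,\mathcal{G}^*)\cup sib(X,\mathcal{G}^*)$, which is precisely the criterion of Theorem \ref{thm:explicit-cause-no-bk}; hence reaching the clique loop forces the negation $X\indep Y\mid pa(X,\mathcal{G}^*)\cup sib(X,\mathcal{G}^*)$, which is exactly the first of the two conditions in Theorem \ref{thm:implicit-cause-no-bk}. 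This is the pivotal observation: after the first two branches are passed, the implicit-versus-possible decision depends only on the \emph{second} condition of Theorem \ref{thm:implicit-cause-no-bk}, because its first condition is supplied automatically by the failure of the explicit-cause test.

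I would then close the loop analysis. The algorithm returns possible cause iff some maximal clique $\mathbf{M}\in\mathcal{M}$ satisfies $X\indep Y\mid pa(X,\mathcal{G}^*)\cup\mathbf{M}$, i.e. iff the second condition of Theorem \ref{thm:implicit-cause-no-bk} fails; and it falls through to implicit cause iff $X\nindep Y\mid pa(X,\mathcal{G}^*)\cup\mathbf{M}$ for every $\mathbf{M}$, i.e. iff that condition holds. Combined with the already-established first condition, the fall-through returns implicit cause under exactly the conjunction characterized by Theorem \ref{thm:implicit-cause-no-bk}, and the loop returns possible cause under its complement, with the ``possible cause'' label justified by exhaustiveness of the four types (being neither a definite non-cause, nor explicit, nor implicit leaves only possible). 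Since the branch conditions are, in this cumulative sense, each necessary and sufficient for the type returned, and since the types are mutually exclusive and exhaustive, the output always equals the true type, giving both soundness and completeness.

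The proof carries essentially no independent technical difficulty, as the three characterization theorems do all the heavy lifting. The only point needing genuine care is the decision-tree bookkeeping: verifying that the first condition of Theorem \ref{thm:implicit-cause-no-bk} is handed over for free by the negation of the explicit-cause test (so that the clique loop tests exactly the remaining condition and nothing more), and confirming that exhaustiveness of the four categories legitimizes the unconditional fall-through on the final line.
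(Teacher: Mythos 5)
Your proposal is correct and matches the paper's intended argument: the paper's own proof is a one-line citation of Theorems \ref{thm:non-cause-no-bk}, \ref{thm:explicit-cause-no-bk} and \ref{thm:implicit-cause-no-bk} (plus supporting results), and your decision-tree traversal---using faithfulness to equate oracle answers with the theorems' d-separation claims, noting that the failed explicit-cause test supplies the first condition of Theorem \ref{thm:implicit-cause-no-bk} for free, and invoking the definitional exhaustiveness of the four types for the final fall-through---is exactly the case analysis that citation leaves implicit.
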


Here, the soundness and completeness mean that the identified causes of each type are all and only those variables satisfying the definition of the corresponding type of cause. For example, the learned explicit causes of $Y$ are all and only the variables in $\mathbf{V}\setminus\{Y\}$ each of which has at least a common directed path to $Y$ in all equivalent DAGs.

The complexity of Algorithm \ref{algo:local} can be measured by the maximum number of conditional independence tests (or d-separation queries). Clearly, the maximum number of conditional independence tests performed by Algorithm \ref{algo:local} is $m+2$, where $m$ is the number of maximal cliques of  $sib(X, {\cal G}^*)$. Fortunately, there are only linearly many maximal cliques (with respect to the number of vertices) in a chordal graph \citep{Rose1975elimination, blair1993chordal}, so the number of conditional independence tests needed in Algorithm \ref{algo:local} is at most $O(|sib(X, {\cal G}^*)|)$.

\subsection{A Global Learning Algorithm}\label{sec:sec:global}

Given a CPDAG, identifying definite non-causal and explicit causal relations is straightforward. To discriminate implicit causal relations from  possible causal relations, we need an approach to find critical sets. The next proposition is particularly useful.

\begin{proposition}\label{prop:equivalent-critical-set}
	For any two distinct vertices $X, Y$ in a CPDAG $\mathcal{G}^*$ such that $X$ is not an explicit cause of $Y$, it holds that
	$\mathbf{C}_{XY}=\cup_{Z\in \mathbf{Z}}\mathbf{C}_{XZ}$,
	where $\mathbf{C}_{UV}$ denotes the critical set of $U$ with respect to $V$, and $\mathbf{Z}$ is the set of ancestors of $Y$ in $\mathcal{G}^*$ which are also in the chain component containing $X$.
\end{proposition}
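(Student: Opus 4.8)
The plan is to prove the two inclusions separately, after first recording two structural facts. Since $X$ is not an explicit cause of $Y$, the discussion following Definition~\ref{def:implicit} (together with Theorem~\ref{thm:graphical_definite_cause}) gives $\mathbf{C}_{XY}\cap ch(X,\mathcal{G}^*)=\emptyset$; hence every vertex of $\mathbf{C}_{XY}$ is a sibling of $X$, and every chordless partially directed path from $X$ to $Y$ starts with an undirected edge, so its initial vertices lie in the chain component $\mathcal{K}$ containing $X$. Second, by the Andersson--Madigan--Perlman characterization (every partially directed cycle of $\mathcal{G}^*$ is undirected, and the chain components form a DAG), there is no directed path between two vertices of $\mathcal{K}$, and once a partially directed path leaves $\mathcal{K}$ through a directed edge it never returns. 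A short consequence is that if $Y\in\mathcal{K}$ then $Y$ is the only ancestor of $Y$ inside $\mathcal{K}$, so $\mathbf{Z}=\{Y\}$ and the identity is trivial; I may therefore assume $Y\notin\mathcal{K}$, which in particular forces $Z\neq X$ and $Z\neq Y$ for every $Z\in\mathbf{Z}$.

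For $\mathbf{C}_{XY}\subseteq\cup_{Z\in\mathbf{Z}}\mathbf{C}_{XZ}$, take $W\in\mathbf{C}_{XY}$ witnessed by a chordless partially directed path $\pi=(X,W,\dots,Y)$. Since $Y\notin\mathcal{K}$, $\pi$ must contain a directed edge; let $V_i\to V_{i+1}$ be the first one, so the prefix $(X=V_0,\dots,V_i)$ is undirected, lies in $\mathcal{K}$, and has $i\ge 1$. Put $Z:=V_i$. The suffix $(V_i\to V_{i+1},\dots,Y)$ is a partially directed path from $V_i$ to $Y$ whose edge at $V_i$ points out of $V_i$, so by \citet[Lemma~7.2]{maathuis2015generalized} and \citet[Lemma~B.1]{perkovic2017interpreting} there is a directed path from $V_i$ to $Y$; thus $Z\in an(Y,\mathcal{G}^*)\cap\mathcal{K}=\mathbf{Z}$. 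The prefix is a sub-path of $\pi$, hence a chordless partially directed path from $X$ to $Z$ on which $W$ is adjacent to $X$, so $W\in\mathbf{C}_{XZ}$.

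For the reverse inclusion, fix $Z\in\mathbf{Z}$ and $W\in\mathbf{C}_{XZ}$. Because $X,Z\in\mathcal{K}$, any chordless path witnessing $W\in\mathbf{C}_{XZ}$ is undirected and stays in $\mathcal{K}$; call it $\pi_1=(X,W,\dots,Z)$. As $Z\in an(Y)\setminus\{Y\}$, pick a shortest directed path $\pi_2=(Z,\dots,Y)$, whose vertices after $Z$ lie outside $\mathcal{K}$, so $\pi_1$ and $\pi_2$ meet only at $Z$. Let $\mathbf{S}$ be the vertex set of $\pi_1\cup\pi_2$. Using the two structural facts and the non-explicitness of $X$, I would check that $W$ is the only neighbour of $X$ in $\mathbf{S}$: an edge $X\to b$ to a later vertex would give a directed path $X\rightsquigarrow Y$ (explicit cause, a contradiction), an edge $b\to X$ would make $Z$ an ancestor of $X$ inside $\mathcal{K}$ (impossible), and chordlessness of $\pi_1$ excludes the remaining $\mathcal{K}$-neighbours. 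A shortest partially directed path from $X$ to $Y$ inside the induced subgraph $\mathcal{G}^*[\mathbf{S}]$ therefore begins with $X-W$; it is chordless, since a forward or undirected chord would contradict minimality while a backward directed chord $T_j\to T_i$ would close a partially directed cycle containing a directed edge, which the Andersson--Madigan--Perlman property forbids. Hence $W\in\mathbf{C}_{XY}$.

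The main obstacle is this reverse inclusion: concatenating $\pi_1$ and $\pi_2$ yields only a partially directed (not chordless) path from $X$ to $Y$, and a careless chordless reduction could delete $W$ or take a shortcut through $X$. The crux is to show that $W$ is the \emph{unique} $\mathbf{S}$-neighbour of $X$, so that no shortcut can bypass $W$, and simultaneously to rule out backward chords; both steps rest on the absence of partially directed cycles through a directed edge, which is exactly what prevents the piece $\pi_1\subseteq\mathcal{K}$ and the piece $\pi_2$ leaving $\mathcal{K}$ from creating extra adjacencies.
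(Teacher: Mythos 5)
Your proof is correct, and its skeleton coincides with the paper's: the same reduction to the case $Y\notin\mathcal{K}$ (when $Y$ is in $X$'s chain component, $\mathbf{Z}=\{Y\}$ and the identity is vacuous), the same forward inclusion (split a chordless witnessing path at its first directed edge, invoke Lemma~7.2 of Maathuis--Colombo or Lemma~B.1 of Perkovi\'{c} et al.\ to see that the suffix is directed, so the splitting vertex lies in $\mathbf{Z}$ while the undirected prefix witnesses $W\in\mathbf{C}_{XZ}$), and the same raw material for the reverse inclusion (a chordless undirected witness $\pi_1$ concatenated with a shortest directed path $\pi_2$). Where you genuinely diverge is in finishing the reverse inclusion. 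The paper repairs the concatenation by an explicit one-shot shortcut: any chord must be directed from some $v_1$ on $\pi_1$ to some $v_2$ on $\pi_2$, non-explicitness of $X$ forces $v_1\neq X$, and choosing the first such $v_1$ from $X$'s side yields a shortcut path asserted to be chordless. You instead pass to the induced subgraph over the vertices $\mathbf{S}$ of $\pi_1\cup\pi_2$, prove that $W$ is the \emph{unique} neighbour of $X$ in $\mathbf{S}$ (the same cycle/non-explicitness computations the paper uses for $v_1\neq X$, applied once and for all), and take a \emph{shortest} partially directed path from $X$ to $Y$ there, killing forward and undirected chords by minimality and backward chords by the Andersson--Madigan--Perlman no-partially-directed-cycle property. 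This extremal argument buys robustness: the paper's ``it is easy to verify that $\pi'$ is chordless'' in fact needs an additional choice (taking $v_2$ to be the \emph{last} vertex of $\pi_2$ adjacent to $v_1$, to exclude chords from $v_1$ to later vertices of $\pi_2$), a subtlety your minimality argument renders moot; the price is the slightly heavier bookkeeping of working in $\mathcal{G}^*[\mathbf{S}]$, which is harmless since chords of a path with vertices in $\mathbf{S}$ are exactly edges of the induced subgraph.
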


\begin{algorithm}[!t]
	\caption{Finding the critical set of a given $X$ with respect to a set $\textbf{Z}$}
	\label{algo:critical}
	\begin{algorithmic}[1]
		\REQUIRE
		A chordal graph $\mathcal{G}^*_u$, a variable $X$ in $\mathcal{G}^*_u$, and a variable set $\textbf{Z}\neq \emptyset$ such that $X\notin \textbf{Z}$.
		\ENSURE
		$\textbf{C}$, which is the critical set of $X$ with respect to $\textbf{Z}$ in $\mathcal{G}^*_u$.
		\STATE {Initialize $\textbf{C}=\emptyset$, a waiting queue $\mathcal{S}=[\;]$,   and a set $\mathcal{H}=\emptyset$,}
		\FOR {$\alpha \in adj(X)$}
		\STATE {add $(\alpha, X, \alpha)$ to the end of $\cal S$,}
		\ENDFOR
		\WHILE {$\mathcal{S}$ is not empty}
		\STATE {take the first element $( \alpha, \psi,\tau)$ out of $\cal S$ and add it to $\mathcal{H}$,}

		\IF {$\tau\in \textbf{Z}$}
		\STATE {add $\alpha$ to $\textbf{C}$, remove from $\mathcal{S}$  all triples where the first element is $\alpha$,}
		
		\ELSE
		\FOR {$\beta \in adj(\tau)$ and $\beta\notin adj(\psi)\cup\{\psi\}$ }
		\IF {  $(\alpha, \tau,\beta)  \notin    \mathcal{H}$ and $ ( \alpha, \tau,\beta)  \notin    \mathcal{S}$}
		\STATE {add $( \alpha, \tau,\beta)$ to the end of $\cal S$,}
		\ENDIF
		\ENDFOR	
		\ENDIF
		\ENDWHILE
		\RETURN {$\textbf{C}$}
	\end{algorithmic}
\end{algorithm}

Proposition \ref{prop:equivalent-critical-set} provides a factorization of the critical set of $X$ with respect to $Y$. For simplicity, we call $\cup_{Z\in \mathbf{Z}}\mathbf{C}_{XZ}$ the critical set of $X$ with respect to $\mathbf{Z}$. Algorithm \ref{algo:critical} shows how to find $\cup_{Z\in \mathbf{Z}}\mathbf{C}_{XZ}$ efficiently. Algorithm \ref{algo:critical} runs a breadth-first-search and returns the critical set of $X$ with respect to $\textbf{Z}$ in $\mathcal{G}^*_u$.
In Algorithm \ref{algo:critical}, we start from the siblings of $X$,  then search chordless paths from the siblings   until reaching some $Z_i\in \textbf{Z}$. Every  chordless path starting from a sibling of $X$ is recorded in a  queue $\cal S$ as a  triple like   $(\alpha, \psi, \tau)$, where $\alpha$ and $\tau$ are the start and the end points of the  path, respectively, and $\psi$ is the sibling of  $\tau$ on the path.  If $\tau$ is a member of $\textbf{Z}$, we add $\alpha$ to the critical set $\textbf{C}$ and remove from $\mathcal{S}$ all triples where the first element is $\alpha$, that is, we stop enumerating chordless paths starting with $\alpha$. Otherwise, we extend the chordless path to  the siblings of $\tau$ that are neither  $\psi$ nor siblings of $\psi$ and add the corresponding triples to the queue $\cal S$.   In this algorithm, a set of visited triples, $\cal H$, is introduced to speed up the search by avoiding visiting the same triple twice.

\begin{algorithm}[!t]
	\caption{A global algorithm for identifying the type of causal relation (global ITC). }
	\label{algo:global}
	\begin{algorithmic}[1]
		\REQUIRE
		A CPDAG $\mathcal{G}^*$, a variable $X$ and a target $Y$ in $\mathcal{G}^*$.
		\ENSURE
		The type of causal relation between X and Y.
		
		\IF {$X$ and $Y$ are connected by a path in $\mathcal{G}^*_u$}
		\RETURN {$X$ is a \textbf{possible cause} of $Y$,}
		\ENDIF
		
		\STATE {let   $\textbf{Z}=an(Y, \mathcal{G}^*)$,}
		\IF {$X\in \textbf{Z}$}
		\RETURN {$X$ is an \textbf{explicit cause}  of $Y$,}
		\ENDIF
		
		\STATE {use Algorithm \ref{algo:critical} to find the critical set $\textbf{C}$ of $X$ with respect to $\textbf{Z}$ in $\mathcal{G}^*_u$,}
		\IF {$|\textbf{C}|=0$}
		\RETURN {$X$ is a \textbf{definite non-cause} of $Y$,}
		\ENDIF
		
		\IF {$\textbf{C}$ induces a complete subgraph of $\mathcal{G}^*_u$}
		\RETURN {$X$ is a \textbf{possible cause} of $Y$,}
		\ENDIF
		
		\RETURN {$X$ is an \textbf{implicit cause} of $Y$.}

	\end{algorithmic}
\end{algorithm}

Finally, we present a global learning approach for identifying types of causal relations in Algorithm \ref{algo:global}. Algorithm \ref{algo:global} is global in the sense that it takes an entire CPDAG as input. In Algorithm \ref{algo:global}, we first check whether $X$ and $Y$ are in the same chain component. If they are, $X$ is a possible cause of $Y$ based on Proposition \ref{prop:chain-comp}. Otherwise, we find the set of explicit causes of $Y$ and denote it by $\textbf{Z}$. This can be done by searching for the vertices that are connected to $Y$ in the directed subgraph of $\mathcal{G}^*$. If $X \in \textbf{Z}$, $X$ is an explicit cause of $Y$, otherwise, we  find the critical set $\textbf{C}$ of $X$ with respect to $\textbf{Z}$.   When $\textbf{C}=\emptyset$, we have that  there are no explicit causes of $Y$ in the chain component containing $X$, so $X$ is not a cause of $Y$.  Finally, using Theorem \ref{thm:implicit-cause-no-bk}, Algorithm \ref{algo:global} distinguishes between possible causes and implicit causes.



Since Algorithm \ref{algo:critical} does not visit the same triple like $(\alpha, \psi,\tau)$ twice, where $\alpha$ is a sibling of $X$ and $\tau$ is a sibling of $\psi$ in $\mathcal{G}^*_u$,   the complexity of Algorithm \ref{algo:critical} in the worst case  is $O(|sib(X, \mathcal{G}^*)| \cdot  |\mathbf{E}(\mathcal{G}^*_u)|)$, where $|\mathbf{E}(\mathcal{G}^*_u)|$ is the number of edges in $\mathcal{G}^*_u$. Now we consider the computational complexity of global ITC (Algorithm \ref{algo:global}). We know that the complexity   to  check  the undirected connectivity of $X$ and $Y$ or to find  $an(Y, \mathcal{G}^*)$  is $O(|\mathbf{E}(\mathcal{G}^*)|^2)$,  where $|\mathbf{E}(\mathcal{G}^*)|$ is the number of vertices in ${\cal G}^*$.  Consequently, the complexity of  global ITC is $O(|\mathbf{E}(\mathcal{G}^*)|^2+|sib(X, \mathcal{G}^*)| \cdot  |\mathbf{E}(\mathcal{G}^*_u)|)$. Clearly, the worst case is $O(|\mathbf{E}(\mathcal{G}^*)|^3)$.

\subsection{Causal-Effect-Based Methods}\label{sec:sec:ce}

\begin{algorithm}[!t]
	\caption{The framework of   causal effect testing based algorithms.}
	\label{algo:ce-based}
	\begin{algorithmic}[1]
		\REQUIRE
		A treatment $X$, a target $Y$, a CPDAG $\mathcal{G}^*$ over a vertex set $\bf V$, and a significance level $\alpha$.
		\ENSURE
		The type of causal relation between X and Y.
		
		\STATE{set $\Theta= [\;]$ and  ${\rm P}_{\rm val}= [\;]$,}
		\FOR{each $\mathbf{S}\subset \mathbf{V}$ such that $\mathbf{S}$ is an adjustment set for $(X, Y)$ in some DAG in $[\mathcal{G}^*]$}
		\STATE{estimate the causal effect $\theta$ of $X$ on $Y$ by adjusting for $\mathbf{S}$, and add the causal effect to $\Theta$,}
		\STATE{test   the null hypothesis   $\theta=0$  and   add the corresponding p-value to ${\rm P}_{\rm val}$,}
		\ENDFOR
		
		
		\IF {every p-value in ${\rm P}_{\rm val}$ is less than or equal to $\alpha$}
		\RETURN {$X$ is a \textbf{definite cause} of $Y$,}
		\ENDIF
		
		\IF {every p-value in ${\rm P}_{\rm val}$ is greater than $\alpha$}
		\RETURN {$X$ is a \textbf{definite non-cause}  of $Y$,}
		\ENDIF
		
		\RETURN {$X$ is a \textbf{possible cause} of $Y$.}

	\end{algorithmic}
\end{algorithm}

We now discuss the causal-effect-based methods, which are modifications of the IDA-type algorithms.   For simplicity, we assume that the observed variables follow a linear-Gaussian structural equation model and that the observational distribution is faithful to the underlying DAG. With these assumptions, $X$ has a non-zero total causal effect on $Y$ if and only if there is a  directed path from $X$ to $Y$ in the underlying DAG.  Following the work of~\citet{maathuis2009estimating}, we use
\[ACE(Y\mid do(X=x)) \coloneqq \frac{\partial E(Y\mid do(X=x))}{\partial x},\]
to measure the (average) total causal effect of $X$ on $Y$.\footnote{Here, $do(X=x)$ is the \emph{do-operator} proposed by \citet{pearl2009causality} to denote the intervention on $X$ by forcing $X$ to be $x$. \citet{pearl2009causality} defined that $X$ has a causal effect on $Y$ if there exists an $x\neq x'$ such that $P(Y\mid do(X=x))\neq P(Y\mid do(X=x'))$, where $P(Y\mid do(X=x))$ is the post-intervention distribution of $Y$. On the other hand, it is common to summarize $P(Y\mid do(X=x))$ by its mean \citep{pearl2009causality, maathuis2009estimating}, i.e., the mean of $Y$ w.r.t. $P(Y\mid do(X=x))$, which is denoted by $E(Y\mid do(X=x))$.} As mentioned in the introduction, the idea of the causal-effect-based methods is to estimate all possible causal effects of the treatment on the target first, and then check whether the possible effects are all zeros or non-zeros. If all of the possible effects are evaluated as zeros (non-zeros), then the treatment is a definite non-cause (definite cause) of the target.

However, due to estimation error, an estimated effect may not be exactly zero. In the work of~\citet{maathuis2010nature}, the authors first estimated all possible effects for all pairs of treatment and target, and then summarized each set of possible causal effects by its minimum absolute value. Finally, the minimum values were sorted in descending order. The top ones were evaluated as relatively strong effects. Though this method has been widely applied to real-world problems, it requires to  estimate all possible effects for all pairs of variables, which may bring unnecessary costs if someone only wants to know the causal relation between one pair of treatment and target. Moreover, in this method, the order of an effect depends on the other effects. Thus, this method is suitable for comparing the magnitude of causal effects, rather than  identifying the causal relation of a given pair.

In this paper, we focus on a testing-based solution, whose framework is summarized by Algorithm \ref{algo:ce-based}. After initializing two sequences $\Theta$ and ${\rm P}_{\rm val}$, Algorithm~\ref{algo:ce-based} enumerates all possible causal effects of $X$ on $Y$ and tests the null hypothesis  $\theta=0$ for each estimated effect $\theta$.  Different modifications adopt different enumeration and testing strategies. We introduce four modifications below.

\begin{enumerate}
	\item [(M1)] IDA {\rm +} testing all enumerated effects. Following the original IDA framework~\citep{maathuis2009estimating}, this modification enumerates all possible effects by listing all possible parental sets of the treatment $X$. Thus, line 2 of Algorithm~\ref{algo:ce-based} is replaced by
	
	{\vspace{2mm} \it ``for each $\mathbf{Q}\subset sib(X, {\cal G}^*)$ such that orienting $\mathbf{Q}\to X$ and $X\rightarrow sib(X, \mathcal{G}^{*})\setminus \textbf{Q}$ does not introduce any v-structure collided on $X$, let $\mathbf{S}=\mathbf{Q}\cup pa(X, {\cal G}^*)$ and do ..." \vspace{2mm}}
	
    All enumerated effects are then tested according to line 4 of Algorithm~\ref{algo:ce-based}. In the linear-Gaussian case, estimating the causal effect of $X$ on $Y$ by adjusting for $\mathbf{S}$ is equivalent to estimating the coefficient of $X$ in the linear regression of $Y$ on $X$ and $\mathbf{S}$. Hence, a t-test for the coefficient of $X$ is used to test  the significance of the causal effect of $X$ on $Y$.

    \item [(M2)] IDA {\rm +} testing the minimum and maximum absolute enumerated effects. This modification is inspired by the work of~\citet{maathuis2010nature}. It first enumerates all possible effects by listing all possible parental sets of the treatment $X$. Then, it tests the  effects of $X$ on $Y$  with the minimum and maximum absolute values to obtain two p-values, $p_{\rm min}$ and $p_{\rm max}$,  respectively. Consequently,   if $p_{\rm min} \leq \alpha$, it returns that X is a definite cause of Y, if $p_{\rm max} > \alpha$, it returns that X is a definite non-cause of Y, and otherwise returns that X is a possible cause of Y.  The details of this modification is provided in \ref{app:ce}.

    \item [(M3)] IDA {\rm +} utilizing non-ancestral relations {\rm +} testing all enumerated effects. This modification first lists all possible parental sets of the treatment $X$. Then, to reduce the number of estimations and significance tests, it checks whether $X$ is a non-ancestor of $Y$ before estimating the causal effects of $X$ on $Y$. More formally, we insert a step   between  lines 2 and 3 of Algorithm~\ref{algo:ce-based} as follows.

    {\vspace{2mm} \it ``Orient $\mathbf{Q}\to X$ and $X\rightarrow sib(X, \mathcal{G}^{*})\setminus \textbf{Q}$, complete the orientations using Meek's rules~\citep{meek1995causal}, and use Lemma 3.2 in~\cite{perkovic2017interpreting} to check whether $X$ is a b-possible ancestor of $Y$." \vspace{2mm}}

    The definition of b-possible ancestor can be found in~\citet[Definition~3.3]{perkovic2017interpreting}. We directly set $\theta=0$ and the p-value $p=1$ if  $X$ is not a b-possible ancestor of $Y$. Otherwise, we  estimate  the causal effect of $X$ on $Y$ by adjusting for $\mathbf{S}=\mathbf{Q}\cup pa(X, {\cal G}^*)$, and test   the significance of the estimated effect. The details are provided in~\ref{app:ce}.

    \item [(M4)] IDA {\rm +} utilizing non-ancestral relations {\rm +} testing the minimum and maximum absolute enumerated effects. This modification takes the same enumeration strategy used by the third modification, and uses the same testing strategy as the second method does.
\end{enumerate}

Except for the second and forth modifications, the other two modifications usually compute a list of p-values. These p-values are compared with a given significance level $\alpha$, as shown in lines $6$-$12$ of Algorithm~\ref{algo:ce-based}. The adjustment methods for multiple p-values, such as the Bonferroni correction, may be used to control the false discovery rate.


The input CPDAG of Algorithm \ref{algo:ce-based} can be replaced by the induced subgraph over $pa(X, {\cal G}^*) \cup sib(X, {\cal G}^*)$ for the first two modifications.
Since the induced subgraph over $pa(X, {\cal G}^*) \cup sib(X, {\cal G}^*)$ can be learned locally using the variant of MB-by-MB \citep[Algorithm~3]{liu2020local}, we can combine the first two modifications with the variant of MB-by-MB to make them fully local. However, for the last two modifications, the input CPDAG cannot be replaced. The reason is that these two  modifications need to run Meek's rules to extend the local orientations $\mathbf{Q}\to X$ and $X\rightarrow sib(X, \mathcal{G}^{*})\setminus \textbf{Q}$, and Meek's rules require an entire CPDAG.

We remark that, the causal-effect-based methods are not restricted to the aforementioned four modifications. For example, in the linear-Gaussian case, one can use the optimal IDA~\citep{Witte2020efficient} to replace the original IDA. For each $\mathbf{Q}\subset sib(X, {\cal G}^*)$ such that orienting $\mathbf{Q}\to X$ and $X\rightarrow sib(X, \mathcal{G}^{*})\setminus \textbf{Q}$ does not introduce any v-structure collided on $X$, the optimal IDA first runs Meek's rules to extend the local orientations $\mathbf{Q}\to X$ and $X\rightarrow sib(X, \mathcal{G}^{*})\setminus \textbf{Q}$, and then finds the optimal adjustment set so that the estimation of the causal effect has the smallest asymptotic variance.    Another modification, which we call the  hybrid method, is to  infer whether $X$ is a definite non-cause of $Y$  by checking whether $X$ has a partially directed path to $Y$ in the input CPDAG~\citep{zhang2006, perkovic2017interpreting} first, and then call a causal-effect-based method if $X$ is not a definite non-cause of $Y$.  Compared with the third and forth modifications, this hybrid method utilizes non-ancestral relations  before  listing all  possible parental sets of $X$, and therefore, is
generally more efficient if $X$ is graphically identified as a definite non-cause of $Y$.

To end this section, we theoretically compare the proposed local algorithm to the causal-effect-based methods in terms of computational complexity. In the worst case, the number of causal effect estimations  required by a causal-effect-based method is $2^{|sib(X, {\cal G}^*)|}$, since every causal-effect-based method needs to enumerate the possible causal effects of $X$ on $Y$. At the same time, at most  $2^{|sib(X, {\cal G}^*)|}$ tests on these estimated causal effects  are required  in the worst case. 
Besides, the modifications that utilize non-ancestral relations or the optimal IDA have to run $2^{|sib(X, {\cal G}^*)|}$ times Meek's rules, while the complexity of Meek's rules is polynomial to the number of vertices in the graph.   Therefore, the proposed local method (Algorithm \ref{algo:local}) is  more efficient than the current causal-effect-based methods as the former only needs linearly many hypothesis tests.

\section{ Experiments}\label{sec:simulation}

In this section, we  illustrate and evaluate the proposed methods experimentally  using synthetic data sets generated from linear structural equation models with Erd\"os-R\'enyi random DAGs and the DREAM4 data sets. We compare the local ITC with the global one as well as the four modifications of   causal-effect-based  methods (CE-based for short).

The details of the CE-based  methods are provided in Section~\ref{sec:sec:ce} and \ref{app:ce}. In this section, the four modifications of CE-based methods from M1 to M4 are denoted by  ``IDA + test (all)", ``IDA + test (min/max)", ``IDA + an + test (all)", and `IDA + an + test (min/max)", respectively. We use the Bonferroni correction to adjust p-values for multiple comparisons, and the corresponding methods are denoted by ``multi". In addition, the CE-based methods with the optimal IDA and the hybrid method mentioned in Section~\ref{sec:sec:ce} are also studied experimentally in~\ref{app:app:opt} and \ref{app:app:hybrid} respectively.


In Section \ref{sec:sec:true}, we assume that the true CPDAG or its local structure of interest is available. In this case, the synthetic data sets are only used by the CE-based methods to estimate causal effects, and by the local ITC method to perform conditional independence tests. 
In Section \ref{sec:sec:est}, we further evaluate  the methods based on the structures learned from data.  Three global structure learning algorithms, including the PC algorithm~\citep{spirtes1991algorithm}, the stable PC algorithm~\citep{colombo2014order} and the GES algorithm~\citep{chickering2002learning}, are used to learn CPDAGs, and the variant of MB-by-MB~\citep{liu2020local} is used to learn parents and siblings of the vertices of interest. In all of these experiments,  algorithms like PC, stable PC, GES and IDA are called from \texttt{R-package pcalg}~\citep{Kalisch2012pcalg}, and the Bonferroni correction is called from \texttt{R-package stats}. The significance level $\alpha$ of statistical independence tests  is  $0.001$.\footnote{Experiments show different  significance levels  give   similar results.} All codes were run on a computer with an Intel 2.5GHz CPU and 8 GB of memory.

Let $\text{ER}(n,d)$  denote a random DAG with $n$ vertices and average in-and-out degree $d$. In our experiments, $n$ is chosen from $\{50, 100\}$ and $d$ is chosen from $\{1.5, 2.0, 2.5, 3.0, 3.5, 4.0\}$. For a sampled $\text{ER}(n,d)$ graph, we drew an edge weight $\beta_{ij}$ from a Uniform($[0.8, 1.6]$) or a Uniform($[-1.6, -0.8]\cup[0.8, 1.6]$) distribution for each directed edge $X_i\rightarrow X_j$ in the DAG. Then, we  constructed a  linear structural equation model as follows,
\begin{align}
X_j = \sum_{X_i \in pa(X_j)}\beta_{ij}X_i+\epsilon_j\;, \quad j = 1,...,n\;,
\end{align}
where $\epsilon_1, ..., \epsilon_n$ are independent $\mathcal{N}(0,1)$ noises.

For each combination of $n$, $d$ and the distribution of edge weights, we generated $5,000$ weighted DAGs. Finally, in Section~\ref{sec:sec:true} we drew $N_{\rm effect}\in \{50, 100, 150\}$ samples from this linear model to estimate causal effects and perform conditional independence tests, and in Section~\ref{sec:sec:est} we drew additional $N_{\rm graph}\in \{100, 200, 500\}$ samples to learn the required causal structures. In summary, there were totally $2\times 6 \times 2=24$ graph settings and  $2\times 6\times 2\times 3\times 3=216$ experiment parameter settings, and for each experiment parameter setting, we repeated the experiment  $5,000$ times.

 Given a sampled DAG,  we randomly drew a treatment variable and a target variable, and  compared  their  causal relation (definite cause, definite non-cause, or possible cause) learned from data with the true one read from the corresponding CPDAG of the sampled DAG. The Kappa coefficient~\citep{Jacob1960kappa} as well as the true positive rate (TPR) and the false positive rate (FPR) were used to measure the performance of each method.   The Kappa coefficient is an adjustment of accuracy rate. It is the proportion of agreement after chance agreement is removed from consideration~\citep{Jacob1960kappa}.  The formal definition of the Kappa coefficient is given as follows. For an experiment parameter setting, let $M_{ij}$ be the number of experiments in which the $i$th causal relation  is identified as the $j$th  causal relation.  The Kappa coefficient $\kappa$ is defined as
\[\kappa = \frac{p - q}{1-q},\]
where
\[p = \frac{\sum_{i=1}^{3} M_{ii}}{\sum_{i,j=1}^{3} M_{ij}}, \qquad q = \frac{\sum_{i=1}^{3} (\sum_{j=1}^3 M_{ij})\cdot(\sum_{j=1}^3 M_{ji})     }{(\sum_{i,j=1}^{3} M_{ij})^2}.\]
The Kappa coefficient ranges from $-1$ to $+1$, and the higher the value of Kappa, the better the evaluated method. For ease of visualization, we focus on the kappa coefficients in this section, and the TPRs and FPRs are reported in~\ref{app:app:detailed}.


\subsection{Learning with True Graphs}\label{sec:sec:true}

\begin{figure}[t!]
	\centering
	\subfigure{
		\begin{minipage}[t]{0.9\textwidth}
			\centering
			\includegraphics[width=\textwidth]{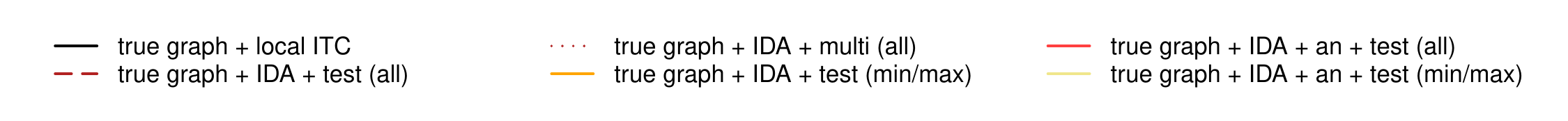}
		\end{minipage}%
	}%
	\vspace{-1em}
	\addtocounter{subfigure}{-1}
	
	\subfigure[$n=50$, $N_{\rm effect}=50$ \label{fig:true:kappa_50_50}]{
		\begin{minipage}[t]{0.3\textwidth}
			\centering
			\includegraphics[width=\textwidth]{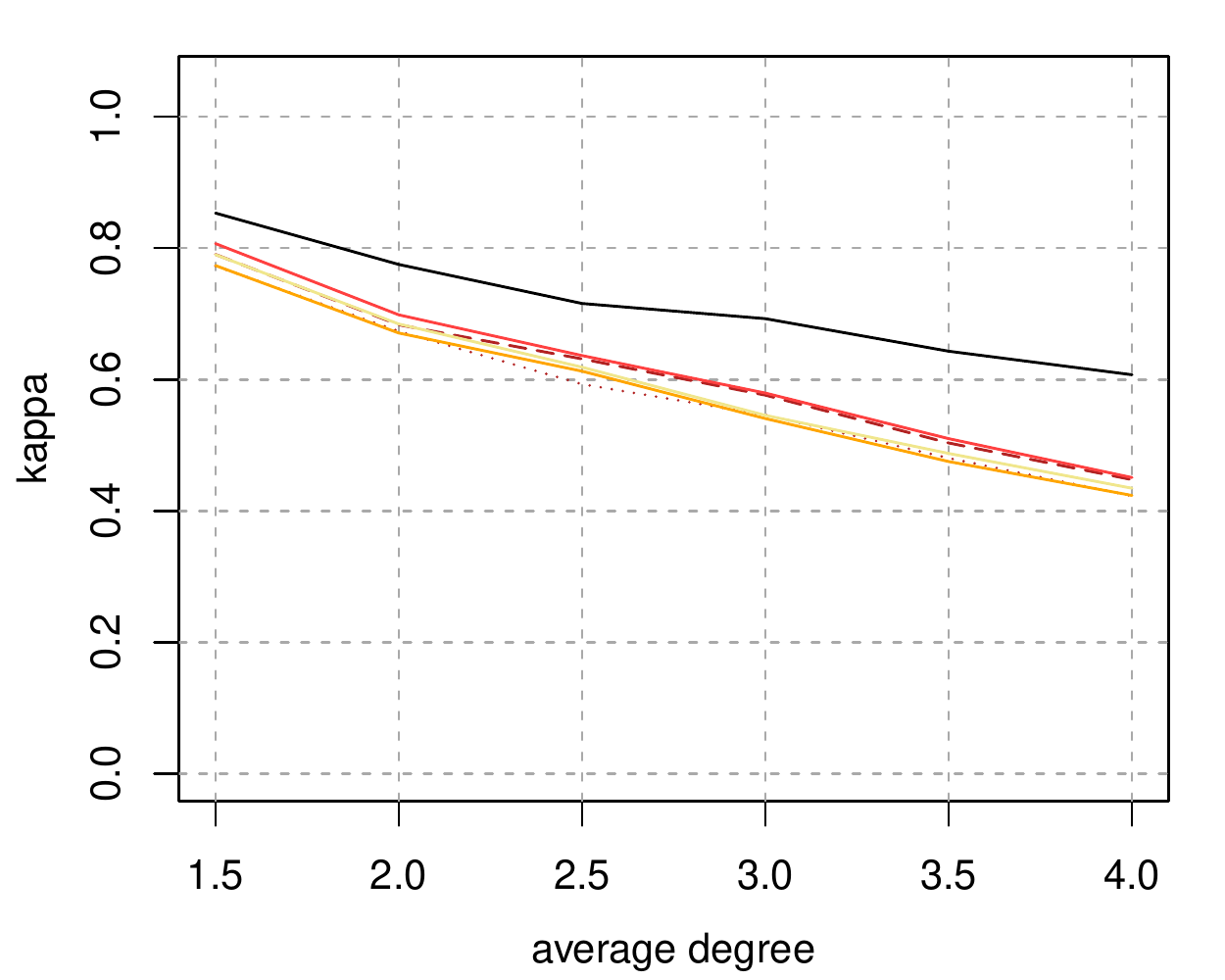}
		\end{minipage}%
	}%
	\hspace{0.01\textwidth}
	\subfigure[$n=50$, $N_{\rm effect}=100$  \label{fig:true:kappa_50_100}]{
		\begin{minipage}[t]{0.3\textwidth}
			\centering
			\includegraphics[width=\textwidth]{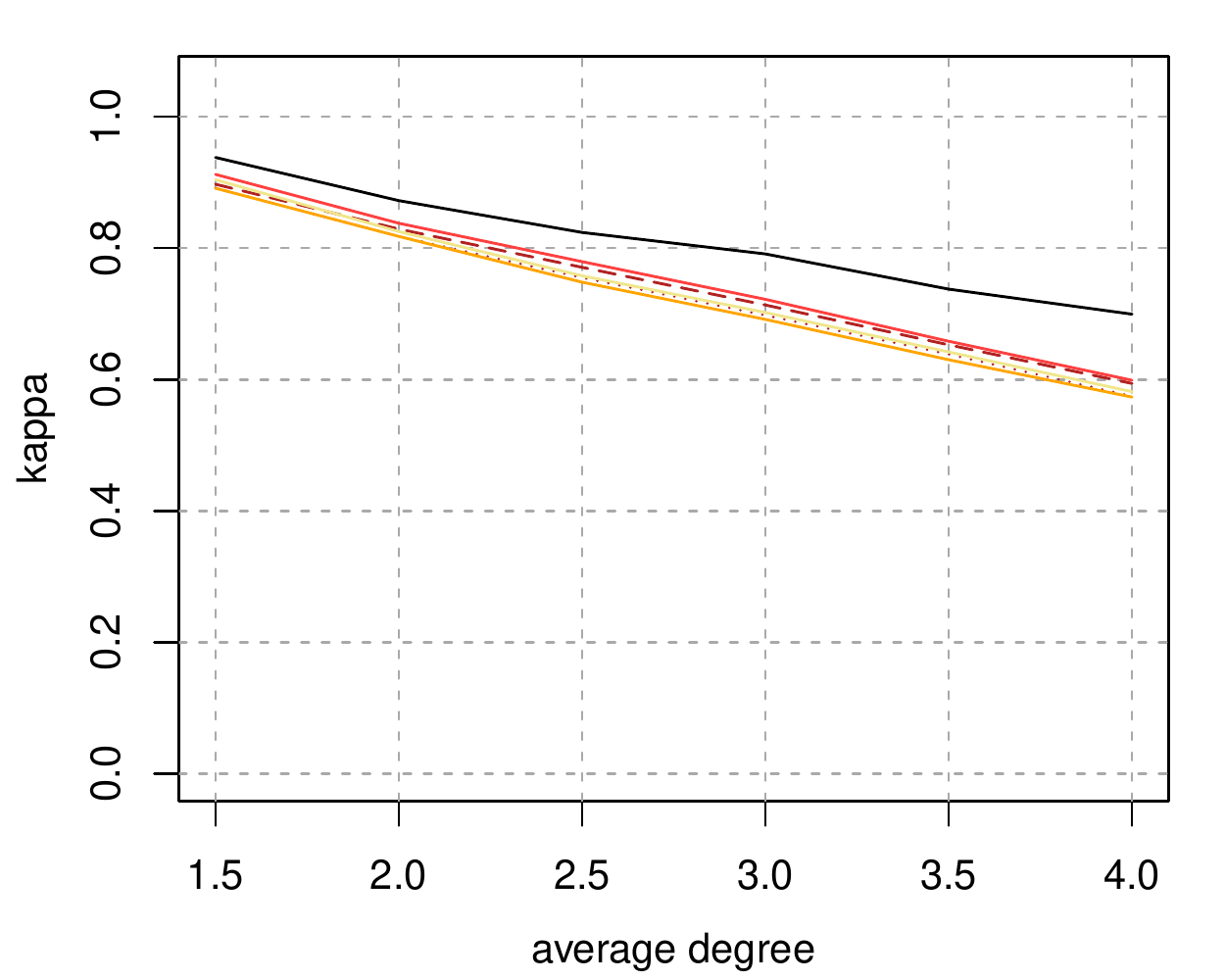}
		\end{minipage}%
	}%
	\hspace{0.01\textwidth}
	\subfigure[$n=50$, $N_{\rm effect}=150$ \label{fig:true:kappa_50_150}]{
		\begin{minipage}[t]{0.3\textwidth}
			\centering
			\includegraphics[width=\textwidth]{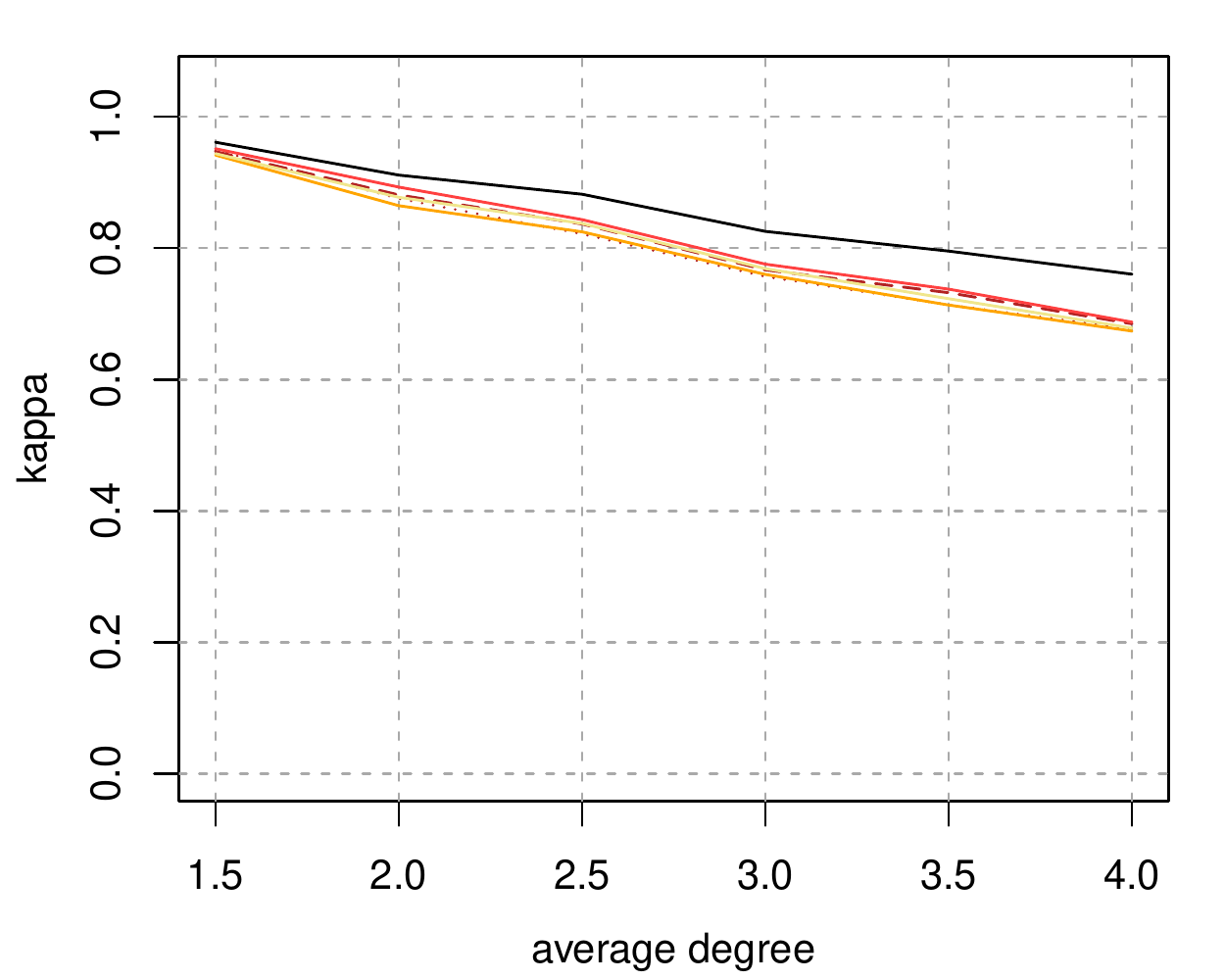}
		\end{minipage}%
	}%
	
	\subfigure[$n=100$, $N_{\rm effect}=50$ \label{fig:true:kappa_100_50}]{
		\begin{minipage}[t]{0.3\textwidth}
			\centering
			\includegraphics[width=\textwidth]{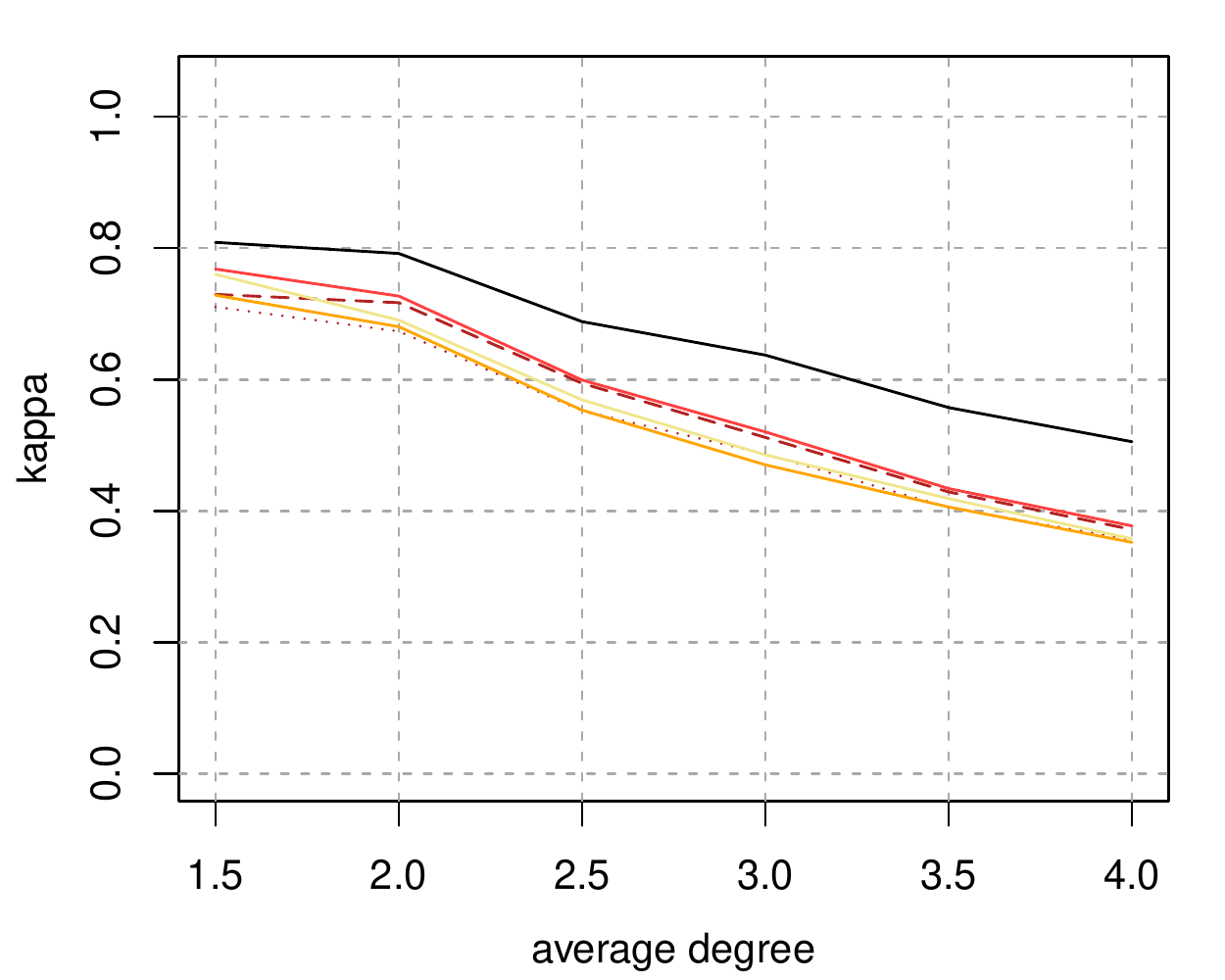}
		\end{minipage}%
	}%
	\hspace{0.01\textwidth}
	\subfigure[$n=100$, $N_{\rm effect}=100$ \label{fig:true:kappa_100_100}]{
		\begin{minipage}[t]{0.3\textwidth}
			\centering
			\includegraphics[width=\textwidth]{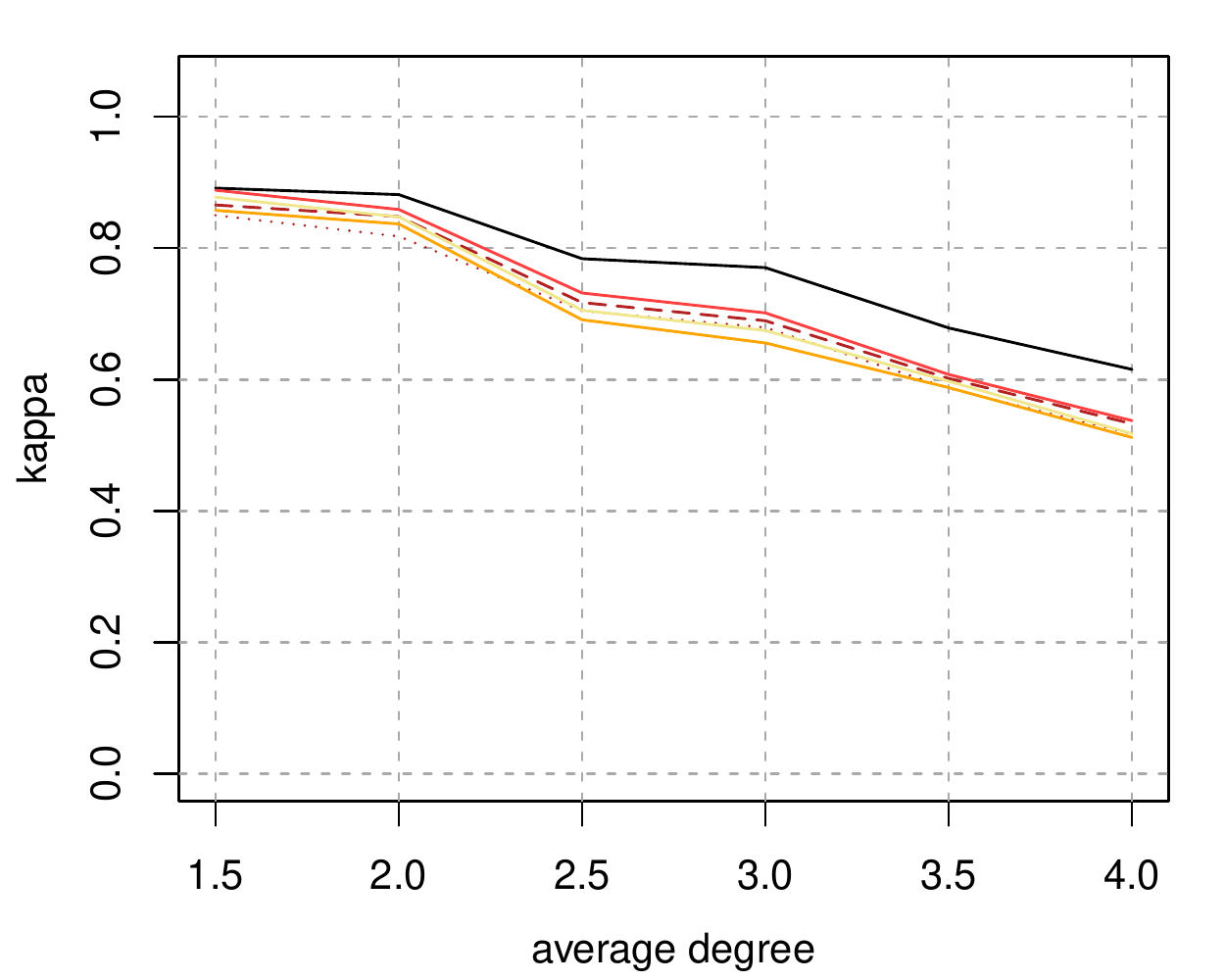}
		\end{minipage}%
	}%
	\hspace{0.01\textwidth}
	\subfigure[$n=100$, $N_{\rm effect}=150$ \label{fig:true:kappa_100_150}]{
		\begin{minipage}[t]{0.3\textwidth}
			\centering
			\includegraphics[width=\textwidth]{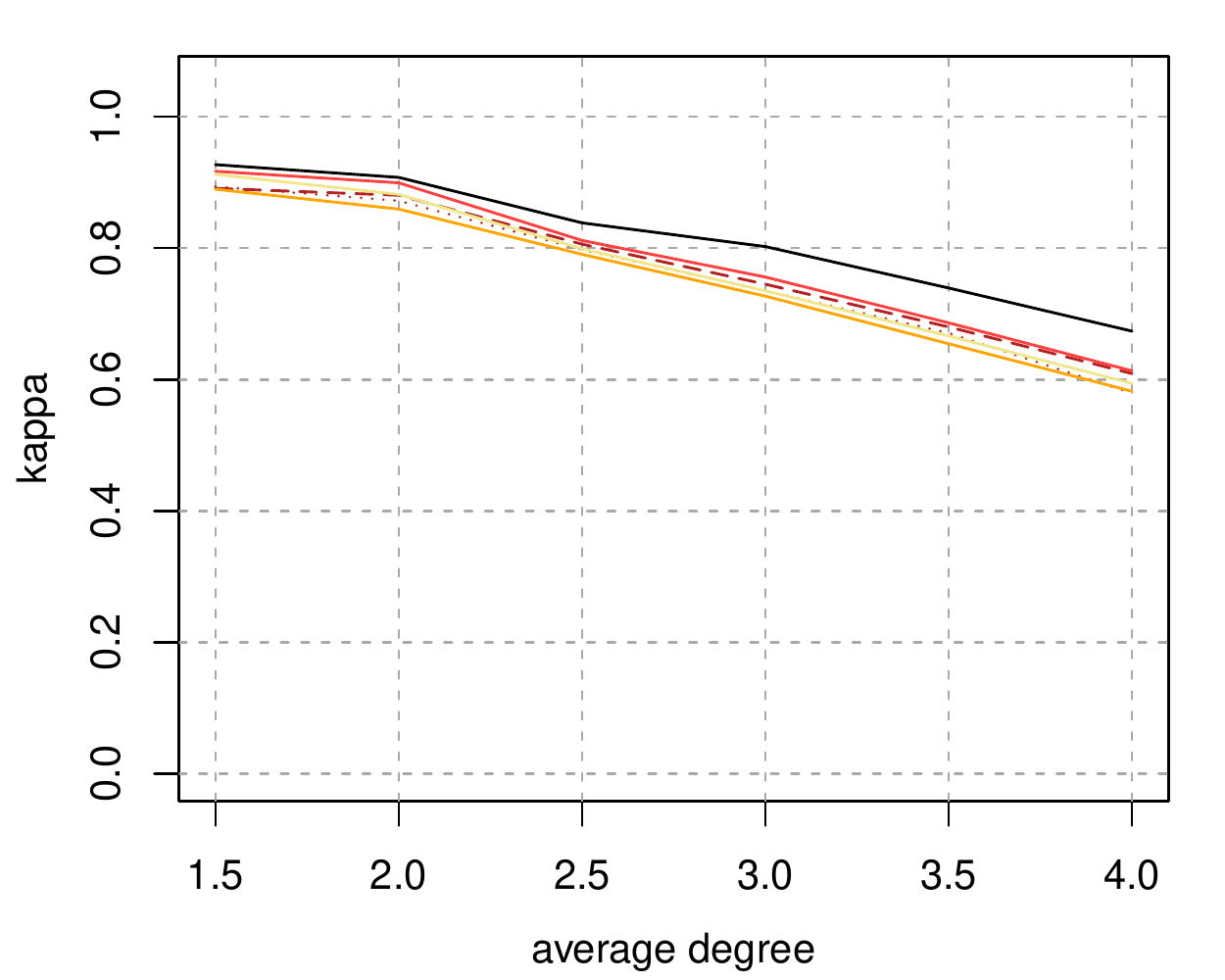}
		\end{minipage}%
	}%
	
	\caption{The Kappa coefficients of different methods on random graphs with positive weights. The true graph structures are provided. }
	\label{fig:true:kappa}
\end{figure}

\begin{figure}[t!]
	\centering
	\subfigure{
		\begin{minipage}[t]{1\textwidth}
			\centering
			\includegraphics[width=\textwidth]{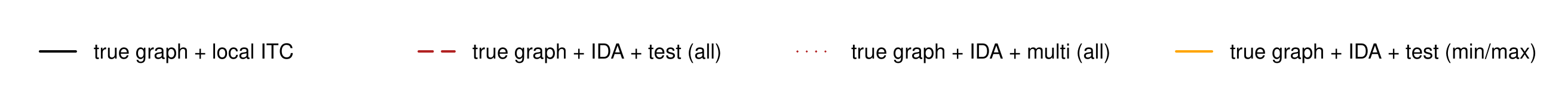}
		\end{minipage}%
	}%
	\vspace{-1em}
	\addtocounter{subfigure}{-1}
	
	\subfigure[$n=50$, $N_{\rm effect}=50$ \label{fig:true:time_50_50}]{
		\begin{minipage}[t]{0.3\textwidth}
			\centering
			\includegraphics[width=\textwidth]{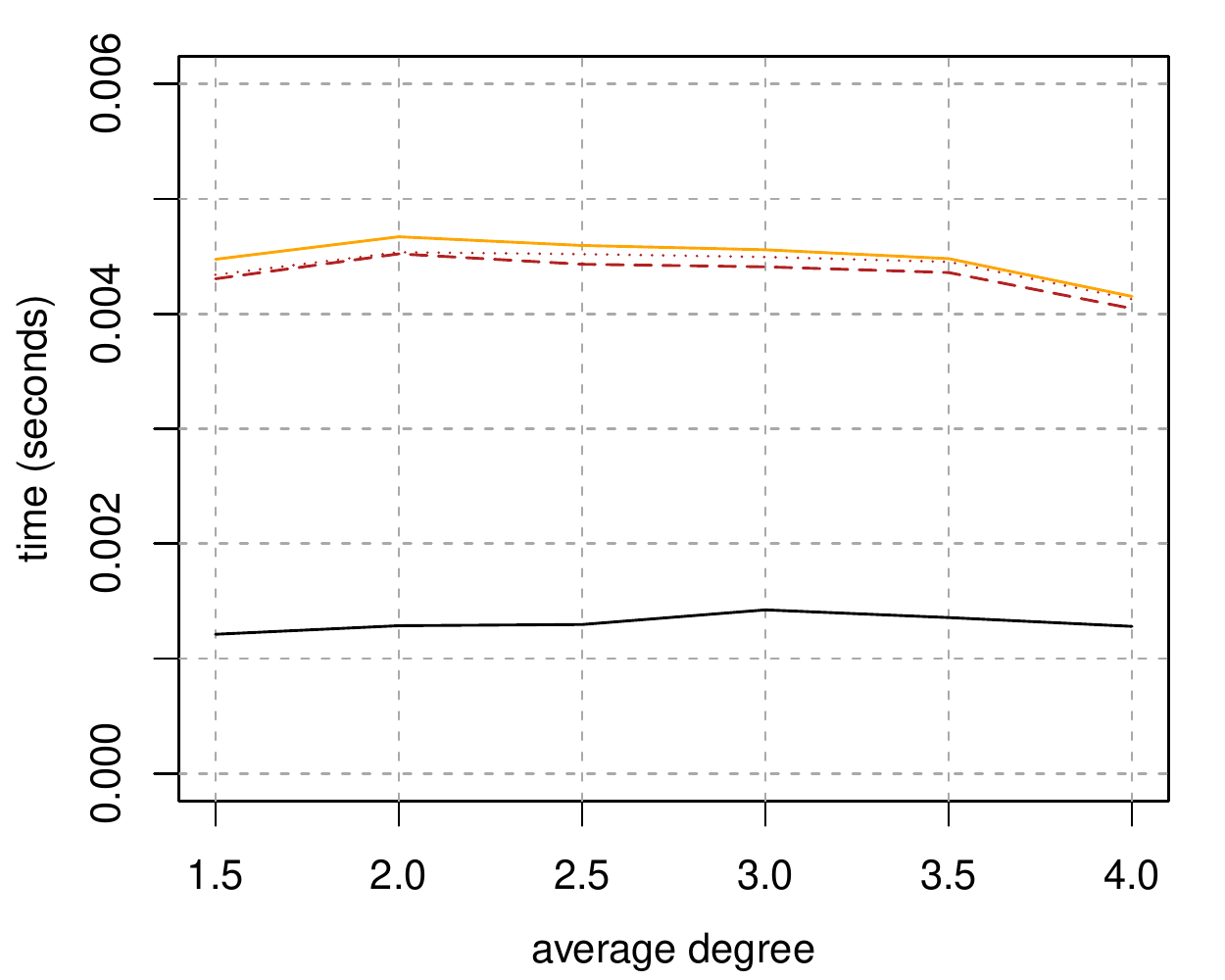}
		\end{minipage}%
	}%
	\hspace{0.01\textwidth}
	\subfigure[$n=50$, $N_{\rm effect}=100$  \label{fig:true:time_50_100}]{
		\begin{minipage}[t]{0.3\textwidth}
			\centering
			\includegraphics[width=\textwidth]{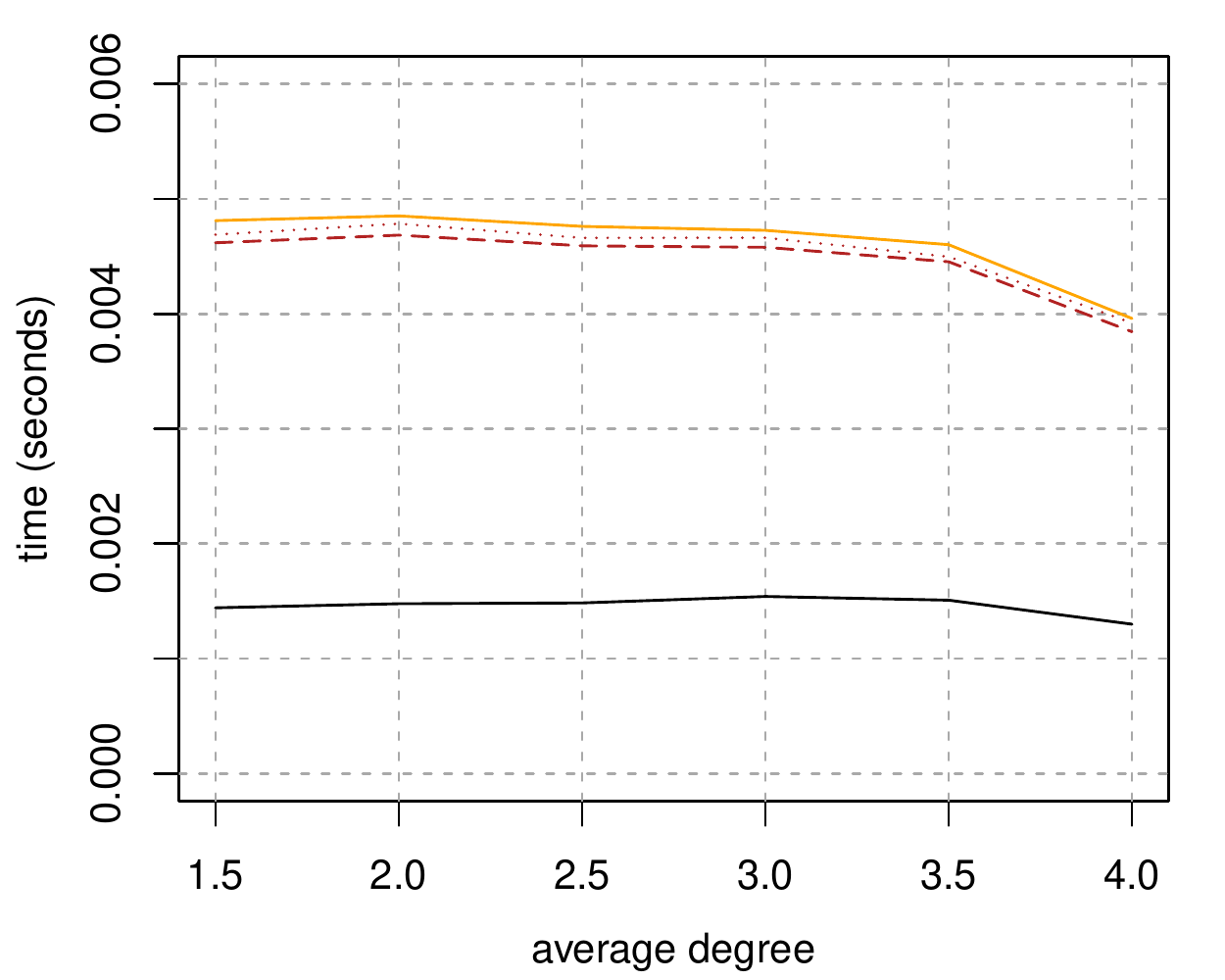}
		\end{minipage}%
	}%
	\hspace{0.01\textwidth}
	\subfigure[$n=50$, $N_{\rm effect}=150$ \label{fig:true:time_50_150}]{
		\begin{minipage}[t]{0.3\textwidth}
			\centering
			\includegraphics[width=\textwidth]{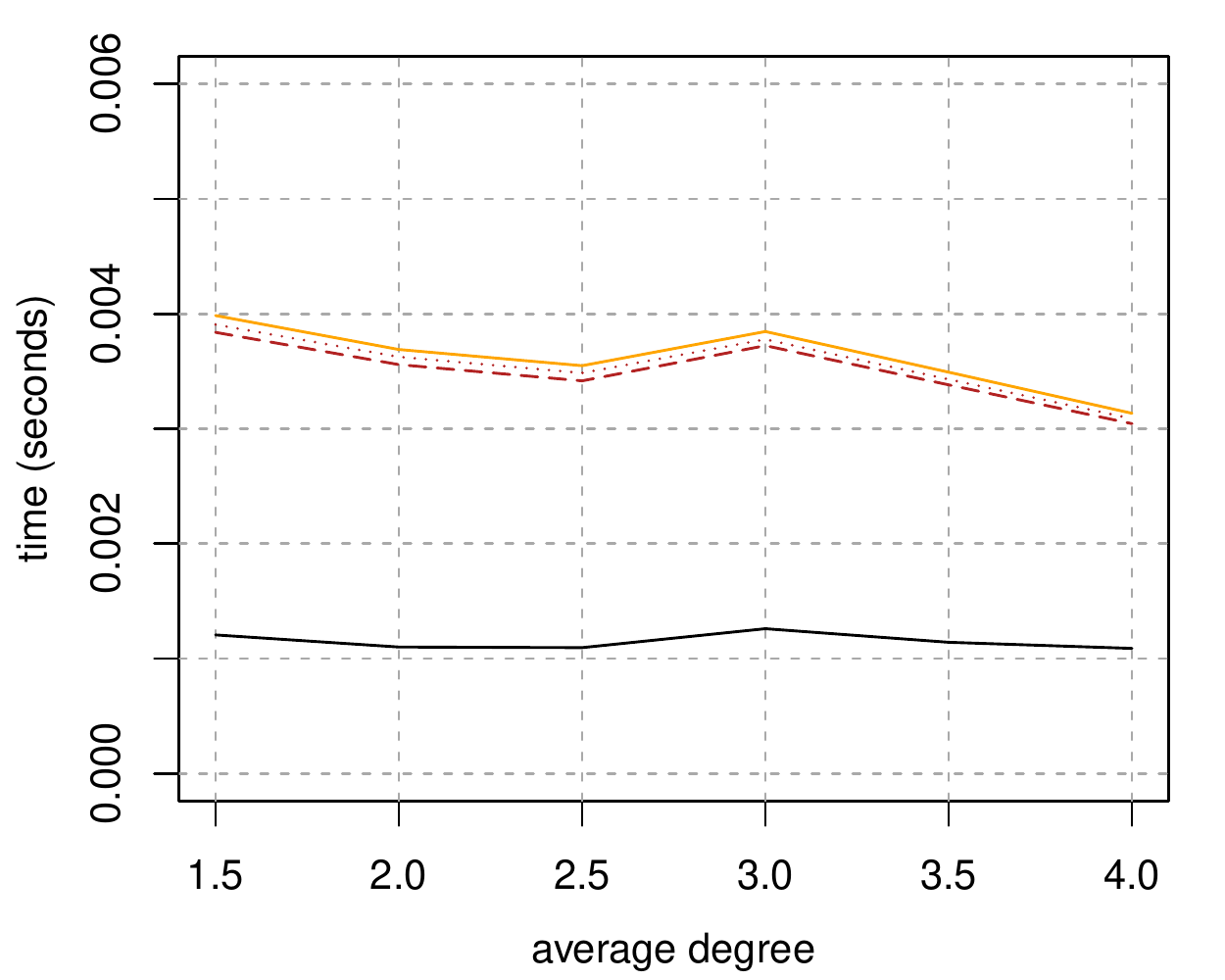}
		\end{minipage}%
	}%
	
	\subfigure[$n=100$, $N_{\rm effect}=50$ \label{fig:true:time_100_50}]{
		\begin{minipage}[t]{0.3\textwidth}
			\centering
			\includegraphics[width=\textwidth]{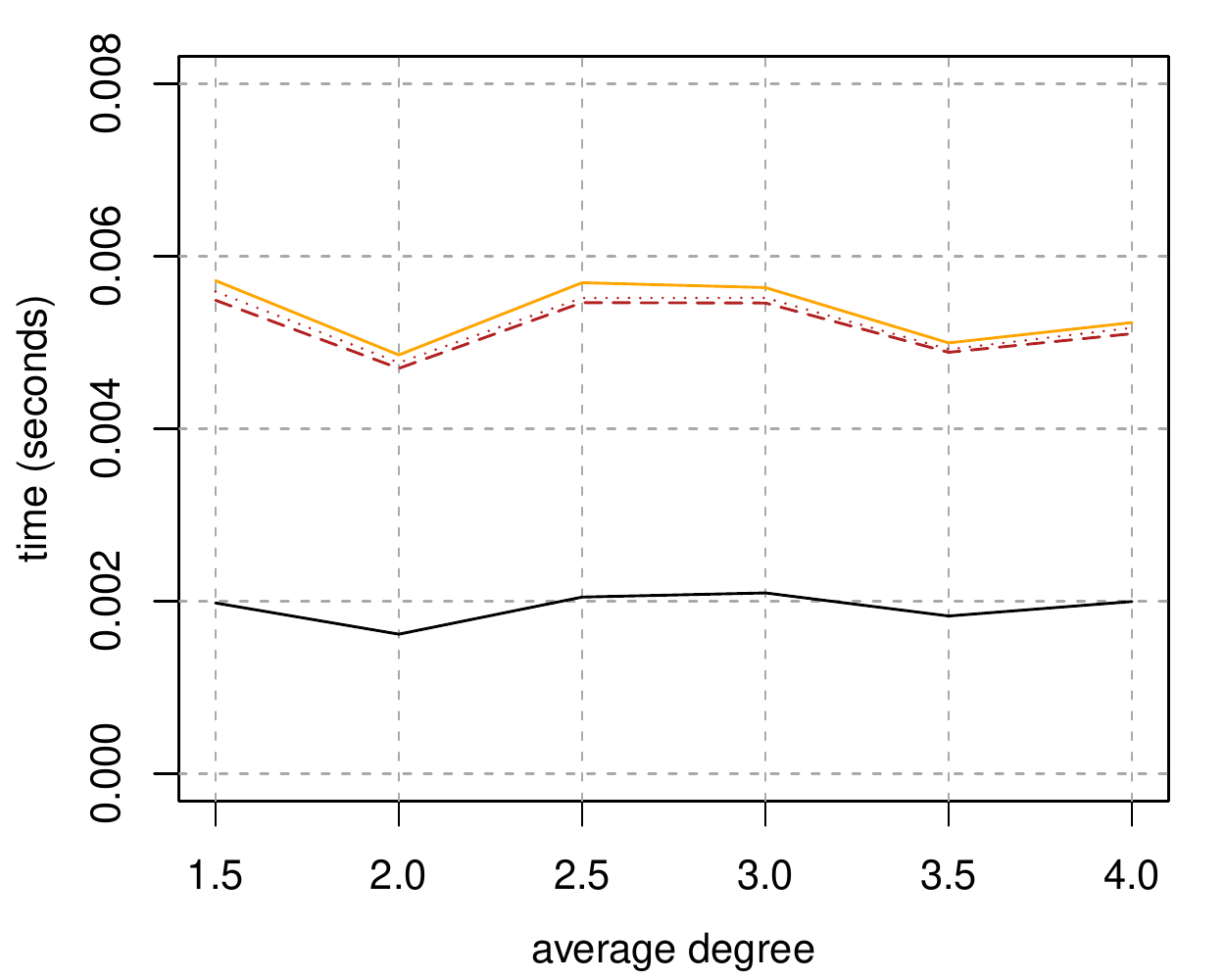}
		\end{minipage}%
	}%
	\hspace{0.01\textwidth}
	\subfigure[$n=100$, $N_{\rm effect}=100$ \label{fig:true:time_100_100}]{
		\begin{minipage}[t]{0.3\textwidth}
			\centering
			\includegraphics[width=\textwidth]{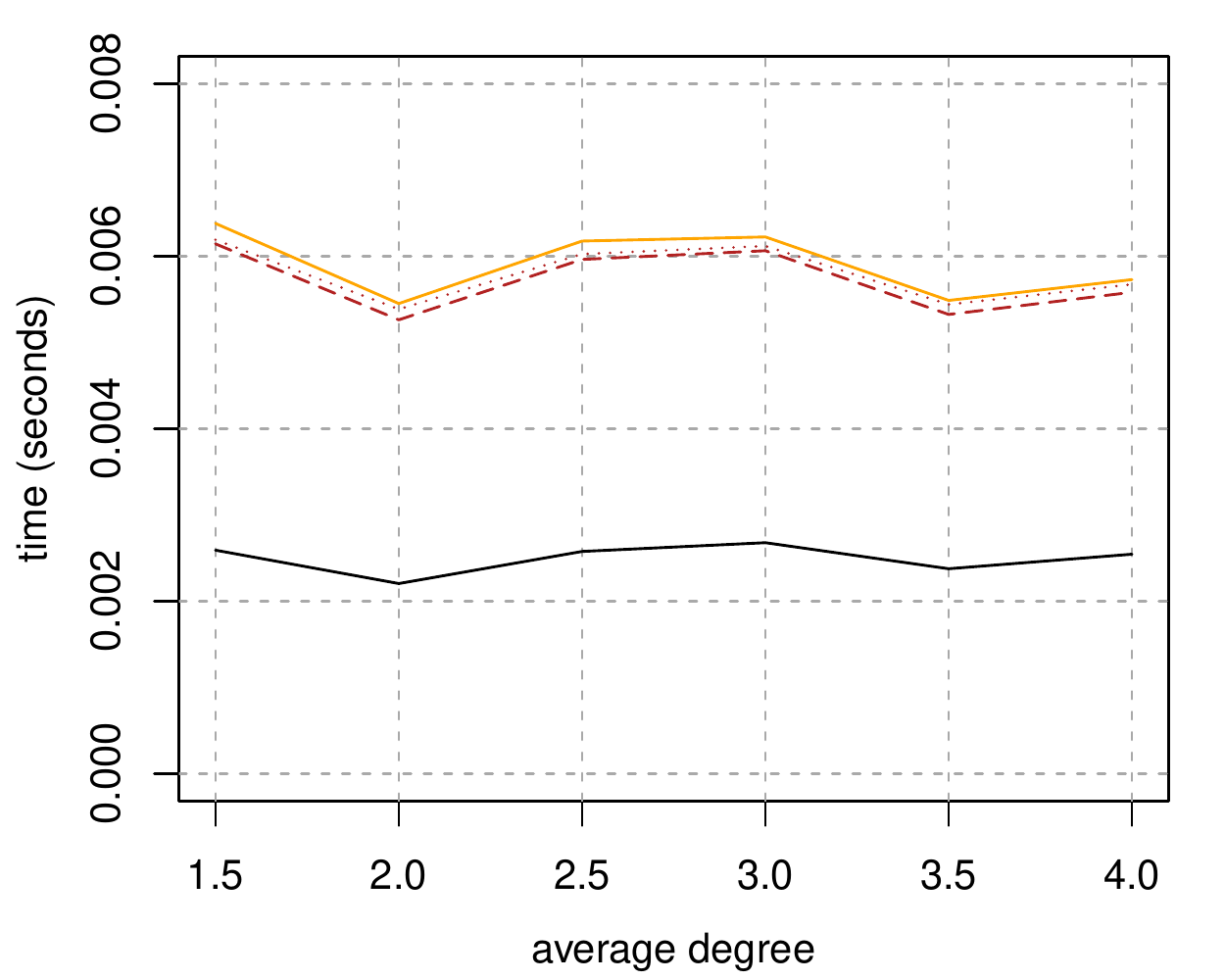}
		\end{minipage}%
	}%
	\hspace{0.01\textwidth}
	\subfigure[$n=100$, $N_{\rm effect}=150$ \label{fig:true:time_100_150}]{
		\begin{minipage}[t]{0.3\textwidth}
			\centering
			\includegraphics[width=\textwidth]{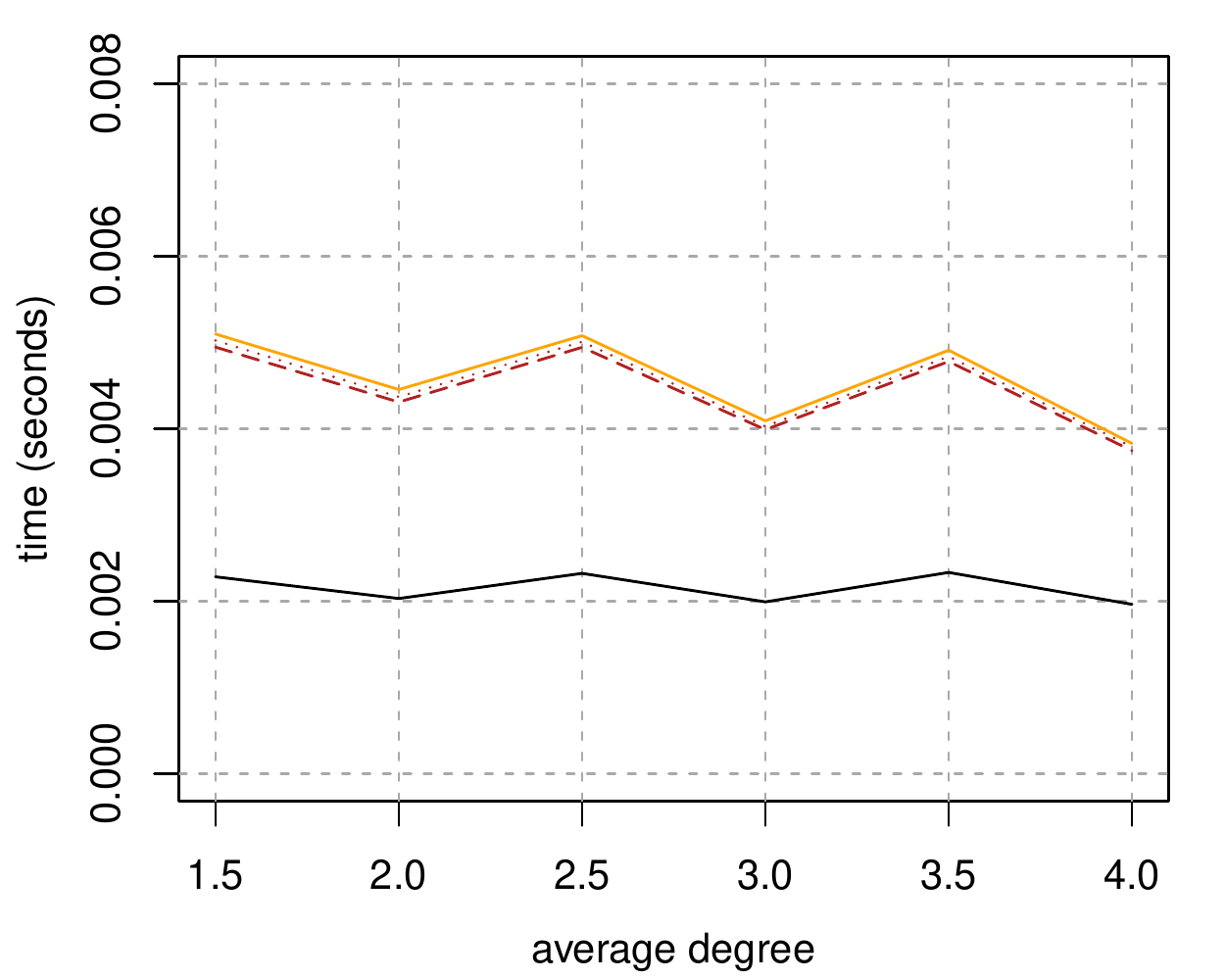}
		\end{minipage}%
	}%
	
	\caption{The CPU time (in seconds) of different methods on random graphs with positive weights. The true graph structures are provided. The CPU time of ``IDA + an + test (all)" and ``IDA + an + test (min/max)" is not shown, as they are more than 50-100 times slower than the other methods.}
	\label{fig:true:time}
\end{figure}

In this section, the true CPDAGs or their local structures are provided to exclude estimation biases caused by graph structure learning  from data.  In this case, the global ITC shown in Algorithm \ref{algo:global} can identify all causal relations correctly since the input CPDAG is true. Except for the global ITC, the local ITC and the CE-based methods need to perform hypothesis tests, which may introduce errors.  To assess  these methods, we run experiments on data with positive weights (Uniform($[0.8, 1.6]$))  as well as a  mixture of negative and positive weights (Uniform($[-1.6, -0.8]\cup[0.8, 1.6]$)). The results on the graphs with positive weights are shown in this section and the rests are in~\ref{app:app:mix}.

\begin{figure}[!h]
	\centering
	\subfigure{
		\begin{minipage}[t]{1\textwidth}
			\centering
			\includegraphics[width=\textwidth]{figs/revision/legend.pdf}
		\end{minipage}%
	}%
	\vspace{-1em}
	\addtocounter{subfigure}{-1}
	
	\subfigure[$n=50$, $N_{\rm effect}=50$ \label{fig:true:time_zoom_50_50}]{
		\begin{minipage}[t]{0.3\textwidth}
			\centering
			\includegraphics[width=\textwidth]{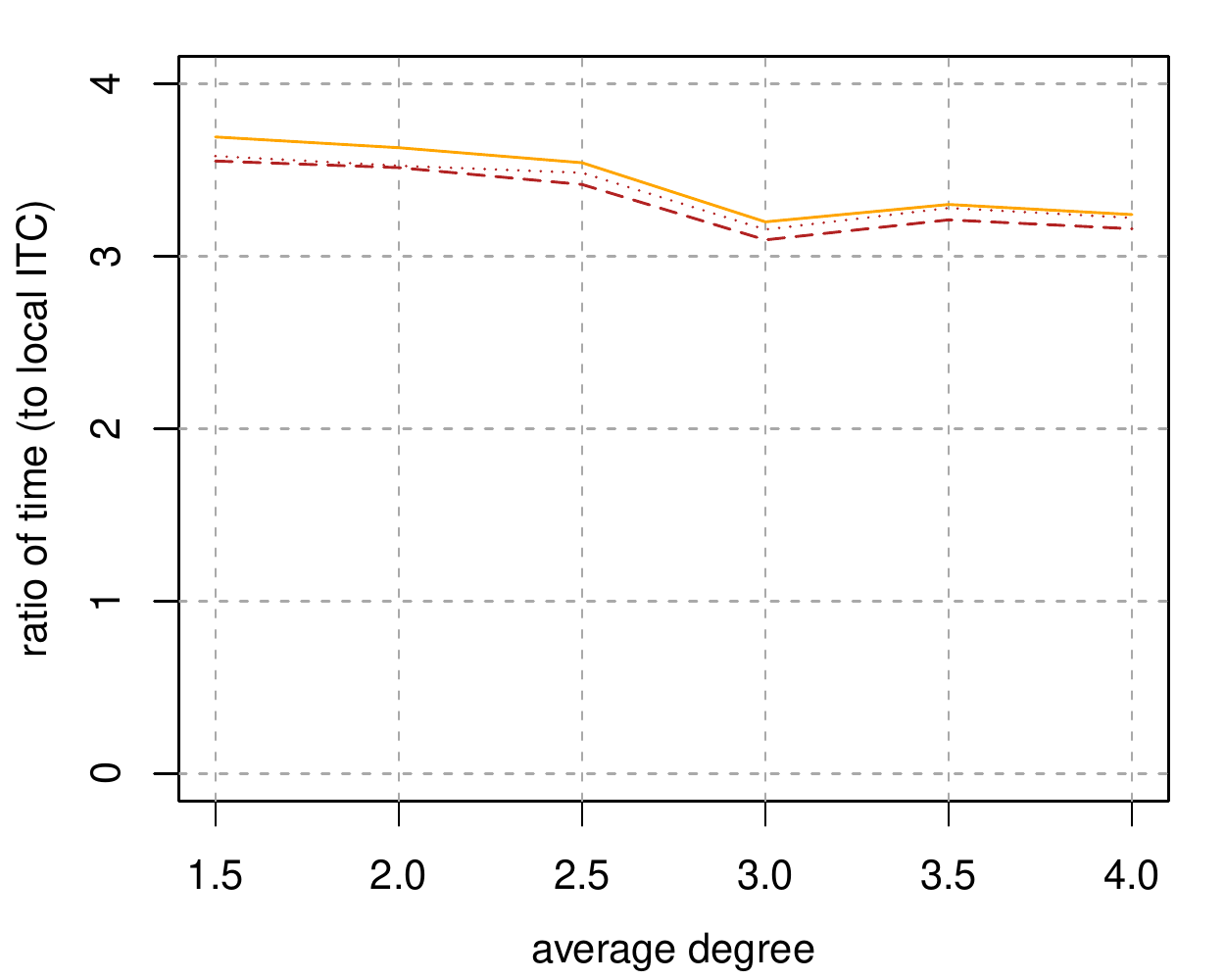}
		\end{minipage}%
	}%
	\hspace{0.01\textwidth}
	\subfigure[$n=50$, $N_{\rm effect}=100$  \label{fig:true:time_zoom_50_100}]{
		\begin{minipage}[t]{0.3\textwidth}
			\centering
			\includegraphics[width=\textwidth]{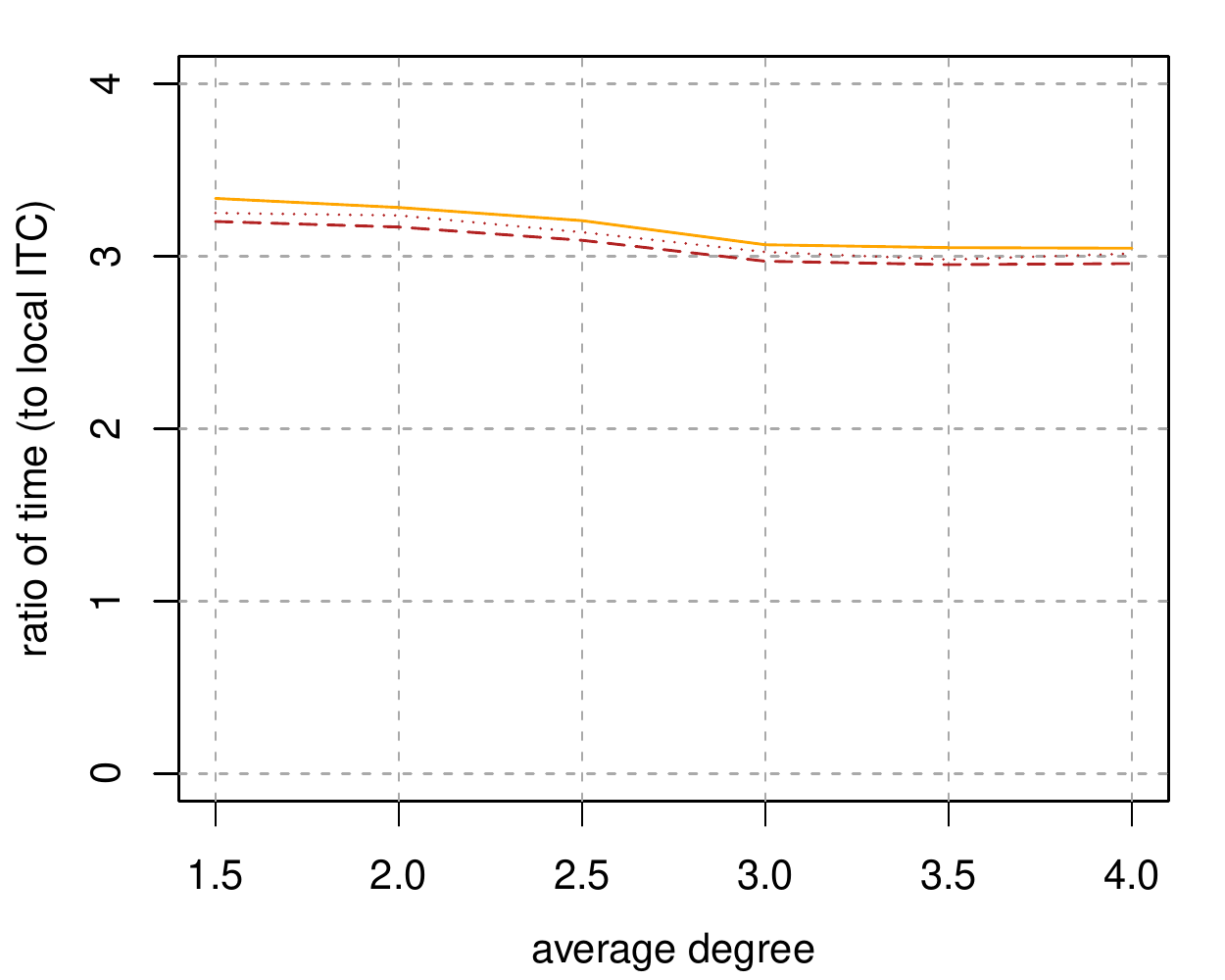}
		\end{minipage}%
	}%
	\hspace{0.01\textwidth}
	\subfigure[$n=50$, $N_{\rm effect}=150$ \label{fig:true:time_zoom_50_150}]{
		\begin{minipage}[t]{0.3\textwidth}
			\centering
			\includegraphics[width=\textwidth]{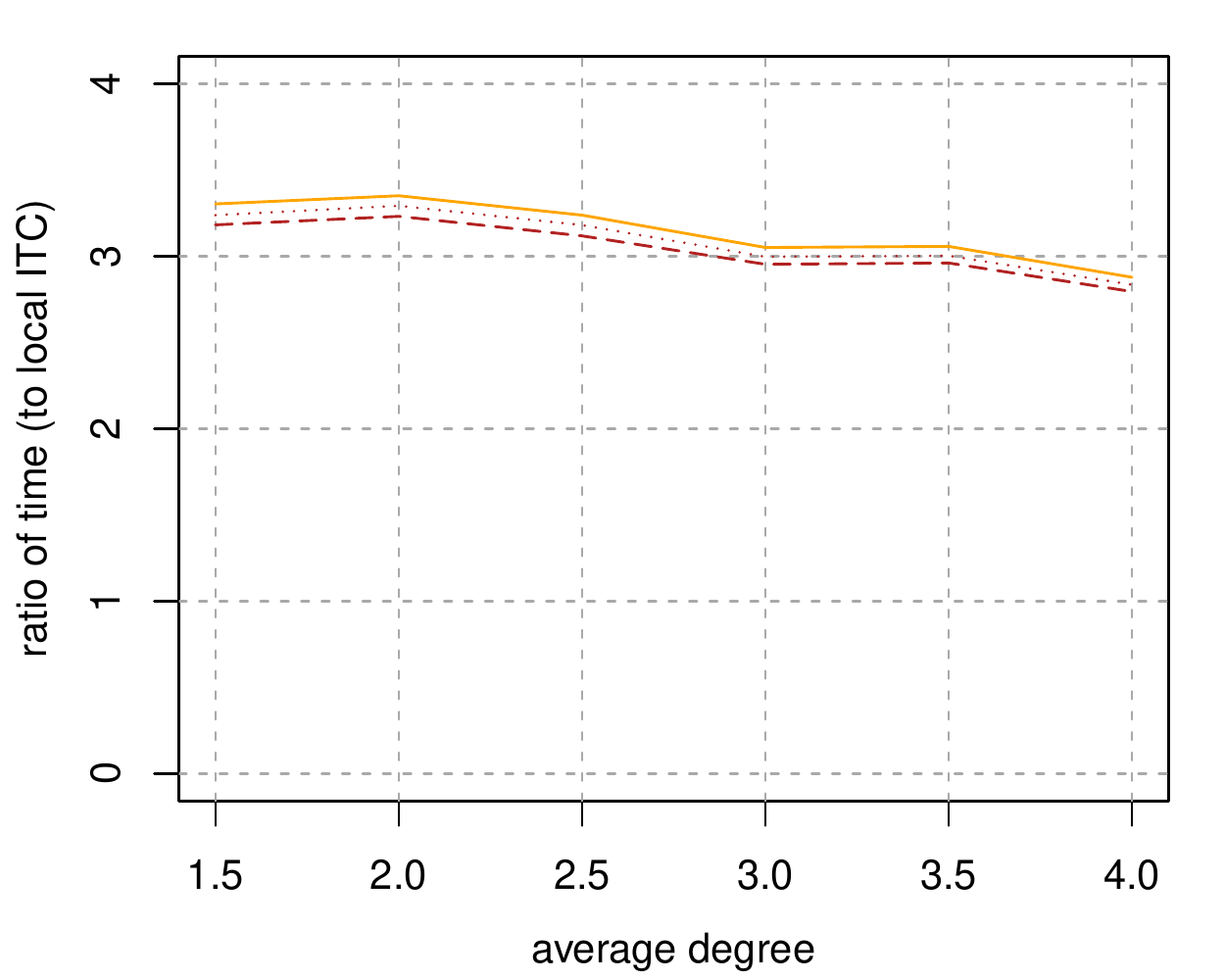}
		\end{minipage}%
	}%
	
	\subfigure[$n=100$, $N_{\rm effect}=50$ \label{fig:true:time_zoom_100_50}]{
		\begin{minipage}[t]{0.3\textwidth}
			\centering
			\includegraphics[width=\textwidth]{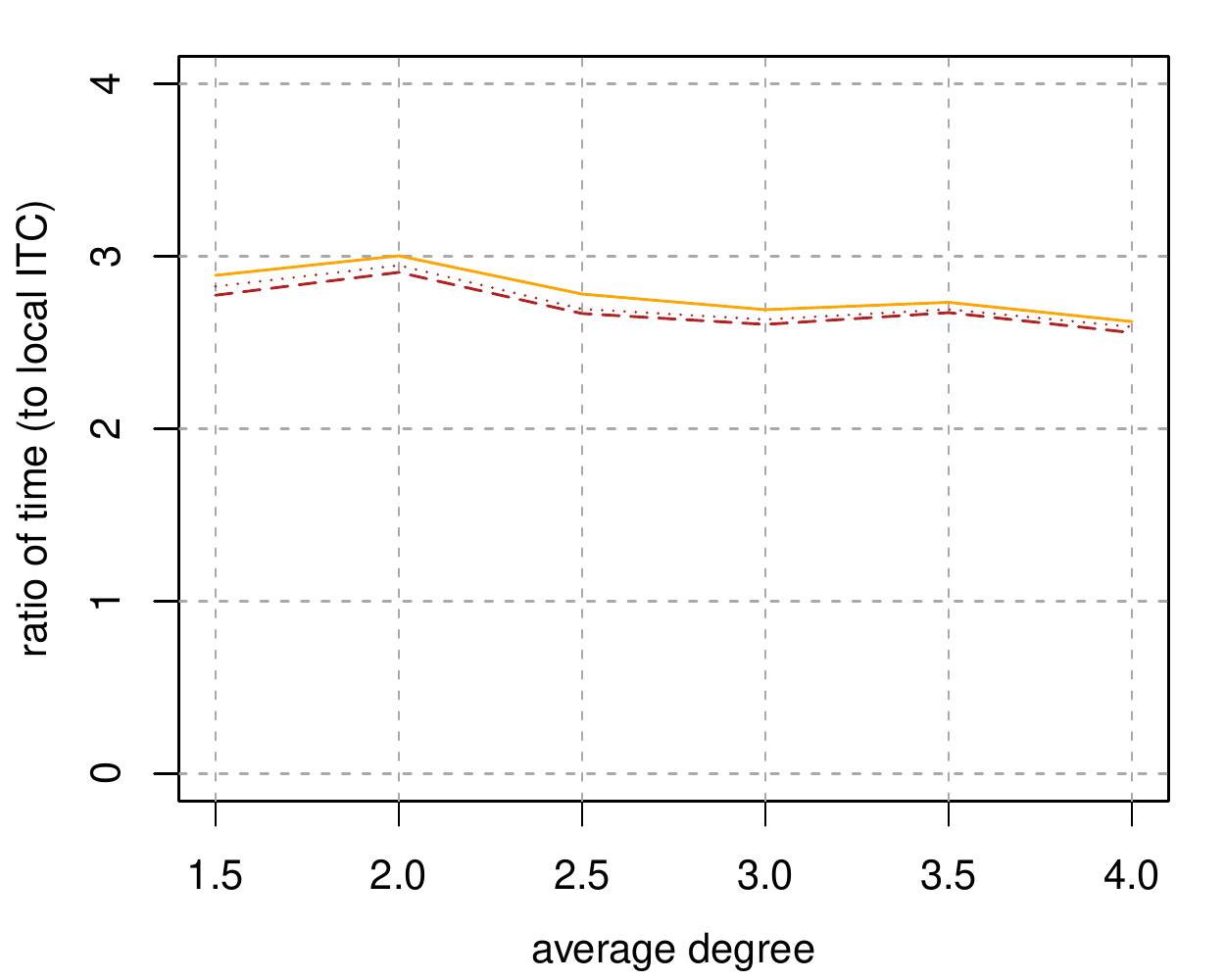}
		\end{minipage}%
	}%
	\hspace{0.01\textwidth}
	\subfigure[$n=100$, $N_{\rm effect}=100$ \label{fig:true:time_zoom_100_100}]{
		\begin{minipage}[t]{0.3\textwidth}
			\centering
			\includegraphics[width=\textwidth]{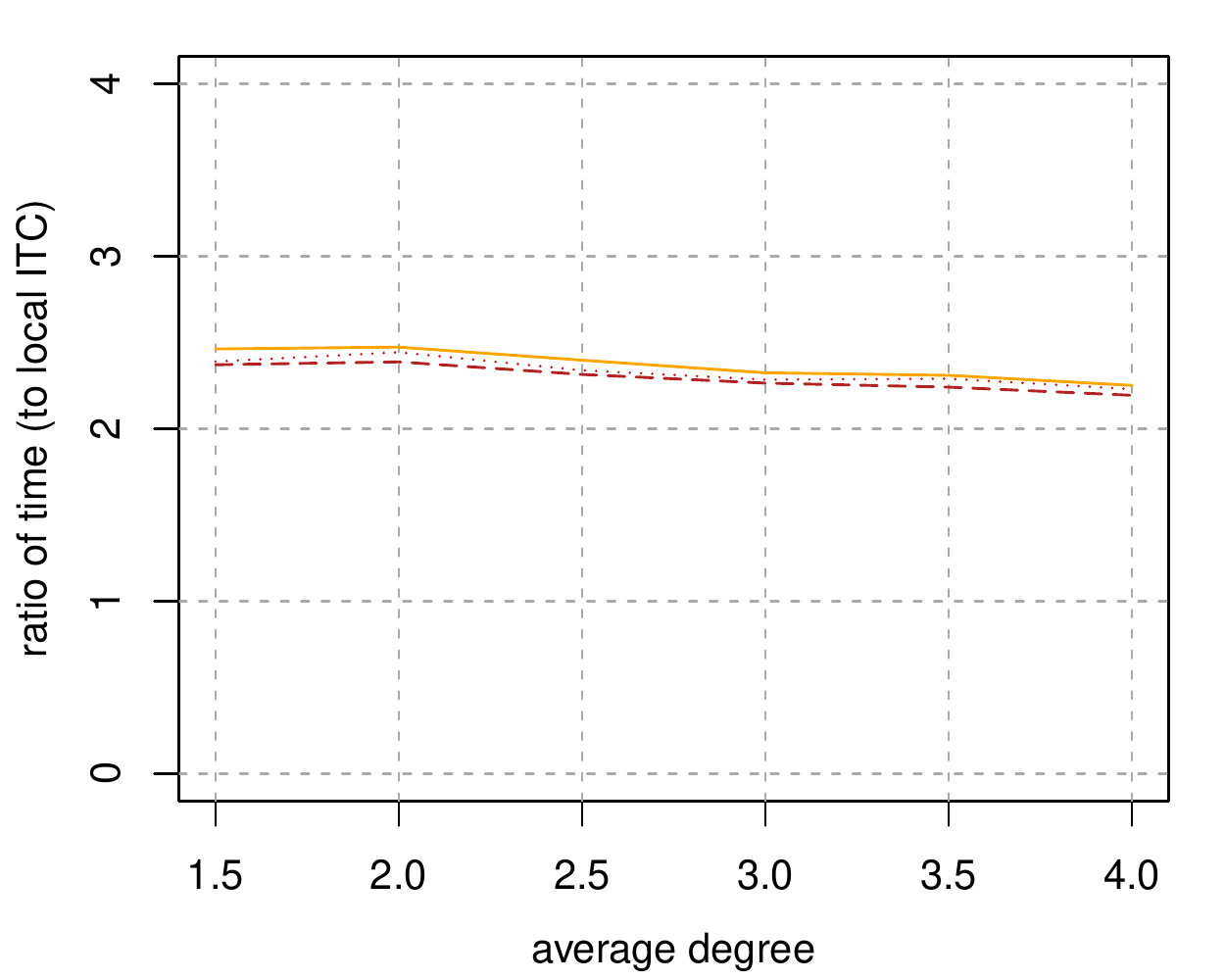}
		\end{minipage}%
	}%
	\hspace{0.01\textwidth}
	\subfigure[$n=100$, $N_{\rm effect}=150$ \label{fig:true:time_zoom_100_150}]{
		\begin{minipage}[t]{0.3\textwidth}
			\centering
			\includegraphics[width=\textwidth]{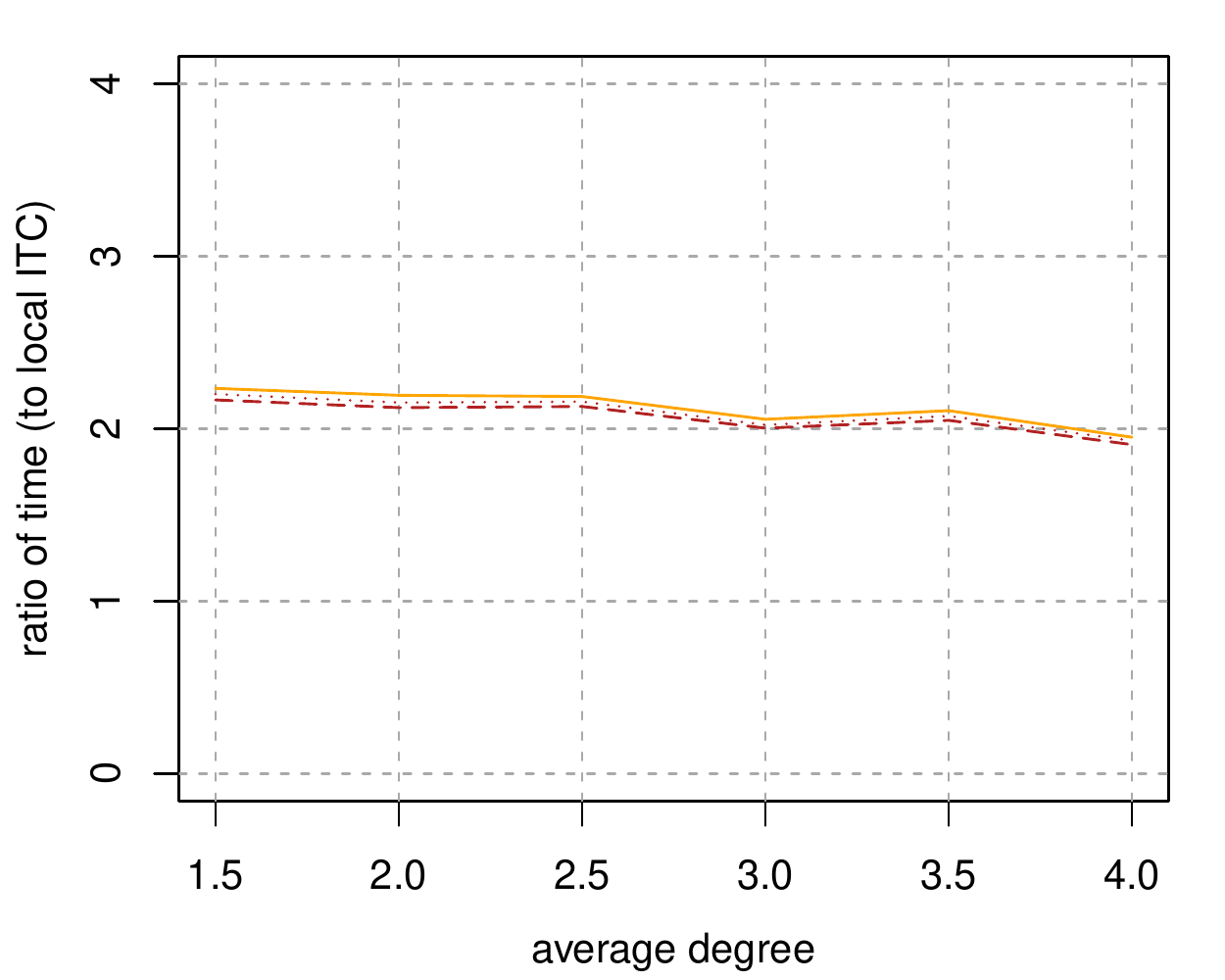}
		\end{minipage}%
	}%
	
	\caption{The ratio of the CPU time of different methods to that of the local ITC on random graphs with positive weights. The true graph structures are provided. The results of ``IDA + an + test (all)" and ``IDA + an + test (min/max)" are not shown, as they are more than 50-100 times slower than the other methods. }
	\label{fig:true:zoom}
\end{figure}

Figure~\ref{fig:true:kappa} shows the Kappa coefficients of the proposed local method and five CE-based methods on random graphs with positive weights. We can see that the proposed  local ITC is significantly better than the CE-based methods, especially when the sample size is small and the average degree is high. Increasing the sample size can improve the performance of all methods, and reduce the difference between the local ITC and the CE-based methods.

In these experiments, probably because the total number of hypothesis tests is not large, adjusting p-values for  multiple comparisons does not bring much improvement. Besides, testing all enumerated effects usually performs better than testing the minimum and maximum absolute effects. Moreover, utilizing non-ancestral information can improve the performance. Consequently, the CE-based-method denoted by ``IDA + an + test (all)" performs the best among the five CE-based-methods.

However, utilizing non-ancestral information will significantly increase the computational time because of the use of Meek's rules. In our experiments, the CE-based methods which utilize non-ancestral relations are 50-100 times slower than the others. To compare the other methods, Figure~\ref{fig:true:time} reports the CPU time (in seconds) and Figure~\ref{fig:true:zoom} further shows the ratio of the time used by each CE-based method to the local ITC. As one can see from the figures, the CE-based methods without using non-ancestral relations are 2-4 times slower than the local ITC.

Benefiting from  fewer hypothesis tests, the  local ITC algorithm is more stable,  more accurate, and more efficient  than the CE-based methods. Additional  evidence also comes from the experiments on models with mixed edge weights. In these experiments, although all Kappa coefficients drop, the Kappa coefficients of the local ITC drop less than those of the CE-based methods. The details can be found in~\ref{app:app:mix}.

\subsection{Learning with Estimated Graphs}\label{sec:sec:est}

\begin{figure}[t!]
	\centering
	\subfigure{
		\begin{minipage}[t]{1\textwidth}
			\centering
			\includegraphics[width=1\textwidth]{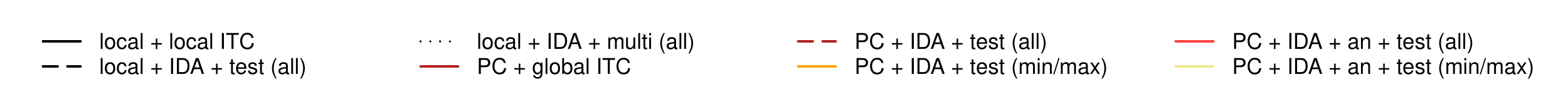}
		\end{minipage}%
	}%
	\vspace{-1.2em}
	\addtocounter{subfigure}{-1}
	
	\subfigure[$n=50$, $N=(100,100)$ \label{fig:learned:kappa_50_100_100}]{
		\begin{minipage}[t]{0.3\textwidth}
			\centering
			\includegraphics[width=\textwidth]{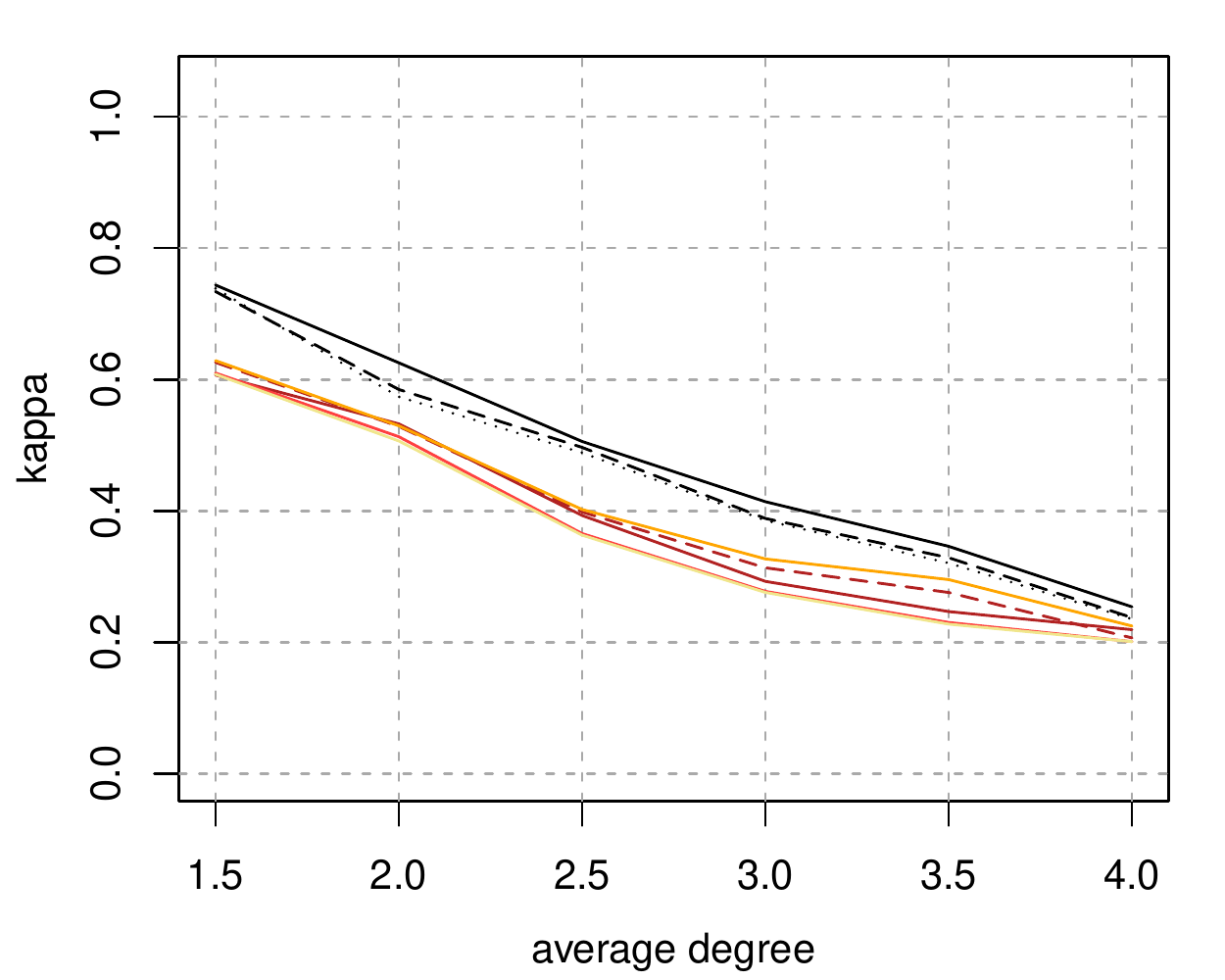}
		\end{minipage}%
	}%
	\hspace{0.01\textwidth}
	\subfigure[$n=50$, $N=(200,100)$ \label{fig:learned:kappa_50_200_100}]{
		\begin{minipage}[t]{0.3\textwidth}
			\centering
			\includegraphics[width=\textwidth]{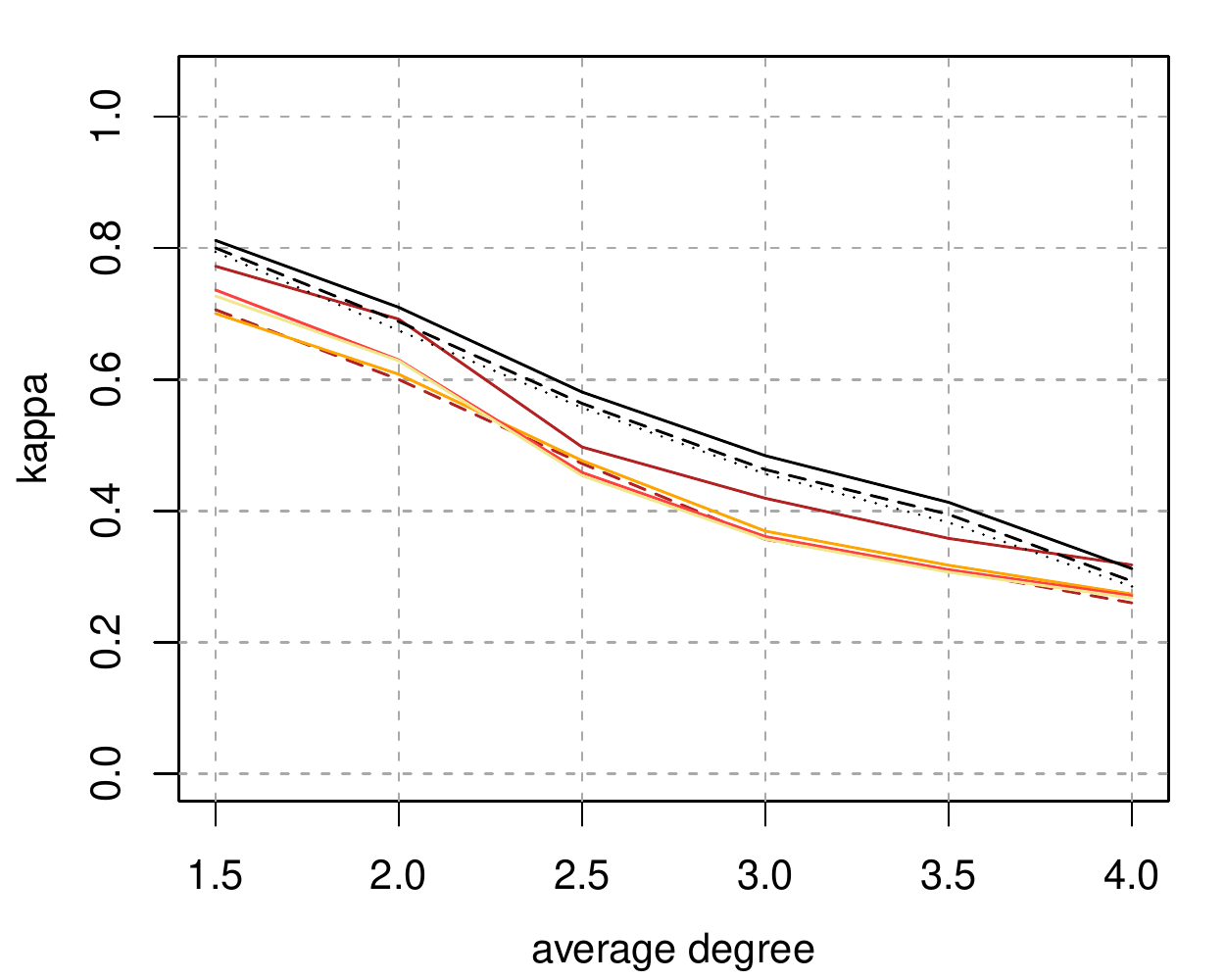}
		\end{minipage}%
	}%
	\hspace{0.01\textwidth}
	\subfigure[$n=50$, $N=(500,150)$ \label{fig:learned:kappa_50_500_150}]{
		\begin{minipage}[t]{0.3\textwidth}
			\centering
			\includegraphics[width=\textwidth]{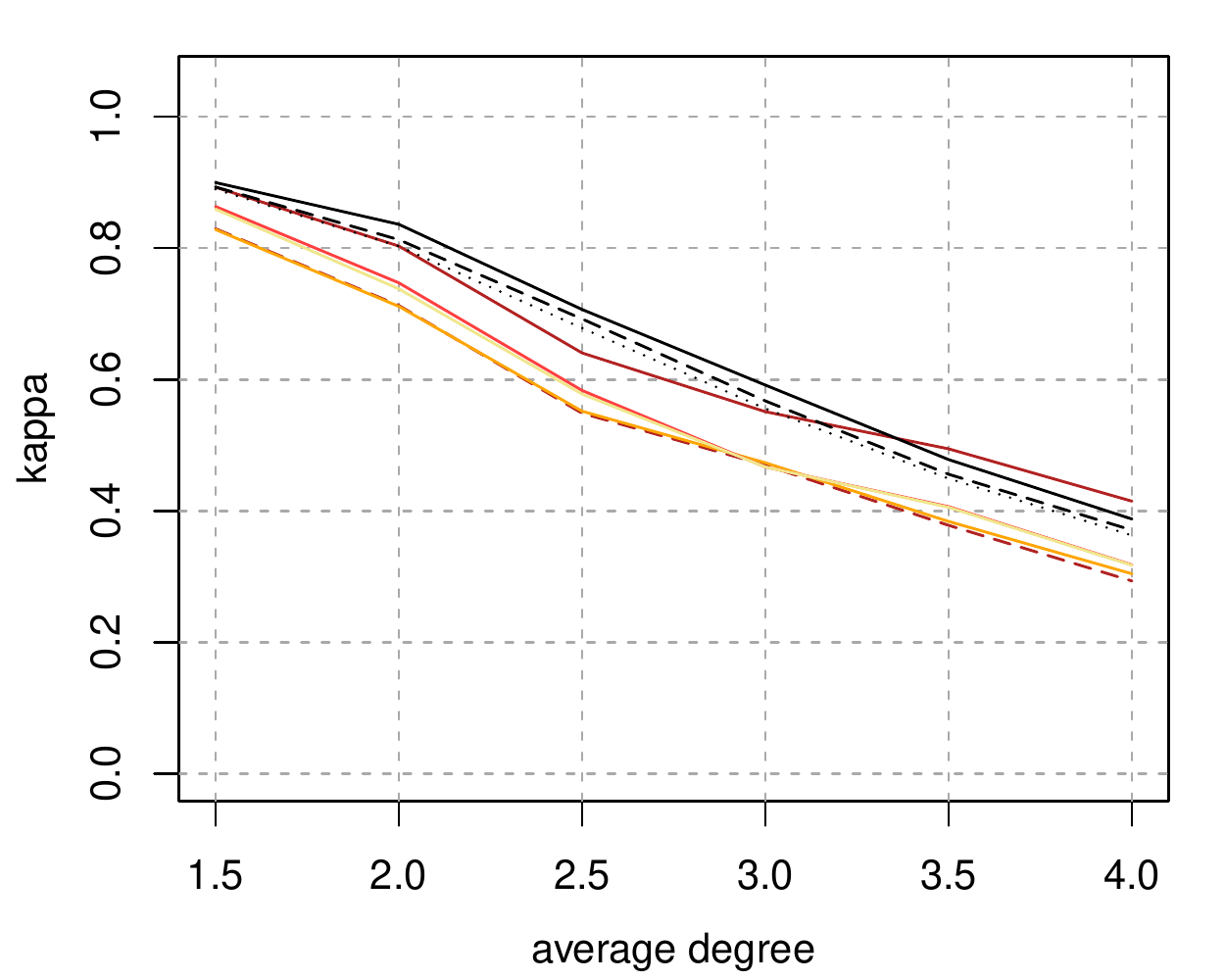}
		\end{minipage}%
	}%
	
	\subfigure[$n=100, N=(100,100)$ \label{fig:learned:kappa_100_50}]{
		\begin{minipage}[t]{0.3\textwidth}
			\centering
			\includegraphics[width=\textwidth]{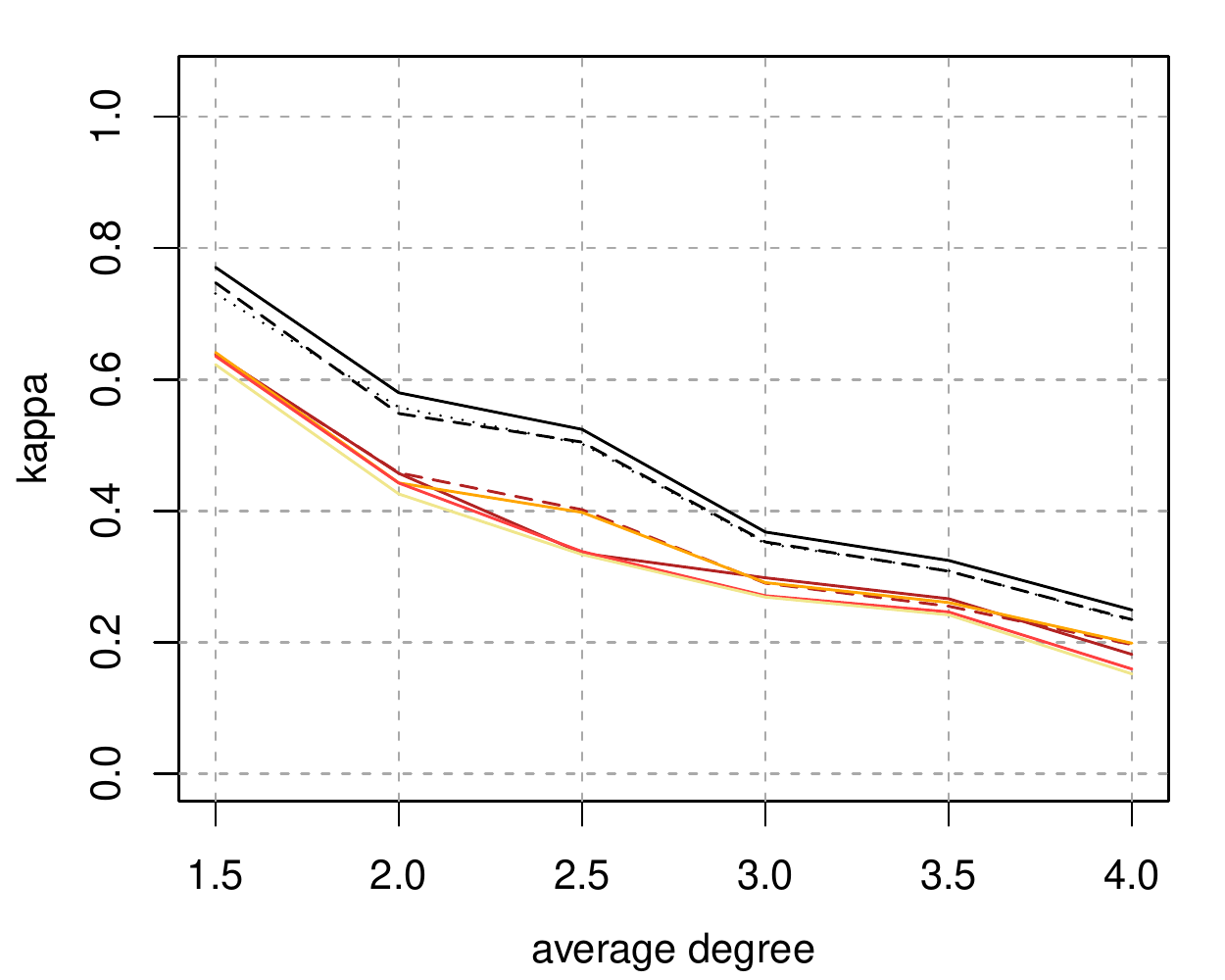}
		\end{minipage}%
	}%
	\hspace{0.01\textwidth}
	\subfigure[$n=100,N=(200,100)$ \label{fig:learned:kappa_100_100}]{
		\begin{minipage}[t]{0.3\textwidth}
			\centering
			\includegraphics[width=\textwidth]{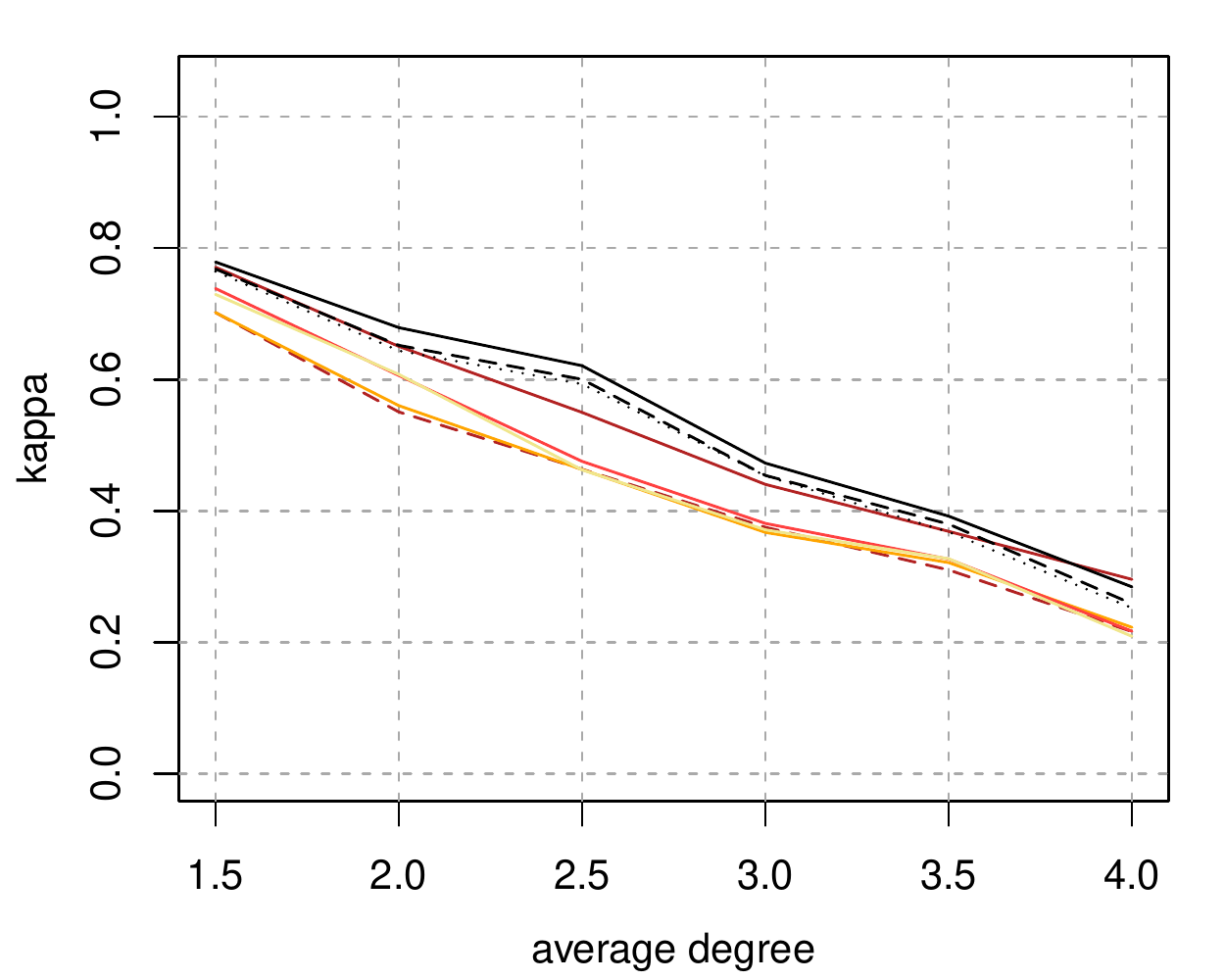}
		\end{minipage}%
	}%
	\hspace{0.01\textwidth}
	\subfigure[$n = 100$, $N=(500,150)$ \label{fig:learned:kappa_100_150}]{
		\begin{minipage}[t]{0.3\textwidth}
			\centering
			\includegraphics[width=\textwidth]{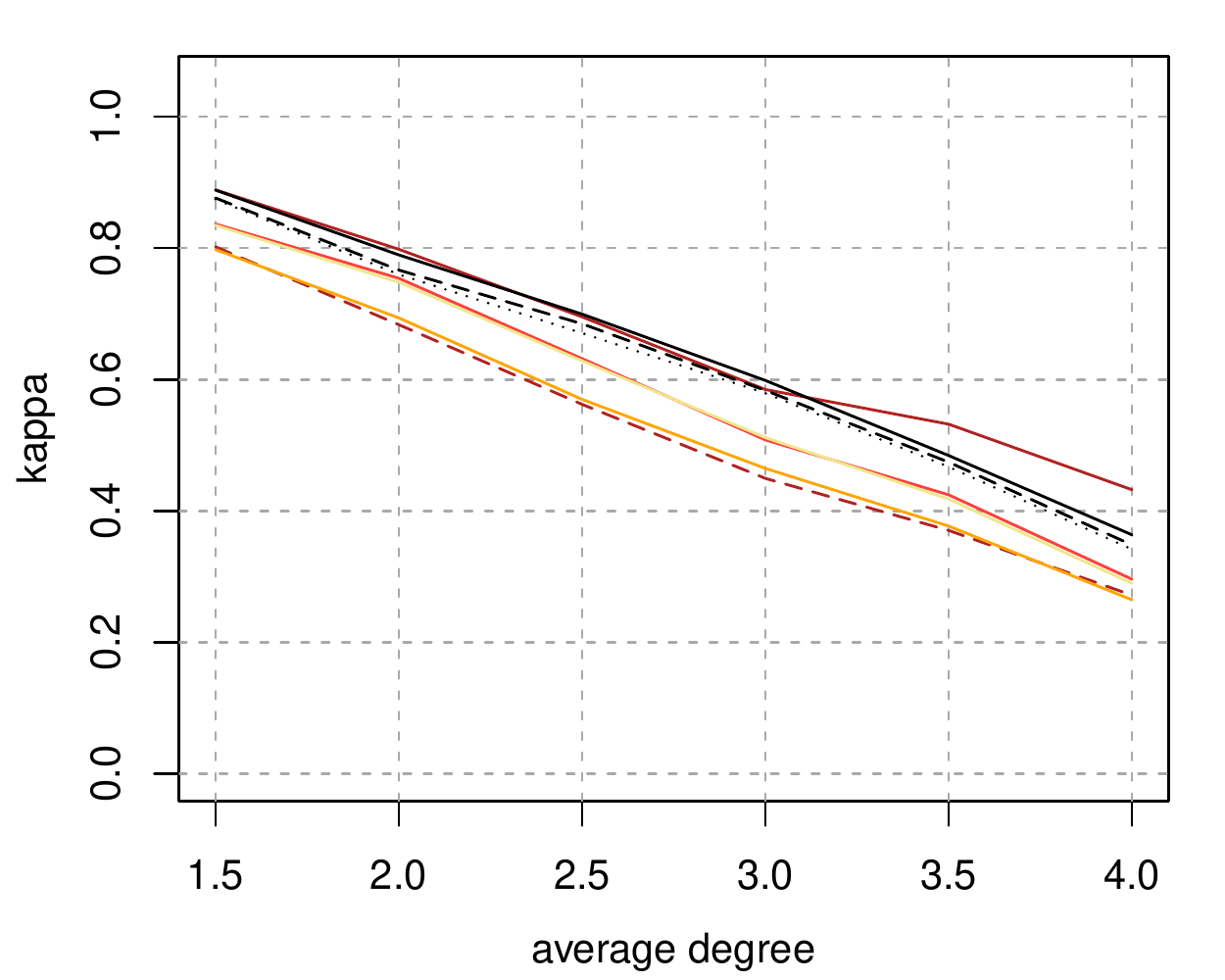}
		\end{minipage}%
	}%
	\caption{The Kappa coefficients of different methods on random graphs with positive weights. The graph structures are learned from data. $N=(N_{\rm graph}, N_{\rm effect})$ denotes the sample sizes for learning graphs and estimating causal effects. 
}
	\label{fig:learn:kappa}
\end{figure}

In this section, we further study experimentally our proposed methods when the true causal structures are not available. We used the variant of MB-by-MB~\citep{liu2020local} to learn the parents and siblings of the vertices of interest (denoted by ``local +"), and used the PC algorithm, the stable PC algorithm (denoted by ``PCS +") and GES to learn entire CPDAGs. The learned structures are then passed to the local ITC, the global ITC and the CE-based methods. For ease of presentation, we mainly report twelve methods in this section, but the conclusions obtained coincide with  all the experiments.

\begin{figure}[t!]
	\centering
	\subfigure{
		\begin{minipage}[t]{1\textwidth}
			\centering
			\includegraphics[width=\textwidth]{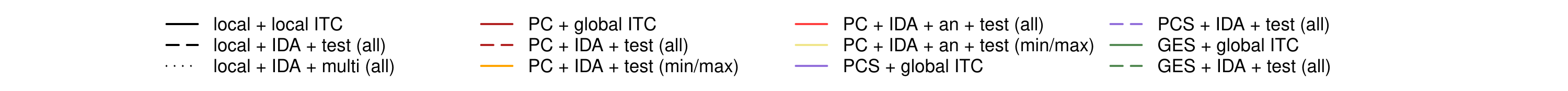}
		\end{minipage}%
	}%
	\vspace{-1em}
	\addtocounter{subfigure}{-1}
	
	\subfigure[$n=50$, $N=(100,100)$ \label{fig:learned:time_50_100_100}]{
		\begin{minipage}[t]{0.3\textwidth}
			\centering
			\includegraphics[width=\textwidth]{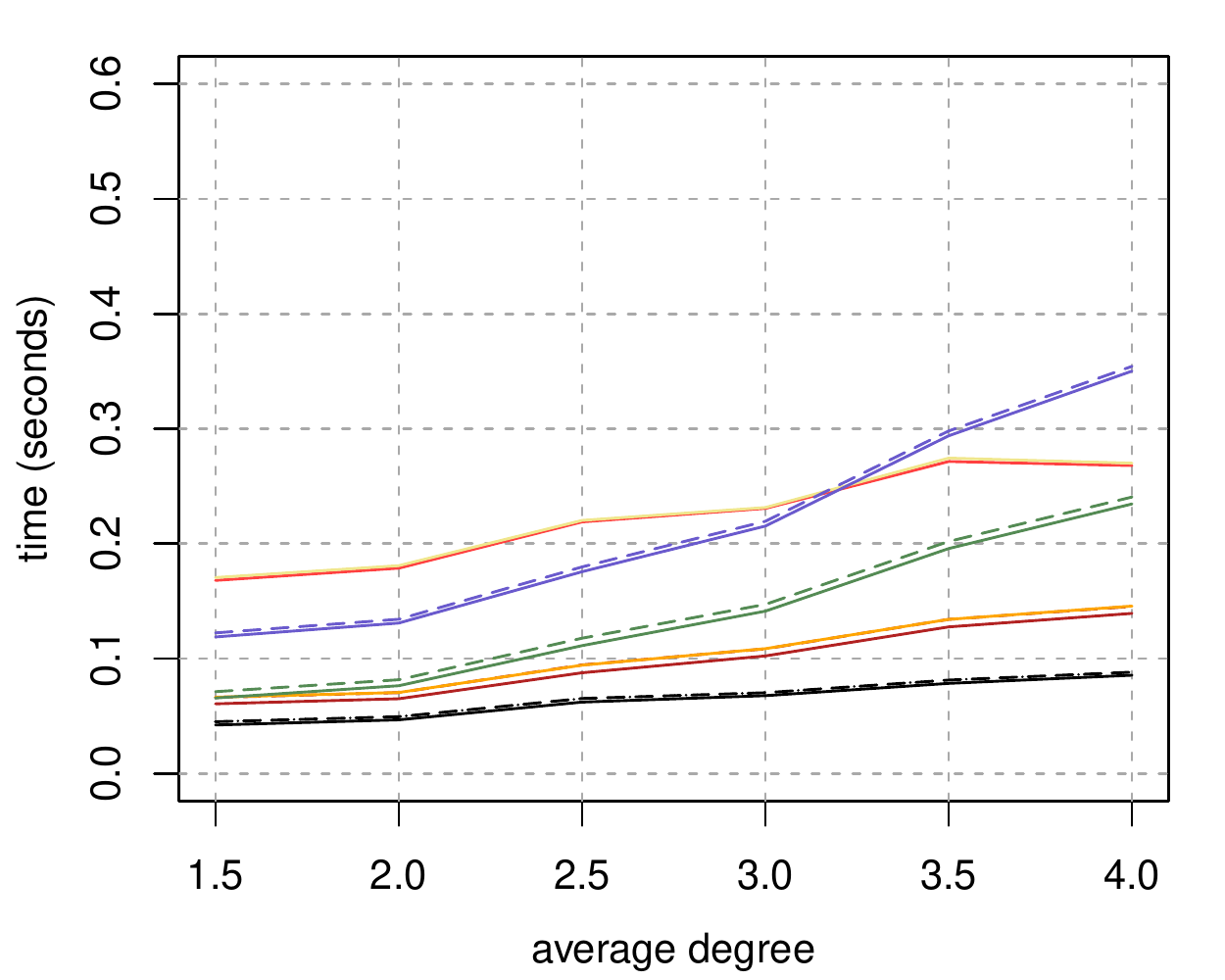}
		\end{minipage}%
	}%
	\hspace{0.01\textwidth}
	\subfigure[$n=50$, $N=(200,100)$ \label{fig:learned:time_50_200_100}]{
		\begin{minipage}[t]{0.3\textwidth}
			\centering
			\includegraphics[width=\textwidth]{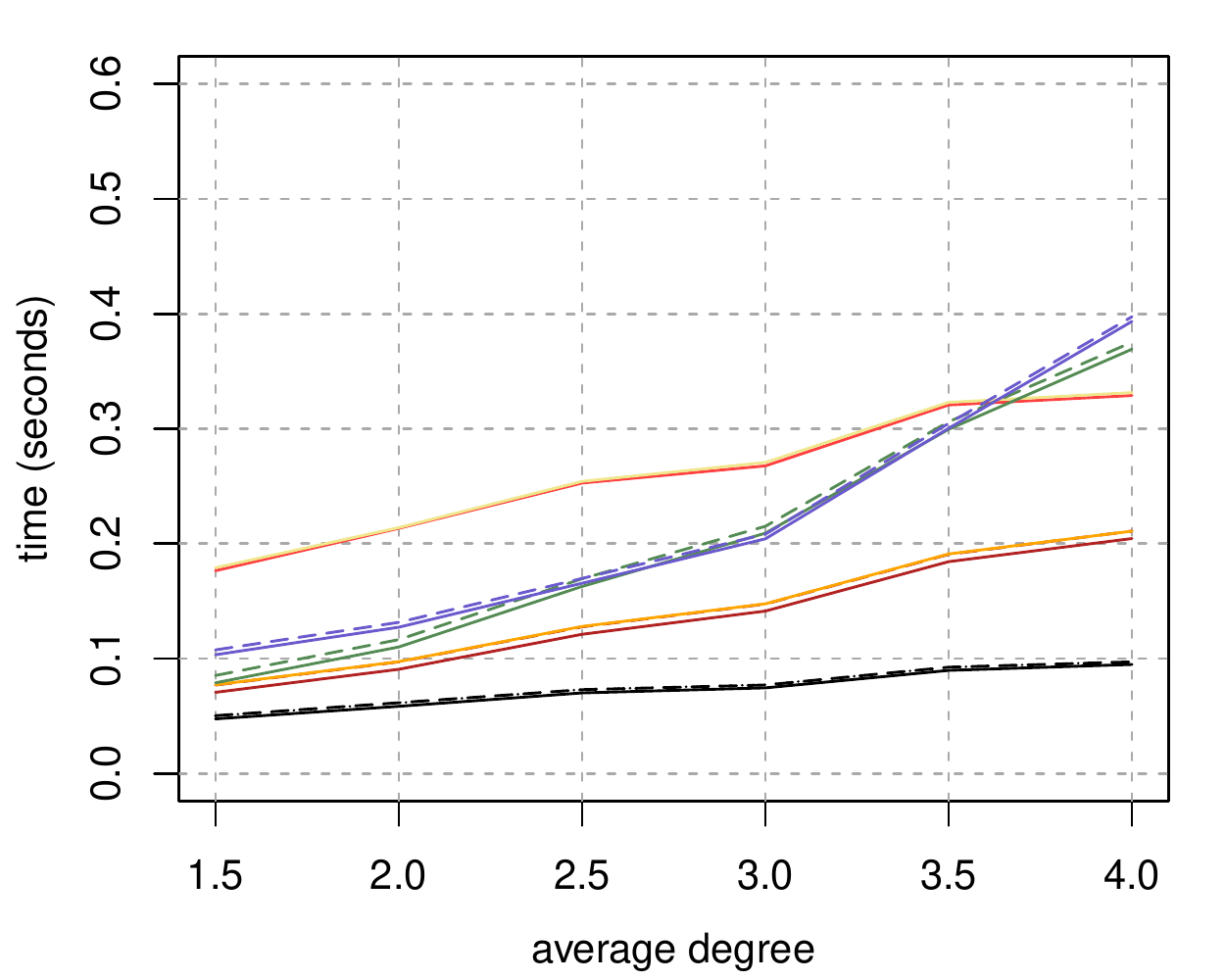}
		\end{minipage}%
	}%
	\hspace{0.01\textwidth}
	\subfigure[$n=50$, $N=(500,150)$ \label{fig:learned:time_50_500_150}]{
		\begin{minipage}[t]{0.3\textwidth}
			\centering
			\includegraphics[width=\textwidth]{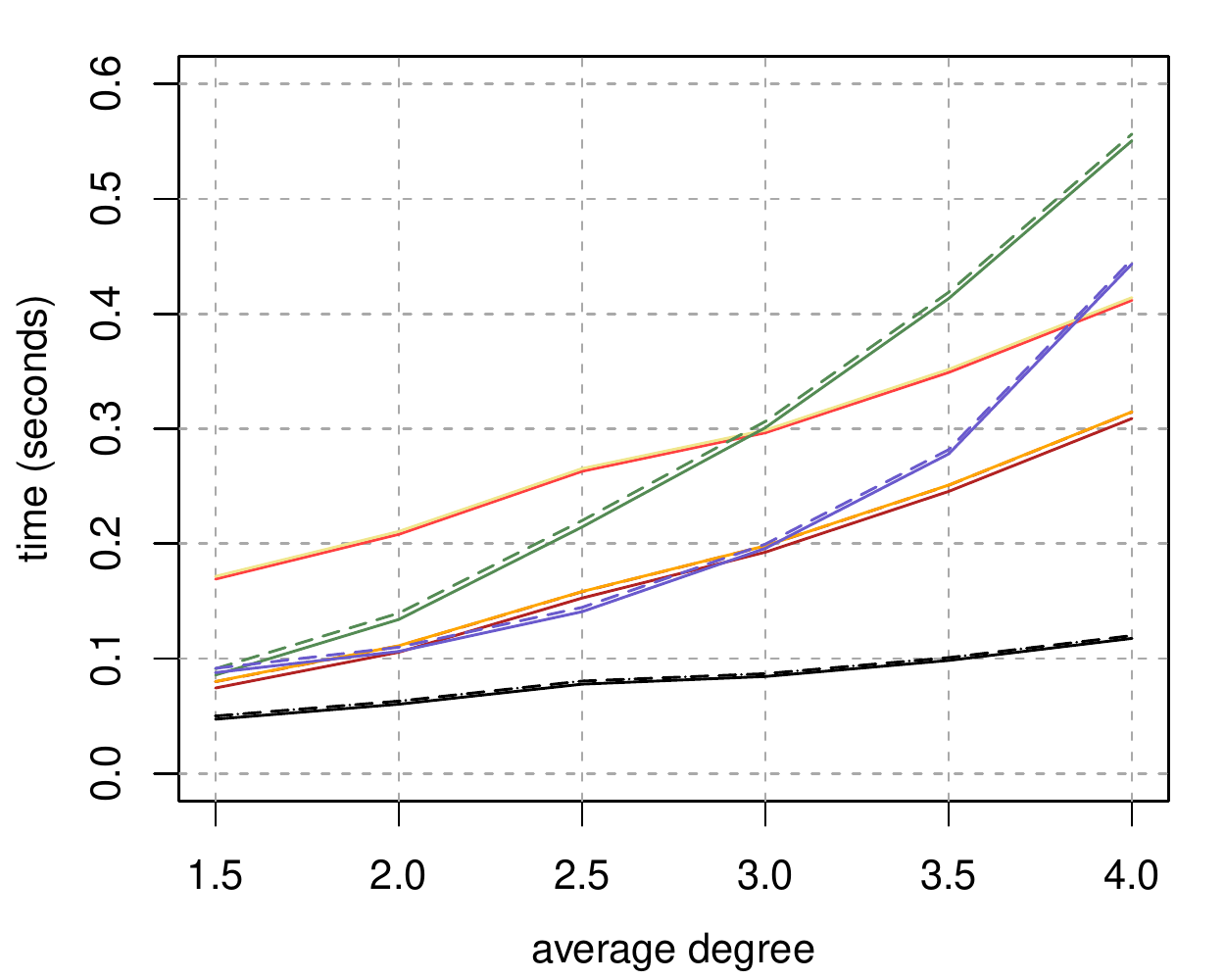}
		\end{minipage}%
	}%
	
	\subfigure[$n=100,N=(100,100)$ \label{fig:learned:time_100_50}]{
		\begin{minipage}[t]{0.3\textwidth}
			\centering
			\includegraphics[width=\textwidth]{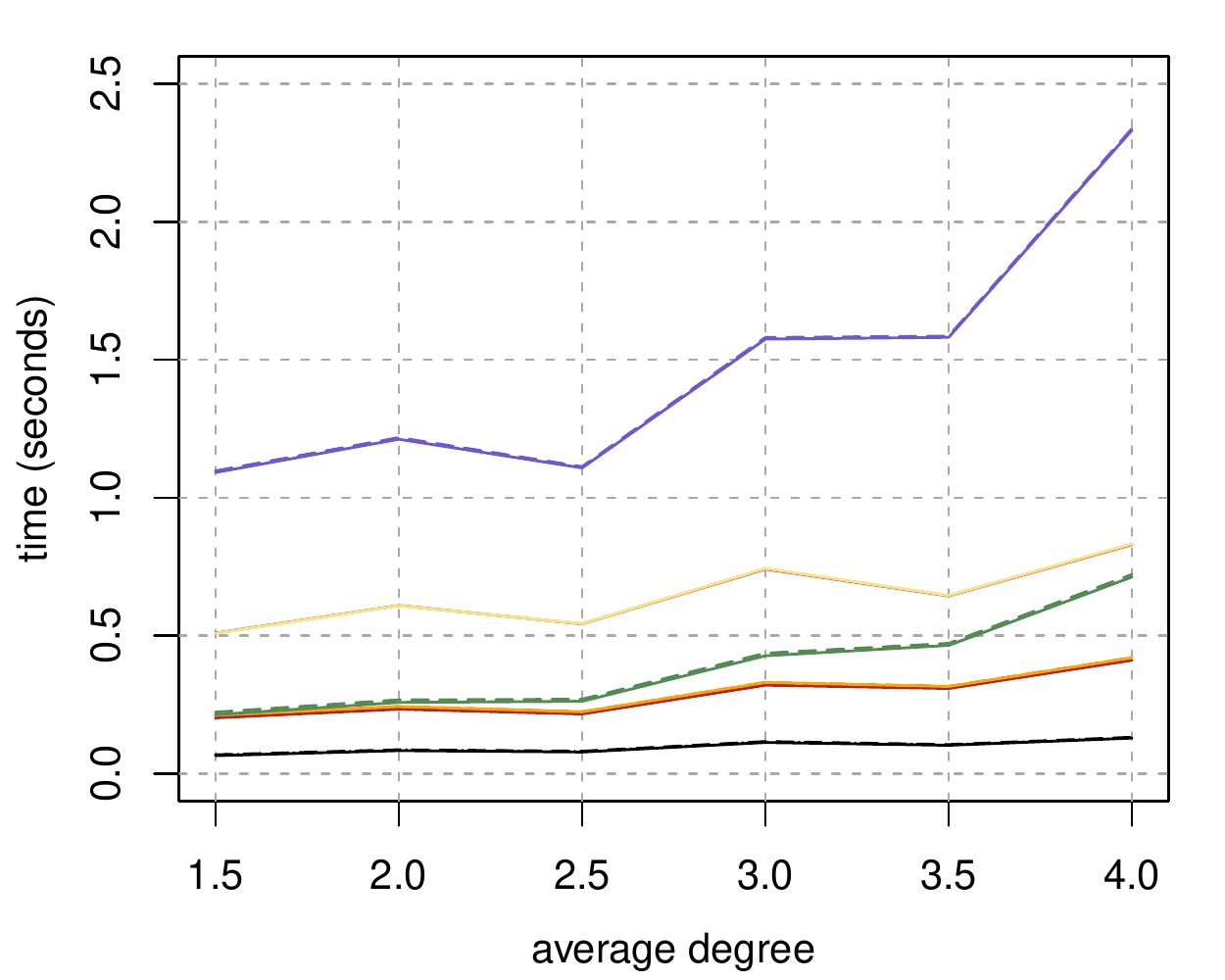}
		\end{minipage}%
	}%
	\hspace{0.01\textwidth}
	\subfigure[$n=100$, $N=(200,100)$ \label{fig:learned:time_100_100}]{
		\begin{minipage}[t]{0.3\textwidth}
			\centering
			\includegraphics[width=\textwidth]{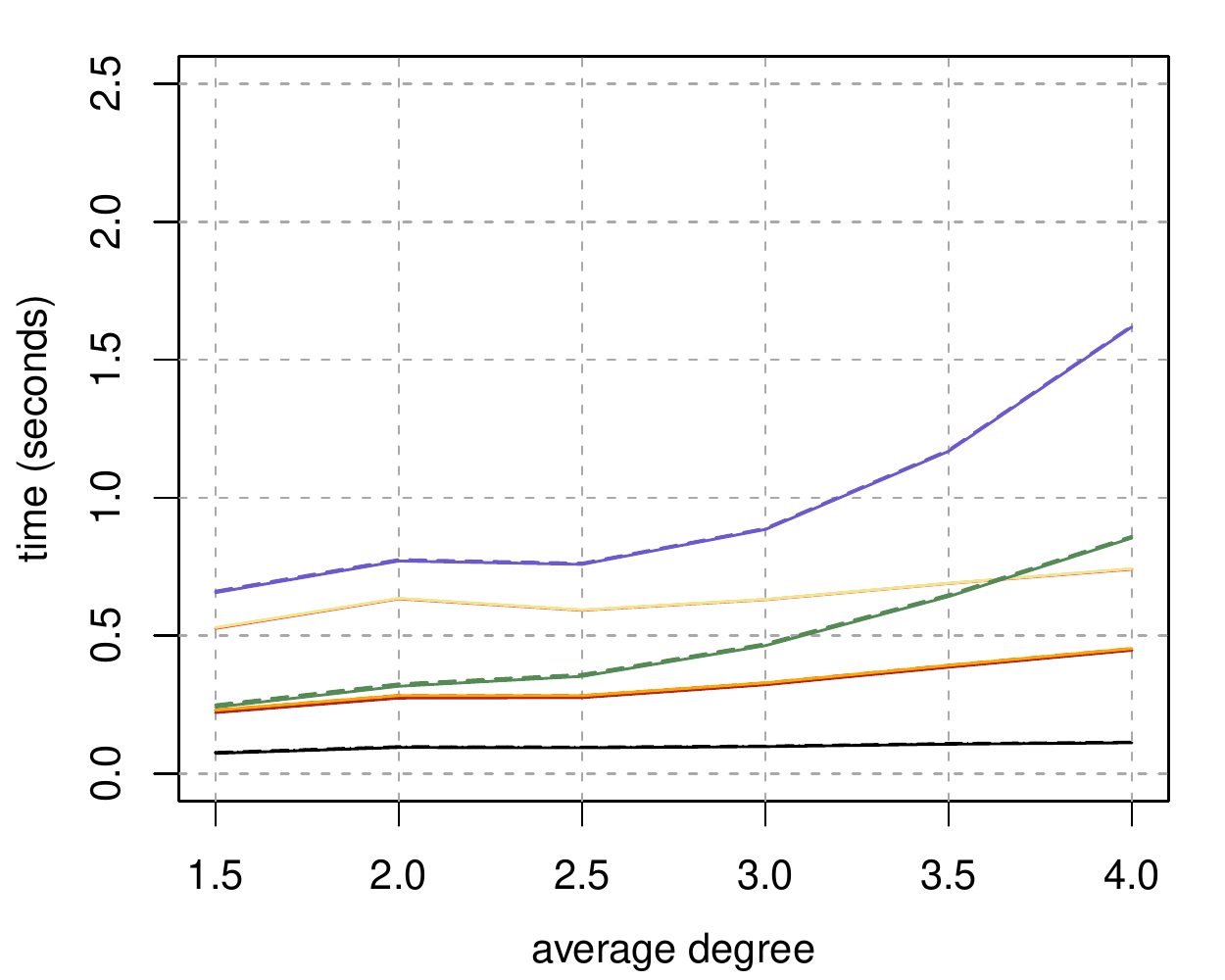}
		\end{minipage}%
	}%
	\hspace{0.01\textwidth}
	\subfigure[$n=100$, $N=(500,150)$ \label{fig:learned:time_100_150}]{
		\begin{minipage}[t]{0.3\textwidth}
			\centering
			\includegraphics[width=\textwidth]{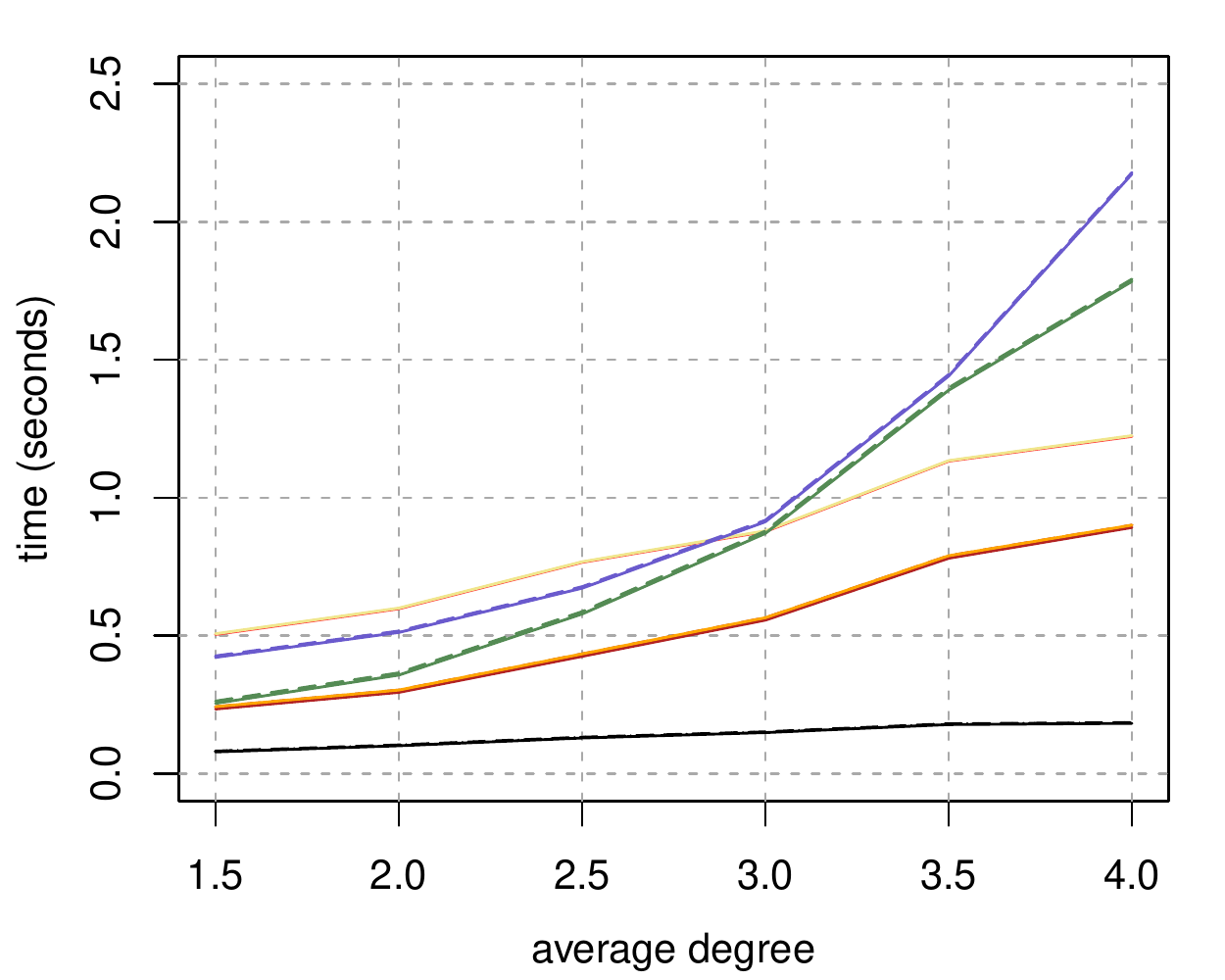}
		\end{minipage}%
	}%
	
	\caption{The total CPU time (in seconds) of different methods on random graphs with positive weights. The graph structures are learned from data. $N=(N_{\rm graph}, N_{\rm effect})$ denotes the sample sizes for learning graphs and estimating causal effects.}
	\label{fig:learn:time}
\end{figure}

Figure \ref{fig:learn:kappa} shows the Kappa coefficients based on 50- and 100-node graphs. For ease of presentation, we omit the results of the global ITC combined with PCS and GES as well as the results of the CE-based methods combined with PCS and GES, since the PCS-based methods perform similarly to PC and the GES-based methods do not perform well (see~\ref{app:app:detailed} for the detailed TPRs and FPRs).
As one can see, the proposed local ITC outperforms the other methods in almost all settings, especially when the sample size is small. The CE-based methods combined with the variant of MB-by-MB perform slightly worse than the local ITC. The global ITC combined with PC   is also competitive when the sample size is large. When $N_{\rm graph}\geq 200$, the global ITC combined with PC  outperforms the corresponding CE-based methods. Besides, the CE-based methods that use non-ancestral relations are usually better than the other CE-based methods, as they take the advantage of the correctly learned global graphical structure. Moreover, testing all enumerated effects performs similarly to testing the minimum and maximum absolute effects, rather than outperforms the latter as shown in Section~\ref{sec:sec:true}. 

We next compare the total computational time of different methods. 
As shown in Figure \ref{fig:learn:time}, 
the local ITC and the local versions of the CE-based methods are more efficient than the global ones. For the methods using the same structure learning algorithm, the ITC methods are more efficient than the CE-based methods that take  much more computational time to identify non-ancestral relations, and are slightly more efficient than the other   CE-based methods because  the  structure learning  generally dominates the computational time of these methods.

We also compare the time spend in identifying types of causal relations (excluding the time for structure learning from the total time), and the results are similar to those shown in Figure~\ref{fig:true:zoom}. Using non-ancestral information in CE-based methods usually makes them 40-80 times slower than the local ITC,
while the other CE-based methods without using non-ancestral relations are almost $2$ times slower than the local ITC.

  The experiments of the CE-based methods using the optimal IDA and the hybrid method can be found in  \ref{app:app:opt} and  in~\ref{app:app:hybrid} respectively. Briefly, these experiments show that the CE-based methods with the optimal IDA are usually better than the methods with the original IDA, but do not outperform the global ITC, and the hybrid method  is slightly better than the non-hybrid CE-based methods that use non-ancestral relations.

\subsection{An Application to the DREAM4 Data Sets}\label{sec:sec:d4}


  In this section, we apply our method to the synthetic gene expression data sets from the
DREAM4 \emph{in silico} challenge, to show the potential of our method for supporting causal inference. A detailed description of the data sets can be found at \texttt{https://dreamchallenges.org/dream-4-in-silico-\\network-challenge/}. In this study, we focus on $5$ data sets provided by the DREAM4 challenge, each of which contains a gene regulatory network (possibly cyclic) with 100 genes, observational gene expression data with 310 observations and interventional gene expression data. 
The used $5$ data sets, including the true  network structures, can be obtained from the \texttt{R-package DREAM4}.\footnote{The $5$ data sets are named by ``dream4\_100\_01" to ``dream4\_100\_05" in the \texttt{R-package DREAM4}.}
We normalized each data set such that each gene has a sample mean 0 and a sample variance 1. The marginal distributions of the variables are approximately Gaussian. Following ~\citet{maathuis2010nature},  we assume that the multivariate Gaussianity holds for all variables.

To evaluate the ``true" relationship for each pair of $X$ and $Y$, we first estimate the causal effect of $X$ on $Y$ using the back-door adjustment, based on the observational data and the true network structure. Then, we use the t-test to decide the significance of the estimated causal effect at  the significance level $\alpha=0.001$.    All pairs of $X$ and $Y$ whose corresponding p-values of the t-tests are less than or equal to $\alpha$ are regarded as ``true" causal pairs and constitute the target set.

  Assuming that the true gene network is unknown,  we next use
the following three   methods  to  identify the type of causal relation for each treatment-target pair $(X, Y)$.

\begin{itemize}
	\item  {Method 1.} Using the PC algorithm to estimate a CPDAG first, and then calling the global ITC (Algorithm~\ref{algo:global}) to identify the type of causal relation for each $(X, Y)$.
	\item Method 2. Using the PC algorithm to estimate a CPDAG first, and then calling the local ITC (Algorithm~\ref{algo:local}) to identify the type of causal relation for each $(X, Y)$. The required local structures, i.e. $pa(X, {\cal G}^*)$ and the induced subgraph over $sib(X, {\cal G}^*)$, are read from the learned CPDAG.
	\item Method 3. Using the variant of MB-by-MB to estimate $pa(X, {\cal G}^*)$ and the induced subgraph over $sib(X, {\cal G}^*)$ for each $X$, and then identifying the type of causal relation for each $(X, Y)$ by using the local ITC (Algorithm~\ref{algo:local}).
\end{itemize}


   For each variable pair $(X, Y)$,
   we further use IDA to estimate all possible causal effects of $X$ on $Y$. To build a sequence of variable pairs based on the magnitude of the causal effects, we first rank (in descending order) the pairs of treatment and target whose corresponding causal relations are definite causal according to their minimum absolute effects. Then, we rank (in descending order) the pairs whose corresponding causal relations are definite non-causal and possible causal according to their maximum absolute effects.  Finally, we append the ordered sequence of definite non-causal and possible causal relation pairs to the ordered sequence of definite causal relation pairs, and select top $q$ pairs as the predicted pairs.

    Note that compared to the work of~\citet{maathuis2010nature},  this method  has two differences. First, we rank   definite non-causal and possible causal pairs by their maximum absolute effects, while \citet{maathuis2010nature} rank all pairs by their minimum absolute effects. This is because that  the minimum absolute effect of a definite non-causal or possible causal  pair   should be zero, while the maximum absolute effect is more informative since it measures the upper bound on the true causal effect. Second, we rank  the definite causal pairs before the other pairs regardless of their  possible estimated causal effects.

We compare Methods 1 and 2 to the IDA algorithm combined with PC, and compare Method 3 to the IDA algorithm combined with the variant of MB-by-MB. All significance levels used in these methods are set to be 0.001,  aligned with those used in the simulation
studies. For each method, we compare the predicted pairs to the target pairs for different $q$'s and compute the area under the receiver operating characteristic curve (AUC). Tables~\ref{tab:d4-0.001} shows the results. For each data set and each method, we also perform DeLong's test to test whether the AUC of the method is different from the AUC of the IDA algorithm~\citep{delong1988}, using the \texttt{R function roc.test} implemented in \texttt{R-package pROC}. The null hypothesis is that the difference in AUC is equal to 0. The p-values of the tests are reported in parentheses.



\begin{table}[!t]
	\centering
	\scalebox{0.72}{%
		\begin{tabular}{@{}lccccc@{}}
			\toprule
			Methods                                 & dream4\_100\_01 & dream4\_100\_02 & dream4\_100\_03 & dream4\_100\_04 & dream4\_100\_05 \\ \midrule
			\multirow{2}{*}{PC + IDA}               & \multirow{2}{*}{0.6578} & \multirow{2}{*}{0.6870} & \multirow{2}{*}{0.6894} & \multirow{2}{*}{0.6921} & \multirow{2}{*}{0.6823} \\
			&                         &                         &                         &                         &                         \\
			\multirow{2}{*}{Method 1 + IDA}    & 0.6439               & \textbf{0.6905}      & \textbf{0.6930}      & 0.6808               & \textbf{0.6898}      \\
			& (0.0000)             & (0.2809)             & (0.1049)             & (0.0000)             & (0.0050)             \\
			\multirow{2}{*}{Method 2 + IDA}    & 0.6481               & \textbf{0.6972}      & \textbf{0.6956}      & \textbf{0.6945}      & \textbf{0.6916}      \\
			& (0.0000)             & (0.0000)             & (0.0000)             & (0.0151)             & (0.0000)             \\ \midrule
			\multirow{2}{*}{local + IDA}                             & \multirow{2}{*}{0.6624}               & \multirow{2}{*}{0.6354}               & \multirow{2}{*}{0.7000}               & \multirow{2}{*}{0.6750}               & \multirow{2}{*}{0.6696}               \\
			&              &              &              &              &              \\
			\multirow{2}{*}{Method 3 + IDA} & \textbf{0.6672}      & 0.6349               & \textbf{0.7019}      & \textbf{0.6769}      & \textbf{0.6776}      \\
			& (0.0002)             & (0.2000)             & (0.0890)             & (0.0203)             & (0.0000)             \\
			\bottomrule
		\end{tabular}%
	}
	\caption{ AUC of different methods on DREAM4 data sets. The p-values of DeLong's tests are reported in parentheses, which test whether the AUC of a proposed method is significantly different from the AUC of the ``PC + IDA" or ``local + IDA" algorithm.}
	\label{tab:d4-0.001}
\end{table}

  Table~\ref{tab:d4-0.001} displays that  the modified versions of IDA with Methods 1 to 3 outperform the original IDA in 3, 4 and 4 data sets, respectively, and 9 of them are significant at the   level $0.1$. As a result, our proposed methods,  especially the local ITC (Methods 2 and 3), can improve the performance of the IDA algorithm when predicting the magnitude of a causal effect.



We remark that the above results on the DREAM4 data sets are proof-of-concept and show that identifying types of causal relations do have the potential to support causal inference. Apart from the listed Methods 1, 2 and 3, the practitioners may develop their own specific modifications. Of course, as discussed in~\citet{maathuis2010nature}, great care should be taken in real applications when the underlying assumptions, such as the multivariate Gaussianity and the faithfulness, are violated. Nevertheless, we hope that the example given in this section could motivate more studies on the use of the local and global ITC in observational studies.

\section{Concluding Remarks}\label{sec: conclusion}

In this paper, we present a local method for identifying types of causal relations without evaluating causal effects and learning a global causal structure. A sufficient and necessary graphical condition is provided to check  the existence of a causal path from a treatment to a target  based on a CPDAG. We also study the graphical properties of each type of causal relation. Inspired by these properties, we further propose a local identification criterion for each type of causal relation, which depends only on the induced subgraph of the true CPDAG over the adjacent variables of the treatment as well as some queries about d-separation relations. The local criteria naturally lead to a local learning algorithm for identifying types of causal relations if one assumes that the faithfulness condition holds. Experimental studies empirically prove that the proposed local algorithm performs well.

Our work introduces the local characterizations of types of causal relations, which are helpful for understanding causal relations hidden behind observational data. Except for the theoretical contributions, our results have many potential applications as well. Firstly, as mentioned in the introduction, some real-world problems, such as fault analysis in telecommunication networks and online product recommendation, qualitative analysis is enough for making decisions. 
Secondly, even in quantitative analysis,  when the causal effect is not uniquely identifiable due to Markov equivalence, we may also use the proposed methods to check whether the bounds on a causal effect cover zero. For example, as shown in the experiments, our methods can be used  to modify the current IDA-type algorithms to predict which interventions are likely to have a strong effect, as mentioned by~\citet{maathuis2010nature}. Thirdly, the proposed local method can be combined with the IDA algorithm to reduce the computational costs. For instance, if a treatment is a non-cause of a target, then without any computation we can conclude that all possible effects are zeros \citep{maathuis2009estimating}. Compared to the existing global method that depends on the global structure of an input CPDAG, the proposed local methods are  more effective and efficient, especially when  we are only interested in the causal effect of one treatment on one target, not the  causal effects of all treatments on all targets.
{Finally, \citet{Shi2021mediation} provided a method to find all ``mediators" lying on at least one directed path from a given treatment to a given target, assuming that the underlying DAG is identifiable from data. When the underlying DAG is not identifiable but a CPDAG is identifiable, our proposed methods are potentially useful for finding ``definite mediators",  which are not only the definite effects of the treatment but also the definite causes of the target. These variables must lie on at least one directed path from the treatment to the target, no matter which equivalent DAG is the true one.}

Our results can be easily extended to interventional essential graphs \citep{he2008active,hauser2012characterization}, which can be used to represent   Markov equivalence classes where some variables are intervened.  Basically,   interventional essential graphs  are also chain graphs and can be learned from the mixture of observational and interventional data. Extending our proposed concepts, theorems, and algorithms to interventional essential graphs is straightforward. A possible future work is to extend the global characterization for definite causal relations to maximal PDAGs. Maximal PDAGs are generalizations of CPDAGs, and have been frequently used for representing causal background knowledge \citep{perkovic2017interpreting, fang2020bgida, Perkovic2020mpdag,Witte2020efficient, Guo2020minimal}. Another interesting direction is to take hidden variables and selection biases into account. For example, one may extend the results to partially ancestral graphs~\citep{richardson2002ancestral,ali2012towards,zhang2008completeness}.



\section*{Acknowledgements}
We would like to thank the  editor and the three referees for their helpful comments and suggestions that greatly improved the previous versions of this paper. This work was supported by National Key R\&D Program of China (2018YFB1004300), NSFC (11671020,11971040,11771028,12071015).

\appendix

\section{Graph Terminology}\label{app:graph}

A  graph $\mathcal{G}$ is defined as  a vertex set (or node set) $\textbf{V}$   and an edge set $\textbf{E}$. A graph  is \emph{directed} (\emph{undirected}, \emph{partially directed}) if all edges in the graph are directed (undirected, a mixture of directed and undirected). The \emph{skeleton} of a graph $\cal G$ is an undirected graph resulted from turning every directed edge in $\cal G$ into an undirected edge. Given a subset $\textbf{V}'$ of $\textbf{V}$, the \emph{induced subgraph} of $\mathcal{G}$ over $\textbf{V}'$ is defined as ${\cal G}'=(\textbf{V}', \textbf{E}')$ where $\textbf{E}'\subset \textbf{E}$ contains only edges between vertices in $\textbf{V}'$. If a directed edge $X_i\rightarrow X_j$ occurs in $\cal G$, we call $X_i$  a \emph{parent} of $X_j$ and $X_j$  a \emph{child} of $X_i$. Two distinct vertices $X_i$ and $X_j$ are \emph{siblings} of each other if the undirected edge $X_i - X_j$ appears  in $\cal G$. If for any $\textbf{V}'\subset \textbf{V}$, there exist $X'\in \textbf{V}'$ and $X\in \textbf{V}\setminus \textbf{V}'$ such that $X$ and $X'$ are adjacent, then the graph is called \emph{connected}, otherwise, it is \emph{disconnected}. Furthermore, if there is an edge between any two vertices, then the graph is called \emph{complete}.

A \emph{path} is a sequence of distinct vertices $(X_{k_1},\cdots,X_{k_j})$ such that $X_{k_i}$ is adjacent to $X_{k_{i+1}}$. $X_{k_1}$ and $X_{k_j}$ are \emph{endpoints} of the path, while other vertices on the path are \emph{intermediate vertices (nodes)}. The length of a path is the number of vertices on the path minus one.  A path is called \emph{partially directed} from $X_{k_1}$ to $X_{k_j}$ if $X_{k_i} \leftarrow X_{k_{i+1}}$ does not occur in $\cal {G}$ for any $i=1, \ldots, j-1$. A partially directed path is \emph{directed} (\emph{undirected})  if all edges on the path are  directed (undirected). A \emph{cycle} is a path from a vertex to itself. A partially directed (directed, undirected) cycle can be defined similarly.   We note that both directed paths (cycles) and undirected paths (cycles) are partially directed.  A vertex $X_i$ is an \emph{ancestor} of $X_j$ and $X_j$ is a \emph{descendant} of $X_i$  if there is a directed path from $X_i$ to $X_j$ or $X_i = X_j$.
A \emph{chord} of a path (cycle) is any edge joining two nonconsecutive vertices on the path (cycle). A path (cycle) without any chord is called \emph{chordless}. Any path with length one is chordless. An undirected graph is \emph{chordal} if it has no chordless cycle with length greater than three.
  Given a chordal graph ${\cal C} = (\textbf{V}, \textbf{E})$, if the induced subgraph of $\cal C$ over $\textbf{V}'\subset \textbf{V}$ is complete, then $\textbf{V}'$ is called a \emph{clique} of $\cal C$. Moreover, if there is no $\textbf{V}''$ such that $\textbf{V}' \subset \textbf{V}''$ and $\textbf{V}''$ is a clique, then $\textbf{V}'$ is called a \emph{maximal clique}. A directed graph is \emph{acyclic} (DAG) if there are no directed cycles.

\section{Causal-Effect-Based Methods: Detailed Algorithms}\label{app:ce}

In Section~\ref{sec:sec:ce} we introduce four modifications of the IDA algorithm. In this section, Algorithms~\ref{algo:ce-based-min/max} and~\ref{algo:ce-based-an}   show the detailed procedures of the second and third modifications, respectively. For simplicity, we assume that the observed variables follow a linear-Gaussian structural equation model. This assumption, together with the causal faithfulness assumption, guarantees that $X$ has a non-zero total causal effect on $Y$ if and only if $X$ has a directed path to $Y$ in the underlying DAG. We note that this claim does not always   hold. For example, consider three binary variables $X$, $Y$ and $Z$, such that $X\to Y$, $Z\to X$ and $Z\to Y$. Assume that $P(Z=0)=0.4$, $P(X=0\mid Z=0)=0.2$, $P(X=0\mid Z=1)=0.4$, $P(Y=0\mid X=0,  Z=0)=0.5$, $P(Y=0\mid X=0,  Z=1)=0.1$, $P(Y=0\mid X=1,  Z=0)=0.2$ and $P(Y=0\mid X=1,  Z=1)=0.3$. It is easy to check that $P(X, Y, Z)$ is faithful to the DAG structure consisting of $X\to Y$, $Z\to X$ and $Z\to Y$. However, using the back-door adjustment, one can calculate that $P(Y=1\mid do(X=0))=P(Y=1\mid do(X=1))=0.26$.

\begin{algorithm}[!t]
	\caption{IDA + significance tests for the minimum and maximum absolute effects.}
	\label{algo:ce-based-min/max}
	\begin{algorithmic}[1]
		\REQUIRE
		A treatment $X$, a target $Y$, a CPDAG $\mathcal{G}^*$ over a vertex set $\bf V$, and a significance level $\alpha$.
		\ENSURE
		The type of causal relation between X and Y.
		
		\STATE{set $\Theta= [\;]$,}
		\FOR{each $\mathbf{Q}\subset sib(X, {\cal G}^*)$ such that    orienting $\mathbf{Q}\to X$ and $X\rightarrow sib(X, \mathcal{G}^{*})\setminus \textbf{Q}$ does not introduce any v-structure collided on $X$, let $\mathbf{S}=\mathbf{Q}\cup pa(X, {\cal G}^*)$ and}
		\STATE{estimate the causal effect $\theta$ of $X$ on $Y$ by adjusting for $\mathbf{S}$, and add the causal effect to $\Theta$,}
		\ENDFOR

		\STATE{let $\theta_{\rm max} = \max\{\theta\mid \theta\in\Theta\}$,  test the null hypothesis $\theta_{\rm max}=0$ and denote the corresponding  p-value by $p_{\rm max}$,}
		\STATE{let $\theta_{\rm min} = \min\{\theta\mid \theta\in\Theta\}$,  test the null hypothesis $\theta_{\rm min}=0$ and denote the corresponding p-value by $p_{\rm min}$,}
		
		
		\IF {$p_{\rm min}\leq \alpha$}
		\RETURN {$X$ is a \textbf{definite cause} of $Y$,}
		\ENDIF
		
		\IF {$p_{\rm max}>\alpha$}
		\RETURN {$X$ is a \textbf{definite non-cause}  of $Y$,}
		\ENDIF
		
		\RETURN {$X$ is a \textbf{possible cause} of $Y$.}

	\end{algorithmic}
\end{algorithm}

Algorithm~\ref{algo:ce-based-min/max} shows the detailed procedure of the second modification, which is IDA + significance tests for the minimum and maximum absolute effects. The key step is to test the significance of the minimum and maximum absolute effects. In the linear-Gaussian case, $\Theta$ is a collection of regression coefficients of $X$, each of which corresponds to a linear regression of $Y$ on $X$ and some adjustment set. Since the adjustment sets in different regression models may overlap, it is difficult to derive the (asymptotic) distributions of $\theta_{\rm min}$ and $\theta_{\rm max}$ under the null hypothesis. Therefore, in our implementation, we only test $\theta_{\rm min}$ and $\theta_{\rm max}$ in their own regression models.

\begin{algorithm}[!t]
	\caption{IDA + utilizing non-ancestral relations + significance tests for all estimated effects}
	\label{algo:ce-based-an}
	\begin{algorithmic}[1]
		\REQUIRE
		A treatment $X$, a target $Y$, a CPDAG $\mathcal{G}^*$ over a vertex set $\bf V$, and a significance level $\alpha$.
		\ENSURE
		The type of causal relation between X and Y.
		
		\STATE{set $\Theta= [\;]$,}
		\FOR{each $\mathbf{Q}\subset sib(X, {\cal G}^*)$ such that    orienting $\mathbf{Q}\to X$ and $X\rightarrow sib(X, \mathcal{G}^{*})\setminus \textbf{Q}$ does not introduce any v-structure collided on $X$, let $\mathbf{S}=\mathbf{Q}\cup pa(X, {\cal G}^*)$ and}
		
		\STATE{Orient $\mathbf{Q}\to X$ and $X\rightarrow sib(X, \mathcal{G}^{*})\setminus \textbf{Q}$ in ${\cal G}^*$, and complete the orientations with Meek's rules~\citep{meek1995causal}. Denote the resulting graph by $\cal H$,}
		
		\STATE{Check whether $X$ is a non-ancestor of $Y$ in $\cal H$~\citep{perkovic2017interpreting} ,}
		
		\IF {$X$ is a non-ancestor of $Y$ in $\cal H$,}
		\STATE{set $\theta=0$ and $p=1$,}
		\ELSE
		\STATE{estimate the causal effect $\theta$ of $X$ on $Y$ by adjusting for $\mathbf{S}$,}
		\STATE{test the null hypothesis   $\theta=0$, and compute the p-value $p$,}
		\ENDIF
		
		\STATE{add $\theta$ to $\Theta$, and add $p$ to ${\rm P}_{\rm val}$,}
		
		\ENDFOR
		
		
		\IF {every p-value in ${\rm P}_{\rm val}$ is less than or equal to $\alpha$}
		\RETURN {$X$ is a \textbf{definite cause} of $Y$,}
		\ENDIF
		
		\IF {every p-value in ${\rm P}_{\rm val}$ is greater than $\alpha$}
		\RETURN {$X$ is a \textbf{definite non-cause}  of $Y$,}
		\ENDIF
		
		\RETURN {$X$ is a \textbf{possible cause} of $Y$.}

	\end{algorithmic}
\end{algorithm}

Algorithm~\ref{algo:ce-based-an} shows the detailed procedure of the third modification, which is IDA + utilizing non-ancestral relations + significance tests for all estimated effects. The key steps are lines 3 and 4. In line 3, Algorithm~\ref{algo:ce-based-an} calls Meek's rules to complete the orientations $\mathbf{Q}\to X$ and $X\rightarrow sib(X, \mathcal{G}^{*})\setminus \textbf{Q}$ in ${\cal G}^*$. The resulting graph, which is denoted by $\cal H$, is a maximally partially directed acyclic graph (MPDAG) containing both directed and undirected edges. \citet[][Lemma 3.2]{perkovic2017interpreting} proved that $X$ is not a b-possible ancestor of $Y$ in an MPDAG if there is no b-possibly causal path from $X$ to $Y$, where a path from $X$ to $Y$ is b-possibly causal if none of the edge and chord on the path points towards $X$.  Consequently, Algorithm~\ref{algo:ce-based-an} graphically checks whether $X$ is a b-possible ancestor of $Y$ in $\cal H$. If $X$ is not a b-possible ancestor of $Y$ in $\cal H$, then $X$ is a non-ancestor of $Y$ in any DAG in $[\cal H]$, which is the restricted Markov equivalence class represented by $\cal H$~\citep{perkovic2017interpreting, fang2020bgida}, and thus Algorithm~\ref{algo:ce-based-an} sets $\theta=0$ and $p=1$. More information about MPDAGs can be found in~\cite{meek1995causal} and \cite{ perkovic2017interpreting}.

\section{Detailed Proofs}
\label{app:proofs}

The proofs of  lemmas, theorems and corollaries in the main text of this paper will be presented in this section. Before that,  we first introduce some prerequisite concepts and results.

Let $\pi = (v_0, v_1, ..., v_k)$ denote a path with length $k$. The subpath $\pi(v_i, v_j)$ of $\pi$, with $j>i$, is the path $(v_i, v_{i+1}, ..., v_{j-1}, v_j)$. If $k\geq2$, we say three consecutive vertices $v_i$, $v_{i+1}$ and $v_{i+2}$ form a triangle on $\pi$ if $v_i$ is adjacent to $v_{i+2}$. $\pi$ is called triangle-free if it does not contain any triangle. For a path in a chordal graph, we have the following result.

\begin{lemma}\label{lem:app:chordless}
	In any chordal graph, a path is chordless if and only if it is triangle-free.
\end{lemma}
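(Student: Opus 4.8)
The plan is to prove the two implications separately, observing that only one of them actually invokes chordality. Throughout I would write $\pi = (v_0, v_1, \ldots, v_k)$, and recall that a chord of $\pi$ is an edge $v_av_b$ with $|a-b|\geq 2$, whereas a triangle on $\pi$ is a chord of the special form $v_iv_{i+2}$. If $k \leq 1$ both properties hold vacuously, so I may assume $k \geq 2$.

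\emph{Chordless $\Rightarrow$ triangle-free.} This direction is immediate and requires no assumption on the graph: if $\pi$ contained a triangle $v_i, v_{i+1}, v_{i+2}$, then $v_iv_{i+2}$ would be an edge joining two nonconsecutive vertices of $\pi$, hence a chord, contradicting chordlessness.

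\emph{Triangle-free $\Rightarrow$ chordless.} This is where chordality enters, and I expect it to be the crux. I would argue by contradiction: suppose $\pi$ is triangle-free yet possesses at least one chord. Among all chords $v_iv_j$ with $i<j$, choose one minimizing the span $j-i$. Triangle-freeness forbids span $2$, so the minimal span satisfies $j-i \geq 3$. Closing the subpath $\pi(v_i,v_j)$ with this chord yields a cycle $C = (v_i, v_{i+1}, \ldots, v_j, v_i)$ on the distinct vertices $v_i,\ldots,v_j$, of length $j-i+1 \geq 4$. The key claim is that $C$ is chordless: any chord of $C$ is an edge $v_av_b$ with $i \leq a < b \leq j$, $b \geq a+2$, and $\{a,b\} \neq \{i,j\}$; such an edge is also a chord of $\pi$, of span $b-a$, and these constraints force $b-a < j-i$, contradicting the minimality of $v_iv_j$. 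Hence $C$ is a chordless cycle of length at least $4$, which is impossible in a chordal graph. This contradiction shows $\pi$ has no chord.

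The main obstacle is the bookkeeping in the reverse direction: one must set up the minimal-span chord correctly and then verify carefully that \emph{every} candidate chord of the induced cycle $C$ is genuinely a strictly shorter-span chord of $\pi$ (the exclusion $\{a,b\}\neq\{i,j\}$ is exactly what guarantees the strict inequality). Once this is in place, the contradiction with the defining property of chordal graphs—no chordless cycle of length greater than three—follows at once.
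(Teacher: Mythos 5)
Your proposal is correct and follows essentially the same argument as the paper: both pick an innermost (minimal-span) chord, observe that triangle-freeness forces its span to be at least $3$, and then derive a contradiction because the chord together with the subpath it spans would form a chordless cycle of length greater than three in a chordal graph. Your version is slightly more explicit about why the minimal-span choice makes the resulting cycle chordless (the exclusion $\{a,b\}\neq\{i,j\}$ forcing a strictly smaller span), a detail the paper leaves implicit.
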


\begin{proof}
	Let $\pi = (v_0, v_1, ..., v_k)$ denote a path with length $k\geq2$, If $\pi$ is chordless, then it is obviously triangle-free. Suppose $\pi$ is not chordless, then we can choose a chord $v_i-v_j$ such that the subpath $\pi(v_i, v_j)$ has no chord except for $v_i-v_j$. If $j=i+2$, then $v_i$, $v_{i+1}$ and $v_j$ form a triangle. If $j>i+2$, then $\pi(v_i, v_j)$ and $v_i-v_j$ form a cycle with length greater than 3. However, since the graph is chordal, we must have a chord $v_k-v_l$ with $i\leq k, l\leq j$ and  $l\geq k+2$ and $l-k<j-i$. This is contrary to our assumption.
\end{proof}

Lemma \ref{lem:app:chordless} is useful for finding chordless path, since checking whether a path is triangle-free is much easier. The following is another useful result for chordal graphs.


\begin{lemma}\label{lem:app:cycle}
	Let $\rho$ be a cycle with length greater than 3 in a given chordal graph, and $X$ be a vertex on $\rho$. If the two vertices adjacent to $X$ on $\rho$ are not adjacent to each other, then $\rho$ has a chord where $X$ is an endpoint.
\end{lemma}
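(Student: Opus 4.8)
The plan is to argue by strong induction on the length of the cycle $\rho$. Write $\rho = (X = v_0, v_1, \ldots, v_{n-1}, v_0)$ with $n \geq 4$ distinct vertices, so that the two vertices adjacent to $X$ on $\rho$ are $v_1$ and $v_{n-1}$, which by hypothesis are non-adjacent. The goal is to produce a chord $v_0 - v_m$ of $\rho$, i.e. an edge joining $X$ to some $v_m$ with $m \notin \{1, n-1\}$.

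For the base case $n = 4$ the cycle is $(v_0, v_1, v_2, v_3, v_0)$. Since the graph is chordal and $\rho$ has length greater than three, $\rho$ must have a chord, and the only two candidate chords are $v_0 - v_2$ and $v_1 - v_3$. The hypothesis that $v_1$ and $v_3$ (the neighbors of $X$) are non-adjacent rules out $v_1 - v_3$, so the chord must be $v_0 - v_2$, which is incident to $X$. For the inductive step with $n > 4$, chordality again yields a chord of $\rho$. If that chord is incident to $X$ we are finished, so assume it is $v_i - v_j$ with $1 \leq i < j \leq n-1$ and $j \geq i+2$. This chord splits $\rho$ into two shorter cycles; I would take the one passing through $X$, namely $\rho' = (v_j, v_{j+1}, \ldots, v_{n-1}, v_0, v_1, \ldots, v_i, v_j)$, whose last edge is the chord $v_i - v_j$. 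One checks directly that $\rho'$ lives in the same chordal graph, has fewer vertices than $\rho$, and that the two neighbors of $X$ on $\rho'$ are still $v_1$ and $v_{n-1}$, hence still non-adjacent.

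The one thing that must be verified before invoking the inductive hypothesis is that $\rho'$ still has length greater than three; this is where the non-adjacency hypothesis does real work. If $\rho'$ were a triangle, its three vertices would have to be $X$ together with its two cycle-neighbors $v_1$ and $v_{n-1}$, and the third edge of that triangle would be exactly $v_1 - v_{n-1}$ — contradicting the assumption that the two neighbors of $X$ are non-adjacent. Thus $\rho'$ has length greater than three, and by the inductive hypothesis $\rho'$ has a chord incident to $X$, say $v_0 - v_m$ with $v_m$ not consecutive to $v_0$ on $\rho'$, so $v_m \neq v_1, v_{n-1}$.

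It remains to lift this to a chord of the original $\rho$: since every vertex of $\rho'$ is a vertex of $\rho$ and the vertices consecutive to $v_0$ on $\rho$ are again precisely $v_1$ and $v_{n-1}$, the edge $v_0 - v_m$ with $v_m \neq v_1, v_{n-1}$ joins two non-consecutive vertices of $\rho$ and is therefore a chord of $\rho$ with $X$ as an endpoint. I expect the main obstacle to be precisely this triangle edge case in controlling the length of the sub-cycle, together with the bookkeeping that the chord found in $\rho'$ is genuinely a chord of $\rho$; both are resolved cleanly by the observation that a degenerate (triangle) sub-cycle would force the forbidden adjacency between the two neighbors of $X$.
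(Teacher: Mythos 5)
Your proof is correct, and it uses essentially the same mechanism as the paper's: split the cycle along a chord, pass to the sub-cycle containing $X$ together with both of its cycle-neighbors, and use the non-adjacency of those neighbors to rule out collapse to a triangle. The only difference is packaging — you run a direct strong induction that produces the chord at $X$ (with an explicit lifting step back to $\rho$), whereas the paper assumes no chord at $X$ and descends to a chordless cycle of length greater than $3$, contradicting chordality; the combinatorial content is identical.
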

\begin{proof}
	Let $v_1$ and $v_2$ be two vertices adjacent to $X$ on $\rho$. Suppose that $\rho$ does not have a chord where $X$ is an endpoint. Since $\rho$ has length greater than 3, $\rho$ must have a chord. Clearly, any chord of $\rho$ separates $\rho$ into two sub-cycles. By assumption, it is easy to check that at least one sub-cycle contains $X$, $v_1$ and $v_2$. If this sub-cycle still has a chord, then we can construct another cycle containing $X$, $v_1$ and $v_2$ but with shorter length. Finally, we will have a cycle containing $X$, $v_1$ and $v_2$ without any chord. Since $v_1$ and $v_2$ are not adjacent, the length of this cycle must be greater than 3, which is contradicted to the definition of chordal graph.
\end{proof}

A chordal graph $\cal C$ can be turned into a directed graph by orienting its edges. If the resulting directed graph is a DAG without v-structure, then these orientations form a v-structure-free acyclic orientation of $\cal C$~\citep{arXiv170509717b}. Any v-structure-free acyclic orientation of a connected chordal graph has a unique source, that is, a vertex which has no parent. Conversely, any vertex in a connected chordal graph can be the unique source in some v-structure-free acyclic orientation~\citep{blair1993chordal, arXiv170509717b}. Recall that the undirected subgraph of a CPDAG is the union of disjoint connected chordal graphs called chain components \citep{andersson1997characterization}. \citet{maathuis2009estimating} argued that any v-structure-free acyclic orientation of the edges in $\mathcal{G}^*_u$ corresponds to a DAG in the equivalence class represented by $\mathcal{G}^*$, and such an orientation can be considered separately for each of the disjoint chordal graphs (or chain components). Moreover, \citet{maathuis2009estimating} proved that,

\begin{lemma}\label{lem:app:valid-local}
	{\rm \textbf{\citep[Lemma~3.1]{maathuis2009estimating}}}
	Let ${\cal G}^*$ be a CPDAG, $X$ be a vertex of ${\cal G}^*$, and $\mathbf{S}\subset ne(X, {\cal G}^*)$. Then there is a DAG ${\cal G}\in [{\cal G}^*]$ such that $pa(X, {\cal G})=pa(X, {\cal G}^*)\cup \mathbf{S}$ if and only if orienting $S \rightarrow X$ and $X \rightarrow D$ for every $S\in\mathbf{S}$ and $D\in sib(X, {\cal G}^*)\setminus \mathbf{S}$ in ${\cal G}^*$  does not introduce any new v-structure.
\end{lemma}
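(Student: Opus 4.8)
The plan is to prove both directions by translating the statement into the language of v-structure-free acyclic orientations of the chordal chain components, a correspondence that the excerpt has already linked to the DAGs of $[\mathcal{G}^*]$. Throughout I read $ne(X,\mathcal{G}^*)$ as the undirected neighbours $sib(X,\mathcal{G}^*)$.

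For the necessity direction, I would start from a DAG $\mathcal{G}\in[\mathcal{G}^*]$ with $pa(X,\mathcal{G})=pa(X,\mathcal{G}^*)\cup\mathbf{S}$. Since the directed edges of a CPDAG are preserved in every member of its class and $\mathbf{S}\subseteq sib(X,\mathcal{G}^*)$, the edges incident to $X$ in $\mathcal{G}$ are oriented exactly as $S\to X$ for $S\in\mathbf{S}$ and $X\to D$ for $D\in sib(X,\mathcal{G}^*)\setminus\mathbf{S}$; that is, $\mathcal{G}$ realizes the prescribed local orientation. The key observation is that any v-structure created by performing this orientation in $\mathcal{G}^*$ uses only edges oriented identically in $\mathcal{G}$ (each such edge is either already directed in $\mathcal{G}^*$, or incident to $X$), while its two non-adjacent endpoints stay non-adjacent because $\mathcal{G}$ and $\mathcal{G}^*$ share a skeleton; hence every such v-structure is also present in $\mathcal{G}$. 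As Markov equivalent DAGs have the same v-structures, a v-structure not already in $\mathcal{G}^*$ would contradict $\mathcal{G}\in[\mathcal{G}^*]$, so the orientation introduces no new v-structure.

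For the sufficiency direction I would first reduce the hypothesis to a clean combinatorial condition. A new v-structure produced by orienting only the edges at $X$ must be collided either at $X$ or at a child of $X$. A collider $X\to D\leftarrow C$ at a child cannot be new: if $C\to D$ were directed in $\mathcal{G}^*$ with $C$ non-adjacent to $X$, Meek's rule R1 applied to $C\to D - X$ would already have forced $D\to X$, contradicting $D\in sib(X,\mathcal{G}^*)$. Likewise a collider $S\to X\leftarrow P$ with $P\in pa(X,\mathcal{G}^*)$ and $P,S$ non-adjacent is impossible by the same rule. Therefore the no-new-v-structure hypothesis is equivalent to: no two members of $\mathbf{S}$ are non-adjacent, i.e. $\mathbf{S}$ induces a clique, whence $\{X\}\cup\mathbf{S}$ is a clique of the chordal chain component $\mathcal{C}$ containing $X$.

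It then remains to build a v-structure-free acyclic orientation of $\mathcal{C}$ in which the parents of $X$ are exactly $\mathbf{S}$; gluing it to the directed subgraph $\mathcal{G}^*_d$ and to arbitrary v-structure-free acyclic orientations of the other chain components yields, by the per-component correspondence cited in the excerpt, a DAG $\mathcal{G}\in[\mathcal{G}^*]$ with $pa(X,\mathcal{G})=pa(X,\mathcal{G}^*)\cup\mathbf{S}$. To obtain the orientation I would take a perfect elimination ordering of $\mathcal{C}$ whose final $|\mathbf{S}|+1$ vertices are the clique $\{X\}\cup\mathbf{S}$, with $X$ placed first among them, and orient each edge from its later to its earlier endpoint; then every vertex's parent set equals its set of later neighbours (a clique, so no v-structure arises), and in particular $X$ receives the parent set $\mathbf{S}$. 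This generalizes the cited fact that any vertex can be made the unique source of such an orientation (the case $\mathbf{S}=\emptyset$). The main obstacle is exactly this construction: one must justify the existence of a perfect elimination ordering ending in a prescribed clique with $X$ first (a standard but non-trivial property of chordal graphs, available from \citet{Rose1975elimination, blair1993chordal}), and verify that gluing the oriented components to $\mathcal{G}^*_d$ produces neither a v-structure across the interface — again controlled by Meek's rule R1, since every directed parent of $X$ is forced adjacent to every sibling of $X$ — nor a directed cycle, which is excluded by the result of \citet{andersson1997characterization} that $\mathcal{G}^*$ has no partially directed cycle containing a directed edge.
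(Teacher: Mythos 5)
You should first note a mismatch with the premise of the comparison: the paper never proves this lemma. It is imported verbatim as \citet[Lemma~3.1]{maathuis2009estimating}, so there is no internal proof to measure your argument against; what can be judged is whether your blind reconstruction is correct, and it is. Your necessity direction is sound: every edge directed by the prescribed local orientation is oriented identically in the witnessing DAG $\mathcal{G}$ (directed CPDAG edges are shared by all class members, and the edges at $X$ are fixed by $pa(X,\mathcal{G})=pa(X,\mathcal{G}^*)\cup\mathbf{S}$), so a new v-structure would be a v-structure of $\mathcal{G}$ whose edges are not both directed in $\mathcal{G}^*$, contradicting the fact that v-structures are common to, and compelled in, all equivalent DAGs. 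Your sufficiency direction also has the right two-stage structure, and both stages are consistent with the toolkit the paper itself assembles around the lemma: the reduction of the no-new-v-structure hypothesis to ``$\mathbf{S}$ induces a clique'' via R1-closedness of CPDAGs is exactly the equivalence the paper records immediately after the lemma by citing \citet[Lemma~1]{meek1995causal}; the realization of a prescribed clique as $pa(X)$ via a perfect elimination ordering ending in $\{X\}\cup\mathbf{S}$ with $X$ first, oriented later-to-earlier, is a valid (and standard) construction generalizing the unique-source fact quoted in the appendix; and the gluing step is covered by the per-component correspondence between v-structure-free acyclic orientations of $\mathcal{G}^*_u$ and members of $[\mathcal{G}^*]$ that the paper states with attribution to \citet{maathuis2009estimating} and \citet{andersson1997characterization}. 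The only imprecision worth flagging is in your interface argument: ruling out v-structures between $\mathcal{G}^*_d$ and an oriented chain component requires that every directed parent of \emph{any} vertex $B$ in a chain component be adjacent to every sibling of $B$, not only that this hold at $X$; this is again exactly Meek's Lemma 1, so the gap is cosmetic rather than substantive, and it is in any case subsumed by the cited per-component correspondence you invoke before the sketch.
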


\citet[Lemma~1]{meek1995causal} proved that if $Y\in pa(X, {\cal G}^*)$, then $Y\in pa(X', {\cal G}^*)$ for every $X'\in ne(X, {\cal G}^*)$. From this result we can prove that the condition in Lemma \ref{lem:app:valid-local} holds if and only if $\mathbf{S}$ is a clique. As we will see, Lemma \ref{lem:app:valid-local} plays a key role in proving the main results of this paper, as it provides a simple and local criterion for checking whether a subset of $X$'s siblings can be $X$'s parents in some equivalent DAGs.

Let $\pi$ denote a path. A subsequence of $\pi$ is obtained by deleting some vertices from $\pi$ without changing the order of the remaining vertices. The final prerequisite result is about the relation between directed paths and partially directed paths.

\begin{lemma}\label{lem:another-char}
	There is a directed path from $X$ to $Y$  in $\mathcal{G}^*$ if and only if there is a partially directed path from $X$ to $Y$  in $\mathcal{G}^*$  on which the node adjacent to $X$ is a child of $X$.
\end{lemma}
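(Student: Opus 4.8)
The forward direction is immediate: a directed path is itself partially directed, and its first edge $X\to W_1$ certifies that the vertex adjacent to $X$ is a child of $X$. The content lies in the converse, and the plan is to start from any partially directed path whose first edge points out of $X$ and straighten it into a directed path. Concretely, among all partially directed paths from $X$ to $Y$ on which the vertex adjacent to $X$ is a child of $X$ (such a path exists by hypothesis, and the graph is finite), I would fix one of minimum length, say $\pi=(X=W_0,W_1,\dots,W_m=Y)$ with $W_0\to W_1$, and argue that $\pi$ contains no undirected edge at all.

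The engine of the argument is the following local property of a CPDAG $\mathcal{G}^*$, which I would establish first: if $A\to B$ and $B-C$ in $\mathcal{G}^*$, then $A\to C$. To see this, note first that $A$ and $C$ must be adjacent: otherwise, since $B-C$ is undirected there is a DAG in $[\mathcal{G}^*]$ orienting $C\to B$, and together with the invariant edge $A\to B$ this would form a v-structure $A\to B\leftarrow C$; as v-structures are shared by every DAG in the equivalence class \citep{pearl1989conditional}, the edge $B-C$ would then be directed, a contradiction. Given that $A$ and $C$ are adjacent, the edge cannot be $C\to A$, for then $A\to B-C\to A$ would be a partially directed cycle containing a directed edge; nor can it be undirected $A-C$, for then $A\to B-C-A$ would likewise be a partially directed cycle containing $A\to B$. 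Both possibilities contradict the fact that every partially directed cycle in a CPDAG is undirected \citep{andersson1997characterization}, so the edge must be $A\to C$.

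With this property in hand, I would run a minimal-counterexample (shortcut) argument on $\pi$. Suppose $\pi$ has an undirected edge and let $W_i-W_{i+1}$ be the first one; since $W_0\to W_1$ is directed we have $i\ge 1$, so $W_{i-1}\to W_i-W_{i+1}$. The local property yields $W_{i-1}\to W_{i+1}$, so deleting $W_i$ produces a path $\pi'=(W_0,\dots,W_{i-1},W_{i+1},\dots,W_m)$ that is again partially directed from $X$ to $Y$ (the replaced edge points forward and all other edges are inherited from $\pi$) and whose first edge is still directed out of $X$ (it is $W_0\to W_1$ when $i\ge 2$ and the new edge $W_0\to W_2$ when $i=1$). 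As $\pi'$ is strictly shorter than $\pi$, this contradicts minimality; hence $\pi$ has no undirected edge and is a directed path from $X$ to $Y$.

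The main obstacle is the local property, and within it the orientation claim rather than mere adjacency: it is essential that the chord $W_{i-1}-W_{i+1}$ be oriented \emph{forward}, since an undirected or backward chord would break either the partial-directedness or the ``child of $X$'' condition of the shortened path. I expect this step to rest squarely on Andersson's characterization that a CPDAG admits no partially directed cycle through a directed edge, applied to the triangle $W_{i-1},W_i,W_{i+1}$, together with the invariance of v-structures across the Markov equivalence class.
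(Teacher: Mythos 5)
Your proof is correct, and it takes a genuinely different route from the paper's. The paper argues as follows: on the given partially directed path it selects the vertex $w$ adjacent to $X$ that is closest to $Y$, notes that $X\to w$ must hold (any other orientation would create a partially directed cycle through a directed edge), extracts from the segment between $w$ and $Y$ a chordless partially directed subsequence using \citet[Lemma~3.6]{perkovic2017interpreting}, and then concludes that the concatenation of $X\to w$ with this subsequence is a directed path by invoking \citet[Lemma~7.2]{maathuis2015generalized}; the key steps are thus delegated to two external lemmas. You instead fix a shortest partially directed path whose first edge leaves $X$ and eliminate undirected edges by a shortcut argument, powered by the closure property that $A\to B$ and $B-C$ imply $A\to C$, which you prove from first principles: adjacency of $A$ and $C$ via the invariance of v-structures across the Markov equivalence class \citep{pearl1989conditional}, and the forward orientation of the chord via the fact that every partially directed cycle in a CPDAG is undirected \citep{andersson1997characterization}. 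Your treatment of the boundary case $i=1$ (where the new first edge $W_0\to W_2$ is again directed out of $X$) is the one place where such contraction arguments typically go wrong, and you handle it correctly. What your route buys is self-containedness: it relies only on facts already quoted in the paper's preliminaries, in effect re-proving the special cases of the two cited lemmas that are actually needed (your closure property is essentially the engine behind the cited Lemma~7.2). What the paper's route buys is brevity, at the price of those external dependencies.
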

\begin{proof}
	The necessity is trivial. For sufficiency, let $\pi=(X, v, ..., Y)$ be the partially directed path from $X$ to $Y$ in $\mathcal{G}^*$ such that $X\to v$. Assume that $w$ is the first vertex from the side of $Y$ which is adjacent to $X$, then we have $X\to w$. Now consider $\pi(w, Y)$. As $\pi(w, Y)$ is also partially directed, by \citet[Lemma~3.6]{perkovic2017interpreting}, there is a subsequence $\pi^*$ of $\pi(w, Y)$ forms a chordless partially directed path from $X$ to $Y$ in $\mathcal{G}^*$. Let $\pi^{**}$ denote the path by concatenating $X\to w$ and $\pi^*$, then $\pi^{**}$ is a  partially directed path from $X$ to $Y$ on which the node adjacent to $X$ is a child of $X$. By construction, $X$ is not adjacent to any vertex on $\pi^{**}$ except for $w$. Thus, by \citet[Lemma~7.2]{maathuis2015generalized}, $\pi^{**}$ is a directed path.
\end{proof}

In the following Appendices C.1 to C.13, we will present the detailed proofs of the main results provided in the main text, with the help of the aforementioned concepts and lemmas.

\subsection{Proof of Lemma \ref{lem:child_critical}}\label{app:proof:1}
\begin{proof}
  Given a CPDAG ${\cal G}^*$, for any DAG ${\cal G}\in[{\cal G}^*]$, \citet[Lemma~2]{fang2020bgida} showed that a variable $X$ is not a cause of another variable $Y$ in $\cal G$ if and only if the critical set of $X$ with respect to $Y$ in ${\cal G}^*$, which is denoted by $\mathbf{C}$, is a subset of $pa(X, {\cal G})$. Consequently, $X$ is a cause of $Y$ in $\cal G$ if and only if $\mathbf{C}$ is not a subset of $pa(X, {\cal G})$.  That is, some vertex in $\mathbf{C}$ must be a child of $X$ in $\cal G$. The desired result comes from the definition of definite cause.
\end{proof}

\subsection{Proof of Lemma \ref{lem:critical_has_a_child}}

\begin{proof}
  We first show the necessity. By the definition, $\mathbf{C}\subseteq sib(X, \mathcal{G}^*)\cup ch(X, \mathcal{G}^*)$. Let ${\cal G} \in [\mathcal{G}^*]$ be an arbitrary DAG. If $\mathbf{C}\cap ch(X, {\cal G})=\emptyset$ and $\mathbf{C} \neq \emptyset$, then $\mathbf{C} \subseteq pa(X, {\cal G})$, and thus we have $\mathbf{C} \subseteq sib(X, {\cal G}^*)$. \citet[Lemma~3]{maathuis2009estimating} proved that a non-empty subset of $sib(X, {\cal G}^*)$ can be a part of $X$'s parent set in some equivalent DAG if and only if the subset induces a complete subgraph. Therefore, $\mathbf{C}$ induces a complete subgraph of $\mathcal{G}^*$. This completes the proof of the necessity. We next prove the sufficiency. If $\mathbf{C} = \emptyset$, then it is clear that $\mathbf{C}\cap ch(X, {\cal G})=\emptyset$ for some ${\cal G} \in [\mathcal{G}^*]$. Now assume that $\mathbf{C} \neq \emptyset$ and $\mathbf{C}$ induces a complete subgraph of $\mathcal{G}^*$ and $\mathbf{C}\cap ch(X, \mathcal{G}^*)= \emptyset$. As $\mathbf{C}\subseteq sib(X, \mathcal{G}^*)\cup ch(X, \mathcal{G}^*)$, we have $\mathbf{C} \subseteq sib(X, {\cal G}^*)$. Again, by \citet[Lemma~3]{maathuis2009estimating}, there is a DAG $\mathcal{G}$ in $[\mathcal{G}^*]$ such that $\mathbf{C} \subseteq pa(X, \mathcal{G})$. Therefore, $\mathbf{C}\cap ch(X, {\cal G})=\emptyset$.
\end{proof}

\subsection{Proof of Theorem \ref{thm:graphical_definite_cause}}
\begin{proof}
  Theorem \ref{thm:graphical_definite_cause} follows from Lemmas \ref{lem:child_critical} and \ref{lem:critical_has_a_child} directly.
\end{proof}

\subsection{Proof of Proposition \ref{prop:definite-cause}}
\begin{proof}
Denote the CPDAG containing $X$ and $Y$  by ${\cal G}^*$. It suffices to show that, if $X$ and $Y$ are in the same chain component, then there exists a DAG in $[{\cal G}^*]$ in which $Y$ is an ancestor of $X$. By Lemma~\ref{lem:app:valid-local}, there exists a DAG $\cal G$ in $[{\cal G}^*]$ such that $pa(Y, {\cal G})=pa(Y, {\cal G}^*)$ and $ch(Y, {\cal G})=ch(Y, {\cal G}^*)\cup sib(Y, {\cal G}^*)$. Let $\pi=(Y,v_1,...,X)$ be the shortest path from $Y$ to $X$. It is clear that $\pi$ has no chord. Moreover, the corresponding path of $\pi$ in ${\cal G}^*$ is undirected as $X$ and $Y$ are in the same chain component. On the other hand, $Y\to v_1$ is in ${\cal G}$ by our construction. Hence, according to \citet[Lemma~B.1]{perkovic2017interpreting}, $\pi$ is a directed path.
\end{proof}

\subsection{Proof of Proposition \ref{prop:chain-comp}}
\begin{proof}
	According to the definition of partially directed path, an undirected path is also partially directed, hence if $X$ and $Y$ are in the same chain component, they are possible causes of each other by Theorem \ref{thm:non-cause-no-bk} and Proposition \ref{prop:definite-cause}. Conversely, if $X$ and $Y$ are possible causes of each other, then by Theorem \ref{thm:non-cause-no-bk}, there is a partially directed path from $X$ to $Y$ as well as a partially directed path from $Y$ to $X$. Clearly, neither of these two paths contains a directed edge, otherwise,   a partially directed cycle containing directed edges  would occur. Therefore, $X$ and $Y$ are connected by an undirected path, which means they are in the same chain component.
\end{proof}
%

\subsection{Proof of Proposition \ref{prop:nec-implicit}}
\begin{proof}
	Let $Z$ be a vertex in the chain component containing $X$, then every partially directed path between $Z$ and $Y$, if any, must pass through $X$. Since there is a v-structure-free orientation of the chain component whose unique source is $X$, there is a DAG in the Markov equivalence class represented by $\mathcal{G}^*$ such that none of the vertex in the chain component is an ancestor of $Y$ except $X$ .
\end{proof}

\subsection{Proof of Proposition \ref{prop:equivalent-critical-set}}
\begin{proof}
   If $X$ and $Y$ are in the same chain component, then $\mathbf{Z}=\{Y\}$ and the equation trivially holds. Suppose that $X$ and $Y$ are not in the same chain component. We first prove that $\mathbf{C}_{XY}\subseteq \cup_{Z\in \mathbf{Z}}\mathbf{C}_{XZ}$. Without loss of generality, we can assume that $\mathbf{C}_{XY}\neq\emptyset$. By the definition of critical set, for any $C\in \mathbf{C}_{XY}$, there is a chordless partially directed path $\rho$ from $X$ to $Y$ on which $C$ is adjacent $X$. Since  $X$ and $Y$ are not in the same chain component, $\rho$ must contain a directed edge. Let $Z$ be the vertex on $\rho$ such that $\rho(Z, Y)$ starts with a directed edge and $Z$ is in the chain component containing $X$. By \citet[Lemma~7.2]{maathuis2015generalized} or \citet[Lemma~B.1]{perkovic2017interpreting}, $\rho(Z, Y)$ is a directed path. Therefore, $Z$ is an explicit cause of $Y$. Since $X$ is not an explicit cause of $Y$, we have $Z\neq X$, and thus $\rho(X, Z)$ is a chordless undirected path. This means $C\in \mathbf{C}_{XZ}$. As $C\in\mathbf{C}_{XY}$ is arbitrary, we have $\mathbf{C}_{XY}\subseteq\cup_{Z\in \mathbf{Z}}\mathbf{C}_{XZ}$. Conversely, for any $Z\in \mathbf{Z}$ and $C \in \mathbf{C}_{XZ}$, there is a chordless undirected path $\pi_1$ from $X$ to $Z$ on which $C$ is adjacent $X$. Let $\pi_2$ be the shortest directed path from $Z$ to $Y$. As $X$ and $Y$ are not in the same chain component, $Z\neq Y$. Hence, concatenating $\pi_1$ and $\pi_2$ results a partially directed path from $X$ to $Y$ with length greater than 1. Denote such a path by $\pi$. If $\pi$ is chordless, then we have $C\in\mathbf{C}_{XY}$. If this is not the case, then $\pi$ must have a chord connecting one vertex $v_1$ on $\pi_1$ and another vertex $v_2$ on $\pi_2$. Clearly, the edge between $v_1$ and $v_2$ should be directed, and the direction is $v_1\to v_2$. Since $X$ is not an explicit cause of $Y$, it holds that $v_1\neq X$. With out loss of generality, we assume that $v_1$ is the first vertex from $X$'s side who are adjacent to some $v_2$ on $\pi_2$, then concatenating $\pi(X, v_1)$, $v_1\to v_2$ and $\pi(v_2, Y)$ results another partially directed path $\pi'$ which is shorter than $\pi$. It is easy to verify that $\pi'$ is chordless, and $C$ is still adjacent to $X$ on $\pi'$. Therefore, $C\in\mathbf{C}_{XY}$, and consequently we have $\cup_{Z\in \mathbf{Z}}\mathbf{C}_{XZ}\subseteq \mathbf{C}_{XY}$. This completes the proof of Proposition \ref{prop:equivalent-critical-set}.
\end{proof}

\subsection{Proof of Theorem \ref{thm:non-cause-no-bk}}

\begin{proof}
	Suppose $X$ is a definite non-cause of $Y$, then for every DAG $\cal G$ in the Markov equivalence class represented by $\mathcal{G}^*$, $Y$ is a non-descendant of $X$. Since Lemma \ref{lem:app:valid-local} indicates that there is a DAG $\cal G$ such that $pa(X, \mathcal{G}) = pa(X, \mathcal{G}^*)$ and $ch(X, \mathcal{G}) = adj(X,\mathcal{G}^*)\setminus pa(X, \mathcal{G}^*)$, we have $X \indep Y \mid pa(X, \mathcal{G}^*)$ by local Markov property. On the other hand, if $X$ is a definite cause or a possible cause of $Y$, then by definition there is a  DAG $\cal G$ in the Markov equivalence class represented by $\mathcal{G}^*$ in which $X$ is an ancestor of $Y$. Assume that $\pi$ is a directed path from $X$ to $Y$ in $\cal G$. Since every vertex on $\pi$ is a non-collider and none of the vertices on $\pi$ is in $pa(X, \mathcal{G}^*)$, $X \nindep Y \mid pa(X, \mathcal{G}^*)$.
\end{proof}

%

\subsection{Proof of Theorem \ref{thm:explicit-cause-no-bk}}\label{proof:explict}
\begin{proof}
	If $X$ is an explicit cause of $Y$, then there is a directed path $\pi$ from $X$ to $Y$ in $\mathcal{G}^*$. Hence, for any DAG $\cal G$ in the Markov equivalence class represented by $\mathcal{G}^*$, $\pi$ is directed in $\mathcal{G}$, which means $\pi$ has no collider in $\mathcal{G}$. However, none of the vertices on $\pi$ is a member of $pa(X, \mathcal{G}^*)$ or $sib(X, \mathcal{G}^*)$, since otherwise, a directed cycle or a   partially directed cycle with directed edges  would occur in $\mathcal{G}^*$. Therefore, $\pi$ is active given $pa(X, \mathcal{G}^*)\cup sib(X, \mathcal{G}^*)$, which means $X \nindep Y \mid pa(X, \mathcal{G}^*)\cup sib(X, \mathcal{G}^*)$. Conversely, suppose $X$ is not an explicit cause of $Y$. In the following, we will prove that $X \indep Y \mid pa(X, \mathcal{G}^*)\cup sib(X, \mathcal{G}^*)$ holds. By Lemma \ref{lem:app:valid-local}, there is a DAG $\cal G$ in the Markov equivalence class represented by $\mathcal{G}^*$ such that $ch(X, \mathcal{G}) = sib(X, \mathcal{G}^*) \cup ch(X, \mathcal{G}^*) $ and $pa(X, \mathcal{G}) = pa(X, \mathcal{G}^*)$. Consider a path $\pi$ from $X$ to $Y$ in $\cal G$.  If the length of $\pi$ is $1$, then the corresponding path of $\pi$ in ${\cal G}^*$ must be $X \leftarrow Y$ or $X - Y$. Thus, $\pi$ is blocked given $pa(X, \mathcal{G}^*)\cup sib(X, \mathcal{G}^*)$. If the length of $\pi$ is greater than $1$, without loss of generality we can assume that $\pi=(X, v_1, ..., v_n, Y)$. If $v_1 \in pa(X, \mathcal{G})$, then $\pi$ is blocked by $pa(X, \mathcal{G}^*) \cup sib(X, \mathcal{G}^*)$ since $v_1$ cannot be a collider on $\pi$. If $v_1 \in ch(X, \mathcal{G}^*)$, then $\pi$ is not directed, since otherwise, the corresponding path in $\mathcal{G}^*$ would be a partially directed path from $X$ to $Y$ where the node adjacent to $X$ is a child of $X$. Therefore, there must be a collider on $\pi$. Let $v_i$ be the collider nearest to $X$. If $v_i \in an(pa(X, \mathcal{G}^*) \cup sib(X, \mathcal{G}^*), \mathcal{G})$,   there exists a partially directed cycle with directed edges  in $\mathcal{G}^*$, which is impossible. Thus, $v_i \notin an(pa(X, \mathcal{G}^*) \cup sib(X, \mathcal{G}^*), \mathcal{G})$, and $\pi$ is blocked by $pa(X, \mathcal{G}^*) \cup sib(X, \mathcal{G}^*)$. Finally, in the case where $v_1 \in sib(X, \mathcal{G}^*)$, if $v_1$ is a non-collider, $\pi$ is clearly blocked by $pa(X, \mathcal{G}^*) \cup sib(X, \mathcal{G}^*)$. If $v_1$ is a collider, then $v_2$ is adjacent to $X$, which means $v_2 \notin ch(X, \mathcal{G}^*)$, since otherwise, both $X\rightarrow v_2\rightarrow v_1-X$ and $X\rightarrow v_2- v_1-X$ are   partially directed cycles with directed edges. This means $v_2\in pa(X, \mathcal{G}^*) \cup sib(X, \mathcal{G}^*)$. Since $v_2$ is a non-collider on $\pi$, $\pi$ is blocked by $pa(X, \mathcal{G}^*) \cup sib(X, \mathcal{G}^*)$. This completes the proof of Theorem \ref{thm:explicit-cause-no-bk}.
\end{proof}


   We note that, the sufficiency of Theorem \ref{thm:explicit-cause-no-bk} can also be proved using the theories of chain graph models. Here we provide a sketch. \cite{andersson1997characterization} proved that every DAG $\cal G$ is LWF (globally) Markov equivalent to the CPDAG ${\cal G}^*$ representing $[\cal G]$. Therefore, a distribution $P$ faithful to $\cal G$ must be LWF globally Markovian to ${\cal G}^*$. Note that, \citet[][Corollary~34]{Sadeghi17faithful} proved that $P$ must satisfy the regularization condition (CI5) in \cite{Frydenberg1990}, thus $P$ should be LWF locally Markovian to ${\cal G}^*$~\citep[][Theorem~3.3]{Frydenberg1990}, which implies that $X \indep_{P} Y \mid pa(X, {\cal G}^*)\cup sib(X, {\cal G}^*)$ for any $Y \in \mathbf{V}\setminus (de(X, {\cal G}^*)\cup pa(X, {\cal G}^*)\cup sib(X, {\cal G}^*))$. Therefore, $X \indep Y \mid pa(X, {\cal G}^*)\cup sib(X, {\cal G}^*)$ for any $Y \in \mathbf{V}\setminus (de(X, {\cal G}^*)\cup pa(X, {\cal G}^*)\cup sib(X, {\cal G}^*))$ due to the faithfulness of $P$. This completes the proof of the sufficiency of Theorem~\ref{thm:explicit-cause-no-bk}.

\subsection{Proof of Theorem \ref{thm:implicit-cause-no-bk}}
\begin{proof}
	Let $\mathbf{C}$ be the critical set of $X$ with respect to $Y$ in $\mathcal{G}^*$. Suppose that $X$ is an implicit cause of $Y$, then by Theorem \ref{thm:explicit-cause-no-bk}, $X \indep Y \mid pa(X, \mathcal{G}^*)\cup sib(X, \mathcal{G}^*)$. For any $\textbf{M}_w \in \cal M$, from Theorem \ref{thm:graphical_definite_cause} we know that $\mathbf{C}\setminus\textbf{M}_w\neq\emptyset$. Therefore, according to Proposition \ref{prop:equivalent-critical-set}, there is a partially directed path from $X$ to $Y$, denoted by $\pi_w=(X-w_1-... -w_t - Z_w\rightarrow...\rightarrow Y)$, such that $X-w_1-... -w_t-Z_w$ is chordless and $w_1\notin \textbf{M}_w$.  Since every partially directed cycle in $\mathcal{G}^*$ is an undirected cycle, none of the vertices on $\pi_w$ is a parent of $X$ in $\mathcal{G}^*$. Moreover, due to the chordless-ness,  if $w_1 \neq Z_w$,  then none of $w_2, ..., w_t, Z_w$ is adjacent to $X$ and thus none of them is in $\textbf{M}_w$. (If $w_1 = Z_w$, then it is clear that $Z_w\notin\textbf{M}_w$.) Since by Lemma \ref{lem:app:valid-local} there is a DAG in the Markov equivalence class represented by $\mathcal{G}^*$ such that $\pi_w$ is directed, $\pi_w$ is active given $pa(X, \mathcal{G}^*)\cup \textbf{M}_w$. Therefore, $X \nindep Y \mid pa(X, \mathcal{G}^*)\cup \textbf{M}$ for any $\textbf{M}\in \mathcal{M}$.   Conversely, $X \indep Y \mid pa(X, \mathcal{G}^*)\cup sib(X, \mathcal{G}^*)$ implies $X$ is not an explicit cause of $Y$, which also means $Y\notin ch(X, \mathcal{G}^*)$. Moreover, $X \nindep Y \mid pa(X, \mathcal{G}^*)\cup \textbf{M}$ for any $\textbf{M}\in \mathcal{M}$ implies $Y\notin pa(X, \mathcal{G}^*)\cup sib(X, \mathcal{G}^*)$. Therefore, $X$ and $Y$ are not adjacent. Suppose that $X$ is not implicit. Since $X$ is not an explicit cause of $Y$, $\mathbf{C}\cap ch(X, {\cal G}^*)=\emptyset$. Thus, by Theorem \ref{thm:graphical_definite_cause}, there exists an $\textbf{M}\in \cal M$ such that $\textbf{C}$ is a subset of $\textbf{M}$. (If $\textbf{C}=\emptyset$, then for any $\textbf{M}\in \cal M$, $\textbf{C}\subset\textbf{M}$.) We will show that $pa(X, \mathcal{G}^*)\cup \textbf{M}$ d-separates $X$ and $Y$. By Lemma \ref{lem:app:valid-local}, there is a DAG $\cal G$ in the Markov equivalence class represented by $\mathcal{G}^*$ such that $ch(X, {\cal G}) = sib(X, \mathcal{G}^*) \cup ch(X, \mathcal{G}^*)\setminus \textbf{M} $ and $pa(X, {\cal G}) = pa(X, \mathcal{G}^*)\cup \textbf{M}$. Let $\pi = (X, v_1, ...,v_n, Y)$ be an arbitrary path connecting $X$ and $Y$ in $\cal G$. The length of $\pi$ should be greater than 1 as $X$ and $Y$ are not adjacent. If $v_1$ is a parent of $X$ in $\cal G$, then clearly $\pi$ is blocked by $pa(X, \mathcal{G}^*)\cup \textbf{M}$, since $v_1$ is a non-collider on $\pi$ and $v_1\in pa(X, \mathcal{G}^*)\cup \textbf{M}$ by the construction of $\cal G$. Now assume that $v_1$ is a child of $X$ in $\mathcal{G}$. If $v_1\in ch(X, \mathcal{G}^*)$, then there must be a collider on $\pi$, since otherwise, the corresponding path of $\pi$ in $\mathcal{G}^*$ is a partially directed path where the node adjacent to $X$ is a child of $X$, which means $X$ is an explicit cause of $Y$ according to Lemma \ref{lem:another-char}. Clearly, the collider nearest to $X$ on $\pi$ is not an ancestor of $pa(X, \mathcal{G}^*)\cup \textbf{M}$. Thus, $\pi$ is blocked by $pa(X, \mathcal{G}^*)\cup \textbf{M}$. For the same reason, if $v_1\in sib(X, \mathcal{G}^*)\setminus \textbf{M}$ and there is a collider on $\pi$, then $\pi$ is blocked by $pa(X, \mathcal{G}^*)\cup \textbf{M}$ due to the fact that the collider nearest to $X$ on $\pi$ can not be an ancestor of $pa(X, \mathcal{G}^*)\cup \textbf{M}$. Finally, if $v_1\in sib(X, \mathcal{G}^*)\setminus \textbf{M}$ and there is no collider on $\pi$, then $\pi$ is directed in $\cal G$, and the corresponding path of $\pi$ in $\mathcal{G}^*$ is partially directed. Let $Z$ be the vertex on $\pi$ such that the subpath $\pi(X, Z)$ is undirected in  $\mathcal{G}^*$ and $Z$ is an explicit cause of $Y$. Obviously, such $Z$ exists, and $Z\neq Y$ or $X$. Since $v_1\notin \textbf{M}$, we have $v_1\notin \textbf{C}$ and thus $\pi(X, Z)$ has a chord. By \citet[Lemma~3.6]{perkovic2017interpreting}, there is a subsequence $\pi^*$ of $\pi(X, Z)$ forms a chordless undirected path from $X$ to $Z$ in $\mathcal{G}^*$. Together with Proposition \ref{prop:equivalent-critical-set}, this result indicates that there is a vertex $w$ on $\pi(X, Z)$ such that $w \in \textbf{C}$. However, by construction, $w \in pa(X, \mathcal{G})$, which makes $\pi(X, w)$ and $w\rightarrow X$ form a directed cycle in $\mathcal{G}$. Thus, $\pi$ must contain a collider. This completes the proof.
\end{proof}

\subsection{Proof of Theorem \ref{thm:algo-local}}\label{app:proof:2}
\begin{proof}
	The proof directly follows from Theorems \ref{thm:graphical_definite_cause} to  \ref{thm:implicit-cause-no-bk}, as well as Propositions \ref{prop:chain-comp} and \ref{prop:equivalent-critical-set}.
\end{proof}

\section{Additional Experimental Results} \label{app:exp}
As a supplement to Section~\ref{sec:simulation}, we present additional experimental results in this section.

\subsection{Frequencies of Different Types of Causal Relations}\label{app:app:freq}

Table~\ref{tab:feq} reports the frequencies of different types of causal relations in all 50- and 100-node randomly sampled positive weight graphs used in our simulations. For instance, the value given in the upper left cell, $0.9536$, is the ratio of   the total number of definite non-causal relations in $5,000$ randomly sampled graphs with $n=50$, $d=1.5$ and positive edge weights, to  the total number of variable pairs ($5000\times 50\times 49$). The frequencies in mixed weight graphs are similar and thus omitted. As expected, a large portion of variable pairs correspond to the definite non-causal relations. Meanwhile, the frequencies of the possible causal relations and definite causal relations are similar to each other.

\begin{table}[!h]
	\centering
	\scalebox{0.8}{%
		\begin{tabular}{@{}crrrrrr@{}}
			\toprule
			\multirow{2}{*}{$d$}    & \multicolumn{3}{c}{$n=50$}                                                                            & \multicolumn{3}{c}{$n=100$}                                                                           \\ \cmidrule(l){2-7}
			& \multicolumn{1}{c}{Def. non-cause} & \multicolumn{1}{c}{Poss. cause} & \multicolumn{1}{c}{Def. cause} & \multicolumn{1}{c}{Def. non-cause} & \multicolumn{1}{c}{Poss. cause} & \multicolumn{1}{c}{Def. cause} \\ \midrule
			1.5                     & 0.9536                             & 0.0314                          & 0.0150                         & 0.9740                             & 0.0192                          & 0.0068                         \\
			2.0                     & 0.9328                             & 0.0402                          & 0.0270                         & 0.9636                             & 0.0242                          & 0.0122                         \\
			2.5                     & 0.9086                             & 0.0530                          & 0.0384                         & 0.9492                             & 0.0292                          & 0.0216                         \\
			3.0                     & 0.8852                             & 0.0596                          & 0.0552                         & 0.9298                             & 0.0372                          & 0.0330                         \\
			3.5                     & 0.8470                             & 0.0646                          & 0.0884                         & 0.9086                             & 0.0436                          & 0.0478                         \\
			4.0 & 0.8274                             & 0.0640                          & 0.1086                         & 0.8896                             & 0.0454                          & 0.0650                         \\ \bottomrule
		\end{tabular}%
	}
	\caption{The frequencies of different types of causal relations in all 50- and 100-node randomly sampled positive weight graphs used in our simulations.}
	\label{tab:feq}
\end{table}

\subsection{Mixed Edge Weights}\label{app:app:mix}

Figure~\ref{fig:true:mixed} shows the results on 100-node graphs with mixed edge weights. The true graph structures are provided in the experiments. It can be seen that the results are similar to those presented in the main text, though the Kappa coefficients of different methods drop in all cases. Nevertheless, the Kappa coefficients of the local ITC drop less than those of the CE-based methods. Thus, the differences between the local ITC and the CE-based methods increase. Note that, allowing mixed edge weights does not have much influence on the computational time, as the graph structures are generated according to the same model. As shown in Figure~\ref{fig:true:mixed}, the local ITC is more efficient, and the CE-based methods using non-ancestral relations are less efficient.



\begin{figure}[t!]
	\centering
	\subfigure{
		\begin{minipage}[t]{0.9\textwidth}
			\centering
			\includegraphics[width=\textwidth]{figs/true_graph_res/legend.pdf}
		\end{minipage}%
	}%
	\vspace{-1em}
	\addtocounter{subfigure}{-1}
	
	\subfigure[Kappa, $N_{\rm effect}=50$ \label{fig:true:kappa_100_50_mix}]{
		\begin{minipage}[t]{0.3\textwidth}
			\centering
			\includegraphics[width=\textwidth]{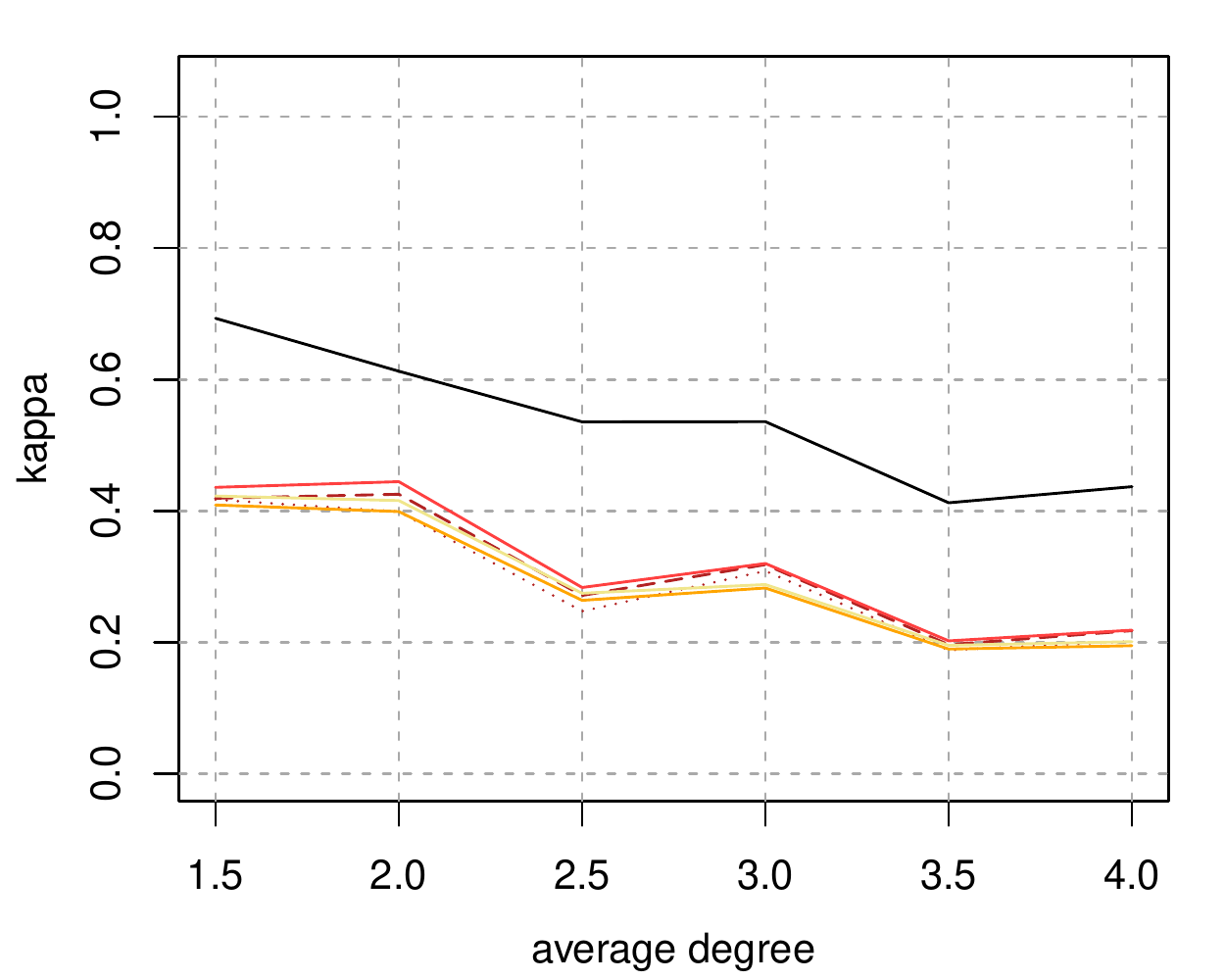}
		\end{minipage}%
	}%
	\hspace{0.01\textwidth}
	\subfigure[Kappa, $N_{\rm effect}=100$  \label{fig:true:kappa_100_100_mix}]{
		\begin{minipage}[t]{0.3\textwidth}
			\centering
			\includegraphics[width=\textwidth]{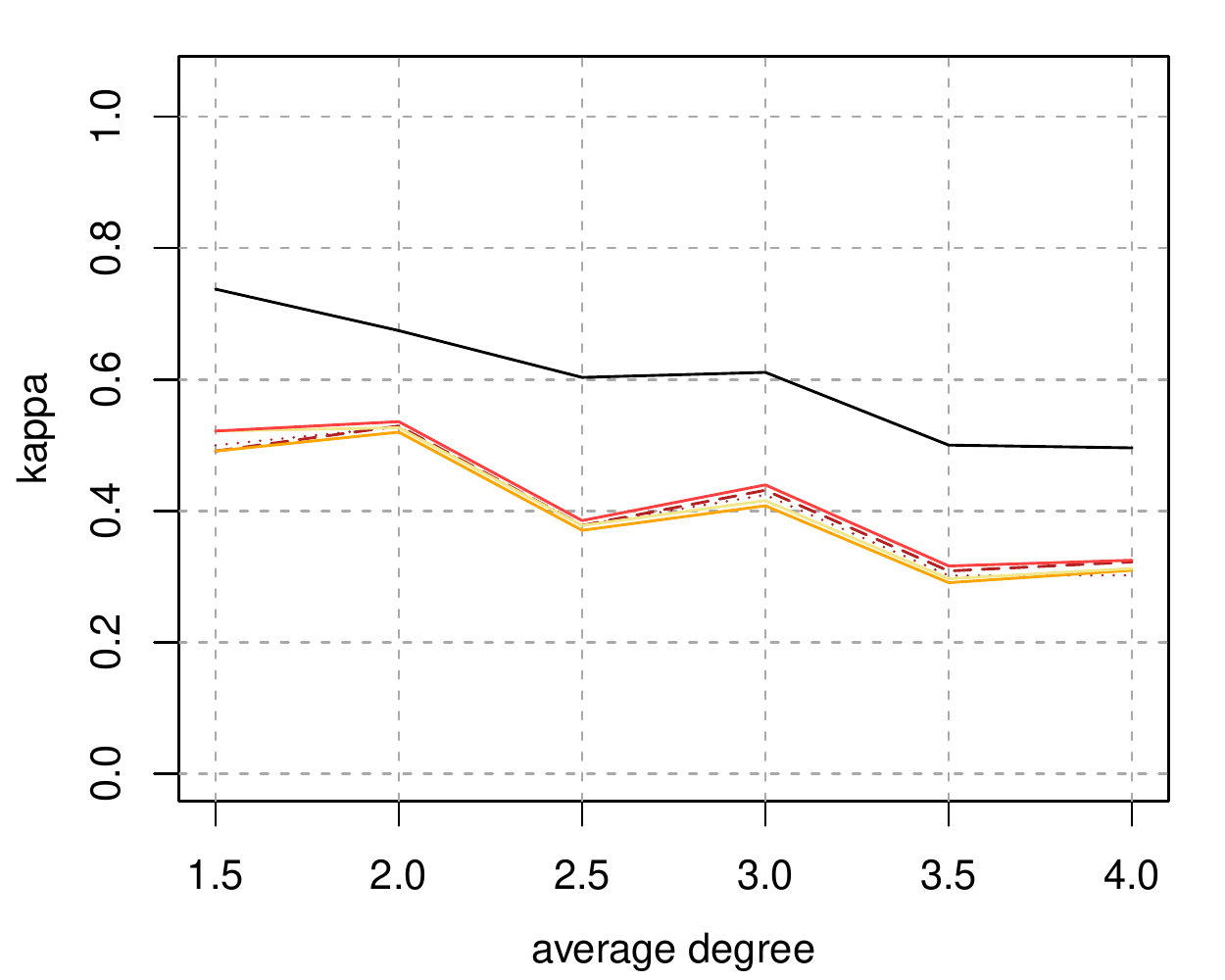}
		\end{minipage}%
	}%
	\hspace{0.01\textwidth}
	\subfigure[Kappa, $N_{\rm effect}=150$ \label{fig:true:kappa_100_150_mix}]{
		\begin{minipage}[t]{0.3\textwidth}
			\centering
			\includegraphics[width=\textwidth]{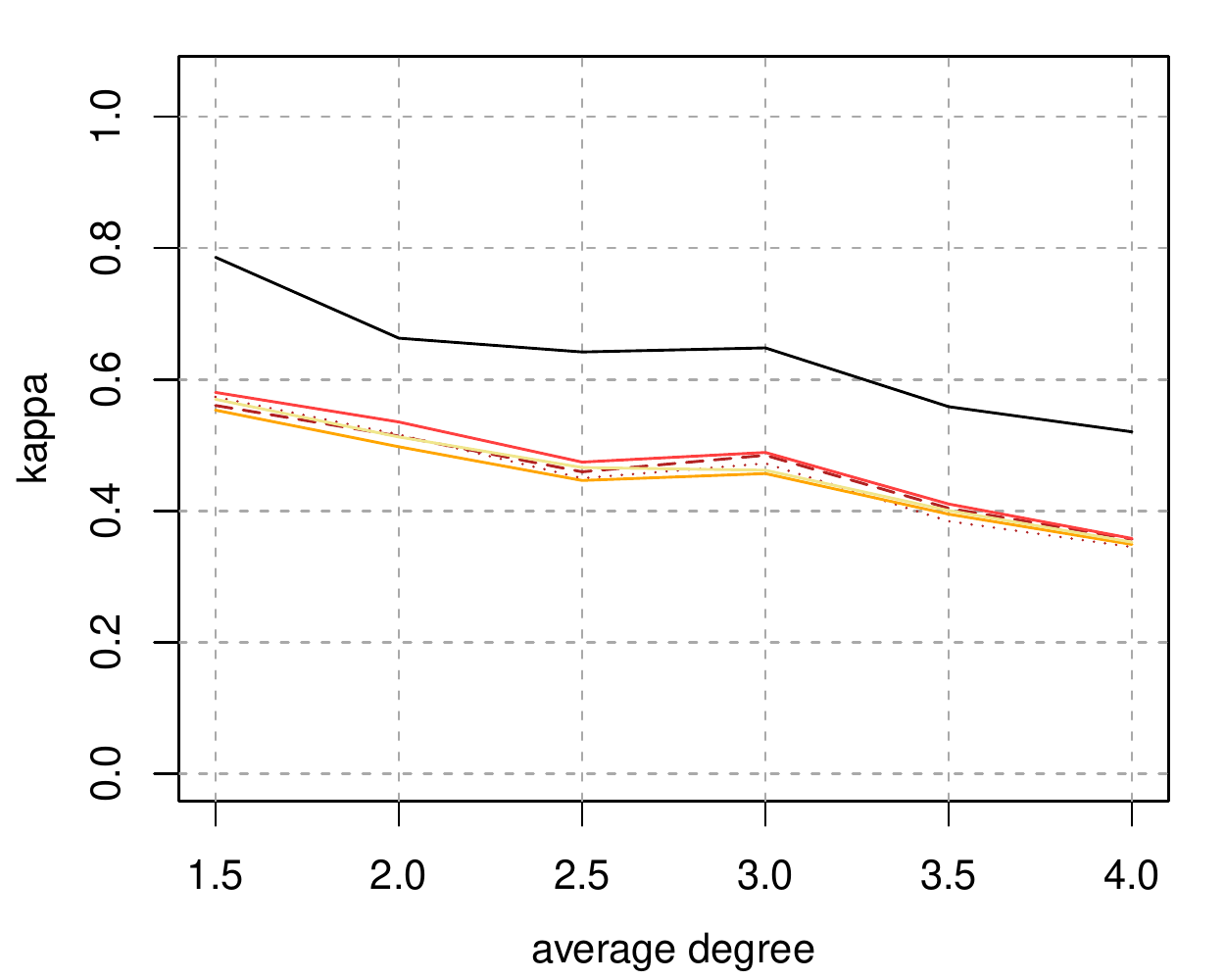}
		\end{minipage}%
	}%
	
	\subfigure[time, $N_{\rm effect}=50$ \label{fig:true:time_100_50_mix}]{
		\begin{minipage}[t]{0.3\textwidth}
			\centering
			\includegraphics[width=\textwidth]{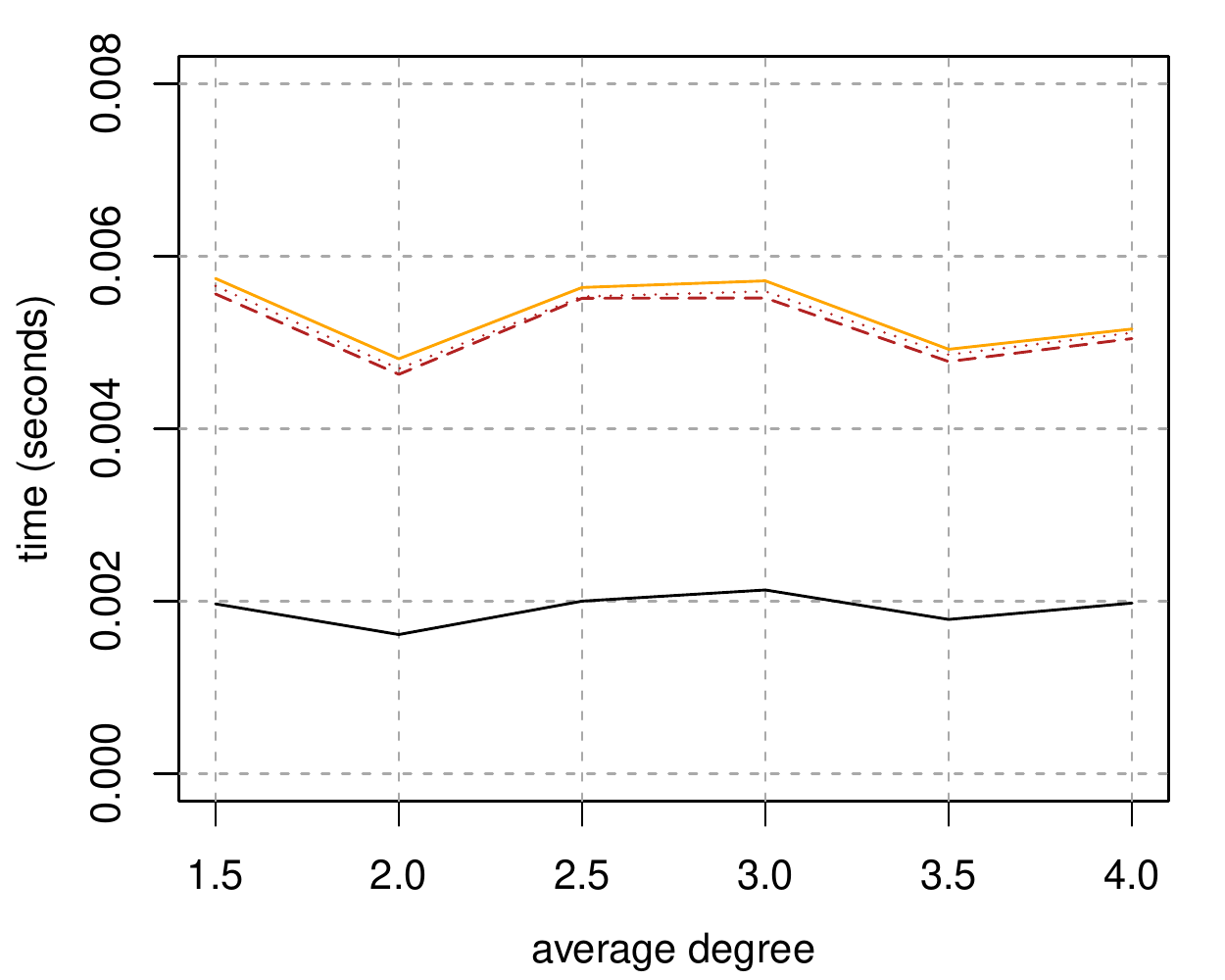}
		\end{minipage}%
	}%
	\hspace{0.01\textwidth}
	\subfigure[time, $N_{\rm effect}=100$ \label{fig:true:time_100_100_mix}]{
		\begin{minipage}[t]{0.3\textwidth}
			\centering
			\includegraphics[width=\textwidth]{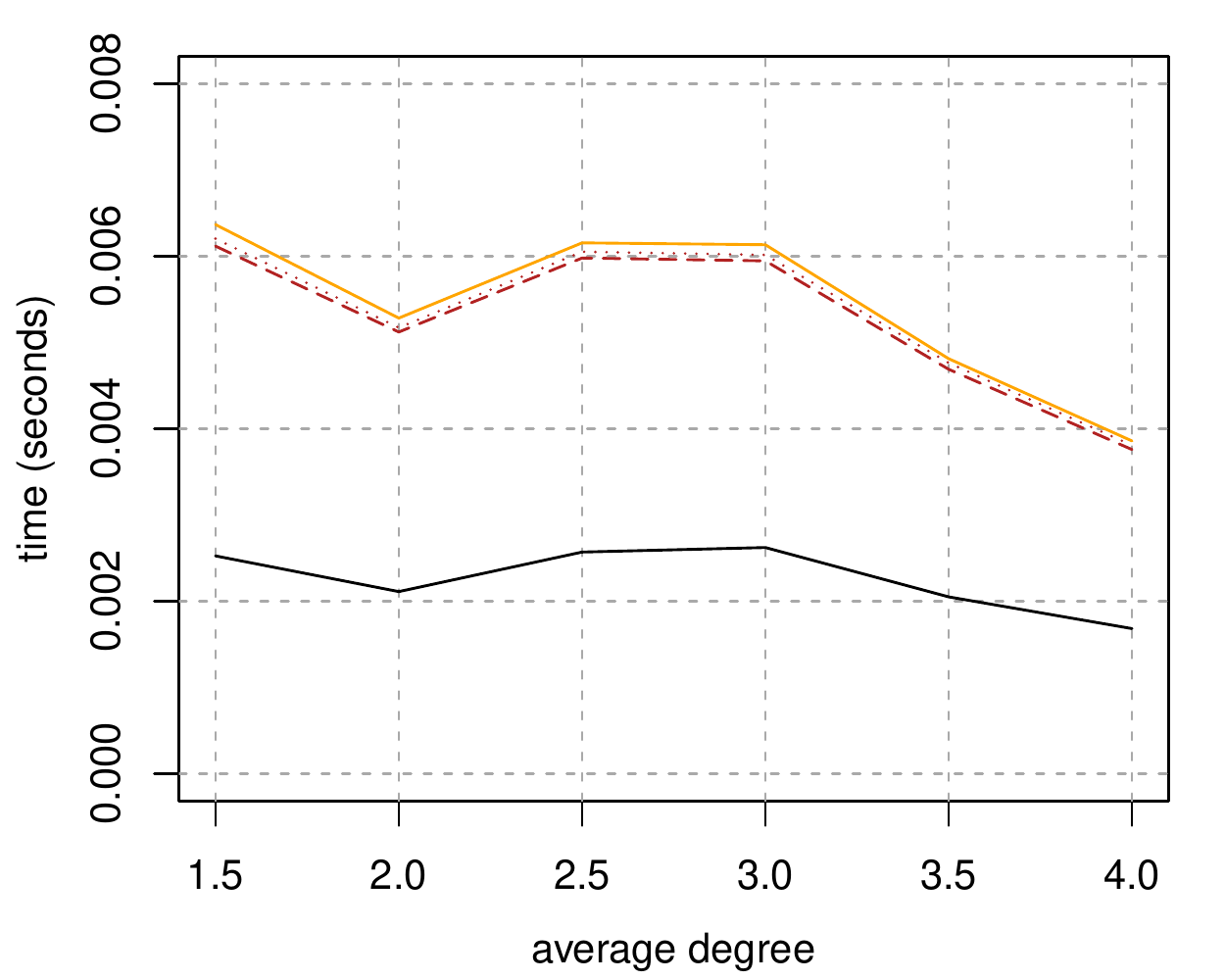}
		\end{minipage}%
	}%
	\hspace{0.01\textwidth}
	\subfigure[time, $N_{\rm effect}=150$ \label{fig:true:time_100_150_mix}]{
		\begin{minipage}[t]{0.3\textwidth}
			\centering
			\includegraphics[width=\textwidth]{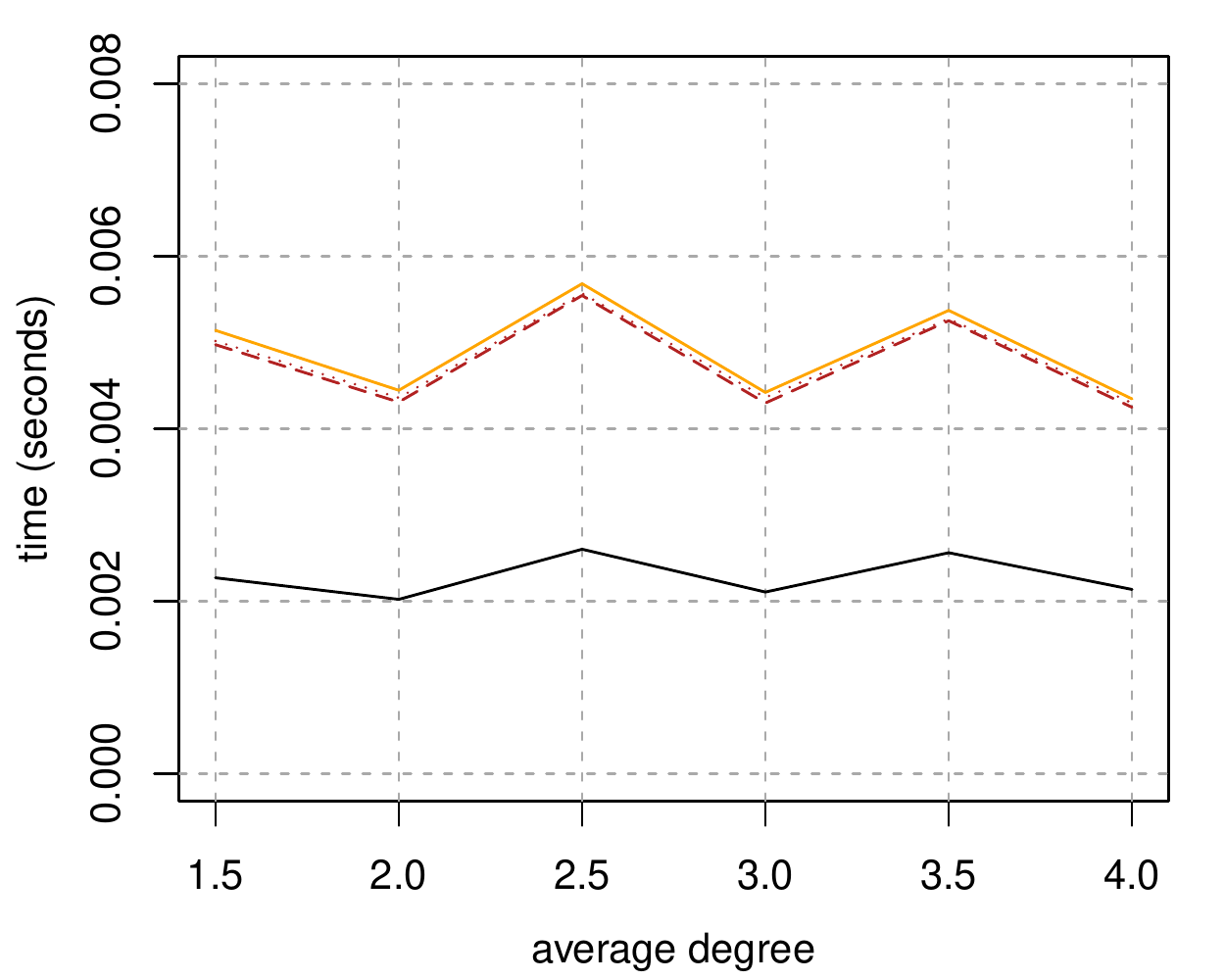}
		\end{minipage}%
	}%
	
	\caption{The Kappa coefficients and the CPU time (in seconds) of different methods on 100-node graphs with mixed edge weights. The true graph structures are provided. The CPU time of ``IDA + an + test (all)" and ``IDA + an + test (min/max)" is not shown, as they are more than 50-100 times slower than the other methods.}
	\label{fig:true:mixed}
\end{figure}

The existence of mixed edge weights generally increases the chance of  violations of the faithfulness assumption. Thus, when mixed edge weights are allowed, the discrepancy between the learned and the true graph structures  could be very large, and the performance of all methods declines. In our experiments, the Kappa coefficients of all methods drop down to $0.2\sim 0.4$. Nevertheless, the local ITC is still better than the other methods in most cases.

\subsection{Detailed TPRs and FPRs}\label{app:app:detailed}

In this section, we report the detailed TPRs and FPRs based on 100-node graphs  with  the average degree $d = 2$ and with positive or mixed edge weights.
  When the true graph structures are provided, the local ITC achieves the highest TPR and the lowest FPR in most cases, as shown in Table~\ref{tab:true}.    Table~\ref{tab:positive} and Table~\ref{tab:mixed} show the detailed TPRs and FPRs of different methods for identifying each type of causal relation based on the positive    and  mixed weight graphs, respectively. Note that, since all standard deviations of the reported TPRs and FPRs are below $0.002$, we only report the mean values. As one can see from the tables, the local ITC does not always outperform others. Nevertheless, the performance of the local ITC is more balanced in terms of both TPR and FPR.   On the other hand, the performance of the global ITC combined with GES is not as well as the other methods. We found that this is because in our simulations the CPDAG estimated by GES is relatively inaccurate.

\begin{table}[!t]
	\centering
	\resizebox{\textwidth}{!}{%
		\begin{tabular}{@{}clcccccc@{}}
			\toprule
			\multirow{2}{*}{} & \multicolumn{1}{c}{\multirow{2}{*}{Method}} & \multicolumn{2}{c}{Def. non-cause} & \multicolumn{2}{c}{Poss. cause} & \multicolumn{2}{c}{Def. cause} \\ \cmidrule(l){3-8}
			&                         & TPR            & FPR           & TPR              & FPR              & TPR              & FPR              \\ \midrule
			\multirow{6}{*}{}
			& true graph + local ITC               & \textbf{0.9994}            &\textbf{0.1209}           & \textbf{0.8512}              & \textbf{0.0014}              & \textbf{0.8361}              & \textbf{0.0004}              \\
			& true graph + IDA + test (all)        & 0.9985            & \textbf{0.1484}           & \textbf{0.8430}              & 0.0020              & \textbf{0.7705}              & 0.0006              \\
			& true graph + IDA + multi (all)         & 0.9992            & 0.1703           & 0.8099              & 0.0016              & 0.7541              & 0.0006              \\
			& true graph + IDA + test (min/max)            & 0.9985            & 0.1923           & 0.7934              & 0.0016              & \textbf{0.7705}              & 0.0006              \\
			& true graph + IDA + an + test (all)        & \textbf{1.0000}            & \textbf{0.1484}           & \textbf{0.8430}              & \textbf{0.0012}              & \textbf{0.7705}              & \textbf{0.0000}              \\
			& true graph + IDA + an + test (min/max)           & \textbf{1.0000}            & 0.1868           & 0.8017              & \textbf{0.0008}              & 0.7705              & \textbf{0.0000}              \\  \bottomrule
		\end{tabular}%
	}
	\caption{Some detailed TPRs and FPRs on 100-node graphs with positive edge weights. The graph structures are given.}
	\label{tab:true}
\end{table}

\begin{table}[!t]
	\centering
	\resizebox{\textwidth}{!}{%
		\begin{tabular}{@{}clcccccc@{}}
			\toprule
			\multirow{2}{*}{} & \multicolumn{1}{c}{\multirow{2}{*}{Method}} & \multicolumn{2}{c}{Def. non-cause} & \multicolumn{2}{c}{Poss. cause} & \multicolumn{2}{c}{Def. cause} \\ \cmidrule(l){3-8}
			&                         & TPR            & FPR           & TPR              & FPR              & TPR              & FPR              \\ \midrule
			\multirow{12}{*}{}
			& local + local ITC               & 0.9792            & \textbf{0.2527}           & \textbf{0.6446}              & 0.0180              & \textbf{0.5574}              & 0.0073              \\
			& local + IDA + test (all)        & 0.9765            & \textbf{0.2692}           & \textbf{0.6446}              & 0.0213              & 0.4918              & 0.0069              \\
			& local + IDA + multi (all)         & 0.9786            & \textbf{0.2802}           & \textbf{0.6364}              & 0.0193              & 0.4918              & 0.0067              \\
			& PC + global ITC           & 0.9927            & 0.5879           & 0.4215              & 0.0092              & 0.1803              & \textbf{0.0006}              \\
			& PC + IDA + test (all)        & 0.9653            & 0.2802           & 0.5950              & 0.0281              & 0.4262              & 0.0128              \\
			& PC + IDA + test (min/max)          & 0.9685            & 0.3462           & 0.5289              & 0.0236              & 0.4262              & 0.0134              \\
			& PC + IDA + an + test (all)    & \textbf{0.9936}            & 0.6099           & 0.4215              & 0.0086              & 0.1148              & \textbf{0.0004}              \\
			& PC + IDA + an + test (min/max)    & \textbf{0.9936}            & 0.6264           & 0.3967              & 0.0086              & 0.1148              & \textbf{0.0004}              \\
			& PCS + global ITC    & 0.9929            & 0.6264           & 0.3719              & \textbf{0.0082}              & 0.1475              & 0.0016              \\
			& PCS + IDA + test (all)   & 0.9647            & 0.3132           & 0.5289              & 0.0268              & 0.4262              & 0.0150              \\
			& GES + global ITC    & 0.6467            & 0.2912           & 0.0248              & \textbf{0.0066}              & \textbf{0.9508}              & 0.3519              \\
			& GES + IDA + test (all)    & \textbf{0.9979}            & 0.6264           & 0.0331              & \textbf{0.0004}              & \textbf{0.6066}              & 0.0071              \\ \bottomrule
		\end{tabular}%
	}
	\caption{Some detailed TPRs and FPRs on 100-node graphs with positive edge weights. The graph structures are learned from data.}
	\label{tab:positive}
\end{table}

\begin{table}[!t]
	\centering
	\resizebox{\textwidth}{!}{%
		\begin{tabular}{@{}clcccccc@{}}
			\toprule
			\multirow{2}{*}{} & \multicolumn{1}{c}{\multirow{2}{*}{Method}} & \multicolumn{2}{c}{Def. non-cause} & \multicolumn{2}{c}{Poss. cause} & \multicolumn{2}{c}{Def. cause} \\ \cmidrule(l){3-8}
			&                         & TPR            & FPR           & TPR              & FPR              & TPR              & FPR              \\ \midrule
			\multirow{12}{*}{}           & local + local ITC               & 0.9925            & \textbf{0.6099}           & \textbf{0.3448}              & 0.0068              & \textbf{0.2121}              & 0.0041              \\
			& local + IDA + test (all)        & 0.9911            & 0.6154           & \textbf{0.3448}              & 0.0078              & \textbf{0.2121}              & 0.0043              \\
			& local + IDA + multi (all)         & 0.9917            & 0.6264           & \textbf{0.3362}              & 0.0072              & 0.1970              & 0.0043              \\
			& PC + global ITC           & \textbf{0.9973}            & 0.7912           & 0.2155              & 0.0035              & 0.0606              & \textbf{0.0010}              \\
			& PC + IDA + test (all)        & 0.9832            & \textbf{0.6044}           & 0.3190              & 0.0147              & 0.1970              & 0.0063              \\
			& PC + IDA + test (min/max)          & 0.9846            & 0.6374           & 0.2931              & 0.0127              & 0.1970              & 0.0063              \\
			& PC + IDA + an + test (all)    & \textbf{0.9979}            & 0.7912           & 0.2241              & \textbf{0.0029}              & 0.0606              & \textbf{0.0008}              \\
			& PC + IDA + an + test (min/max)    & \textbf{0.9981}            & 0.8022           & 0.2069              & \textbf{0.0027}              & 0.0606              & \textbf{0.0008}              \\
			& PCS + global ITC    & \textbf{0.9973}            & 0.7967           & 0.2069              & 0.0033              & 0.0455              & 0.0014              \\
			& PCS + IDA + test (all)   & 0.9832            & \textbf{0.6099}           & 0.3017              & 0.0147              & 0.1818              & 0.0067              \\
			& GES + global ITC    & 0.6743            & \textbf{0.4121}           & 0.0259              & 0.0080              & \textbf{0.6667}              & 0.3223              \\
			& GES + IDA + test (all)    & 0.9965            & 0.7857           & \textbf{0.0000}              & 0.0000              & \textbf{0.2727}              & 0.0077               \\ \bottomrule
		\end{tabular}%
	}
	\caption{Some detailed TPRs and FPRs on 100-node graphs with mixed edge weights. The graph structures are learned from data.}
	\label{tab:mixed}
\end{table}

Compare Table~\ref{tab:mixed} to Table~\ref{tab:positive}, it can be seen that the TPRs of different methods for learning possible and definite causes decrease significantly, while the corresponding FPRs are stable. On the other hand, the FPRs for learning definite non-causes increase, but the corresponding TPRs are stable. These results suggest that, when the mixed edge weights are allowed, many possible and definite causes are wrongly identified as definite non-causes. This is probably due to the violation of the faithfulness assumption, since many causal paths are missing in the learned graph as two causal paths may cancel each other out, and the total causal effects and the  dependence of between a cause and a effect  may also vanish because of the canceling paths.


\subsection{Optimal IDA}\label{app:app:opt}

\begin{figure}[t!]
	\centering
	\subfigure{
		\begin{minipage}[t]{1\textwidth}
			\centering
			\includegraphics[width=1\textwidth]{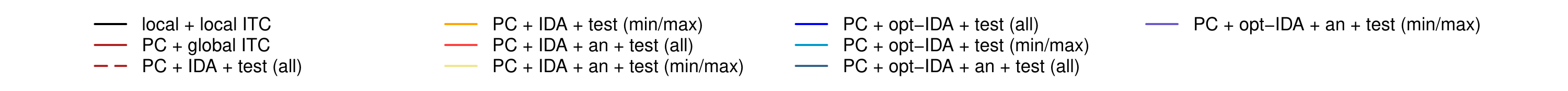}
		\end{minipage}%
	}%
	\vspace{-1.2em}
	\addtocounter{subfigure}{-1}
	
	\subfigure[$n=50$, $N=(100,100)$ and postive edge weights \label{fig:opt:kappa_50_100_100}]{
		\begin{minipage}[t]{0.3\textwidth}
			\centering
			\includegraphics[width=\textwidth]{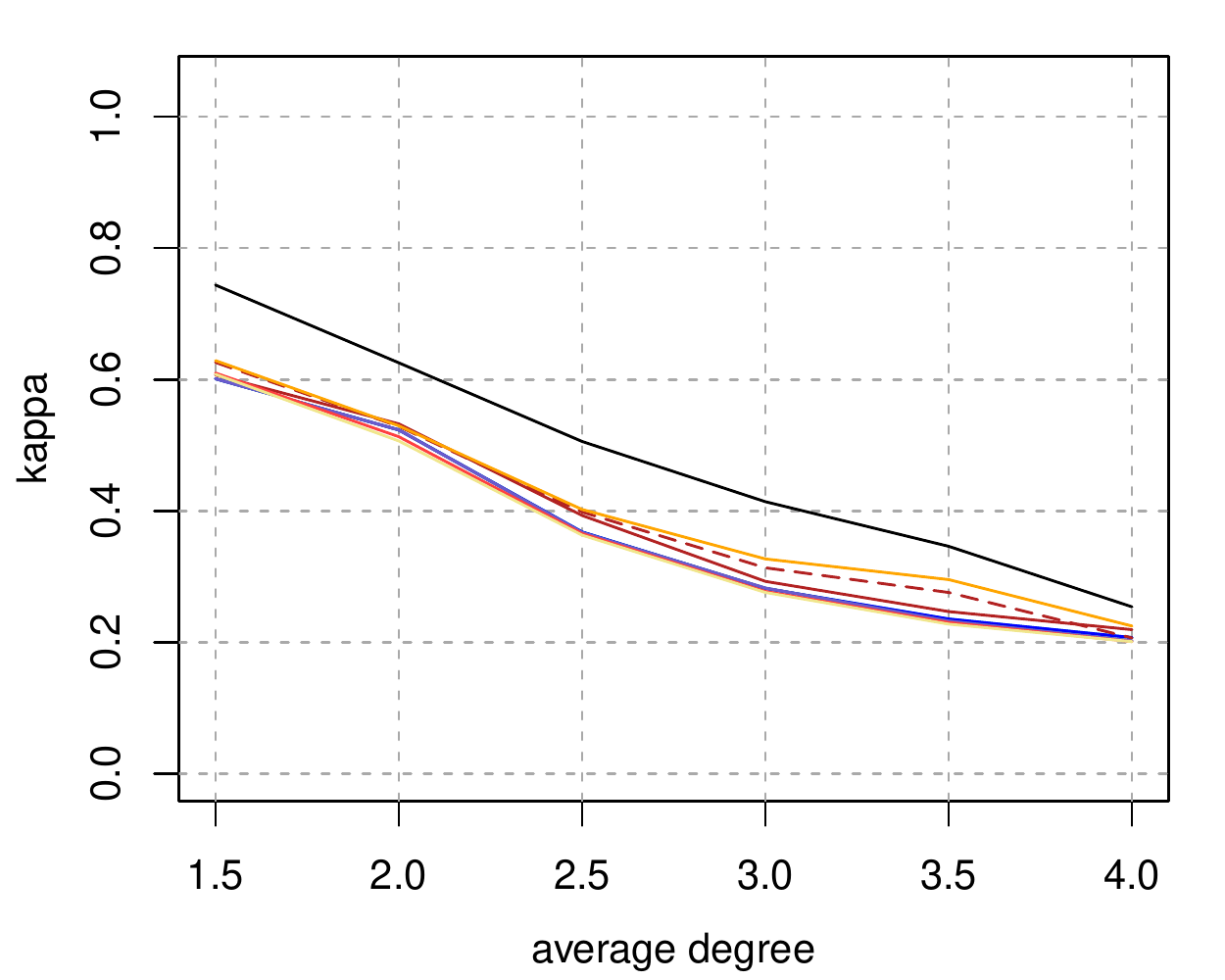}
		\end{minipage}%
	}%
	\hspace{0.01\textwidth}
	\subfigure[$n=50$, $N=(200,100)$  and positive edge weights \label{fig:opt:kappa_50_200_100}]{
		\begin{minipage}[t]{0.3\textwidth}
			\centering
			\includegraphics[width=\textwidth]{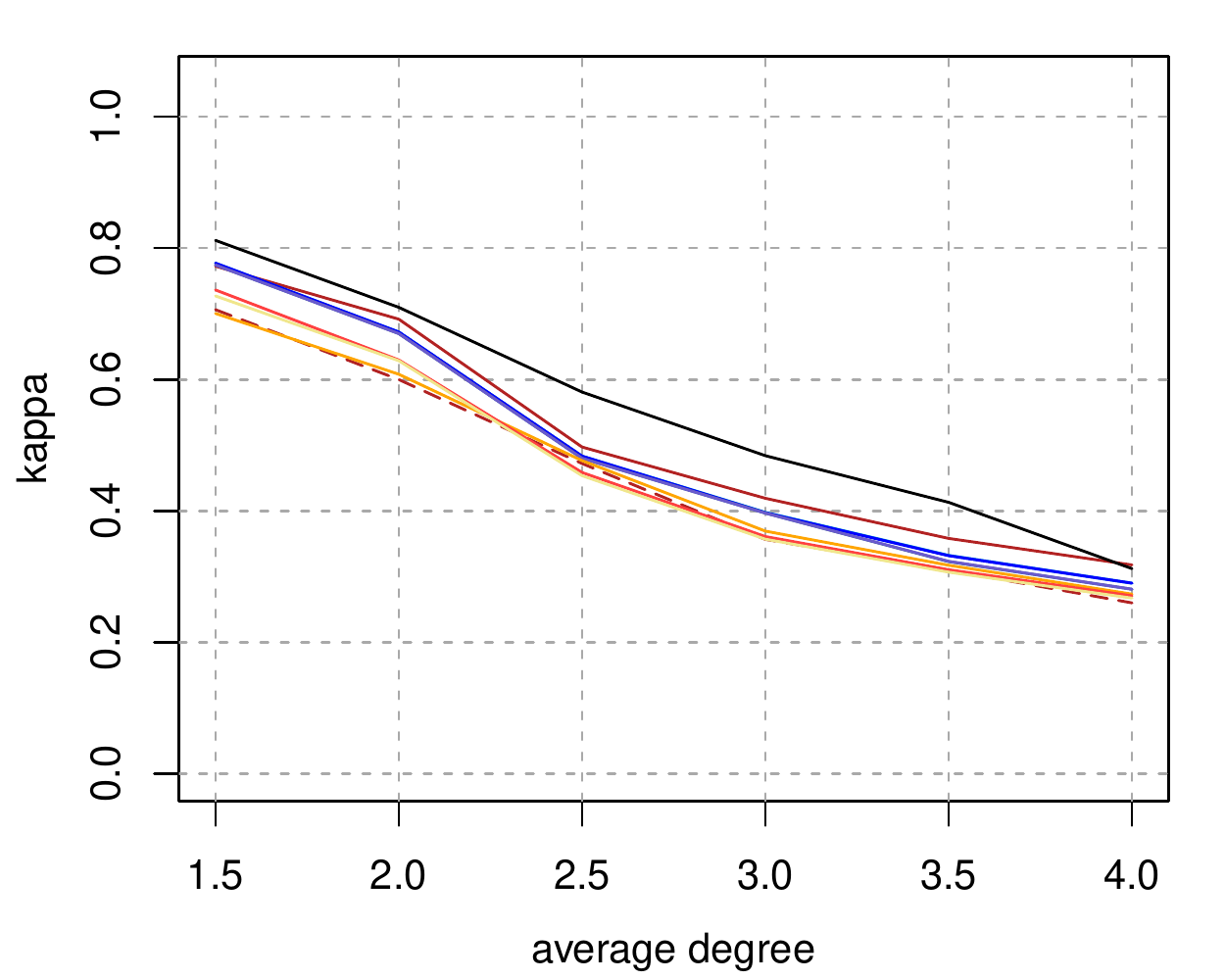}
		\end{minipage}%
	}%
	\hspace{0.01\textwidth}
	\subfigure[$n=50$, $N=(500,150)$ and postive edge weights \label{fig:opt:kappa_50_500_150}]{
		\begin{minipage}[t]{0.3\textwidth}
			\centering
			\includegraphics[width=\textwidth]{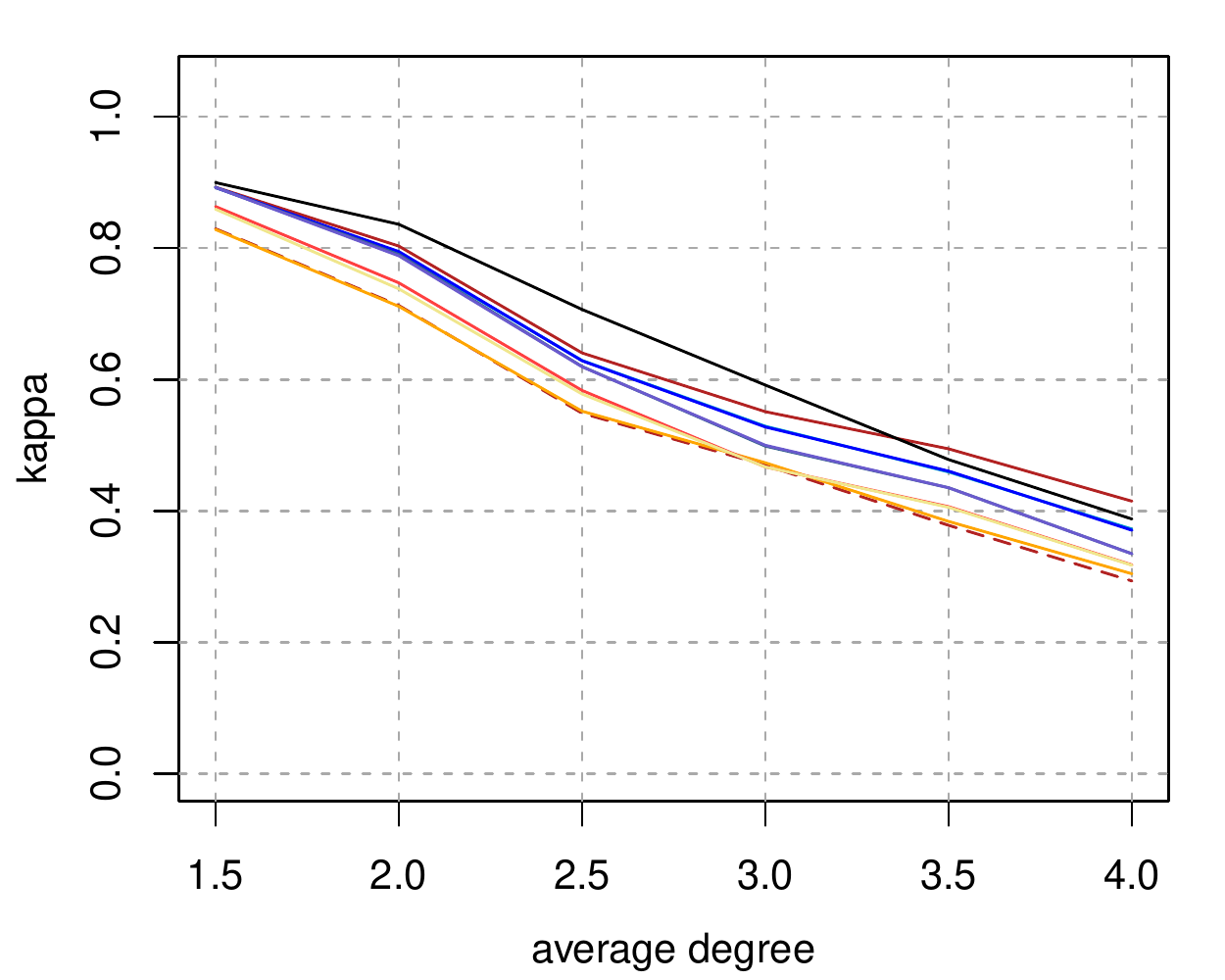}
		\end{minipage}%
	}%
	
	\subfigure[$n=100, N=(100,100)$ and postive edge weights \label{fig:opt:kappa_100_50}]{
		\begin{minipage}[t]{0.3\textwidth}
			\centering
			\includegraphics[width=\textwidth]{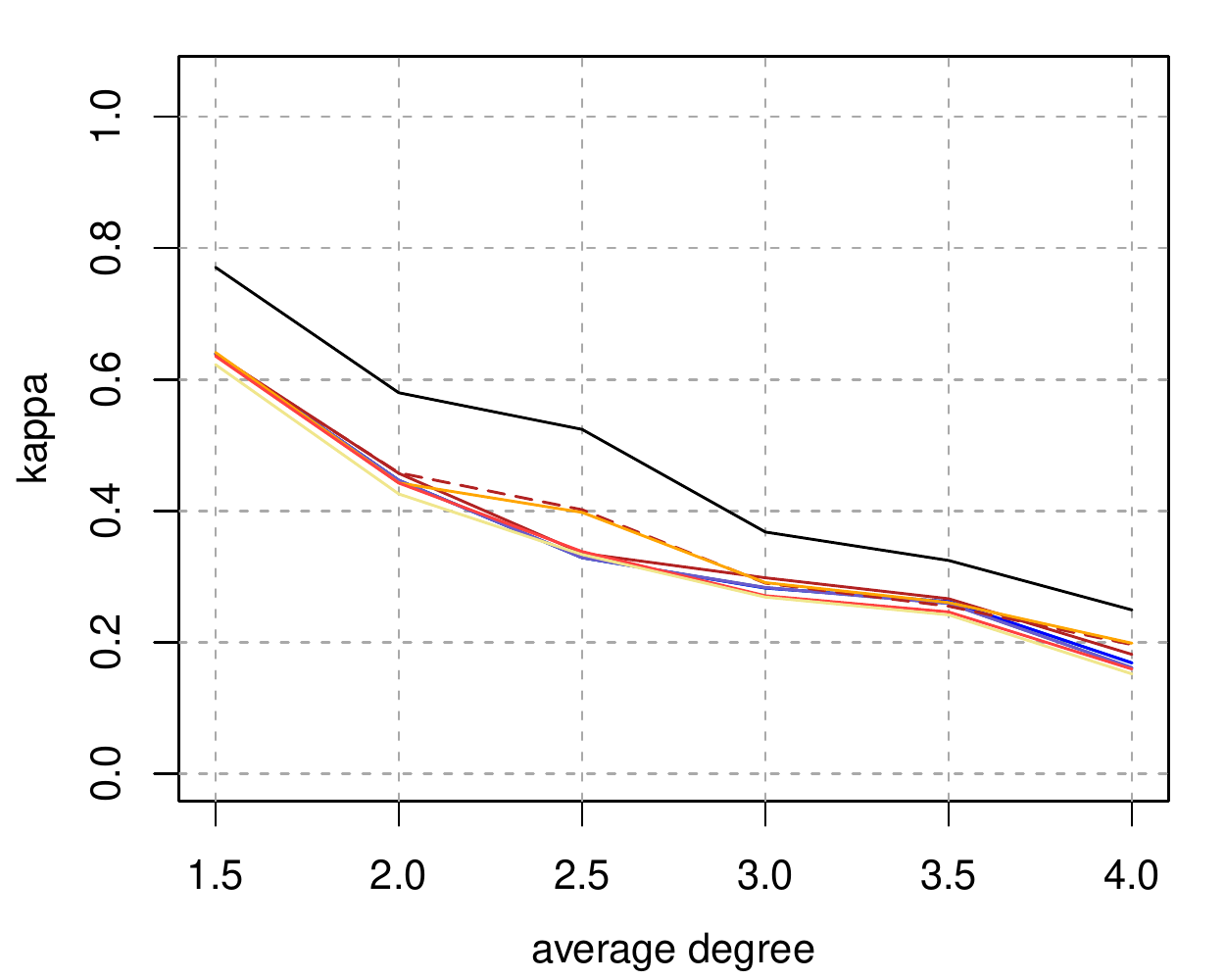}
		\end{minipage}%
	}%
	\hspace{0.01\textwidth}
	\subfigure[$n=100, N=(200,100)$ and postive edge weights \label{fig:opt:kappa_100_100}]{
		\begin{minipage}[t]{0.3\textwidth}
			\centering
			\includegraphics[width=\textwidth]{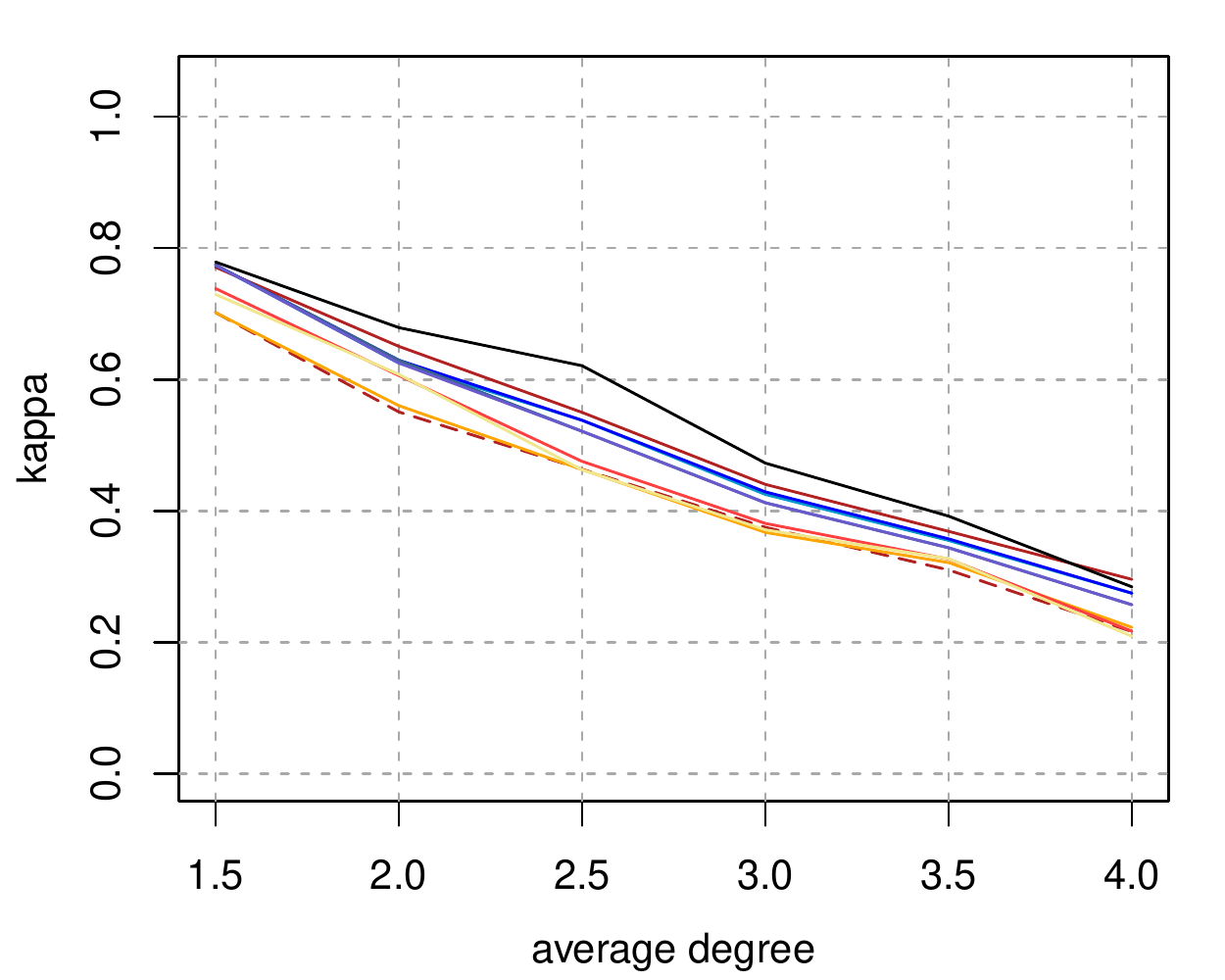}
		\end{minipage}%
	}%
	\hspace{0.01\textwidth}
	\subfigure[$n = 100, N=(500,150)$ and postive edge weights \label{fig:opt:kappa_100_150}]{
		\begin{minipage}[t]{0.3\textwidth}
			\centering
			\includegraphics[width=\textwidth]{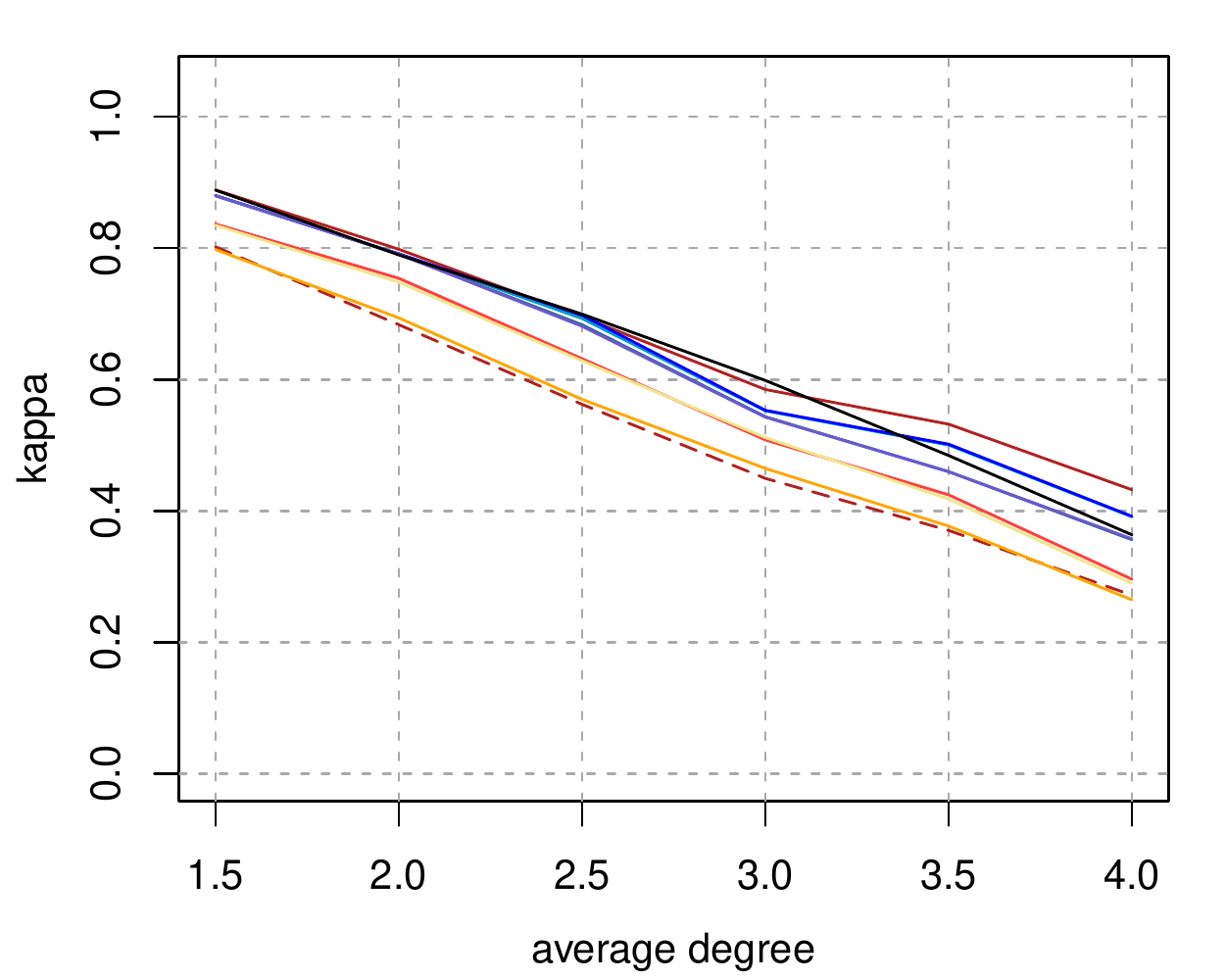}
		\end{minipage}%
	}%
	
	\subfigure[$n=100, N=(100,100)$ and mixed edge weights \label{fig:opt:kappa_100_50:mix}]{
		\begin{minipage}[t]{0.3\textwidth}
			\centering
			\includegraphics[width=\textwidth]{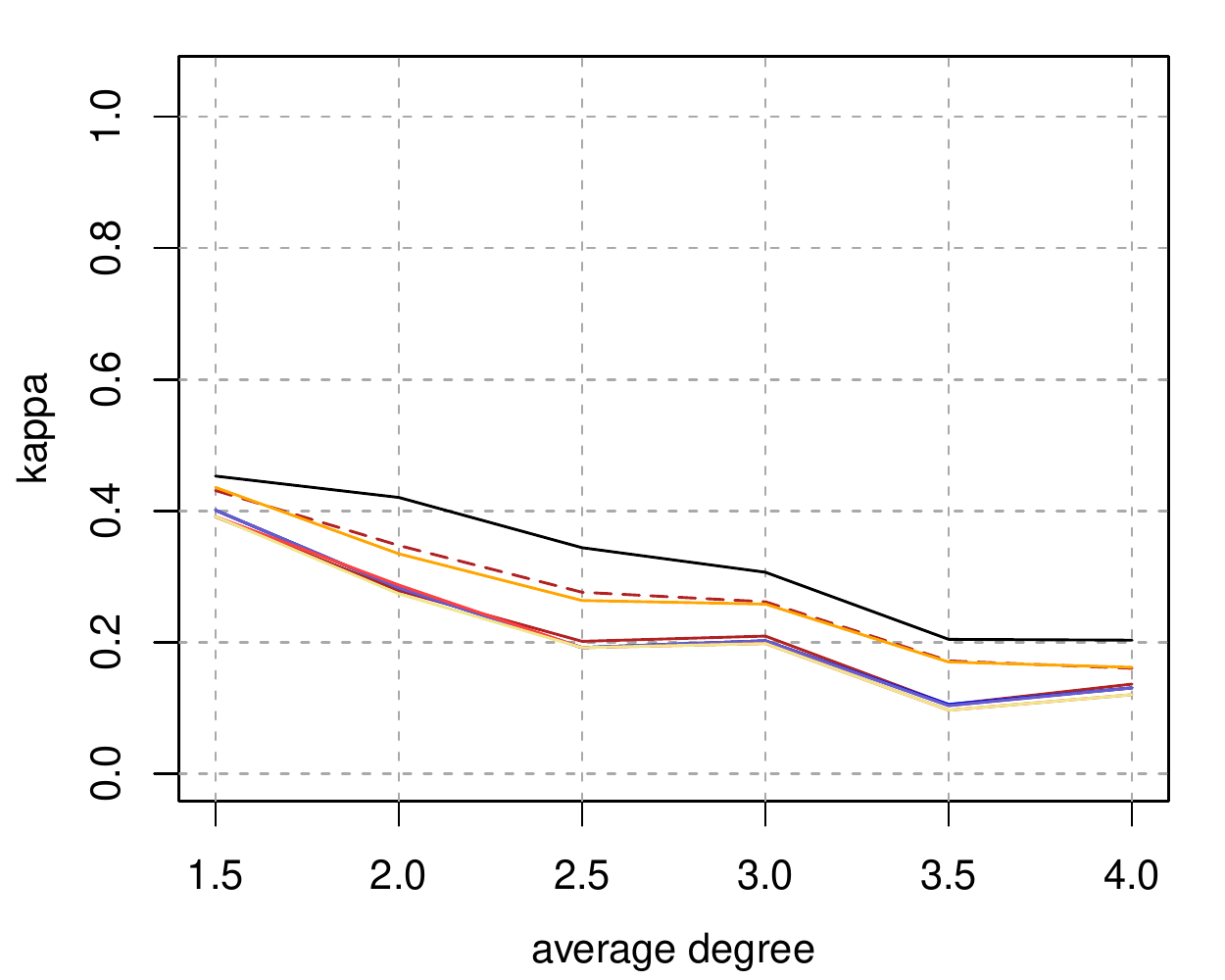}
		\end{minipage}%
	}%
	\hspace{0.01\textwidth}
	\subfigure[$n=100, N=(200,100)$ and mixed edge weights \label{fig:opt:kappa_100_100:mix}]{
		\begin{minipage}[t]{0.3\textwidth}
			\centering
			\includegraphics[width=\textwidth]{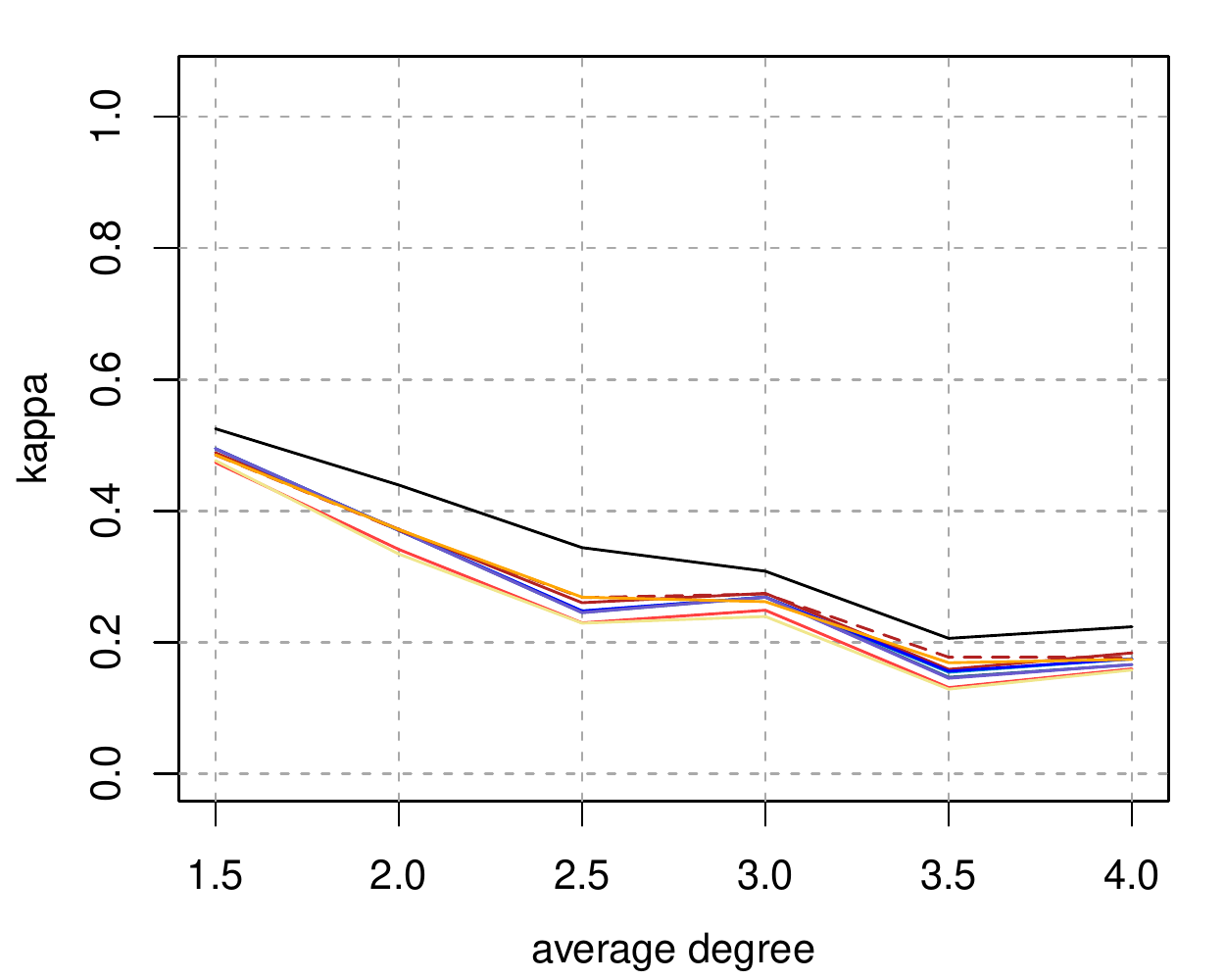}
		\end{minipage}%
	}%
	\hspace{0.01\textwidth}
	\subfigure[$n = 100, N=(500,150)$ and mixed edge weights \label{fig:opt:kappa_100_150:mix}]{
		\begin{minipage}[t]{0.3\textwidth}
			\centering
			\includegraphics[width=\textwidth]{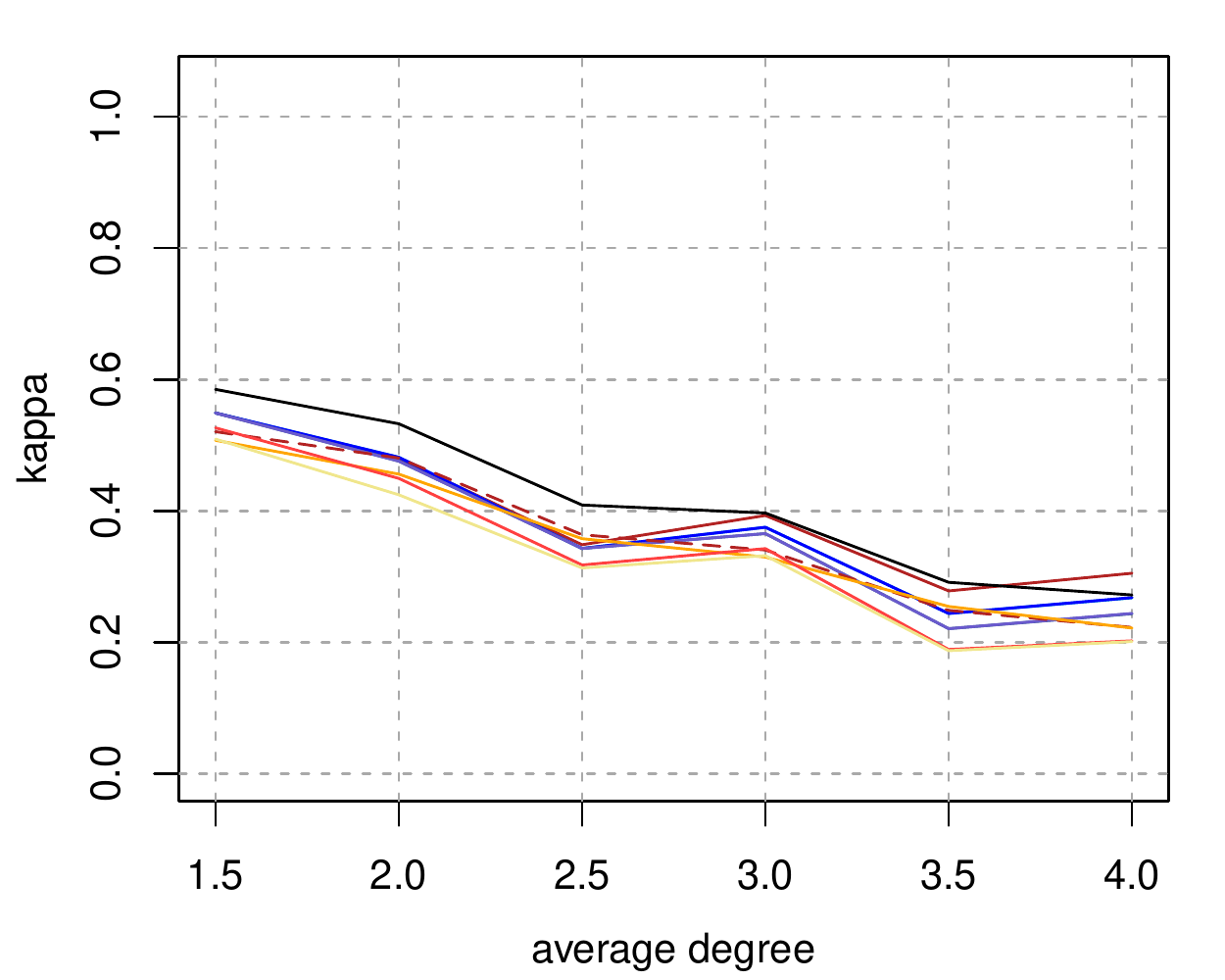}
		\end{minipage}%
	}%
	\caption{The Kappa coefficients of the CE-based methods using the optimal IDA. The graph structures are learned from data. $N=(N_{\rm graph}, N_{\rm effect})$ denotes the sample sizes for learning graphs and estimating causal effects.}
	\label{fig:opt:kappa}
\end{figure}

In this section, we study the CE-based methods with the optimal IDA instead of the original IDA, and compare them to the local ITC, global ITC and the other CE-based methods.

Figure~\ref{fig:opt:kappa} shows the Kappa coefficients of the local ITC (combined with the variant of MB-by-MB), the global ITC (combined with PC), four CE-based methods with the original IDA (combined with PC), and four CE-based methods with the optimal IDA (combined with PC). Note that, the optimal IDA is a semi-local algorithm, which uses Meek's rules and thus requires an entire CPDAG as input. Therefore, unlike the original IDA, the optimal IDA cannot be combined with the variant of MB-by-MB. In most cases, the local ITC is the best, and  the global ITC generally has better performance than the CE-based methods using the optimal IDA. Consider the CE-based methods, when the edge weights are all positive and the sample size is large, the CE-based methods with the optimal IDA is better than that with the original IDA. However, when the sample size is small or the edge weights are mixed, these methods have similar performance.

Comparing four CE-based methods with the optimal IDA, one can see that testing all estimated effects are better than   only testing the minimum and maximum absolute estimated effects. However, when using the optimal IDA, utilizing non-ancestral relations no longer has significant improvement on the results. This is probably due to the fact that the non-ancestral relations have already been implicitly considered when finding the optimal adjustment set.

\subsection{Hybrid Method}\label{app:app:hybrid}

 \begin{figure}[t!]
	\centering
	\subfigure{
		\begin{minipage}[t]{1\textwidth}
			\centering
			\includegraphics[width=1\textwidth]{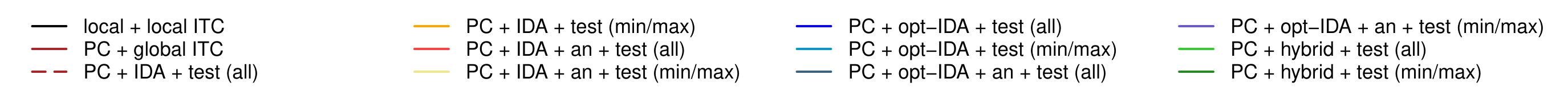}
		\end{minipage}%
	}%
	\vspace{-1.2em}
	\addtocounter{subfigure}{-1}
	
	\subfigure[Kappa, $N=(100,100)$]{
		\begin{minipage}[t]{0.3\textwidth}
			\centering
			\includegraphics[width=\textwidth]{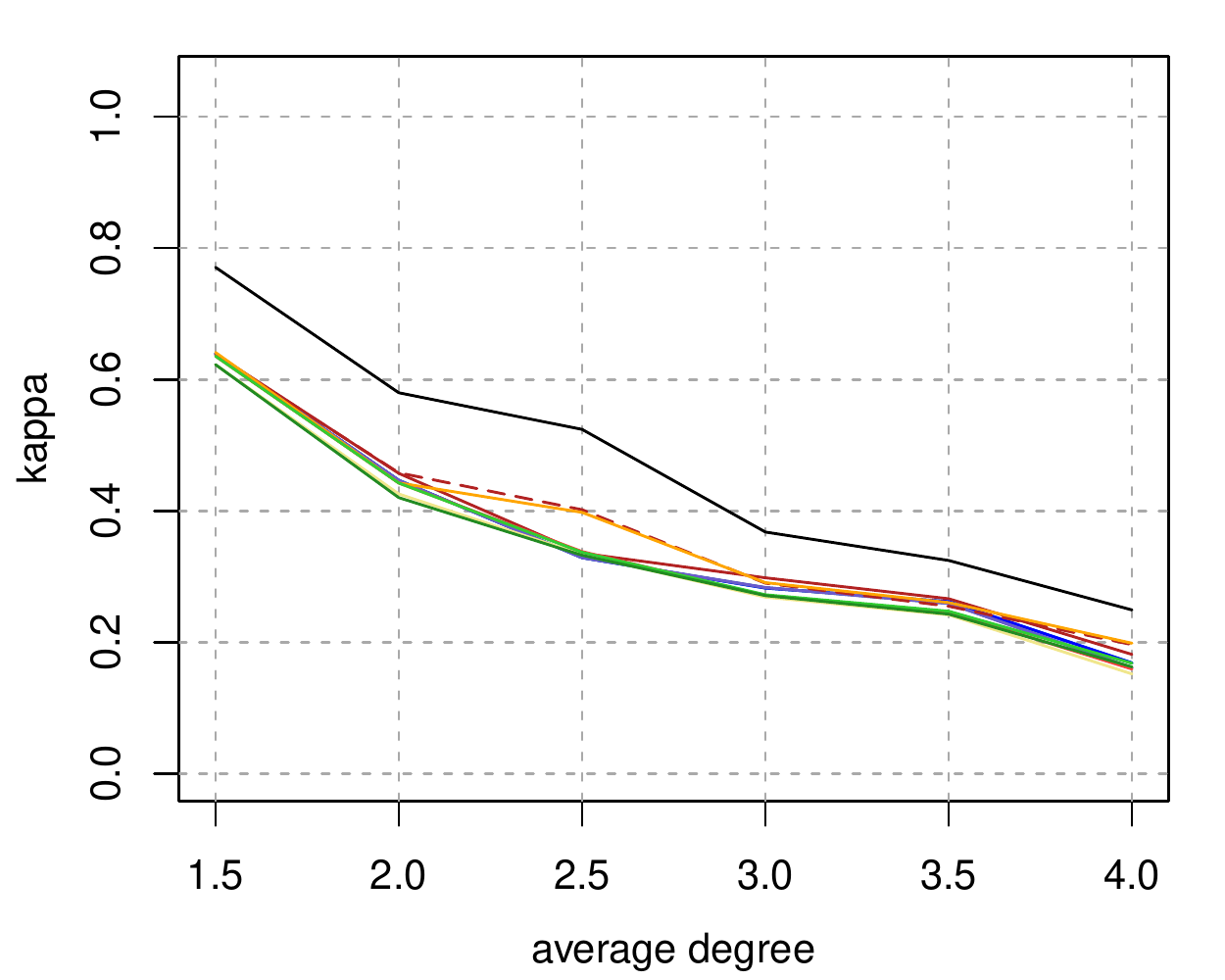}
		\end{minipage}%
	}%
	\hspace{0.01\textwidth}
	\subfigure[Kappa, $N=(200,100)$]{
		\begin{minipage}[t]{0.3\textwidth}
			\centering
			\includegraphics[width=\textwidth]{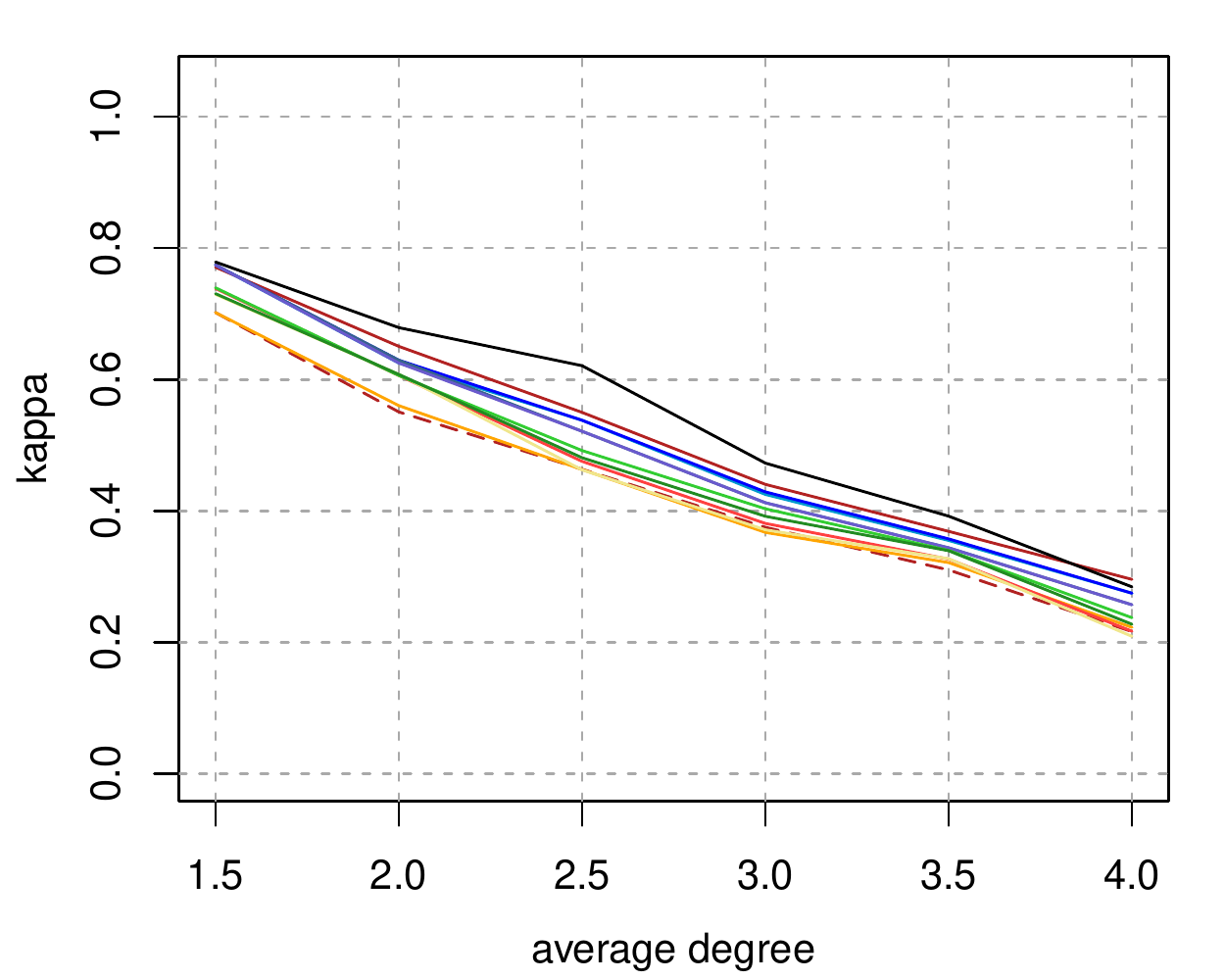}
		\end{minipage}%
	}%
	\hspace{0.01\textwidth}
	\subfigure[Kappa, $N=(500,150)$]{
		\begin{minipage}[t]{0.3\textwidth}
			\centering
			\includegraphics[width=\textwidth]{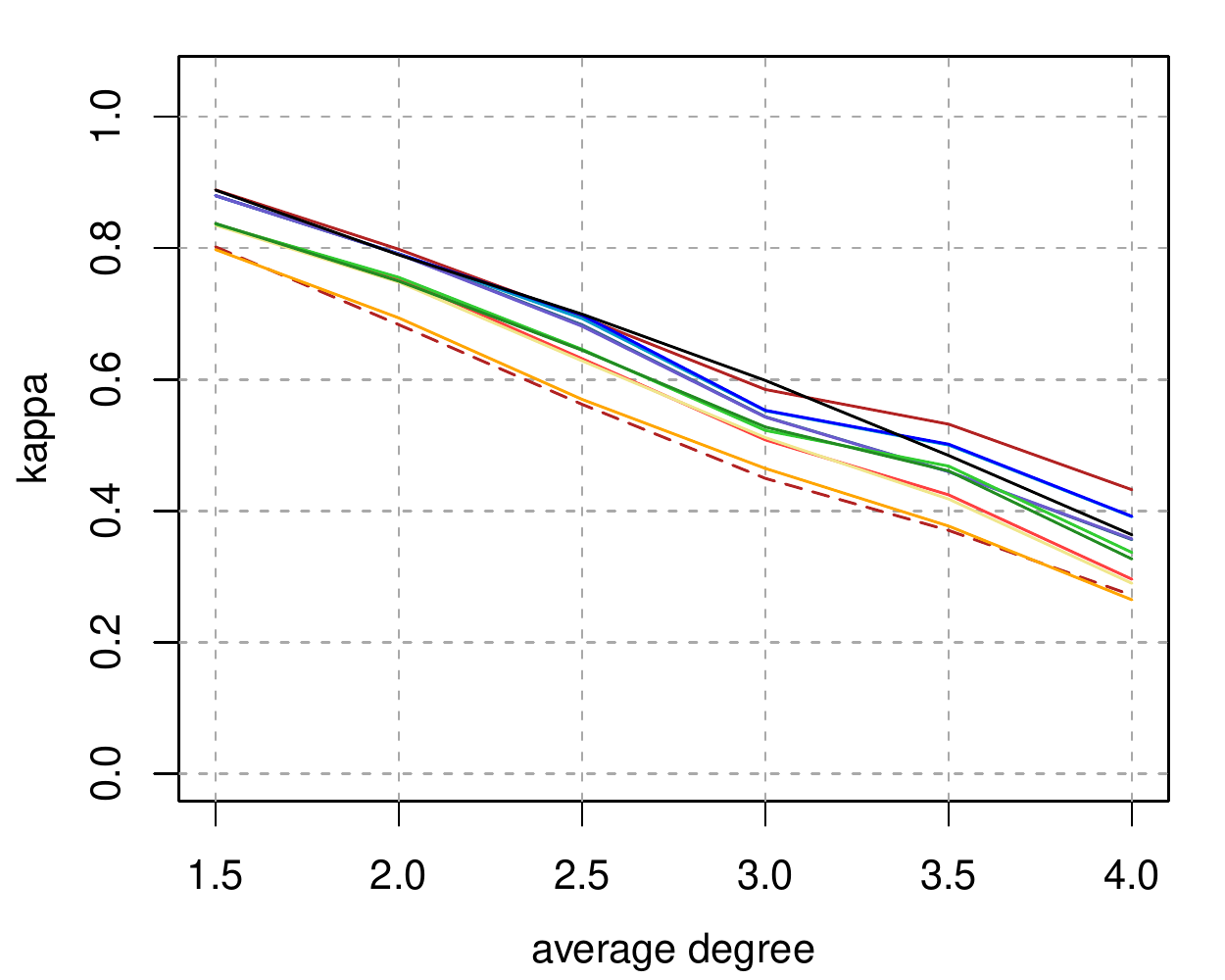}
		\end{minipage}%
	}%
	
	\subfigure[time, $N=(100,100)$]{
		\begin{minipage}[t]{0.3\textwidth}
			\centering
			\includegraphics[width=\textwidth]{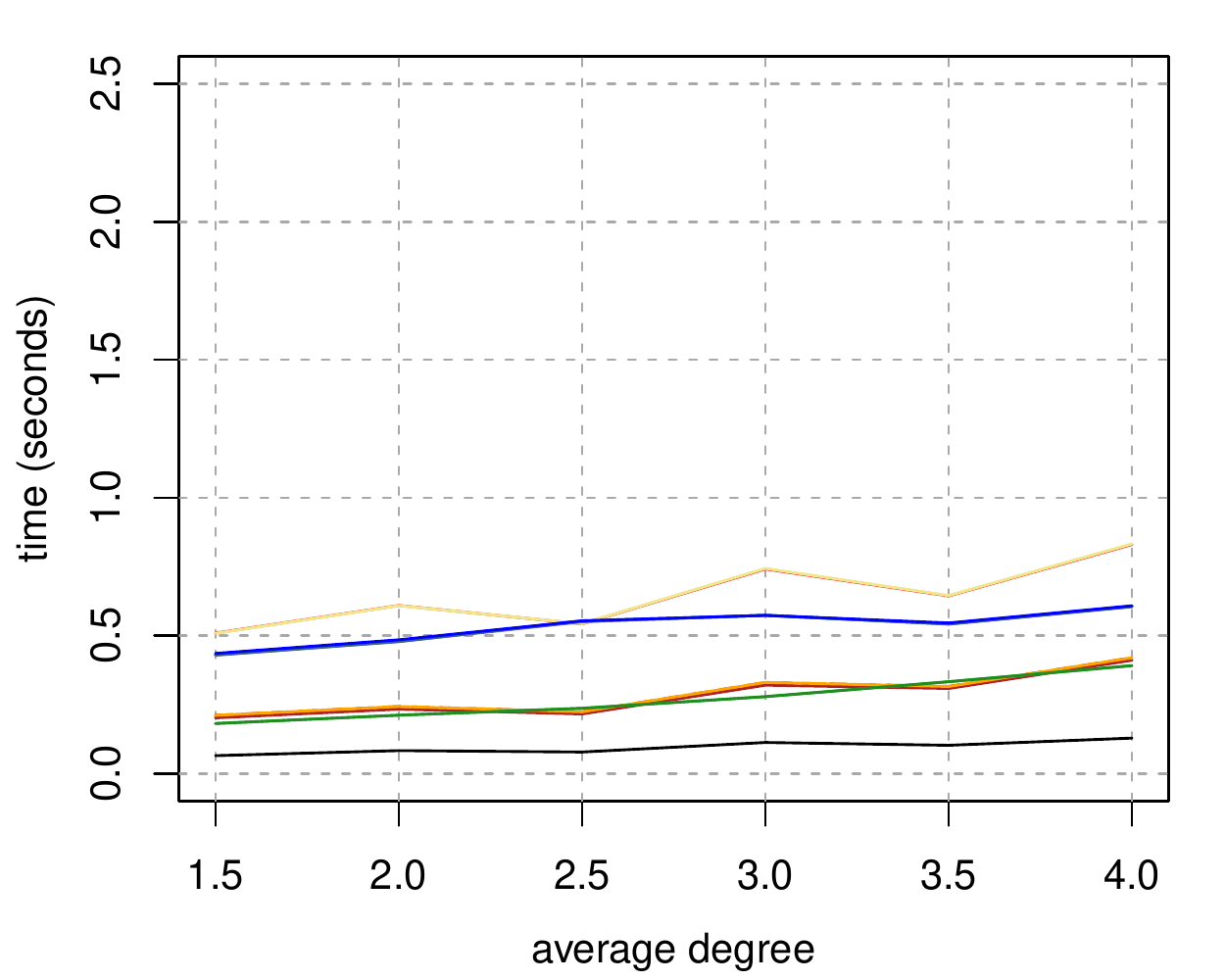}
		\end{minipage}%
	}%
	\hspace{0.01\textwidth}
	\subfigure[time, $N=(200,100)$]{
		\begin{minipage}[t]{0.3\textwidth}
			\centering
			\includegraphics[width=\textwidth]{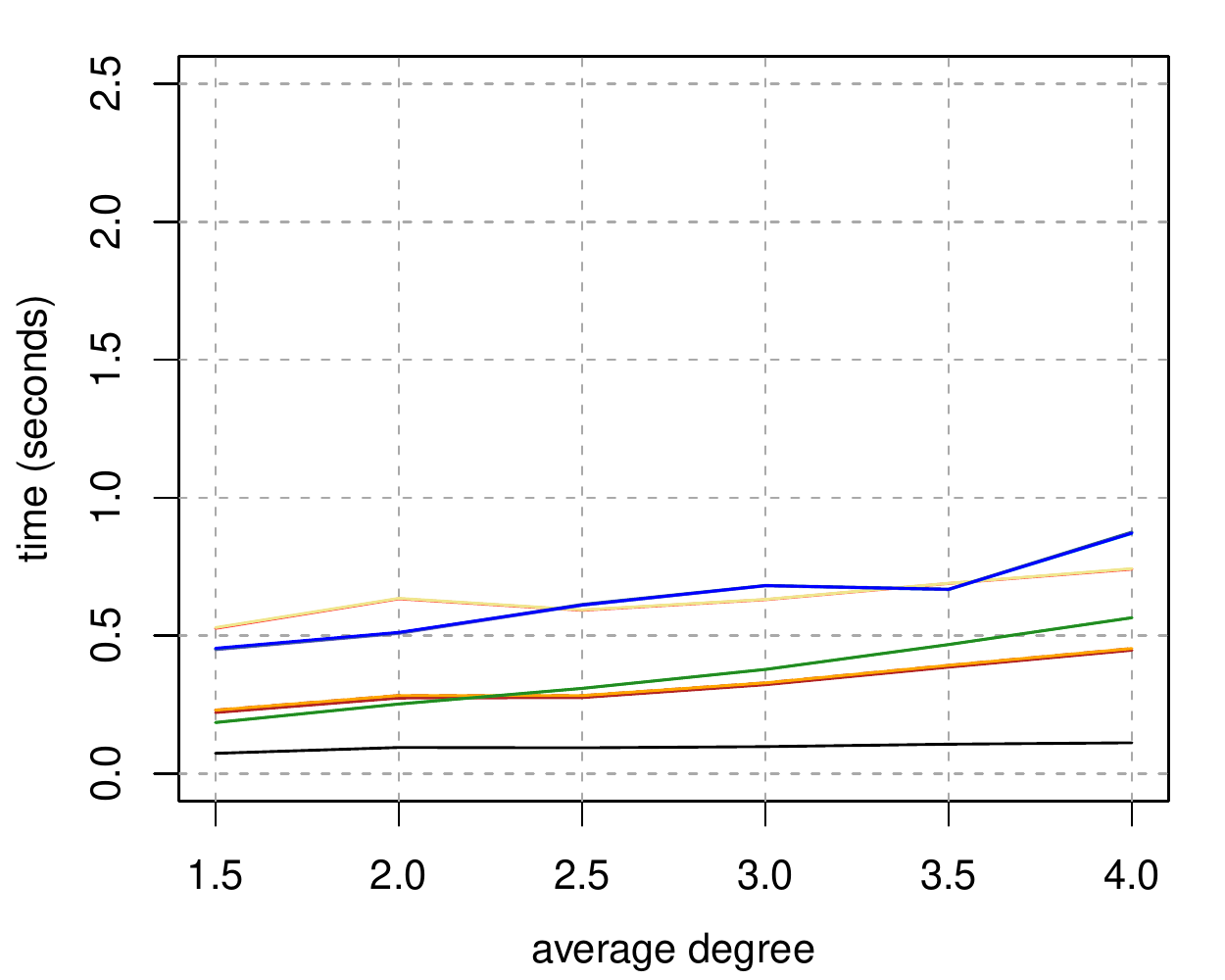}
		\end{minipage}%
	}%
	\hspace{0.01\textwidth}
	\subfigure[time, $N=(500,150)$]{
		\begin{minipage}[t]{0.3\textwidth}
			\centering
			\includegraphics[width=\textwidth]{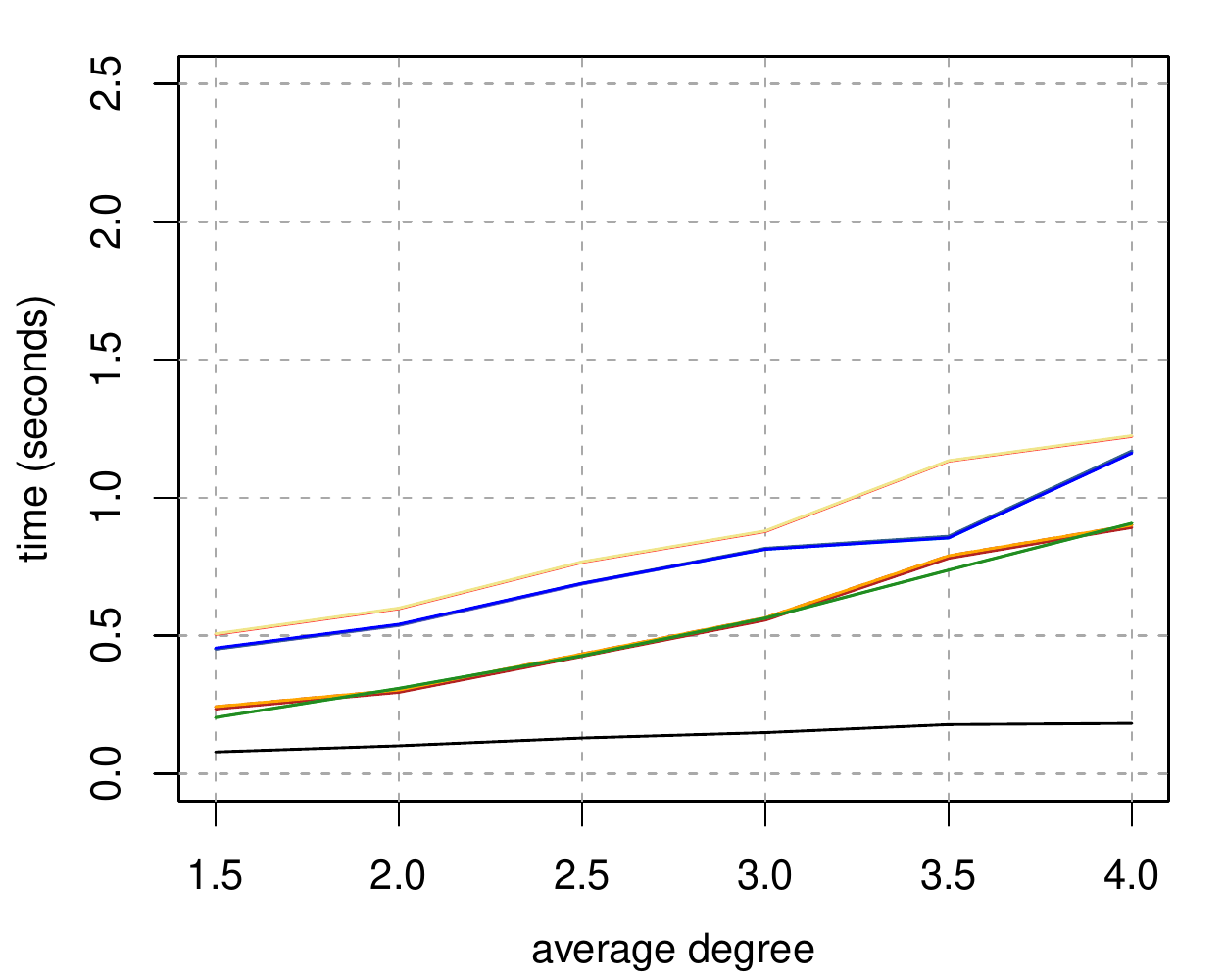}
		\end{minipage}%
	}%
	
	\caption{The experimental results of the hybrid method on $100$-node graphs with positive weights.}
	\label{fig:gdes}
\end{figure}

{
We also tested the hybrid method which checks whether $X$ is a definite non-cause of $Y$ based on a learned CPDAG and then calls a CE-based method if $X$ is not a definite non-cause of $Y$. We combined the hybrid method with the PC algorithm, and used two CE-based methods, including ``IDA + test (all)" and ``IDA + test (min/max)" to deal with the case where  $X$ is not a definite non-cause of $Y$.

Figure~\ref{fig:gdes} demonstrates the results on $100$-node graphs with positive weights. For comparison, we also include the results of the non-hybrid CE-based methods combined with PC. Considering the Kappa coefficients, the hybrid methods are slightly better than the non-hybrid CE-based methods that utilize non-ancestral relations when the sample size is relatively large. This is because that the hybrid methods take the advantage of the correctly learned causal graphs. On the other hand, since the two hybrid methods also need an entire CPDAG, their total computational time is similar to that of ``PC + IDA + test (all)" and ``PC + IDA + test (min/max)", respectively.}


\subsection{Confidence Intervals}

\begin{figure}[t!]
	\centering
	
	\subfigure[$d=1.5$]{
		\begin{minipage}[t]{0.3\textwidth}
			\centering
			\includegraphics[width=\textwidth]{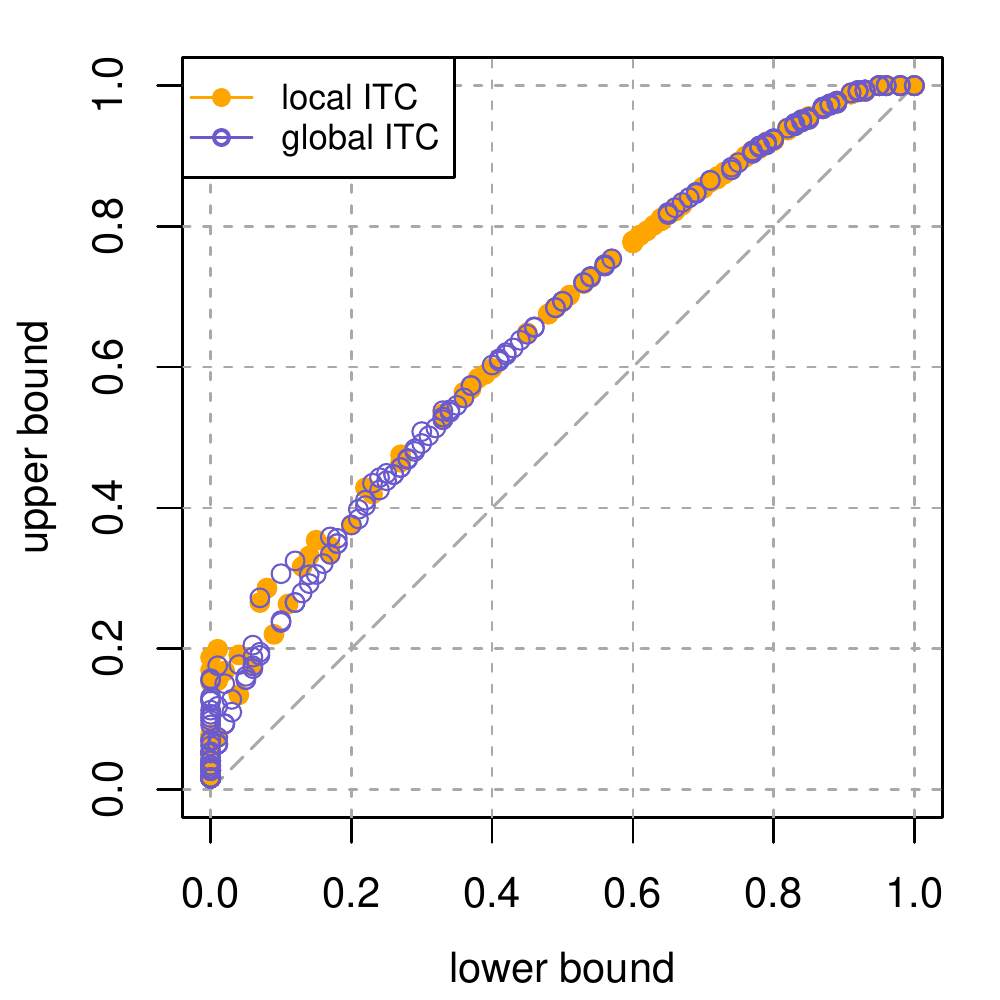}
		\end{minipage}%
	}%
	\hspace{0.01\textwidth}
	\subfigure[$d=2.0$]{
		\begin{minipage}[t]{0.3\textwidth}
			\centering
			\includegraphics[width=\textwidth]{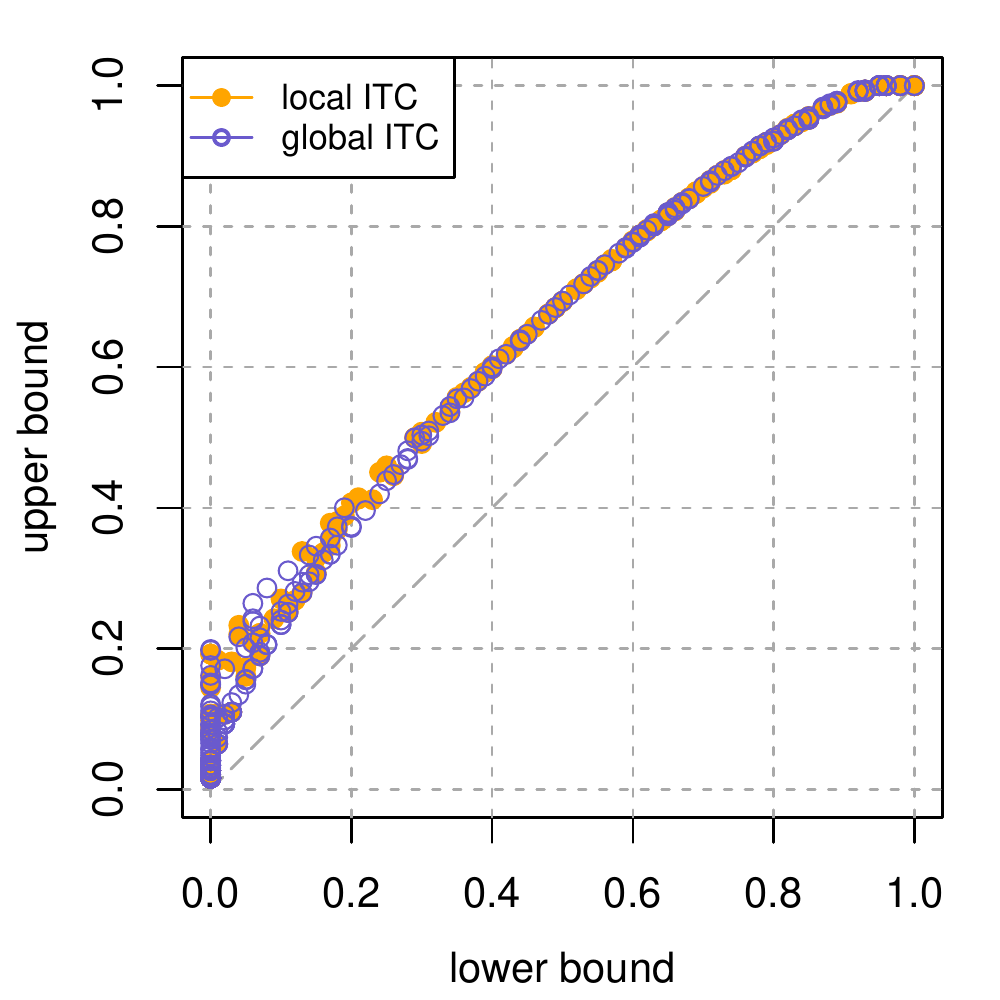}
		\end{minipage}%
	}%
	\hspace{0.01\textwidth}
	\subfigure[$d=2.5$]{
		\begin{minipage}[t]{0.3\textwidth}
			\centering
			\includegraphics[width=\textwidth]{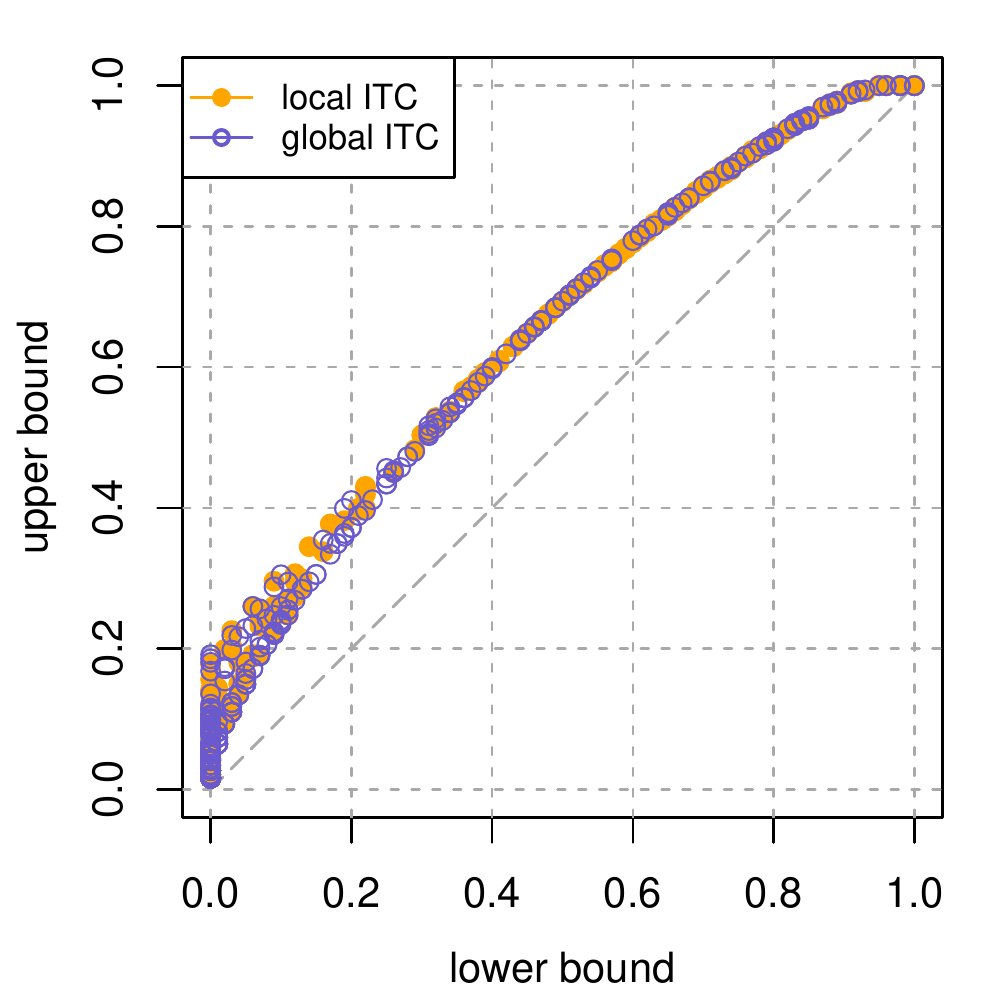}
		\end{minipage}%
	}%
	
	\subfigure[$d=3.0$]{
		\begin{minipage}[t]{0.3\textwidth}
			\centering
			\includegraphics[width=\textwidth]{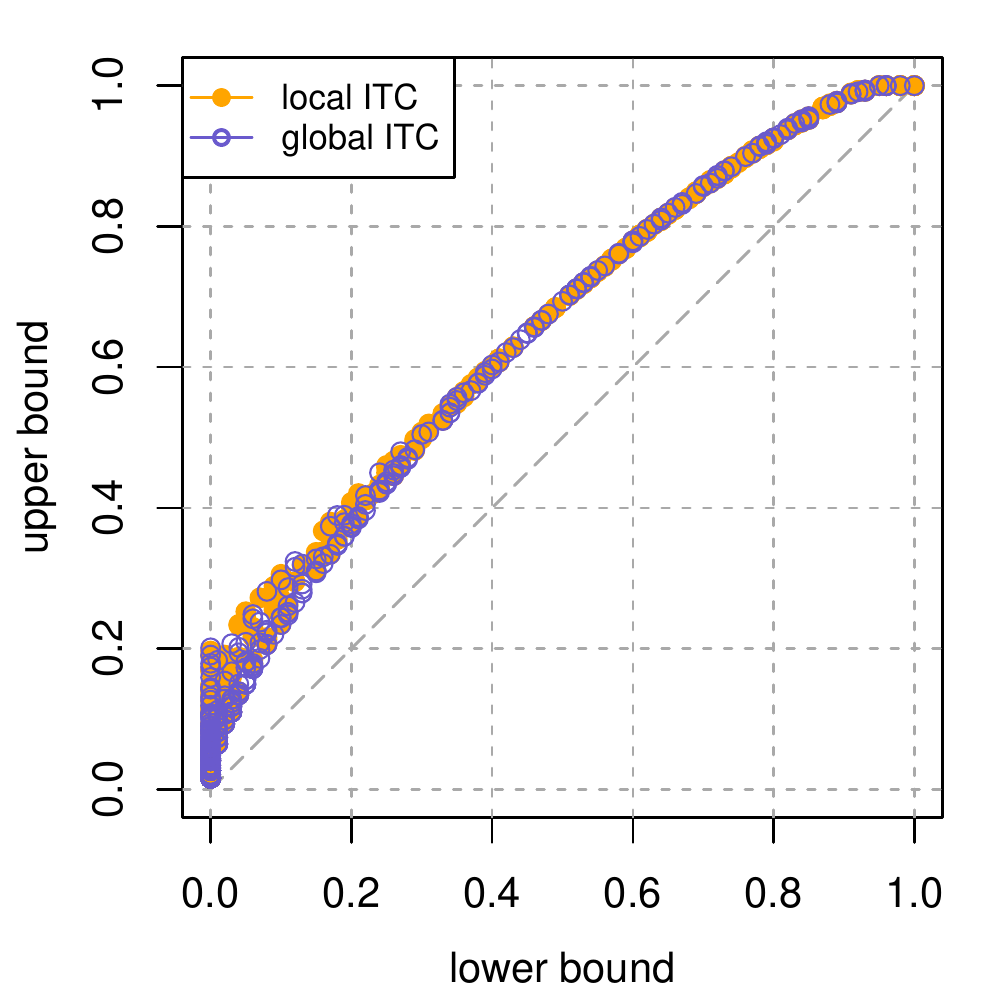}
		\end{minipage}%
	}%
	\hspace{0.01\textwidth}
	\subfigure[$d=3.5$]{
		\begin{minipage}[t]{0.3\textwidth}
			\centering
			\includegraphics[width=\textwidth]{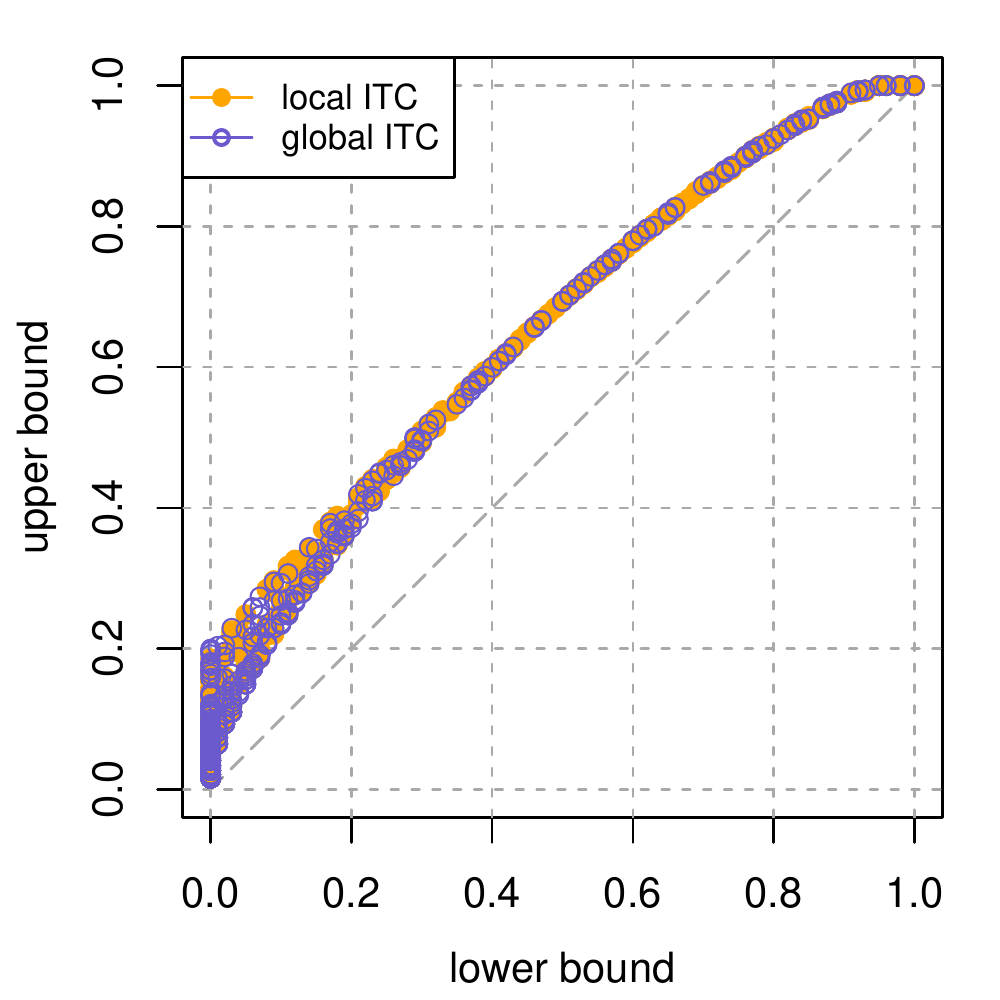}
		\end{minipage}%
	}%
	\hspace{0.01\textwidth}
	\subfigure[$d=4.0$]{
		\begin{minipage}[t]{0.3\textwidth}
			\centering
			\includegraphics[width=\textwidth]{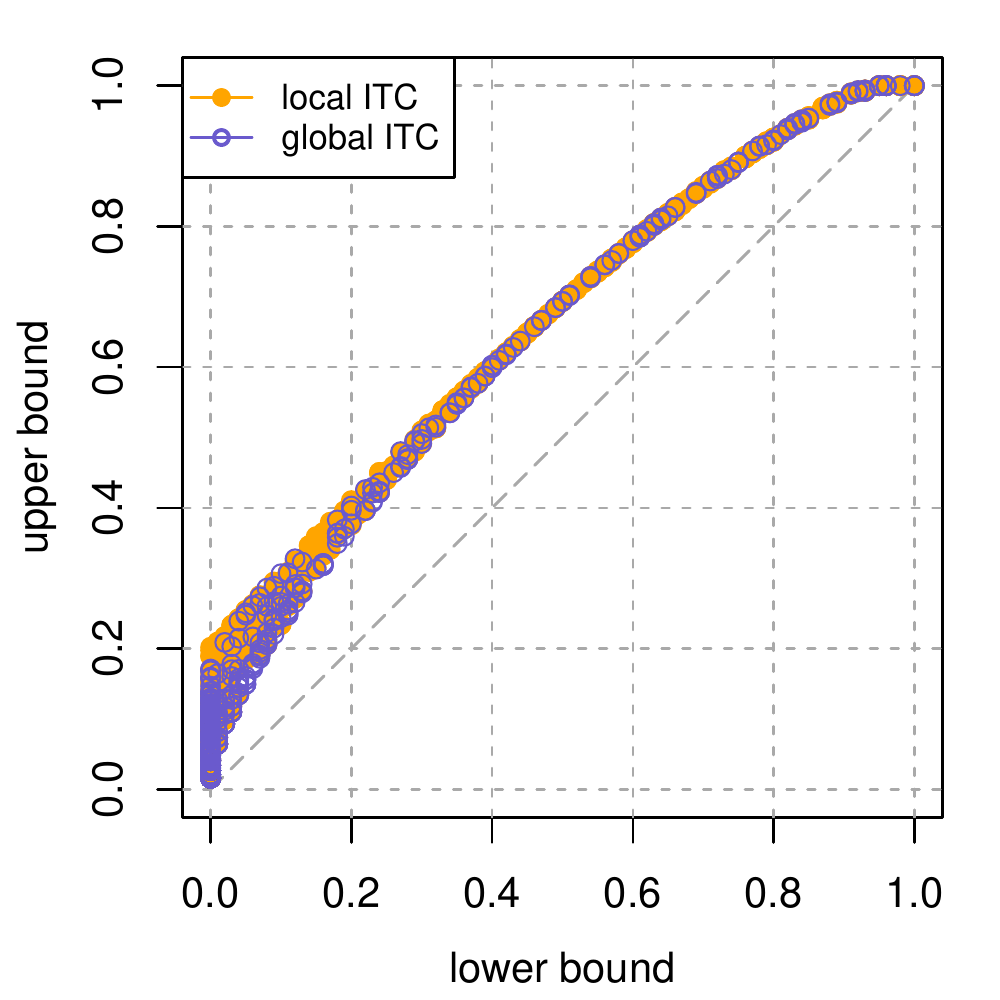}
		\end{minipage}%
	}%
	
	\caption{The estimated confidence intervals with the confidence level of 95\% for graphs with different average degrees ($d$), each of which is plotted as a point, with the form of ``(lower bound, upper bound)". Other settings: $n=50$, positively weighted, $N=(100, 100)$ and $m=100$.}
	\label{fig:learn:ci}
\end{figure}

The identification of types of causal relations can be regarded as a classification problem. Using re-sampling techniques, we may estimate the probability of each type of causal relation for a pair of treatment and target, as well as its confidence interval, which can be used to analysis how an inferred graph structure affects the performance of the global and local ITC.

Following the work of~\cite{Friedman19bootstrap}, for a given data set with $N_{\rm graph}$ observations for learning graphs and another $N_{\rm effect}$ observations for inferring types of causal relations, we first sample $m$ re-sampling data set, each of which contains $N_{\rm graph}$ observations sampled from the data for learning graphs with replacement. Then, for each re-sampling data set, we learn a graph structure, using either a global method such as PC, or a local method such as the variant of MB-by-MB. Finally,   we use the sub-dataset with $N_{\rm effect}$ observations to estimate the type of causal relation.  The above procedure results in a multinomial distribution with three categories. The point estimation of the probability for each category as well as its confidence interval can then be estimated from these results.

We focus on the definite and possible causal relations, whose proportions are usually smaller than 10\%   as suggested by Tables~\ref{tab:positive} and~\ref{tab:mixed}. For ease of demonstration, for a pair of treatment and target, we only   estimate the probability and its confidence interval of the true type    with the confidence level of 95\%.
We run  $5,000$ repeats on $50$-node, positive weight graphs for each average degree $d$ and each method.  
Figure~\ref{fig:learn:ci} shows the results  with $N=(N_{\rm graph}, N_{\rm effect})=(100, 100)$ and $m=100$. It can be seen that many points of the local ITC are concentrated at the upper right corner while many points of the global ITC are concentrated at the lower left corner, meaning that the local ITC is more accurate. Moreover, both methods give about the same length of confidence intervals when they   identify  the causal relations correctly. For instance, when  $d=4$ and the lower bounds of the confidence intervals are greater than $0.5$, the average length  of these confidence intervals of the local and global ITC are $0.122$ and $0.118$ respectively.







\bibliographystyle{abbrvnat}
\bibliography{ref220105}








\end{document}